\renewcommand{\epsilon}{\varepsilon}
\newcommand{\trans}{^{\top}}
\newcommand{\cA}{\mathcal{A}}
\newcommand{\cB}{\mathcal{B}}
\newcommand{\cE}{\mathcal{E}}
\newcommand{\cF}{\mathcal{F}}
\newcommand{\cM}{\mathcal{M}}
\newcommand{\cN}{\mathcal{N}}
\newcommand{\cP}{\mathcal{P}}
\newcommand{\cS}{\mathcal{S}}
\newcommand{\cU}{\mathcal{U}}
\newcommand{\cX}{\mathcal{X}}
\newcommand{\cZ}{\mathcal{Z}}
\newcommand{\EE}{\mathbb{E}}
\newcommand{\ph}{{h'}}
\newcommand{\mP}{{\mathbb{P}}}
\newcommand{\mH}{{\mathbb{H}}}
\newcommand{\mR}{{\mathbb{R}}}
\newcommand{\mN}{{\mathbb{N}}}
\newcommand\numberthis{\addtocounter{equation}{1}\tag{\theequation}}
\let\hat\widehat
\let\tilde\widetilde
\newtheorem{theorem}{Theorem}[section]
\newtheorem{lemma}[theorem]{Lemma}
\newtheorem{corollary}[theorem]{Corollary}
\newtheorem{remark}[theorem]{Remark}
\newtheorem{proposition}[theorem]{Proposition}
\theoremstyle{definition}
\newtheorem{definition}[theorem]{Definition}
\newtheorem{condition}[theorem]{Condition}
\newtheorem{assumption}{Assumption}
\def\##1\#{\begin{align}#1\end{align}}
\def\$#1\${\begin{align*}#1\end{align*}}
\newcommand{\kibitz}[2]{\ifnum\Comments=1\textcolor{#1}{#2}\fi}
\newcommand{\blue}[1]{{\color{blue} #1}}
\DeclareMathOperator*{\argmax}{arg\,max}
\DeclareMathOperator*{\argmin}{arg\,min}
\newcommand{\tM}{{\tilde{M}}}
\newcommand{\tpi}{{\tilde{\pi}}}
\newcommand{\hpi}{{\hat{\pi}}}
\newcommand{\MLE}{\text{MLE}}
\newcommand{\Central}{\text{Ctr}}
\newcommand{\Bridge}{\text{Br}}
\newcommand{\bM}{{\bar{M}}}
\newcommand{\NE}{\text{NE}}
\newcommand{\SNE}{\text{SNE}}
\newcommand{\dimCPE}{\dim_{{\rm CPE}|\Pi^\dagger}}
\newcommand{\dimPE}{\dim_{\rm PE}}
\newcommand{\dimE}{\dim_{\rm E}}
\newcommand{\Rank}{{\rm Rank}}
\newcommand{\Span}{\text{Span}}
\newcommand{\hvecmu}{{\hat\vecmu}}
\newcommand{\vecmu}{{\bm{\mu}}}
\newcommand{\vecnu}{{\bm{\nu}}}
\newcommand{\tvecnu}{{\tilde{\bm{\nu}}}}
\newcommand{\vecpi}{{\bm{\pi}}}
\newcommand{\vecs}{{\bm{s}}}
\newcommand{\vecL}{{\bm{L}}}
\newcommand{\veca}{{\bm{a}}}
\newcommand{\tvecpi}{{\tilde{\bm{\pi}}}}
\newcommand{\hvecpi}{{\hat{\bm{\pi}}}}
\newcommand{\vecM}{{\bm{M}}}
\newcommand{\tvecM}{{\tilde{\bm{M}}}}
\newcommand{\bvecM}{{\bar{\bm{M}}}}
\newcommand{\vecPi}{{\bm{\Pi}}}
\newcommand{\bcM}{{\bar{\cM}}}
\newcommand{\dmP}{{\ddot{\mP}}}
\newcommand{\dM}{{\ddot{M}}}
\newcommand{\dtM}{{\ddot{\tM}}}
\newcommand{\dcM}{{\ddot{\cM}}}
\newcommand{\dQ}{{\ddot{Q}}}
\newcommand{\dV}{{\ddot{V}}}
\newcommand{\dJ}{{\ddot{J}}}
\newcommand{\dr}{{\ddot{r}}}
\newcommand{\dvecM}{{\ddot{\vecM}}}
\newcommand{\veccM}{{\bm{\cM}}}
\newcommand{\dveccM}{{\ddot{\bm{\cM}}}}
\newcommand{\teps}{{\tilde{\epsilon}}}
\newcommand{\share}{\text{share}}
\newcommand{\Poly}{\text{Poly}}
\newcommand{\piref}{\pi_{\text{ref}}}
\newcommand{\ts}{\tilde{s}}
\newcommand{\ta}{\tilde{a}}
\newcommand{\tw}{{\tilde{w}}}
\newcommand{\tldh}{{\tilde{h}}}
\newcommand{\beps}{{\bar{\epsilon}}}
\newcommand{\tpsi}{\tilde{\psi}}
\newcommand{\Tr}{\text{Tr}}
\newcommand{\act}{\text{active}}
\newcommand{\MFG}{\text{MFG}}
\newcommand{\Cstr}{\text{Cstr}}
\newcommand{\RI}{{\rm \uppercase\expandafter{\romannumeral 1 \relax}}}
\newcommand{\RII}{{\rm \uppercase\expandafter{\romannumeral 2 \relax}}}
\newcommand{\dimPEI}{\dim_{\rm PE}}
\newcommand{\dimPEII}{\dim_{\rm PE}^\RII}
\newcommand{\dimCPEI}{\dim_{{\rm CPE}|\Pi^\dagger}}
\newcommand{\dimCPEII}{\dim_{{\rm CPE}|\Pi^\dagger}^\RII}
\newcommand{\para}[1]{\textbf{#1}~}
\newcommand{\MEBP}{{\texttt{MEBP}}}
\newcommand{\dimPERII}{{\dimPE}}
\newcommand{\dimMTPE}{\dim_{\rm MTPE}}
\newcommand{\Alg}{\text{Alg}}
\newcommand{\textPr}{\text{Pr}}
\newcommand{\Opt}{\text{Opt}}
\begin{document}

\twocolumn[
\icmltitle{Model-Based RL for Mean-Field Games is not Statistically Harder than Single-Agent RL}
\icmlsetsymbol{equal}{*}

\begin{icmlauthorlist}
\icmlauthor{Jiawei Huang}{ETH}
\icmlauthor{Niao He}{ETH}
\icmlauthor{Andreas Krause}{ETH}
\end{icmlauthorlist}

\icmlaffiliation{ETH}{Department of Computer Science, ETH Zurich}

\icmlcorrespondingauthor{Jiawei Huang}{jiawei.huang@inf.ethz.ch}
\icmlkeywords{Machine Learning, ICML}

\vskip 0.3in
]

\printAffiliationsAndNotice{}  %

\begin{abstract}
    \looseness -1 We study the sample complexity of reinforcement learning (RL) in Mean-Field Games (MFGs) with model-based function approximation that requires strategic exploration to find a Nash Equilibrium policy. We introduce the Partial Model-Based Eluder Dimension (P-MBED), a more effective notion to characterize the model class complexity. Notably, P-MBED measures the complexity of the single-agent model class converted from the given mean-field model class, and potentially, can be exponentially lower than the MBED proposed by \citet{huang2023statistical}. We contribute a model elimination algorithm featuring a novel exploration strategy and establish sample complexity results polynomial w.r.t.~P-MBED. Crucially, our results reveal that, under the basic realizability and Lipschitz continuity assumptions, \emph{learning Nash Equilibrium in MFGs is no more statistically challenging than solving a logarithmic number of single-agent RL problems}. We further extend our results to Multi-Type MFGs, generalizing from conventional MFGs and involving multiple types of agents. This extension implies statistical tractability of a broader class of Markov Games through the efficacy of mean-field approximation. Finally, inspired by our theoretical algorithm, we present a heuristic approach with improved computational efficiency and empirically demonstrate its effectiveness.
\end{abstract}

\section{Introduction}
Multi-Agent Reinforcement Learning (MARL) has excelled in modeling cooperative and competitive interactions among agents in unknown environments.
However, the well-known ``curse of multi-agency'' poses a challenge in equilibrium solving for MARL systems with large populations. 
Yet, for MARL systems with symmetric agents, such as human crowds or road traffic, one can leverage such special structure by employing mean-field approximation, leading to the RL for Mean-Field Games (MFGs) setting~\citep{lasry2007mean, huang2006large}. 
Notably, MFGs offer a promising framework where the complexity of learning Nash Equilibrium (NE) needs not depend on the number of agents \citep{lauriere2022learning}. 
It has found successful applications in various domains, including financial markets \citep{cardaliaguet2018mean}, economics \citep{gomes2015economic} and energy management \citep{djehiche2016mean}. 

Similar to single-agent RL \citep{jin2018q,jiang2017contextual}, for MFGs, one of the most important questions is to understand how many samples are required to explore the unknown environment and solve the equilibrium, a.k.a. the \emph{sample complexity}.
Given the complex dynamics of mean-field systems and high cost of generating samples from large population, designing strategic exploration methods for sample-efficient learning becomes imperative. 

Existing works on learning MFGs primarily focus on model-free approaches such as Q-learning~\citep{guo2019learning,anahtarci2023q}, policy gradient~\cite{ subramanian2019reinforcement,yardim2022policy}, fictitious play~\cite{xie2021learning,perrin2020fictitious}, etc.
Several recent works further extend these model-free approaches with value function approximation to handle large state-action space~\cite{mao2022mean,zhang2023learning}. However, their sample complexity results ubiquitously rely on strong structural assumptions such as contractivity \citep{guo2019learning} or monotonicity \citep{perolat2021scaling}.
Their methods, moreover, are usually specialized and lack generalizability, leaving an open challenge of efficiently exploring mean-field systems \emph{without those structural assumptions}.

To address this gap, \citet{huang2023statistical} establish general sample complexity results for model-based RL in MFGs\footnote{Model-based RL has also been explored in Mean-Field Control (MFC) setting, where all the agents are cooperative~\citep{pasztor2021efficient,huang2023statistical}.}. They introduce a complexity measure known as Model-Based Eluder Dimension (MBED) to characterize the complexity of the model function class. Their algorithm, under basic realizability and Lipschitz continuity assumptions, achieves a sample complexity upper bound polynomial w.r.t. MBED.
However, as we will show in Prop.~\ref{prop:tabular}, even for the tabular setting, MBED can be exponential in the number of states in the worst case. This observation, coupled with the tractability of tabular MFGs under additional structural assumptions \citep{guo2019learning,perolat2021scaling},  prompts a fundamental question:
\begin{center}
    \textbf{Is learning MFGs statistically harder than single-agent RL in general?} 
\end{center}
In this paper, we provide a definitive answer to this question. 
Our main contributions are summarized as follows:
\begin{itemize}[leftmargin=*]
\item In Sec.~\ref{sec:P_MBED}, we introduce a novel complexity measure for any given mean-field model class $\cM$, called \emph{Partial Model-Based Eluder Dimension (P-MBED)}. 
P-MBED represents the complexity of the single-agent model class derived from $\cM$ after (adversarially) fixing the state density for the transition functions in $\cM$. We show that P-MBED can be significantly lower than MBED \citep{huang2023statistical}. 
For example, in the tabular setting, P-MBED is always bounded by the number of states and actions, yielding an exponential improvement over MBED.

\item In Sec.~\ref{sec:learning_MFG}, we propose a model elimination algorithm capable of exploring the mean-field system and returning an approximate NE policy with sample complexity polynomial w.r.t. P-MBED. 
From the algorithmic perspective, our results indicate that under the basic realizability and Lipschitz assumptions, \emph{learning MFGs is no more statistically challenging than solving $\log|\cM|$ single-agent RL problems}.
As a direct implication, the sample complexity of tabular MFGs only polynomially depends on the number of states, actions, horizon and $\log|\cM|$. This is the first result indicating that learning tabular MFGs is provably sample-efficient in general, even without the contractivity or monotonicity assumptions.

\item In Sec~\ref{sec:experiments}, we design a heuristic algorithm with improved computational efficiency building upon our insights in theory. We evaluate it in a synthetic linear MFGs setting and validate its effectiveness. 
\end{itemize}

As a substantial extension,  we further examine the sample complexity of more general MFGs with heterogeneous population, specifically Multi-Type MFGs (MT-MFGs)~\citep{ghosh2020model, subramanian2020multi,perolat2021scaling}.
MT-MFGs comprise multiple types of agents with distinct transition models, reward functions or even state-action spaces. MT-MFGs have stronger capacity in modeling the diversity of agents, while being more tractable than general Markov Games\footnote{The general Markov Games (MGs) framework considers individually distinct agents. However, this generality comes with challenges. Existing results in MGs are restricted in learning (Coarse) Correlated Equilibria~\citep{jin2021v,bai2020near,daskalakis2023complexity} and the sample complexity in function approximation setting may still depend on the number of agents \citep{wang2023breaking,cui2023breaking}. MT-MFGs can be regarded an intermediary between standard MFGs and general MGs.}.
However, the fundamental sample complexity in the setting remains largely unexplored. Our additional contribution includes:

\begin{itemize}[leftmargin=*]
    \item In Sec.~\ref{sec:MT_MFG}, we show that finding the NE in an MT-MFG is equivalent to finding the NE in a lifted MFG with constraints on policies. Building on this insight, we establish the first sample complexity upper bound for learning MT-MFGs. Our results identify statistical tractability of a broad class of MARL systems, potentially offering new insights to the sample complexity analysis for solving NE in general Markov Games.
\end{itemize}

\subsection{Related Work}\label{sec:related_work} 
Within the abundant literature on single-agent RL and MFGs, below we focus primarily on sample complexity results for solving these problems in unknown environments. We defer additional related works to Appx.~\ref{appx:related_works}.

\textbf{Single-Agent RL}\quad  When the number of states and actions is extremely large, sample complexity bounds derived for tabular RL \citep{auer2008near, azar2017minimax, jin2018q} become vacuous. Instead, function approximation is usually considered, where a model or value function class containing the true model or optimal value functions is available, and the sample complexity is governed by the complexity of the function classes \citep{jin2020provably, agarwal2020flambe, jiang2017contextual, sun2019model, jin2021bellman, du2021bilinear, foster2021statistical}.
Compared with single-agent RL, the main challenge in MFGs is the additional dependence on density in transition and reward functions, especially that the density space is continuous. 
Although our P-MBED is inspired by the eluder dimension in the single-agent setting \citep{russo2013eluder, osband2014model, levy2022eluderbased}, it is a novel complexity notion in characterizing the sample efficiency of RL in MFGs.

\textbf{Mean-Field Games}\quad 
Most existing results for learning MFGs primarily focus on tabular setting and model-free approaches \citep{guo2019learning, xie2021learning, cui2021approximately, elie2020convergence},
where strong structural assumptions, such as contractivity \citep{guo2019learning}, monotonicity and density independent transition \citep{perrin2020fictitious}, or non-vanishing regularization \citep{yardim2022policy} are usually required. 
In contrast, we focus on addressing the fundamental exploration challenge for general MFGs.
\citet{mishra2020model} study non-stationary MFG without strong structural assumptions, but their algorithm is inefficient and no sample complexity results were provided.
Beyond the tabular setting, \citet{huang2023statistical} is the most related to us. However, as implied by our results in this paper, their sample complexity bound are suboptimal.

\noindent\textbf{Multi-Type Mean-Field Games}\quad
\citet{subramanian2020multi} study more general multi-type cases, but they consider the transition model depending on action density instead of state density.
Besides, the multi-type setting has been investigated in special cases, such as LQR \citep{moon2018linear, uz2023reinforcement}, and leader-follower structures \citep{vasal2022master}.
\citet{ghosh2020model} is the most related to us. However, they consider the discounted stationary setting and assume the state density is fixed, while ours is more challenging since we need to keep tracking the evolution of state density.
\citet{perolat2021scaling} also consider the multi-type setting, but they require the monotonicity assumption.
Moreover, they only provide asymptotic rates without sample complexity guarantees.

\section{Background}\label{sec:preliminary}

\para{Notations}
Throughout the paper, 
we will use standard big-oh notations $O(\cdot),\Omega(\cdot),\Theta(\cdot)$, and notations such as $\tilde{O}(\cdot)$ to (partially) suppress logarithmic factors.
In Appx.~\ref{appx:notations}, we list all the frequently used notations in this paper.

\subsection{Mean-Field Games}
\para{Mean-Field Markov Decision Process}
We consider the finite-horizon non-stationary Mean-Field MDP (MF-MDP) $M := (\mu_1,\cS,\cA,H,\mP_{M},r)$, where $\mu_1$ is the known initial state distribution; $\cS=(\cS_1=...=\cS_H)$ and $\cA=(\cA_1=...=\cA_H)$ are the state and action spaces, which are discrete but can be arbitrarily large; $\mP_M:=\{\mP_{M,h}\}_{h\in[H]}$ with $\mP_{M,h}:\cS_h\times\cA_h\times\Delta(\cS_h)\rightarrow\Delta(\cS_{h+1})$ is the transition function and $r:=\{r_h\}_{h\in[H]}$ with $r_h:\cS_h\times\cA_h\times\Delta(\cS_h)\rightarrow[0,\frac{1}{H}]$ is the deterministic reward function, where $\Delta(\cX)$ denotes the probability measure over $\cX$.
We use $\Pi:=\{\pi:=\{\pi_h\}_{h\in[H]}|\pi_h:\cS_h\rightarrow\Delta(\cA_h)\}$ to denote the policy class including all non-stationary Markovian policies, and we only focus on policies in $\Pi$.
Given $\pi \in \Pi$ and initial density $\mu^\pi_{M,1}:=\mu_1$, the state density $\mu^\pi_M:=\{\mu^\pi_{M,h}\}_{h\in[H]}$ evolves according to $\mu^\pi_{M,h+1} = \Gamma^\pi_{M,h}(\mu^\pi_{M,h}), h\in[H]$ where $\Gamma^\pi_{M,h}(\mu_h)(\cdot) := \sum_{s_h,a_h} \mu_h(s_h)\pi(a_h|s_h) \mP_{M,h}(\cdot|s_h,a_h,\mu_h)$. 

Given any $\pi,\tpi\in\Pi$, we use $\EE_{\tpi,M(\pi)}[\cdot]$ to denote the expectation over trajectories generated by executing policy $\tpi$ 
while fixing the transitions and rewards to $\mP_{M,h}(\cdot|\cdot,\cdot,\mu^\pi_{M,h})$, $r_h(\cdot,\cdot,\mu^\pi_{M,h})$.
These trajectories can be interpreted as the observations of a deviated agent taking $\tpi$ while all the others take $\pi$.
Besides, we define $V^{\tpi}_{M,h}(\cdot;\mu^\pi_{M}):=\EE_{\tpi,M(\pi)}[$ $\sum_{\ph=h}^H r_\ph(s_\ph,a_\ph,\mu^\pi_{M,\ph})|s_h=\cdot]$ to be the value function at step $h$ if the agent deploys policy $\tpi$ in model $M$ conditioning on $\pi$, and define $J_M(\tpi;\pi):=\EE_{s_1\sim\mu_1}[V^\tpi_{M,1}(s_1;\mu^\pi_M)]$ to be the total return of policy $\tpi$ conditioning on $\pi$. The Nash Equilibrium (NE) $\pi^\NE_M$ of model $M$ is defined to be the policy s.t. no agent tends to deviate, i.e., 
    $
    \forall \tpi \in \Pi,~J_M(\tpi;\pi^\NE_M) \leq J_M(\pi^\NE_M;\pi^\NE_M).
    $
We denote $\cE^\NE_M(\pi) := \max_{\tpi} \Delta_M(\tpi,\pi)$ to be the NE-Gap, where $\Delta_M(\tpi,\pi) := J_M(\tpi;\pi) - J_M(\pi;\pi)$.

\noindent\textbf{Model-Based Setting}
In our model-based setting, the learner can get access to a transition function class $\cM\subset\{\{\mP_{M,h}\}_{h\in[H]}|\forall h,\mP_{M,h}:\cS_h\times\cA_h\times\Delta(\cS_h)\rightarrow\Delta(\cS_{h+1})\}$ to approximate the true model $M^*$.
We assume the reward function $r$ is known. In Appx.~\ref{appx:extension_to_unknown_reward}, we provide informal discussion about how to extend our results to the setting when $r$ is unknown.
Our main objective is to find an $\epsilon$-approximate NE $\hpi^\NE_{M^*}$, satisfying $\cE^\NE_{M^*}(\hpi^\NE_{M^*}) \leq \epsilon$.
Same as \citet{huang2023statistical}, we only make two basic assumptions about the function class $\cM$: realizability and Lipschitz continuity.
\begin{assumption}[Realizability]\label{assump:realizability}
    $M^* \in \cM$.
\end{assumption}
\begin{assumption}[Lipschitz Continuity]\label{assump:Lipschitz}
    For any $M\in\cM$, and arbitrary policies $\pi,\tpi \in \Pi$, $\forall h,s_h,a_h$, we have:
    \begin{align*}
        & \|\mP_{M,h}(\cdot|s_h,a_h,\mu_{M,h}^\pi) - \mP_{M,h}(\cdot|s_h,a_h,\mu_{M,h}^{\tpi})\|_1 \\
        &\quad\quad\quad\quad\quad\quad\quad \leq L_T \|\mu_{M,h}^\pi-\mu_{M,h}^\tpi\|_1,\\
          &  |r_h(s_h,a_h,\mu_{M,h}^\pi) - r_h(s_h,a_h,\mu_{M,h}^\tpi)\|_1 \\ &\quad \quad\quad\quad\quad\quad\quad \leq L_r \|\mu_{M,h}^\pi-\mu_{M,h}^\tpi\|_1. 
    \end{align*}
\end{assumption}
Note that our Assump.~\ref{assump:Lipschitz} only requires Lipschitz continuity on feasible densities. In contrast, contractivity assumes $L_r$ and $L_T$ are sufficiently small \citep{guo2019learning,yardim2022policy}, and prior works considering monotonicity \citep{perolat2021scaling,zhang2023learning} usually assume the transition is independent w.r.t.~density, i.e., $L_T = 0$.

\noindent We consider the same trajectory sampling model as \citet{huang2023statistical},
which is much weaker than the generative model assumptions requiring trajectories conditioning on arbitrary state densities in most MFGs literatures \citep{guo2019learning,perrin2020fictitious,anahtarci2023q}.
\begin{definition}\label{def:collection_process}
    The sampling model can be queried with arbitrary $\tpi,\pi\in\Pi$, and return a trajectory by executing $\tpi$ while transition and reward functions are fixed to $\mP_{M^*,h}(\cdot|\cdot,\cdot,\mu^\pi_{M^*,h})$ and $r_h(\cdot,\cdot,\mu^\pi_{M^*,h})$ for all $h$.
\end{definition}

\noindent\textbf{MFGs and $N$-Player Symmetric Anonymous Games}
MFGs can be regarded as the limit of Symmetric Anonymous Games (SAGs) when the number of agents $N$ approaches infinity~\citep{guo2019learning,yardim2022policy}.
As explained in \citep{huang2023statistical}, the sampling model (Def.~\ref{def:collection_process}) is reasonable for $N$-player SAGs with central controllers, which can manipulate all the agents' policies.
Given a SAG, it is known that the NE of its MFG approximation is a $O(N^{-1/2})$-approximate NE for the SAG \citep{yardim2024mean}, if all the agents execute that same NE policy.
In this way, one may interpret our setting as centralized training with decentralized execution.

\subsection{Multi-Type Mean-Field Games}
\noindent\textbf{Multi-Type Mean-Field MDP}
A finite horizon non-stationary Multi-Type (or Multi-Group) MF-MDP $\vecM$ with $W$ types of agents can be described by a collection of tuples $\vecM:=\{(\mu_1^w,\cS^w,\cA^w,H,\mP_\vecM^w,r^w)_{w\in[W]}\}$, where  we use $w$ in superscription to distinguish the initial state distribution, state-action spaces, the transition and reward functions in different groups. Besides, for any $w$, the transition and reward functions depend on densities in all types. More concretely, we have $\mP_{\vecM}^w:=\{\mP_{\vecM,h}^w\}_{h\in[H]}$ with $\mP_{\vecM,h}^w:\cS^w_h\times\cA^w_h\times\Delta(\cS^1_h)\times...\times\Delta(\cS^W_h)\rightarrow \Delta(\cS^w_{h+1})$ and $r^w:=\{r^w_h\}_{h\in[H]}$ with $r_h^w:\cS^w_h\times\cA^w_h\times\Delta(\cS^1_h)\times...\times\Delta(\cS^W_h)\rightarrow [0,\frac{1}{H}]$.
For each type of agents, we consider the Markovian policies $\Pi^w:=\{\pi^w:=\{\pi_h^w\}_{h\in[H]}|\forall h,~\pi_h^w:\cS^w_h\rightarrow\Delta(\cA^w_h)\}$, and use $\vecPi:=\{\vecpi:=\{\pi^w\}_{w\in[W]}|\forall w\in[W],\pi^w\in\Pi^w\}$ to denote the collection of policies for all types.
For the function approximation setting, we assume $W$ function classes $\cM^1,...,\cM^W$ are available, where $\forall w\in[W]$, $\cM^w\subset\{\{\mP^w_h\}_{h\in[H]}|\forall h\in[H],~\mP^w_h:\cS^w_h\times\cA^w_h\times\Delta(\cS^1_h)\times...\times\Delta(\cS^W_h)\rightarrow\Delta(\cS^w_h)\}$ is used to approximate the transition function for the $w$-th group. The MT-MFG function class $\veccM$ is then defined by $\veccM \gets \{\vecM := M^1\times...\times M^W|\forall w\in[W],~M^w\in\cM^w\}$, which we use to approximate the true model $\vecM^*$.

For the lack of space, we defer the definitions of value functions, Nash Equilibrium, and other related details to Appx.~\ref{appx:more_details_setting_MT}. 
For the assumptions in MT-MFG setting, we defer to Appx.~\ref{appx:MT_assumps_defs}.

\section{Partial Model-Based Eluder Dimension}\label{sec:P_MBED}
In the function approximation setting, the exploration challenge is related to the complexity of the function classes.
In this section, we introduce new notions to characterize the complexity of model function class for MFGs and its extension to Multi-Type MFGs setting. 
The proofs and additional discussions can be found in Appx.~\ref{appx:P_MBED}.

Inspired by the Eluder dimension of single-agent value function classes \citep{russo2013eluder,jin2021bellman} and mean-field model function classes \citep{huang2023statistical}, similarly, we use the length of independent sequences to characterize the complexity of function classes.
In Def.~\ref{def:eps_independent}, we first introduce the definition of standard $\epsilon$-independence in previous Eluder dimension literature, to highlight the difference from our partial $\epsilon$-independence.
Although we only consider the $l_1$-distance here, similar discussion can be generalized to other distances, e.g., the Hellinger distance.
\begin{definition}[$\epsilon$-Independence; \citep{huang2023statistical}]\label{def:eps_independent}
    Given $\cM$ and a data sequence $\{(s^i_h,a^i_h,\mu^i_h)\}_{i=1}^n \subset \cS_h\times\cA_h\times\Delta(\cS_h)$, we say $(s_h,a_h,\mu_h)$ is $\epsilon$-independent of $\{(s^i_h,a^i_h,\mu^i_h)\}_{i=1}^n$ w.r.t. $\cM$ if there exists $M,\tM\in\cM$ such that $\sum_{i=1}^{n} \|\mP_{M,h}(\cdot|s^i_h,a^i_h,\mu^i_h)- \mP_{\tM,h}(\cdot|s^i_h,a^i_h,\mu^i_h)\|_1^2 \leq \epsilon^2$ but $\|\mP_{M,h}(\cdot|s_h,a_h,\mu_h)- \mP_{\tM,h}(\cdot|s_h,a_h,\mu_h)\|_1 > \epsilon$.
    We call $\{(s^i_h,a^i_h,\mu^i_h)\}_{i=1}^n$ an $\epsilon$-independent sequence w.r.t. $\cM$ (at step $h$) if for any $i\in[n]$, $(s^i_h,a^i_h,\mu^i_h)$ is $\epsilon$-independent w.r.t. $\{(s^t_h,a^t_h,\mu^t_h)\}_{t=1}^{i-1}$.
\end{definition}
\begin{definition}[Partial $\epsilon$-Independence]\label{def:partial_eps_indp}
    Given $\cM$, a mapping $\nu_h:\cM\rightarrow\Delta(\cS_h)$, and a data sequence $\{(s_h^i,a_h^i)\}_{i=1}^n $ $\subset \cS_h\times\cA_h$, we say $(s_h,a_h)$ is partially $\epsilon$-independent of $\{(s^i_h,a^i_h)\}_{i=1}^n\subset \cS_h\times\cA_h$ w.r.t. $\cM$ and $\nu_h$, if there exists $M,\tM\in\cM$, s.t. $\sum_{i=1}^{n} \|\mP_{M,h}(\cdot|s^i_h,a^i_h,\nu_h(M))- \mP_{\tM,h}(\cdot|s^i_h,a^i_h,\nu_h(\tM))\|_1^2 \leq \epsilon^2$ but $\|\mP_{M,h}(\cdot|s_h,a_h,$ $\nu_h(M)) - \mP_{\tM,h}(\cdot|s_h,a_h,\nu_h(\tM))\|_1 > \epsilon$.
    We call $\{(s^i_h,a^i_h)\}_{i=1}^n$ a partially $\epsilon$-independent sequence w.r.t. $\cM$ and $\nu_h$ (at step $h$) if for any $i\in[n]$, $(s^i_h,a^i_h)$ is partially $\epsilon$-independent on $\{(s^t_h,a^t_h)\}_{t=1}^{i-1}$.
\end{definition}
Intuitively, a partially $\epsilon$-independent sequence of $\cM$ is  an independent sequence w.r.t. the function class converted from $\cM$ by using some mapping $\nu_h$ to ``partially'' fix the input (the density part) for each function in $\cM$.
We use $\dim_{{\rm E}|\nu_h}(\cM,\epsilon)$ to denote the length of the longest partially $\epsilon$-independent sequence w.r.t. $\cM$ and $\nu_h$ (at step $h$). 
Now, we are ready to define the Partial-MBED.
\begin{definition}[Partial MBED]\label{def:Partial_Eluder_Dim}
    Given a model class $\cM$, and a policy $\pi$, we define the mapping $\nu^\pi_h$: $\forall M\in\cM$, $\nu^\pi_h(M) := \mu^\pi_{M,h}$.
    The P-MBED of $\cM$ is defined by:
    $\dimPERII(\cM,\epsilon) := \max_{h\in[H]} \max_\pi \dim_{{\rm E}|\nu^\pi_h}(\cM,\epsilon)$. 
\end{definition}
By definition, P-MBED can be interpreted as the complexity of the \emph{single-agent model class} converted from the Mean-Field model class $\cM$ by \emph{partially (adversarially) fixing the density} of the functions' input.
In fact, different choices of $\nu$ in Def.~\ref{def:partial_eps_indp} may lead to different notions of complexity. In our main text, we stick to the choice in Def.~\ref{def:Partial_Eluder_Dim}, but in Appx.~\ref{appx:alter_P_MBED}, we discuss an alternative choice of $\nu$, its induced P-MBED and associated properties.

Next, we take the tabular setting as an example, and show that P-MBED of any function class for tabular MFGs can be controlled by $|\cS||\cA|$, while MBED \citep{huang2023statistical} can be exponential in $|\cS|$ in the worst case.
This is reasonable given the single-agent nature of P-MBED.
\begin{restatable}{proposition}{PropTabular}\label{prop:tabular}
    (Tabular Setting) For any $\cM$ and $\epsilon > 0$, $\dimPE(\cM,\epsilon) \leq |\cS||\cA|$, while there exists a concrete example of $\cM$ such that $\dimE(\cM,\epsilon) = \Omega(\exp(|\cS|))$.
\end{restatable}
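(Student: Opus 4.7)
The plan is to prove the two halves of Proposition~\ref{prop:tabular} independently; each reduces to a short pigeonhole-style argument once the right observation is made.

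For the upper bound $\dimPE(\cM,\epsilon)\le|\cS||\cA|$, I would show that any partially $\epsilon$-independent sequence $\{(s^i_h,a^i_h)\}_{i=1}^n$ with respect to the mapping $\nu^\pi_h$ must consist of distinct state-action pairs, which immediately yields $n\le|\cS_h||\cA_h|$ and hence the claim after maximizing over $h$ and $\pi$. The key observation is that $\nu^\pi_h(M)=\mu^\pi_{M,h}$ depends only on $M$, not on the sequence index $i$. So for any candidate witnesses $M,\tM\in\cM$, if $(s^i_h,a^i_h)=(s^j_h,a^j_h)$ with $i<j$, the $l_1$ distance $\|\mP_{M,h}(\cdot|s^i_h,a^i_h,\nu^\pi_h(M))-\mP_{\tM,h}(\cdot|s^i_h,a^i_h,\nu^\pi_h(\tM))\|_1$ is literally identical to its analogue at index $j$. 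Partial independence at $j$ then requires this common quantity to exceed $\epsilon$, while forcing the sum over $i'<j$ (which already contains the $i$-th term of magnitude $>\epsilon^2$) to be at most $\epsilon^2$, a contradiction.

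For the exponential lower bound on $\dimE$, I would construct a simple tabular example that exploits the continuous density space. Fix $|\cA|=1$, let $n=|\cS|$, pick $N=2^n$ pairwise distinct densities $\mu_1,\ldots,\mu_N\in\Delta(\cS)$ (trivially possible since $\Delta(\cS)$ is a continuum), and define the family $\cM=\{M_I:I\subseteq[N]\}$ by setting $\mP_{M_I}(\cdot|s,a,\mu_j)=\delta_{s_1}$ when $j\in I$ and $\delta_{s_2}$ otherwise, for two distinguished states $s_1\neq s_2$; extend $\mP_{M_I}$ arbitrarily off the chosen grid. At step $i$ of the sequence $(s,a,\mu_1),\ldots,(s,a,\mu_N)$, the witnesses $M=M_{\{i\}}$ and $\tM=M_\emptyset$ suffice: they agree on every $\mu_j$ with $j<i$ (both outputting $\delta_{s_2}$) yet differ by total variation $2$ at $\mu_i$, so the sequence is $\epsilon$-independent for every $\epsilon<2$, giving $\dimE(\cM,\epsilon)\geq 2^{|\cS|}=\Omega(\exp(|\cS|))$.

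The main subtlety, rather than a technical obstacle, is to articulate clearly \emph{why} the same tabular ambient setting permits this exponential gap. In the full $\epsilon$-independence definition the density coordinate is a free sequence variable decoupled from the model, which is precisely what lets the construction above separate pairs of models across $2^n$ grid densities; in partial $\epsilon$-independence the density is tied to $\nu^\pi_h(M)$ and thus determined by the chosen model, collapsing the effective input domain back to the finite set $\cS\times\cA$. I do not anticipate any deeper obstacle beyond presenting this dichotomy cleanly.
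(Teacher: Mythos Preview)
Your upper bound argument is correct and is essentially the paper's, just spelled out more carefully: the paper merely observes that once the mapping $\nu^\pi_h$ is fixed, the converted class behaves like a single-agent model class over $\cS_h\times\cA_h$, so any partially independent sequence has length at most $|\cS||\cA|$. Your explicit pigeonhole contradiction is the rigorous version of that sentence.

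Your lower bound is also correct for the proposition as literally stated, but the route differs from the paper's. You build a direct combinatorial family indexed by subsets of a grid of $2^{|\cS|}$ densities, with Dirac-mass transitions; the witnesses $M_{\{i\}},M_\emptyset$ then give an $\epsilon$-independent sequence of length $2^{|\cS|}$ in one line. The paper instead reuses the three-layer construction from its MFC lower bound (Theorem~\ref{thm:LB_MFC}): each model there is parametrized by a single target density $\mu^*$ drawn from a coarse grid $\cU_\zeta$, with transitions that vary \emph{smoothly} in $\|\mu-\mu^*\|_1$ with Lipschitz constant $L_T$, and the independent sequence runs over the grid points $\mu^i\in\cU_\zeta$. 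Your construction is shorter and self-contained; the paper's is heavier but has the advantage that every model in the exhibited class satisfies Assumption~\ref{assump:Lipschitz}, so the exponential blowup of $\dimE$ is shown to persist even under the paper's standing regularity hypothesis---which is arguably the more pointed message in context. If you want your example to carry the same weight, you would need to smooth out the $\delta_{s_1}/\delta_{s_2}$ jumps (e.g., interpolate as the paper does), or else remark explicitly that Lipschitz continuity is not needed for the bare statement.
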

Below we provide the linear mean-field model classes with decomposable transition functions as another example.
When the transition is independent w.r.t. density $\mu$ (i.e. $G(\mu)$ is constant), the linear MFGs reduce to the single-agent linear MDP \citep{jin2020provably}.
As we can see, the P-MBED of the model class of linear MFGs is only related to the dimension of the state-action feature, which matches the complexity of their single-agent correspondence.
\begin{proposition}[Linear MFGs; Informal version of Prop.~\ref{prop:MBED_Linear_MFMDP_formal}]\label{prop:Linear_MF_MDP_informal} Consider the  model class: $\cM_\Psi := \{\mP_\psi|\mP_\psi(\cdot|s,a,\mu):=\phi(s,a)\trans G(\mu)\psi(s');\psi\in \Psi\}$, with known feature $\phi(\cdot,\cdot)\in\mR^{\tilde{d}}$, $G(\cdot)\in\mR^{\tilde{d}\times d}$, and a next-state feature class $\Psi$ satisfying some normalization conditions. Then $\dimPE(\cM,\epsilon) = \tilde{O}(\tilde{d})$.
\end{proposition}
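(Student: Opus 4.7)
The plan is to reduce the partial-$\epsilon$-independence condition for the decomposable linear class $\cM_\Psi$ to the standard linear eluder argument on vectors in $\mR^{\tilde{d}}$, and then invoke the elliptic potential lemma. Fix a step $h$ and a policy $\pi$, and suppose $\{(s^i,a^i)\}_{i=1}^n$ is a partially $\epsilon$-independent sequence w.r.t.\ $\cM_\Psi$ and $\nu^\pi_h$. By Def.~\ref{def:partial_eps_indp}, for each $i$ there exist witnesses $\psi_i,\tilde\psi_i\in\Psi$; write $\mu_i:=\mu^\pi_{\mP_{\psi_i},h}$ and $\tilde\mu_i:=\mu^\pi_{\mP_{\tilde\psi_i},h}$. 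The decomposable form of $\cM_\Psi$ yields
\[
    \mP_{\psi_i}(\cdot\mid s,a,\mu_i)-\mP_{\tilde\psi_i}(\cdot\mid s,a,\tilde\mu_i) = \phi(s,a)^\top g_i(\cdot),\quad g_i(s'):= G(\mu_i)\psi_i(s')-G(\tilde\mu_i)\tilde\psi_i(s')\in\mR^{\tilde d}.
\]
So every difference of transitions is linear in $\phi(s,a)$, with a vector-valued coefficient $g_i(s')$ that absorbs the density-dependent part.

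Next, I would use $\ell_1$ duality to collapse $g_i(\cdot)$ into a single direction in $\mR^{\tilde d}$. Concretely, pick a sign pattern $\sigma_i:\cS\to\{\pm1\}$ attaining $\sum_{s'}|\phi(s^i,a^i)^\top g_i(s')|=\phi(s^i,a^i)^\top v_i$ where $v_i:=\sum_{s'}\sigma_i(s')g_i(s')\in\mR^{\tilde d}$. Since $|\phi(s^t,a^t)^\top v_i|\le\sum_{s'}|\phi(s^t,a^t)^\top g_i(s')|=\|\mP_{\psi_i}(\cdot\mid s^t,a^t,\mu_i)-\mP_{\tilde\psi_i}(\cdot\mid s^t,a^t,\tilde\mu_i)\|_1$, the independence inequalities transfer to
\[
    \bigl(\phi(s^i,a^i)^\top v_i\bigr)^2 > \epsilon^2,\qquad \sum_{t=1}^{i-1}\bigl(\phi(s^t,a^t)^\top v_i\bigr)^2\le\epsilon^2.
\]
Thus the problem reduces to bounding the length of an $\epsilon$-independent sequence for a linear function class over $\mR^{\tilde d}$, modulo a uniform bound on $\|v_i\|_2$. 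This uniform bound follows from the normalization conditions assumed on $\Psi$, $G$, and $\phi$ in the formal version (Prop.~\ref{prop:MBED_Linear_MFMDP_formal}): each coordinate of $\sum_{s'}\sigma(s')\psi(s')$ is controlled as a bounded signed measure, and $G(\cdot)$ has bounded operator norm, giving $\|v_i\|_2\le B$ for some $B=\mathrm{poly}(\tilde d,d)$.

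With these two ingredients in hand, the remainder is the textbook elliptic potential argument. Define $\Lambda_i := \epsilon^2 I + \sum_{t<i}\phi(s^t,a^t)\phi(s^t,a^t)^\top$; then $v_i^\top \Lambda_i v_i\le \epsilon^2(1+B^2)$, whereas $v_i^\top\phi(s^i,a^i)\phi(s^i,a^i)^\top v_i > \epsilon^2$, so $\phi(s^i,a^i)^\top\Lambda_i^{-1}\phi(s^i,a^i)\ge 1/(1+B^2)$. Applying the standard determinant/potential inequality $\sum_{i=1}^{n}\min\{1,\phi_i^\top\Lambda_i^{-1}\phi_i\} = O\bigl(\tilde d\log(1+n\|\phi\|_\infty^2/(\tilde d\epsilon^2))\bigr)$ forces $n=\tilde O(\tilde d)$, as claimed. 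Taking the maximum over $h$ and $\pi$ in Def.~\ref{def:Partial_Eluder_Dim} preserves this bound because both $\phi$ and the normalization of $v_i$ are policy- and step-independent.

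The main obstacle I anticipate is the normalization step that justifies $\|v_i\|_2\le B$: the sum $\sum_{s'}\sigma(s')\psi(s')$ ranges over all sign patterns, so a naive bound would scale with $|\cS|$. Circumventing this requires the structural assumption that each coordinate of $\psi(\cdot)$ behaves as a signed measure with bounded total variation (a condition natural in the linear-MDP literature), together with $\|G(\mu)\|_{\mathrm{op}}\le O(1)$. Once this is in place, the reduction above is essentially mechanical and the resulting bound is $\dimPE(\cM_\Psi,\epsilon)=\tilde O(\tilde d)$ with logarithmic factors in $B/\epsilon$.
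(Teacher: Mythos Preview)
Your proposal is correct and follows essentially the same route as the paper's proof of Prop.~\ref{prop:MBED_Linear_MFMDP_formal}: write the transition difference as $\phi(s,a)^\top g_i(\cdot)$, collapse the $\ell_1$-norm via a sign pattern into a single direction $v_i\in\mR^{\tilde d}$, and run the elliptic potential/determinant argument, with the paper's normalization assumption $\|\sum_{s'}\sigma(s')\psi(s')\|_2\le C_\Psi$ being precisely the bound you anticipate needing on $v_i$. The only cosmetic difference is your regularizer $\epsilon^2 I$ versus the paper's $\lambda I$ with $\lambda=\epsilon^2/(4C_\Psi^2)$, which tightens constants but not the $\tilde O(\tilde d)$ rate.
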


Similarly, for model classes in Multi-Type MFGs setting, we can define the Multi-Type P-MBED generalized from $\dimPE$ in MFGs, which we denote as $\dimMTPE$. We defer its formal definition to Appx.~\ref{appx:P_MBED_MT}. 
Likewise, $\dimMTPE$ can be regarded as the complexity measure for a collection of $W$ single-agent model classes converted from $\veccM$.
In the tabular case (resp. Prop.~\ref{prop:Tabular_MF_MDP_MT}), we have $\dimMTPE(\veccM,\epsilon') = \tilde{O}(\sum_{w\in[W]}|\cS^w||\cA^w|)$, and in linear MT-MFG setting with decomposable transitions (resp. Prop.~\ref{prop:Linear_MF_MDP}), $\dimMTPE(\veccM,\epsilon')=\tilde{O}(\sum_{w\in[W]} d^w)$ where $\{d^w\}_{w\in[W]}$ are the dimensions of the state-action features.
\section{Sample Efficiency of Learning in MFGs}\label{sec:learning_MFG}
\begin{algorithm*}
    \textbf{Input}: Model Class $\cM$; Parameters $\epsilon_0,\teps,\beps,\delta$.\\
    \textbf{Initialize}: $\cM^1 \gets \cM$, $\delta_0 \gets \frac{\delta}{\log_2|\cM| + 1}$\\
    \For{$k=1,2,...$}{
        $\pi^k \gets \arg\min_{\pi}|\cB_{\pi}^{\epsilon_0}(M_\Central^\pi;\cM^k)|$; \label{line:find_distinguish_policy}\\
        \lIf{$|\cB_{\pi^k}^{\epsilon_0}(M_\Central^{\pi^k};\cM^k)| \leq \frac{|\cM^k|}{2}$}{\label{line:if_branch}
            $\cM^{k+1} \gets {\texttt{ModelElim}}(\pi^k, \cM^k, \teps, \delta_0)$
        }
        \Else{\label{line:else_branch}
            $\pi^{\NE,k}_{\Bridge} \gets \texttt{BridgePolicy}(\cM^k,\beps)$; \quad $\cM^{k+1} \gets {\texttt{ModelElim}}(\pi^{\NE,k}_{\Bridge}, \cM^k, \teps, \delta_0)$;\\
            Randomly pick $\tM^k$ from $\cM^{k+1}$; \quad $\cE^\NE_{\tM^k}(\pi^{\NE,k}_\Bridge) \gets \max_\pi \Delta_{\tM^k}(\pi,\pi^{\NE,k}_\Bridge)$\\ 
            \lIf{$\cE^\NE_{\tM^k}(\pi^{\NE,k}_\Bridge) \leq \frac{3\epsilon}{4}$}{\label{line:else_if_branch}
                \Return{$\pi^{\NE,k}_\Bridge$}
            }
        }
        \lIf{$|\cM^k| = 1$}{
            Return the NE of the model
        }
    }
    \caption{\MEBP: \textbf{M}odel \textbf{E}limination via \textbf{B}ridge \textbf{P}olicy}\label{alg:learning_with_DCP}
\end{algorithm*}
In this section, we show that the sample complexity of learning NE in MFGs is indeed governed by our new complexity notion P-MBED. We highlight our main algorithm and sample complexity results in Sec.~\ref{sec:main_results_MFG}, and then explain details in the algorithm design and technical novelty in Sec.~\ref{sec:discussion_MFG}. 
The missing details and proofs for results in this section are deferred to Appx.~\ref{appx:learning_with_DCP}.

\subsection{Main Algorithm and Highlight of Main Results}\label{sec:main_results_MFG}

Before proceeding to the algorithms, we first introduce several useful notions. Given a reference policy $\pi$, we denote 
$
d(M,\tM|\pi) := \max_\tpi~~d^{\tpi}(M,\tilde{M}|\pi) \vee d^{\tpi}(\tilde{M}, M,|\pi)$
as the conditional model distance between $M$ and $\tilde{M}$, where 
$d^{\tpi}(M,\tilde{M}|\pi):=\EE_{\tpi,M(\pi)}[\sum_{h=1}^H\|\mP_{M,h}(\cdot|\cdot,\cdot,\mu^\pi_{M,h})-\mP_{\tM,h}(\cdot|\cdot,\cdot,\mu^\pi_{\tM,h})\|_1].$
Given a model class $\cM'$, any $M\in\cM'$, and any policy $\pi$, we define the $\epsilon_0$-neighborhood of $M$ in $\cM'$ to be: $\cB_{\pi}^{\epsilon_0}(M;\cM') := \{M'\in\cM' | d(M,M'|\pi)\leq \epsilon_0\}$.
The ``Central Model'' (abbr. CM) in $\cM'$ w.r.t. policy $\pi$ is defined to be the model with the most number of neighbors: $M_{\Central}^{\epsilon_0}(\pi;\cM') := \arg\max_{M\in\cM'} |\cB_{\pi}^{\epsilon_0}(M;\cM')|$.
Besides, when $\epsilon_0$ and $\cM'$ is clear from the context, we will use $M_\Central^\pi$ as a short note of $M_{\Central}^{\epsilon_0}(\pi;\cM')$.
Lastly, $\forall \pi,\pi'\in\Pi$, we define $d_{\infty,1}(\pi,\pi'):=\max_{h,s_h}\|\pi(\cdot|s_h)-\pi'(\cdot|s_h)\|_1$.

We provide our main algorithm in Alg.~\ref{alg:learning_with_DCP}.
The basic idea is to find a sequence of ``reference policies'' ($\pi^k$ or $\pi^{\NE,k}_\Bridge$, $k=1,2,...$) and run the model elimination steps (Alg.~\ref{alg:elimination_DCP_formal} as \texttt{ModelElim}) to gradually remove models in $\cM$ that distinct from $M^*$ conditioning on these reference policies, until find an approximate NE.
Next, we highlight our main results and its implications.

\begin{restatable}{theorem}{ThmMainResult}[Informal version of Thm.~\ref{thm:sample_complexity_MFG}]\label{thm:sample_complexity_MFG_informal}
    Under Assump.~\ref{assump:realizability} and~\ref{assump:Lipschitz}, 
    with appropriate hyperparameter choices, 
    w.p. $1-\delta$, Alg.~\ref{alg:learning_with_DCP} terminates at some $k\leq \log_2|\cM| + 1$ and returns an $\epsilon$-NE of $M^*$
   after consuming at most 
   $$
   \tilde{O}\left(\frac{H^7}{\epsilon^2}(1+L_rH)^2 \dimPE(\cM,\epsilon') \log^3\frac{|\cM|}{\delta}\right)
   $$
   trajectories, 
   where $\epsilon'=O(\frac{\epsilon}{H^3(1+L_rH)(1+L_T)^H})$.
\end{restatable}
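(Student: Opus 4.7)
The plan is to decompose the total sample complexity into (i) an outer iteration bound of $\log_2|\cM|+1$ and (ii) a per-iteration bound proportional to $\dimPE(\cM,\epsilon')$. Throughout the argument I would maintain the high-probability invariant $M^*\in\cM^k$ by union-bounding all \texttt{ModelElim} failures with $\delta_0=\delta/(\log_2|\cM|+1)$.

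First I would analyze the \texttt{ModelElim} subroutine in isolation. Given a reference policy $\pi^k$, its role is to eliminate every $M\in\cM^k$ with $d(M^*,M\mid \pi^k)$ much larger than $\teps$ while retaining $M^*$ with probability $1-\delta_0$. This is exactly where P-MBED enters: because samples are gathered under the fixed density trajectory $\{\mu^{\pi^k}_{M^*,h}\}_h$ and candidate models are distinguished via the same density profile $\nu^{\pi^k}_h(M)=\mu^{\pi^k}_{M,h}$, the information-theoretic cost of eliminating a model corresponds to extending a partial $\epsilon$-independent sequence in the sense of Def.~\ref{def:partial_eps_indp}. A pigeonhole over the maximal such sequence (of length at most $\dimPE(\cM,\epsilon')$) combined with a standard MLE/Hellinger concentration argument yields the per-iteration sample cost $\tilde{O}\bigl(\dimPE(\cM,\epsilon')H^{7}(1+L_rH)^{2}\epsilon^{-2}\log(|\cM|/\delta)\bigr)$.

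The iteration bound follows from a branch analysis of Alg.~\ref{alg:learning_with_DCP}. In the if-branch (Line~\ref{line:if_branch}), the condition $|\cB^{\epsilon_0}_{\pi^k}(M_\Central^{\pi^k};\cM^k)|\leq|\cM^k|/2$ ensures that $M^*$ itself has at most $|\cM^k|/2$ many $\epsilon_0$-neighbors under $\pi^k$, since otherwise $M^*$ would be a better central model contradicting the choice of $M_\Central^{\pi^k}$; tuning $\teps\ll\epsilon_0$ so that \texttt{ModelElim} retains only models within the $\epsilon_0$-ball of $M^*$ therefore halves $|\cM^k|$. In the else-branch, the large central neighborhood signals that the minimizing $\pi^k$ cannot distinguish many models, and \texttt{BridgePolicy} is designed to return a policy $\pi^{\NE,k}_\Bridge$ that is an $O(\beps)$-NE for every model in a large subset of $\cM^k$ containing $M^*$. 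After the subsequent \texttt{ModelElim} conditioned on $\pi^{\NE,k}_\Bridge$, any surviving $\tM^k\in\cM^{k+1}$ is close to $M^*$ under $\pi^{\NE,k}_\Bridge$, and by Assump.~\ref{assump:Lipschitz} together with a performance-difference argument bridging $\tM^k$ and $M^*$ under the common density $\mu^{\pi^{\NE,k}_\Bridge}_{\cdot,h}$, the check $\cE^\NE_{\tM^k}(\pi^{\NE,k}_\Bridge)\leq 3\epsilon/4$ certifies an $\epsilon$-NE of $M^*$. Hence at most $\log_2|\cM|+1$ iterations occur.

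The main obstacle, and the technical heart of the argument, is the correctness of \texttt{BridgePolicy} in the else-branch: one must show that whenever the $\pi$-minimizing central neighborhood is large, there exists a single policy whose approximate-NE property on the central model transfers to $M^*$ with only $O(\epsilon_0)$ degradation, where the transfer relies on $(1+L_T)^H$ Lipschitz propagation of density perturbations across $H$ steps and the $(1+L_rH)$ factor in reward perturbation. Carefully balancing $\epsilon_0,\teps,\beps$ so that both branches target the same final accuracy $\epsilon$, while simultaneously setting $\epsilon'=O(\epsilon/(H^{3}(1+L_rH)(1+L_T)^H))$ small enough for the P-MBED bound to absorb all Lipschitz blow-ups, is the delicate calibration. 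The claimed total sample complexity then follows by multiplying the per-iteration cost by $\log_2|\cM|+1$ and union-bounding over iteration failures.
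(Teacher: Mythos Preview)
Your overall decomposition into an outer $\log_2|\cM|+1$ loop and a per-iteration \texttt{ModelElim} cost controlled by $\dimPE$ matches the paper, and your \texttt{If}-branch argument is correct. The gap is in the \texttt{Else}-branch.

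You assert that \texttt{BridgePolicy} returns a policy $\pi^{\NE,k}_\Bridge$ that is an $O(\beps)$-NE ``for every model in a large subset of $\cM^k$ \emph{containing $M^*$}'', and your final paragraph frames the hard step as showing that the approximate-NE property on the central model ``transfers to $M^*$''. This is not what the paper proves, and it is not true in general: Thm.~\ref{lem:exists_bridge_policy} only establishes that $\pi^{\NE,k}_\Bridge$ is an approximate NE of its \emph{own} central model $M^k_\Central := M^{\epsilon_0}_\Central(\pi^{\NE,k}_\Bridge;\cM^k)$. Nothing guarantees $M^*\in\cB^{\epsilon_0}_{\pi^{\NE,k}_\Bridge}(M^k_\Central;\cM^k)$, so $\pi^{\NE,k}_\Bridge$ need not be an approximate NE of $M^*$ at all.

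Consequently, your argument does not cover the case where the check $\cE^\NE_{\tM^k}(\pi^{\NE,k}_\Bridge)\leq 3\epsilon/4$ \emph{fails}, and without that case the iteration bound is unjustified. The paper's actual argument (Thm.~\ref{thm:if_else_branches}) is a dichotomy. If the check passes, return. If it fails, then since $\tM^k$ is close to $M^*$ after \texttt{ModelElim}, $\pi^{\NE,k}_\Bridge$ also has large NE-gap on $M^*$; combining this with Lem.~\ref{lem:BP_close_NE_CM} (which says $\pi^{\NE,k}_\Bridge$ has \emph{small} NE-gap on $M^k_\Central$) and Lem.~\ref{lem:exploitability_diff} forces $d(M^*,M^k_\Central\mid \pi^{\NE,k}_\Bridge)$ to be large. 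Hence every model in the more-than-half-sized neighborhood $\cB^{\epsilon_0}_{\pi^{\NE,k}_\Bridge}(M^k_\Central;\cM^k)$ is far from $M^*$ and is eliminated, again halving $|\cM^k|$. The point of the bridge construction is not that it always produces an NE for $M^*$, but that it produces a \emph{witness}: either the NE check certifies a good policy, or its failure localizes a large identifiable subset that provably excludes $M^*$.
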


\para{Model-Based RL for MFGs is not Statistically Harder than Single-Agent RL}
As we will explain more in the next section, \texttt{ModelElim} only needs to be a \emph{single-agent model elimination subroutine}, and it is the only step consuming samples.
Therefore, Thm.~\ref{thm:sample_complexity_MFG_informal} suggests that the sample complexity of learning MFGs can be characterized by a $O(\log|\cM|)$ number of single-agent model elimination sub-problems, whose learning complexity is controlled by P-MBED.
As a result, the total sample complexity only scales with P-MBED and the log-covering number of $\cM$.

Based on the discussion in Sec.~\ref{sec:P_MBED}, we can expect for many model classes with low P-MBED (e.g. tabular setting Prop.~\ref{prop:tabular}, linear setting Prop.~\ref{prop:Linear_MF_MDP_informal}), learning MFGs is provable sample-efficient.  In particular, for tabular MFGs where $\dimPE(\cM,\epsilon') \leq |\cS||\cA|$, our result yields a sample complexity with polynomial dependence on $|\cS|,|\cA|, H$, which implies that \emph{tabular MFGs are provably efficient in general} if considering the model-based function approximation, even without assuming contractivity or monotonicity that are often required in existing works~\citep{guo2019learning,yardim2022policy, perrin2020fictitious}.   
Compared with recent results in function approximation setting for MFGs \citep{huang2023statistical} or MFC \citep{pasztor2021efficient} with similar Lipschitz assumptions, our result does not suffer the exponential term $(1+L_T)^{H}$ (see Remark~\ref{remark:exp_disappears} for more explanation).

\textbf{Additional Remarks on $\log|\cM|$}\quad 
Although low log-covering number of function class is regarded as a standard assumption in many MARL works \citep{cui2023breaking,wang2023breaking}, we would like to take the tabular MFGs as an example and provide remarks about the magnitude of $\log|\cM|$.
Under Assump.~\ref{assump:Lipschitz}, with appropriate discretization, the $\epsilon$-cover for all possible transition functions could be $ \Omega(\exp(SAH\cN_\epsilon(\Delta(\cS))))$, where $\cN_\epsilon(\Delta(\cS))$ denotes the covering number of density space $\Delta(\cS)$ and we omit $L_T,L_r$.
As a result, in the worst case, $\log|\cM|=\Omega(\cN_\epsilon(\Delta(\cS)))$, which could be exponential in $SA$.
Nonetheless, there are many examples, such that, even in the worst case, $\log|\cM|$ is acceptable.
For instance, if the model class is parameterized by some $\theta\in\Theta$ (e.g. Neural Networks) and take the concatenation of $[s,a,\mu]\in\mR^{\dim(\cS) + \dim(\cA) + S}$ as inputs, where we use $\dim(\cdot)$ to denote the dimension of a given set. 
Then $\log|\cM|=\tilde{O}(\dim(\Theta))$, which could just scale with $\tilde{O}(\Poly(\dim(\cS), \dim(\cA), S)\})$.
As another example, when the transition function only depends on some sufficient statistics of density instead of the exact density, (e.g. $\mP(\cdot|s,a,\mu) = \mP(\cdot|s,a,\EE_{\tilde{s}\sim\mu}[\tilde{s}])$), we may have $\log|\cM|=\tilde{O}(\Poly(S,A,H))$ in the worst case. Note that, for the single-agent RL, the largest log-covering number of models is also bounded by $\tilde{O}(\Poly(S,A,H))$ (folklore).

\begin{algorithm*}
    \textbf{Input}: Reference Policy $\pi$; $\bcM$; $\teps,\delta$\\
    \textbf{Initialize}: $\bcM^1 \gets \bcM$;~$\cZ^0\gets\{\}$; Set $T$ by Thm.~\ref{thm:Elimination_Alg};\\
    \For{$t=1,2,...,T$}{
        $\tpi^t \gets \arg\max_{\tpi}\max_{M,M' \in \bcM^t}\EE_{\tpi,M(\pi)}[$ $\sum_{h=1}^H \|\mP_{M,h}(\cdot|\cdot,\cdot,\mu^\pi_{M,h}) - \mP_{M',h}(\cdot|\cdot,\cdot,\mu^\pi_{M',h})\|_1]$. \label{line:max_margin_formal}\\
        $\Delta_{\max}^t\gets $ the maximal value achieved above. \\
        \lIf{$\Delta_{\max}^t \leq \teps$}{\label{line:elimination_termination_formal}
            \Return $\bcM^t$
        }
        \Else{
            \For{$h=1,2...,H$}{\label{line:start_for}
                Query Sampling Oracle (Def.~\ref{def:collection_process}) with $(\pi,\pi)$; collect the data at step $h$: $z^t_h:=\{(s_h^{t},a_h^{t},s_{h+1}'^{t})\}$. \label{line:sample_1}\\
                Query Sampling Oracle (Def.~\ref{def:collection_process}) with $(\tpi^t,\pi)$; collect the data at step $h$: $\tilde{z}^t_h:=\{(s_h^{t},a_h^{t},s_{h+1}'^{t})\}$.\\
                $\cZ^t \gets \cZ^{t-1} \cup z^t_h \cup \tilde{z}^t_h$.\label{line:end_for}\label{line:sample_2}
            }
            $\forall M \in \bcM^t$,
            $l^\pi_\MLE(M;\cZ^t):=\sum_{i\in[t],h\in[H]} \log \mP_{M,h}(s_{h+1}'^i|s_h^i,a_h^i,\mu^\pi_{M,h}) + \log \mP_{M,h}(\ts_{h+1}'^i|\ts_h^i,\ta_h^i,\mu^\pi_{M,h}).
            $\label{line:MLE_loss}\\
            $\bcM^{t+1} \gets \{M\in\bcM^t|~l^\pi_{\MLE}(M;\cZ^t) \geq \max_{\tM}l^\pi_{\MLE}(\tM;\cZ^t) - \log\frac{HT|\cM|}{\delta}\}$.
        }
    }
    \Return $\bcM^T$.
    \caption{Model Elimination given a Policy}\label{alg:elimination_DCP_formal}
\end{algorithm*}

\textbf{Exponential Separation between MFGs and MFC}~
Different from MFGs, in Mean-Field Control (MFC) setting, agents cooperate to find an optimal policy to maximize the total return. 
Previous work \citep{huang2023statistical} suggests that both MFC and MFGs can be solved via a unified MLE framework with similar sample complexity upper bounds. 
One natural question is: \emph{whether learning MFC can also be as sample-efficient as single-agent RL}?

We provide a negative answer to this question. In Thm.~\ref{thm:LB_MFC}, we show that even in tabular setting, there exists  a hard instance such that learning MFC requires $\Omega(\exp(|\cS|))$ samples.
This suggests an exponential separation between learning MFC and MFGs from information-theoretical perspective. Intuitively, for MFC, in the worst case, the agent should explore the entire $\cS\times\cA\times\Delta(\cS)$ space to identify the policy that achieves the maximal return.
In contrast, as we will explain in Lem.~\ref{lem:local_alignment}, in MFGs setting, the learner does not have to explore the entire state-action-density space; instead, finding a ``locally-aligned equilibrium policy'' is enough. 

\subsection{Algorithm Design and Proof Sketch}\label{sec:discussion_MFG}

\subsubsection{\texttt{ModelElim}: The Model Elimination Step}
\texttt{ModelElim} can be arbitrary single-agent model elimination procedures.
Here we provide an example in Alg.~\ref{alg:elimination_DCP_formal}.

The basic idea of Alg.~\ref{alg:elimination_DCP_formal} is to eliminate models not aligned with $M^*$ conditioning on the given reference policy $\pi$.
In each iteration, we first find a tuple $(\tpi^t,M^t,M'^t)$ resulting in the maximal discrepancy $\Delta_{\max}^t$.
As long as $\Delta_{\max}^t > \teps$, we collect samples and remove models with low likelihood.
With high probability, on the one hand, $M^*$ will never be ruled out under Assump.~\ref{assump:realizability}; 
on the other hand, the growth of $\sum_t \Delta_{\max}^t$ is controlled by P-MBED.
As a result, the algorithm will terminate eventually and return a model class only including those $M$ with small $d(M,M^*|\pi)$.
We summarize the result in the theorem below.
\begin{restatable}{theorem}{ThmElimAlg}[Informal version of Thm.~\ref{thm:Elimination_Alg}]\label{thm:Elimination_Alg_Informal}
    Given any reference policy $\pi,\teps,\delta \in (0,1)$, if $M^*\in\bcM$, by choosing $T = \tilde{O}(\frac{H^4 \dimPE(\cM,\epsilon')}{\teps^2})$
    with $\epsilon' = O(\frac{\teps}{H^{2}(1+L_T)^{H}})$,
    w.p. $1-\delta$,
    Alg.~\ref{alg:elimination_DCP_formal} terminates at some $T_0 \leq T$, and return $\bcM^{T_0}$ s.t. (i) $M^* \in \bcM^{T_0}$ (ii) $\forall M \in \bcM^{T_0}$, $d(M^*,M|\pi)\leq \teps$.
\end{restatable}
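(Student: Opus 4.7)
The plan has three components: an in-sample MLE guarantee, a change-of-measure lemma that lifts this guarantee to the model distance $d^{\tpi}(M,M^*|\pi)$ appearing in $\Delta_{\max}^t$, and an eluder-style pigeonhole argument against $\dimPE(\cM,\epsilon')$ to bound the number of iterations $T_0$.

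First I would apply a standard MLE tail bound for sequential data (in Hellinger/TV form, e.g.\ the Zhang--Liu--Jin type analysis) to the loss $l^\pi_{\MLE}$. At every round $t$, $z^t_h$ and $\tilde{z}^t_h$ are fresh trajectories drawn from the oracle under the true model fixed at $\mu^\pi_{M^*,h}$, so the log-likelihood increments are conditionally independent martingale differences given the history. A union bound over $\cM$ and $T$ then yields, with probability $1-\delta$: (i) $M^*\in\bcM^t$ for all $t$ (because its true expected log-likelihood dominates), and (ii) for any surviving $M\in\bcM^{t+1}$, $\sum_{i\leq t,\,h}\EE_{\tpi^i,M^*(\pi)}\bigl[\|\mP_{M,h}(\cdot|s,a,\mu^\pi_{M,h})-\mP_{M^*,h}(\cdot|s,a,\mu^\pi_{M^*,h})\|_1^2\bigr]\lesssim \log(|\cM|HT/\delta)$, and similarly with the outer policy $\pi$ in place of $\tpi^i$.

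Next, I would convert (ii) into a bound on $d^{\tpi^i}(M,M^*|\pi)$. The subtlety is that $d^{\tpi^i}(M,M^*|\pi)$ takes the expectation under $\tpi^i,M(\pi)$, while the MLE bound is under $\tpi^i,M^*(\pi)$. I would first show by induction on $h$ that $\|\mu^\pi_{M,h}-\mu^\pi_{M^*,h}\|_1\leq (1+L_T)^{h-1}\sum_{\ph<h}\EE_{\pi,M^*(\pi)}[\|\mP_{M,\ph}(\cdot|\cdot,\cdot,\mu^\pi_{M,\ph})-\mP_{M^*,\ph}(\cdot|\cdot,\cdot,\mu^\pi_{M^*,\ph})\|_1]$, using Lipschitz continuity (Assumption~\ref{assump:Lipschitz}) in the inductive step. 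A simulation-lemma argument then bridges the two trajectory measures at price at most $(1+L_T)^H$, yielding $d^{\tpi^i}(M,M^*|\pi)\lesssim (1+L_T)^H\sqrt{H\cdot \text{(in-sample MLE error)}}$ and, symmetrically, $d^{\tpi^i}(M^*,M|\pi)$.

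Third, I would run a P-MBED-based pigeonhole argument. If the algorithm has not terminated by the end of iteration $t$, then $\Delta_{\max}^{t}>\teps$; plugging $M^*$ as a pivot into the triangle inequality shows that there exists $M\in\bcM^{t}$ with $d^{\tpi^{t}}(M,M^*|\pi)>\teps/2$. By Step~2, this further implies an in-sample expected per-step TV of at least $\teps'\defeq \Theta(\teps/(H(1+L_T)^H))$ under the (per-horizon) distribution of $(s,a)$ induced by $\tpi^t,M^*(\pi)$; summed over $h$ and combined with Markov's inequality, we can extract at least one horizon $h_t$ at which the model pair separates by $\teps'/H$ in expectation. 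Interpreting the sequence $\{(s_h^t,a_h^t)\}_{t,h}$ as queries to the partial-independence oracle under the mapping $\nu_h^\pi(M)=\mu^\pi_{M,h}$, an eluder-pigeonhole lemma forces the number of such rounds to be at most $H\cdot\dimPE(\cM,\epsilon')\cdot O(\log(|\cM|T/\delta)/\epsilon'^2)$ with $\epsilon'=\Theta(\teps/(H^2(1+L_T)^H))$; setting $T=\tilde O(H^4\dimPE(\cM,\epsilon')/\teps^2)$ therefore forces termination, and the termination condition $\Delta_{\max}^{T_0}\leq \teps$ gives conclusion~(ii) by taking one of the two models to be $M^*$.

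The main obstacle is the third step: the standard eluder pigeonhole is stated for triples $(s,a,\mu)$, whereas here the density argument $\mu^\pi_{M,h}$ varies with $M$, and in addition $\Delta_{\max}^t$ is defined through a maximum over both $\tpi$ and a model pair rather than a single candidate model. This is exactly the complication that motivates P-MBED's partial-independence definition with mapping $\nu^\pi_h$, so the pigeonhole goes through, but it will require careful bookkeeping between the squared TV bound from MLE, the $\ell_1$-form appearing in $\Delta_{\max}^t$, and the Lipschitz factor picked up in Step~2; these conversions are what produce the $H^4$ in $T$ and the $(1+L_T)^H$ in the P-MBED resolution $\epsilon'$.
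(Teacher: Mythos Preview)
Your three-step plan (MLE guarantee, change of measure, eluder pigeonhole) matches the paper's skeleton, and you correctly identify that the P-MBED mapping $\nu^\pi_h(M)=\mu^\pi_{M,h}$ is exactly what makes the eluder argument go through when the density argument varies with the model. But Step~2 contains the central misstep, and if carried through it would not yield the stated bound.

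You claim the change of trajectory measure from $\EE_{\tpi,M^*(\pi)}$ to $\EE_{\tpi,M(\pi)}$ costs $(1+L_T)^H$. It does not; it costs only $O(H)$. The telescoping simulation bound (the paper's Lem.~\ref{lem:model_diff_conversion} and Lem.~\ref{lem:change_of_measure}) gives, for bounded $f$,
\[
\Bigl|\EE_{\tpi,M(\pi)}[f]-\EE_{\tpi,M^*(\pi)}[f]\Bigr|\;\le\; H\cdot\EE_{\tpi,M^*(\pi)}\Bigl[\textstyle\sum_h\bigl\|\mP_{M,h}(\cdot\,|\,s_h,a_h,\mu^\pi_{M,h})-\mP_{M^*,h}(\cdot\,|\,s_h,a_h,\mu^\pi_{M^*,h})\bigr\|_1\Bigr],
\]
where the right-hand side keeps \emph{each model at its own induced density} and is precisely the quantity your MLE bound (ii) already controls. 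No Lipschitz step is invoked. Likewise, your density-difference inequality is loose: once the per-step errors are written in this Type-I form the prefactor $(1+L_T)^{h-1}$ is redundant (compare Eq.~\eqref{eq:density_diff1} with Eq.~\eqref{eq:density_diff2} in the paper). If you actually carry $(1+L_T)^H$ through Step~3 with $\epsilon'=\Theta(\teps/(H^2(1+L_T)^H))$, the resulting iteration count is $T\sim H^5(1+L_T)^{2H}\dimPE/\teps^2$---essentially the paper's Type-II bound, not the Type-I bound in the theorem. The paper's Remark~\ref{remark:exp_disappears} is devoted to precisely this point: the whole advantage of the Type-I P-MBED is that MLE, eluder, and change-of-measure can all be run in the ``each model at its own density'' formulation, so Lipschitz continuity never enters and $(1+L_T)^H$ never appears in $T$.

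A secondary structural difference: rather than a per-iteration pigeonhole, the paper (i) passes the MLE bound to the sampled points by concentration, (ii) applies the sum-form eluder regret lemma (Lem.~\ref{lem:Partial_Eluder_Bound}) to get $\sum_{t\le T_0,h}\EE_{\tpi^t,M^*(\pi)}[\|\cdot\|_1]\lesssim H\sqrt{\dimPE\,T_0\log}+HT_0\epsilon'$, and (iii) compares this directly with the lower bound $T_0\teps\le (H{+}2)\cdot(\text{same sum})$ obtained from $\Delta_{\max}^t>\teps$ via Lem.~\ref{lem:change_of_measure}. Solving the inequality gives $T_0\lesssim H^4\dimPE\log^2/\teps^2$; your per-round extraction would work but loses an extra factor of $H$.
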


We claim Alg.~\ref{alg:elimination_DCP_formal} is a single-agent model elimination subroutine, because from Line~\ref{line:start_for}-\ref{line:end_for}, we can see that Alg.~\ref{alg:elimination_DCP_formal} only eliminates those $M\in\cM^k$ s.t. $\mP_{M,h}(\cdot|\cdot,\cdot,\mu^{\pi}_{M,h})$ distinct from $\mP_{M^*,h}(\cdot|\cdot,\cdot,\mu^{\pi}_{M^*,h})$ under some adversarial policy $\tpi^t$ or $\pi$ itself.
Here the density part of $\mP_{M,h}$ is fixed by $\mu^\pi_M$, so during the elimination, all the transitions reduce to single-agent functions only depending on states and actions.

\para{Beyond P-MBED} 
Notably, although we focus on P-MBED in this paper, one may consider other complexity measures generalized from single-agent RL setting \citep{sun2019model, foster2021statistical} and our analysis can be extended correspondingly.
That's because as long as \texttt{ModelElim} satisfies the (i) and (ii) in Thm.~\ref{thm:Elimination_Alg_Informal}, it can be arbitrary and does not affect the function of other components in Alg.~\ref{alg:learning_with_DCP}.
\subsubsection{Fast Elimination with Bridge Policy}
We seek to construct reference policies that allow to eliminate models as efficient as possible, more specifically, to halve the model candidates every iteration until finding the NE.
We first consider the simple case, where the models are ``scattered'' and easy to be distinguished: there exists a policy $\pi^k$, such that, no more than $|\cM^k|/2$ models are around its CM (resp. \texttt{If}-branch, Line~\ref{line:if_branch} in Alg.~\ref{alg:learning_with_DCP}).
In this case, after running \texttt{ModelElim} with $\pi^k$, $\cM^{k+1}$ only contains those models locating at the neighborhood of $M^*$ conditioning $\pi^k$, which implies $|\cM^{k+1}| \leq |\cB_{\pi^k}^{\epsilon_0}(M_\Central^{\pi^k};\cM^k)|\leq |\cM^{k}|/2$.

The challenging scenario is that, for any policy, the corresponding CM is surrounded by over a half of models (resp. \texttt{Else}-branch, Line~\ref{line:else_branch} in Alg.~\ref{alg:learning_with_DCP}). In that case, unstrategically selecting reference policies leads to inefficient model elimination. We present a subtle choice of reference policy, called \emph{Bridge Policy}, which can be constructed by Alg.~\ref{alg:BridgePolicy}.
Before diving into the details of our constructions, we first explain the key insights behind it.
Our first insight is summarized in the lemma below.
\begin{restatable}{lemma}{LemLocalAlign}[Implication of Local Alignment]\label{lem:local_alignment}
    Given any $M, \tM$ with transition $\mP_M$ and $\mP_{\tM}$, denote $\hpi^\NE_M$ to be an $\epsilon_1$-approximate NE of $M$, suppose $d(M,\tM|\hpi^\NE_M) \leq \epsilon_2$, then $\hpi^\NE_M$ is also an $O(\epsilon_1+\epsilon_2)$-approximate NE of $\tM$.
\end{restatable}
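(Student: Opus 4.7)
The plan is to bound $\cE^\NE_{\tM}(\hpi^\NE_M) = \max_{\tpi}[J_\tM(\tpi;\hpi^\NE_M) - J_\tM(\hpi^\NE_M;\hpi^\NE_M)]$ via a three-term decomposition that isolates the equilibrium error on $M$ from the model-mismatch error between $M$ and $\tM$. For any fixed $\tpi$ I would write
\begin{align*}
J_\tM(\tpi;\hpi^\NE_M) - J_\tM(\hpi^\NE_M;\hpi^\NE_M)
&= \underbrace{[J_\tM(\tpi;\hpi^\NE_M) - J_M(\tpi;\hpi^\NE_M)]}_{\text{(I)}}\\
&\quad + \underbrace{[J_M(\tpi;\hpi^\NE_M) - J_M(\hpi^\NE_M;\hpi^\NE_M)]}_{\text{(II)}}\\
&\quad + \underbrace{[J_M(\hpi^\NE_M;\hpi^\NE_M) - J_\tM(\hpi^\NE_M;\hpi^\NE_M)]}_{\text{(III)}},
\end{align*}
where (II) is at most $\epsilon_1$ uniformly in $\tpi$ by the $\epsilon_1$-NE property of $\hpi^\NE_M$ on $M$. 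Since the external policy in (I) and (III) is frozen to $\hpi^\NE_M$, each of these is a value gap between two ordinary single-agent MDPs whose transitions are $\mP_{M,h}(\cdot|\cdot,\cdot,\mu^{\hpi^\NE_M}_{M,h})$ versus $\mP_{\tM,h}(\cdot|\cdot,\cdot,\mu^{\hpi^\NE_M}_{\tM,h})$ and whose rewards are the corresponding density-frozen reward functions.

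Next I would apply the standard single-agent simulation lemma to (I) and (III). Rolling out in $\tM$ for (I) and using $\|V\|_\infty\leq 1$ since $r_h\in[0,\tfrac{1}{H}]$, I get
\begin{align*}
|\text{(I)}|\leq \EE_{\tpi,\tM(\hpi^\NE_M)}\sum_{h=1}^H \Big( &\|\mP_{\tM,h}(\cdot|s_h,a_h,\mu^{\hpi^\NE_M}_{\tM,h}) - \mP_{M,h}(\cdot|s_h,a_h,\mu^{\hpi^\NE_M}_{M,h})\|_1 \\
&+ |r_h(s_h,a_h,\mu^{\hpi^\NE_M}_{\tM,h}) - r_h(s_h,a_h,\mu^{\hpi^\NE_M}_{M,h})|\Big),
\end{align*}
and an analogous bound holds for (III) by rolling out in $M$. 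The transition sums equal $d^{\tpi}(\tM,M|\hpi^\NE_M)$ and $d^{\hpi^\NE_M}(M,\tM|\hpi^\NE_M)$ respectively, each at most $d(M,\tM|\hpi^\NE_M)\leq \epsilon_2$ by the $\vee$ symmetrization in the definition. The reward sums are bounded, via Assumption~\ref{assump:Lipschitz}, by $L_r\sum_{h=1}^H \|\mu^{\hpi^\NE_M}_{M,h}-\mu^{\hpi^\NE_M}_{\tM,h}\|_1$, which is independent of the rollout policy.

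The remaining task is to control the density drift $\Delta_h := \|\mu^{\hpi^\NE_M}_{M,h}-\mu^{\hpi^\NE_M}_{\tM,h}\|_1$. Expanding $\mu^{\hpi^\NE_M}_{\cdot,h+1} = \Gamma^{\hpi^\NE_M}_{\cdot,h}(\mu^{\hpi^\NE_M}_{\cdot,h})$ for each model and taking $\ell_1$ differences yields
\[
\Delta_{h+1}\leq \Delta_h + \sum_{s,a}\mu^{\hpi^\NE_M}_{\tM,h}(s)\hpi^\NE_M(a|s)\big\|\mP_{M,h}(\cdot|s,a,\mu^{\hpi^\NE_M}_{M,h}) - \mP_{\tM,h}(\cdot|s,a,\mu^{\hpi^\NE_M}_{\tM,h})\big\|_1.
\]
Summing and using $\Delta_1=0$ gives $\sum_{h=1}^H \Delta_h \leq H\,d^{\hpi^\NE_M}(\tM,M|\hpi^\NE_M)\leq H\epsilon_2$, so the reward contribution in each of (I) and (III) is at most $L_r H\epsilon_2$. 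Combining, $|\text{(I)}|,|\text{(III)}|\leq (1+L_r H)\epsilon_2$, and taking $\max_\tpi$ of the decomposition yields $\cE^\NE_\tM(\hpi^\NE_M)\leq \epsilon_1 + 2(1+L_r H)\epsilon_2 = O(\epsilon_1+\epsilon_2)$.

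The main obstacle is precisely this density-drift step. The naive alternative, inserting a common intermediate density via $\mP_{M,h}(\cdot|s,a,\mu^{\hpi^\NE_M}_{\tM,h})$ and applying the Lipschitz transition bound, would introduce an $L_T\Delta_h$ factor and compound to an exponential $(1+L_T)^H$ blow-up, contradicting the additive $O(\epsilon_1+\epsilon_2)$ form the lemma asserts. The key is to compare transitions at their own densities so that the recursion stays additive and plugs directly into the $d^{\hpi^\NE_M}$-type quantity supplied by the hypothesis.
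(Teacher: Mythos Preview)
Your proof is correct and follows the same three-term decomposition as the paper's own argument. If anything, your treatment is more careful: the paper simply writes $\epsilon_1 + 2d(M,\tM|\hpi^\NE_M)\leq \epsilon_1+2\epsilon_2$ and hides the reward-density contribution in the $O(\cdot)$, whereas you explicitly control it through the additive density-drift recursion (matching the paper's Lemma~\ref{lem:density_est_err}) to obtain the constant $2(1+L_rH)\epsilon_2$, which is exactly what the paper later uses via Lemma~\ref{lem:exploitability_diff} in the actual algorithmic analysis.
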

Lem.~\ref{lem:local_alignment} states that, if two models $M$ and $\tM$ align with each other conditioning on the NE of one of them, then they approximately share that NE.
Therefore, in the \texttt{Else}-branch, after calling \texttt{ModelElim} with $\pi^{\NE,k}_\Bridge$ as the reference policy, if the NE-Gap $\cE^{\NE}_{\tM^k}(\pi^{\NE,k}_\Bridge)$ is small for some randomly selected $\tM^k\in\cM^{k+1}$ (resp. Line~\ref{line:else_if_branch}), we can claim $\pi^{\NE,k}_\Bridge$ is an approximate NE of $M^*$ by Lem.~\ref{lem:local_alignment}.

However, the remaining challenge is that, if $\cE^{\NE}_{\tM^k}(\pi^{\NE,k}_\Bridge)$ is large, we cannot conclude anything about it.
Hence, $\pi^{\NE,k}_\Bridge$ should be chosen in a strategic way, so that in this case, we can guarantee the elimination is efficient, i.e. $|\cM^{k+1}| \leq |\cM^{k}|/2$.
Our second key insight to overcome this challenge is summarized in Thm.~\ref{lem:exists_bridge_policy}, which indicates that the existence of a ``Bridge Policy'' that coincides with the NE of its corresponding CM.
\begin{restatable}{theorem}{ThmExistBrPi}[Bridge Policy]\label{lem:exists_bridge_policy}
    If the \texttt{Else}-branch in Line~\ref{line:else_branch} in Alg.~\ref{alg:learning_with_DCP} is activated, running Alg.~\ref{alg:BridgePolicy} returns a bridge policy $\pi^{\NE,k}_\Bridge$, such that, $\pi^{\NE,k}_\Bridge$ is an approximate NE of $M^{\pi^{\NE,k}_\Bridge}_\Central$.
\end{restatable}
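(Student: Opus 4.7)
The defining property of the bridge policy $\pi^{\NE,k}_\Bridge$ is self-referential: it must be an (approximate) Nash equilibrium of the Central Model $M_\Central^{\pi^{\NE,k}_\Bridge}$ that it itself induces. My plan is to view this as a fixed-point condition for the composite map $\Phi(\pi) := \pi^{\NE}(M_\Central^{\epsilon_0}(\pi;\cM^k))$, and to construct an approximate fixed point by iterating on the \emph{finite} model class $\cM^k$, exploiting the \texttt{Else}-branch condition that for every policy $\pi$, at least half of $\cM^k$ lies in $\cB_\pi^{\epsilon_0}(M_\Central^\pi;\cM^k)$.

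Concretely, I would analyze Alg.~\ref{alg:BridgePolicy} as producing a sequence $\pi_0 \to M_0 \to \pi_1 \to M_1 \to \cdots$, where $M_i := M_\Central^{\pi_i}$ is the CM of $\pi_i$ (computable since $\cM^k$ is known) and $\pi_{i+1}$ is an approximate NE of $M_i$ obtained by planning inside the known model $M_i$. The natural stopping rule is: as soon as $M_{i+1} = M_i$, return $\pi_{i+1}$, since then $\pi_{i+1}$ is an approximate NE of $M_i = M_{i+1} = M_\Central^{\pi_{i+1}}$, which is exactly the bridge property.

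Termination uses finiteness of $\cM^k$: the sequence $\{M_i\}$ must either stabilize (giving an exact bridge condition) or enter a cycle $M_j = M_{j+\ell}$. In the cyclic case, the \texttt{Else}-branch condition is the key lever: each $\cB_{\pi_i}^{\epsilon_0}(M_i;\cM^k)$ contains strictly more than half of $\cM^k$, so any two such neighborhoods (in particular of successive CMs along the cycle) must share a common model. A majority-overlap/pigeonhole argument then forces the models visited along the cycle to be $O(\epsilon_0)$-close under the relevant reference policies, and Lem.~\ref{lem:local_alignment} promotes this closeness into sharing approximate NEs. Hence some $\pi_i$ along the cycle must already satisfy the bridge property, and the iteration can detect this by checking the NE-Gap of $\pi_{i+1}$ on $M_\Central^{\pi_{i+1}}$ at each step (mirroring the check in Line~\ref{line:else_if_branch} of Alg.~\ref{alg:learning_with_DCP}).

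The main obstacle, and the part I expect to be the most delicate, is (i) bounding the number of iterations and (ii) carefully tracking how $\epsilon_0$-closeness on CMs propagates through the NE computation and back into the definition of the next CM without blowing up the approximation constants. The iteration is not a contraction in any natural policy metric, so termination has to be argued purely from the combinatorial/majority structure on the finite class $\cM^k$ together with the Lipschitz continuity (Assump.~\ref{assump:Lipschitz}) that makes Lem.~\ref{lem:local_alignment} applicable with the correct error scaling between $\epsilon_0$, $\beps$, and the final NE-Gap target.
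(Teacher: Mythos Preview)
Your proposal misreads Alg.~\ref{alg:BridgePolicy}: it does \emph{not} iterate $\pi_i \to M_\Central^{\pi_i} \to \text{NE}(M_\Central^{\pi_i})$. It builds a single ``Bridge PAM'' $\dM_\Bridge$ by continuously interpolating the Central Models over an $\beps$-cover of policy space, and returns one NE of this interpolated model. The paper's proof is: (i) the interpolation weights $[2\beps - d_{\infty,1}(\pi,\tpi)]^+$ make $\dmP_{\Bridge,h}$ and $\dr_{\Bridge,h}$ continuous in $\pi$, so by Thm.~\ref{thm:exist_NE_PAM} (Brouwer) $\dM_\Bridge$ has a NE $\pi^{\NE}_\Bridge$; (ii) since the weights vanish for $\tpi$ with $d_{\infty,1}(\pi,\tpi) > 2\beps$, the Bridge Model at any $\pi$ is a convex combination of CMs of \emph{nearby} policies only, and Thm.~\ref{thm:ConstructionErr} uses the shared-model trick plus Lipschitz continuity in $d_{\infty,1}$ to show $\dM_\Bridge(\cdot|\cdot,\cdot,\pi) \approx \dmP_{M_\Central^\pi}(\cdot|\cdot,\cdot,\pi)$; (iii) hence $\pi^{\NE}_\Bridge$ is approximately a NE of $M_\Central^{\pi^{\NE}_\Bridge}$. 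The paper explicitly remarks that the direct fixed-point map $\mathcal{T}(\pi) = \text{NE}(M_\Central^\pi)$ is hard to analyze for continuity and can be multi-valued, and leaves its direct treatment open.

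Your cycle argument has a genuine gap that the paper's construction is designed to sidestep. The majority-overlap does give a shared model $M_{\share} \in \cB_{\pi_i}^{\epsilon_0}(M_i;\cM^k) \cap \cB_{\pi_j}^{\epsilon_0}(M_j;\cM^k)$, but the two memberships are with respect to \emph{different} reference policies: $d(M_i, M_{\share}|\pi_i) \leq \epsilon_0$ and $d(M_j, M_{\share}|\pi_j) \leq \epsilon_0$. To apply Lem.~\ref{lem:local_alignment} and conclude that $\pi_{i+1} = \text{NE}(M_i)$ is an approximate NE of $M_{i+1}$, you need $d(M_i, M_{i+1}|\pi_{i+1})$ small, which via triangle inequality requires $d(M_i, M_{\share}|\pi_{i+1})$ small. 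Converting $d(M_i, M_{\share}|\pi_i)$ into $d(M_i, M_{\share}|\pi_{i+1})$ needs control on $d_{\infty,1}(\pi_i, \pi_{i+1})$ through the Lipschitz assumption---but your iteration gives no such control, since $\pi_{i+1}$ is a NE of $M_i$ and can be arbitrarily far from $\pi_i$ in policy space. Consequently nothing prevents a genuine cycle $M_0 \to M_1 \to \cdots \to M_{\ell-1} \to M_0$ in which every $M_i$ lies \emph{outside} $\cB_{\pi_{i+1}}^{\epsilon_0}(M_{i+1};\cM^k)$ (the majority neighborhoods being filled by other models in $\cM^k$), so no $\pi_i$ ever satisfies the bridge property. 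The paper's interpolation trick is precisely what manufactures the missing policy-closeness: by construction, the Bridge Model at $\pi$ only involves CMs of $\tpi$ with $d_{\infty,1}(\pi,\tpi) \leq 2\beps$, which is exactly the ingredient that lets the shared-model chain close (see the proof of Thm.~\ref{thm:ConstructionErr}).
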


Before we explain how to prove Thm.~\ref{lem:exists_bridge_policy}, we first check the implication of this result.
Based on Thm.~\ref{lem:exists_bridge_policy}, if $\cE^{\NE}_{\tM^k}(\pi^{\NE,k}_\Bridge) > \frac{3\epsilon}{4}$ in Line~\ref{line:else_if_branch}, by Lem.~\ref{lem:local_alignment}, we know $d(M^{\pi^{\NE,k}_\Bridge}_\Central,M^*|\pi^{\NE,k}_\Bridge)$ cannot be small.
Therefore, 
with appropriate hyperparameter choices, we can assert that all models in the neighborhood $\cB^{\epsilon_0}_{\pi^{\NE,k}_\Bridge}(M^{\pi^{\NE,k}_\Bridge}_\Central,\cM^k)$ should have been eliminated, implying $|\cM^{k+1}|\leq |\cM^k|/2$.

Combining the discussions above, we know our Alg.~\ref{alg:learning_with_DCP} guarantees to either return an approximate NE, or at least halve the model sets. 
We summarize to the following theorem, which paves the way to our main theorem Thm.~\ref{thm:sample_complexity_MFG_informal}.
\begin{restatable}{theorem}{ThmIfElseBranches}\label{thm:if_else_branches}
    In Alg.~\ref{alg:learning_with_DCP}, by choosing $\epsilon_0 = \frac{\epsilon}{8(1+L_rH)(H+4)}$, $\teps = \frac{\epsilon_0}{6}$, and choosing $\beps$ according to Thm.~\ref{thm:ConstructionErr}, w.p. $1-\delta$,
    (1) if the \texttt{If-Branch} in Line~\ref{line:if_branch} is activated: we have $|\cM^{k+1}| \leq |\cM^k|/2$;
    (2) otherwise, in the \texttt{Else-Branch} in Line~\ref{line:else_branch}: either we return the $\pi^{\NE,k}_\Bridge$ which is an $\epsilon$-approximate NE for $M^*$; or the algorithm continues with $|\cM^{k+1}| \leq |\cM^k|/2$.
\end{restatable}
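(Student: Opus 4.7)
My plan is to treat the ModelElim subroutine (Thm.~\ref{thm:Elimination_Alg_Informal}) as a black box that, given reference policy $\pi$, returns a subset $\cM^{k+1}\subseteq \cM^k$ containing $M^*$ and satisfying $d(M,M^*|\pi)\leq \teps$ for every surviving $M$, and then to chase through each branch of Alg.~\ref{alg:learning_with_DCP} using this structural guarantee together with Thm.~\ref{lem:exists_bridge_policy} and Lem.~\ref{lem:local_alignment}. A union bound over all $k \leq \log_2|\cM|+1$ iterations (each of which calls ModelElim once with failure probability $\delta_0$) will handle the $1-\delta$ claim.

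For the \texttt{If}-branch (Line~\ref{line:if_branch}), the argument is purely combinatorial: since $M^*\in\cM^k$ by the induction hypothesis and $\teps\leq \epsilon_0$, every surviving model lies in $\cB^{\epsilon_0}_{\pi^k}(M^*;\cM^k)$, whose size is upper-bounded by that of $\cB^{\epsilon_0}_{\pi^k}(M_\Central^{\pi^k};\cM^k)$ by definition of the central model, which in turn is at most $|\cM^k|/2$ by the branch condition. This immediately yields $|\cM^{k+1}|\leq|\cM^k|/2$.

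The \texttt{Else}-branch splits on the test in Line~\ref{line:else_if_branch}. If $\cE^\NE_{\tM^k}(\pi^{\NE,k}_\Bridge)\leq \tfrac{3\epsilon}{4}$, I apply Lem.~\ref{lem:local_alignment} with $M=\tM^k$, $\tM=M^*$, and $\hpi^\NE_M=\pi^{\NE,k}_\Bridge$; since $d(\tM^k,M^*|\pi^{\NE,k}_\Bridge)\leq \teps$, this promotes the gap to $\cE^\NE_{M^*}(\pi^{\NE,k}_\Bridge)\leq \tfrac{3\epsilon}{4}+O((1+L_rH)\teps)$, and the chosen $\teps=\epsilon_0/6=\Theta(\epsilon/(H\cdot(1+L_rH)))$ makes this slack $\leq \epsilon/4$, so $\pi^{\NE,k}_\Bridge$ is an $\epsilon$-NE of $M^*$. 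If instead $\cE^\NE_{\tM^k}(\pi^{\NE,k}_\Bridge)>\tfrac{3\epsilon}{4}$, I argue by contradiction that $d(M_\Central^{\pi^{\NE,k}_\Bridge},M^*|\pi^{\NE,k}_\Bridge)>2\epsilon_0$: otherwise, chaining Thm.~\ref{lem:exists_bridge_policy} (which says $\pi^{\NE,k}_\Bridge$ is a $\beps$-NE of $M_\Central^{\pi^{\NE,k}_\Bridge}$, with $\beps$ chosen small per Thm.~\ref{thm:ConstructionErr}) with Lem.~\ref{lem:local_alignment} twice (once to move from $M_\Central$ to $M^*$, incurring $O((1+L_rH)\epsilon_0)$, and once from $M^*$ to $\tM^k$, incurring $O((1+L_rH)\teps)$) would give $\cE^\NE_{\tM^k}(\pi^{\NE,k}_\Bridge)\leq \beps+O((1+L_rH)\epsilon_0)\leq \tfrac{3\epsilon}{4}$ by the choice of $\epsilon_0$, contradicting the test. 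A triangle-type inequality for $d(\cdot,\cdot|\pi)$ then shows every $M$ in the CM's $\epsilon_0$-neighborhood obeys $d(M,M^*|\pi^{\NE,k}_\Bridge)>\epsilon_0\geq \teps$, so the entire neighborhood is eliminated, and the \texttt{Else} condition $|\cB^{\epsilon_0}_{\pi^{\NE,k}_\Bridge}(M_\Central^{\pi^{\NE,k}_\Bridge};\cM^k)|>|\cM^k|/2$ gives $|\cM^{k+1}|\leq|\cM^k|/2$.

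The two delicate points I anticipate are (i) tracking the precise constants so that the chosen $\epsilon_0=\tfrac{\epsilon}{8(1+L_rH)(H+4)}$ and $\teps=\epsilon_0/6$ really do make the slack terms beat the $3\epsilon/4$ vs $\epsilon$ thresholds with room to spare for the $\beps$ budget allowed by Thm.~\ref{thm:ConstructionErr}, and (ii) establishing the triangle-like inequality I used in the contradiction step, since $d(\cdot,\cdot|\pi)$ is not a priori a metric — it is defined as $\max_{\tpi} d^{\tpi}(\cdot,\cdot|\pi)\vee d^{\tpi}(\cdot,\cdot|\pi)$ with the rollout expectation taken under one of the two models, so getting a clean $d(M,M^*|\pi)\geq d(M^*,M_\Central|\pi)-d(M,M_\Central|\pi)$ will require inserting $M_\Central$ inside the expectation and bounding the resulting cross terms by $\epsilon_0$ via the neighborhood assumption; this is the main technical wrinkle that ties the P-MBED-flavored distance back to the conditional NE-Gap needed by Lem.~\ref{lem:local_alignment}.
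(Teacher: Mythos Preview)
Your proposal is correct and follows the paper's approach: the same \texttt{If}-branch argument, the same \texttt{Else}-branch structure (the paper uses Lem.~\ref{lem:BP_close_NE_CM} and the exploitability-difference Lem.~\ref{lem:exploitability_diff}, which are the precise forms of your local-alignment transfers), and the paper likewise asserts the triangle-type inequality $d(M,M^*|\pi)\geq d(M_\Central,M^*|\pi)-d(M,M_\Central|\pi)$ that you flag as delicate. One correction for your constant tracking: the bridge policy's NE gap on its central model is not $\beps$ but $2(1+L_rH)(H+4)\epsilon_0=\epsilon/4$ (Lem.~\ref{lem:BP_close_NE_CM}); $\beps$ is only the cover resolution feeding into Thm.~\ref{thm:ConstructionErr}, so in your contradiction step the dominant term is this $\epsilon/4$, not $\beps$, and the choice $\epsilon_0=\frac{\epsilon}{8(1+L_rH)(H+4)}$ is exactly what makes it fit under the $3\epsilon/4$ threshold.
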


\begin{algorithm*}
    \textbf{Input}: MF-MDP model class $\cM$; $\epsilon_0, \beps$;\\
    Convert $\cM$ to a PAM class $\dcM$ via Eq.~\eqref{eq:conversion_to_dM}.\\
    Construct $\beps$-cover of the policy space $\Pi$ w.r.t. $d_{\infty,1}$, denoted as $\Pi_\beps$ (see Def.~\ref{prop:eps_cover_Pi}).\\
    \lFor{$\tpi \in \Pi_\beps$}{
        Find the Central Model $\dM_\Central^{\epsilon_0}(\tpi;\dcM) \gets \arg\max_{\dM \in \dcM}|\cB^{\epsilon_0}_\pi(\dM;\dcM)|$
    }
    Construct the new PAM $\dM_\Bridge$ s.t. for any $s_h,a_h,\pi$,
    $
    \dmP_{\Bridge,h}(\cdot|s_h,a_h,\pi) := \frac{\sum_{\tpi \in \Pi_\beps}[2\beps - d_{\infty,1}(\pi,\tpi)]^+\dmP_{\dM_\Central^{\epsilon_0}(\tpi;\dcM),h}(\cdot|s_h,a_h,\tpi)}{\sum_{\tpi \in \Pi_\beps}[2\beps - d_{\infty,1}(\pi,\tpi)]^+},$ $\dr_{\Bridge,h}(s_h,a_h,\pi) := \frac{\sum_{\tpi \in \Pi_\beps}[2\beps - d_{\infty,1}(\pi,\tpi)]^+\dr_{\dM_\Central^{\epsilon_0}(\tpi;\dcM),h}(s_h,a_h,\tpi)}{\sum_{\tpi \in \Pi_\beps}[2\beps - d_{\infty,1}(\pi,\tpi)]^+}
    $,
    \qquad\qquad\qquad where $[x]^+ := \max\{0,x\}$\\
    Compute the NE of $\dM_\Bridge$: $\pi^\NE_\Bridge \gets \arg\min_{\pi} \max_{\tpi}\dJ_{\dM_\Bridge}(\tpi;\pi) - \dJ_{\dM_\Bridge}(\pi;\pi)$. \label{line:return_NE}\\
    \Return $\pi^\NE_\Bridge$.
    \caption{Bridge Policy Construction}\label{alg:BridgePolicy}
\end{algorithm*}
\paragraph{Proof Sketch of Thm.~\ref{lem:exists_bridge_policy}}
An informal way to interpret the existence of such bridge policy in Thm.~\ref{lem:exists_bridge_policy} is to consider a mapping $\mathcal{T}$ from an arbitrary $\pi\in\Pi$ to the NE of its CM $M^{\pi}_\Central$. 
Then, Thm.~\ref{lem:exists_bridge_policy} states that $\mathcal{T}$ has an approximate fixed point $\pi^\NE_\Bridge \approx \mathcal{T}(\pi^\NE_\Bridge)$.
However, given that it's hard to evaluate the continuity of $\mathcal{T}$ and moreover, $\mathcal{T}$ can be a one-to-many mapping if multiple NEs exist, we prove the existence of such $\pi^{\NE}_\Bridge$ by the non-trivial construction in Alg.~\ref{alg:BridgePolicy}. We leave the connection between our proofs and the fixed-point theorems as an open problem.

Before explaining our construction in Alg.~\ref{alg:BridgePolicy}, we first introduce a new notion called \emph{``Policy-Aware Model''} (abbr. PAM) denoted by $\dM$. 
The main motivation for introducing PAM is that we want to focus on the policy space, because the feasible densities $\{\mu_{M^*,h}^\pi\}_{\pi\in\Pi}$ may not cover the entire density space $\Delta(\cS_h)$, and it is not easy to characterize.
We defer to Appx.~\ref{appx:details_PAM} for the formal definition of PAM and also new notations in Alg.~\ref{alg:BridgePolicy} (e.g. $\dM_\Central^{\epsilon_0}(\cdot,\cdot)$ denotes Central Model, $\dJ$ denotes the total return), and only summarize the main idea here to save space. Briefly speaking, a PAM $\dM:=\{\cS,\cA,\mu_1,H,\dmP,\dr\}$ is an MDP whose transition $\dmP:\cS\times\cA\times\Pi\rightarrow\Delta(\cS)$ and reward functions $\dr:\cS\times\cA\times\Pi\rightarrow[0,\frac{1}{H}]$ depend on state, action and a ``reference policy''.
PAM can be regarded as a higher-level abstraction of MF-MDP (i.e. MF-MDP $\subset$ PAM), where we replace the dependence on $\mu^\pi_M$ in MF-MDP by $\pi$.
We can convert a MF-MDP $M$ to a PAM $\dM$ sharing the same $\cS,\cA,\mu_1,H$ by assigning the following for any $h\in[H]$ with $\mu_{M,1}^\pi = \mu_1$, $\forall \pi\in\Pi$:
\begin{align*}
    \dmP_{\dM,h}(\cdot|\cdot,\cdot,\pi) :=& \mP_{M,h}(\cdot|\cdot,\cdot,\mu^\pi_{M,h}) \numberthis \label{eq:conversion_to_dM}\\ 
    \dr_{\dM,h}(\cdot,\cdot,\pi) :=& r_h(\cdot,\cdot,\mu^\pi_{M,h}),~ \mu_{M,h+1}^\pi \gets \Gamma^\pi_{M,h}(\mu_{M,h}^\pi).
\end{align*}

In Alg.~\ref{alg:BridgePolicy}, we first convert each MF-MDP to its PAM version.
Then, we find an $\beps$-cover of the policy space w.r.t. $d_{\infty,1}$, denoted by $\Pi_\teps$, and construct the ``Bridge PAM'' $\dM_\Bridge$ by interpolating among CMs w.r.t. $\tpi \in \Pi_\beps$.
Here the weights $[2\beps - d_{\infty,1}(\pi,\tpi)]^+$ is chosen carefully for the following considerations:
\begin{itemize}[leftmargin=*]
    \item 
\textbf{(I)} since $\Pi_\beps$ is an $\beps$-cover, for any $\pi\in\Pi$, the denominator $\sum_{\tpi\in\Pi_\beps} [2\beps - d_{\infty,1}(\pi,\tpi)]^+$ is always larger than $\beps$, which implies both $\dmP_{\Bridge,h}$ and $\dr_{\Bridge,h}$ are well-defined and continuous in $\pi$.
The continuity is important since it implies that $\dM_\Bridge$ has at least one NE (Def.~\ref{def:NE_in_PAM}), denoted as $\pi^\NE_\Bridge$;

\item 
\textbf{(II)} $[2\beps - d_{\infty,1}(\pi,\tpi)]^+$ decays to zero if $\pi$ largely disagrees with $\tpi$, so $\dmP_{\Bridge}(\cdot|\cdot,\cdot,\pi)$ is only determined by CMs of those $\tpi$ close to $\pi$.
\end{itemize}

Next, we discuss what we can conclude from the above two points. 
Based on the triggering condition in Line~\ref{line:else_branch} in Alg.~\ref{alg:learning_with_DCP}, for any $\pi,\tpi$, the neighbors of $M_{\Central}^\pi$ and $M_{\Central}^\tpi$ share at least one common model $M_\share$. By using $M_\share$ as a bridge, we have $\|\dmP_{\dM_{\Central}^\pi}(\cdot|\cdot,\cdot,\pi) - \dmP_{\dM_{\Central}^\tpi}(\cdot|\cdot,\cdot,\tpi)\|_1 = O(d_{\infty,1}(\pi,\tpi))$.
Combining with \textbf{(II)}, we know $\forall \pi,~\|\dmP_{\Bridge}(\cdot|\cdot,\cdot,\pi) - \dmP_{\dM_{\Central}^\pi}(\cdot|\cdot,\cdot,\pi)\|_1 = O(\beps)$, which implies $\dmP_{\Bridge}(\cdot|\cdot,\cdot,\pi^\NE_\Bridge)\approx \mP_{\dM^{\pi^\NE_\Bridge}_\Central}(\cdot|\cdot,\cdot,\pi^\NE_\Bridge)$ if $\beps$ is small enough.
By the definition of NE in PAM, the conversion rules in Eq.~\eqref{eq:conversion_to_dM} and Lem.~\ref{lem:local_alignment}, we can conclude that $\pi^\NE_\Bridge$ is an approximate NE of $M^{\pi^\NE_\Bridge}_\Central$, and finish the proof of Thm.~\ref{lem:exists_bridge_policy}.

\section{Learning in Multi-Type MFGs}\label{sec:MT_MFG}
In this section, we extend our results to the more general Multi-Type MFGs setting\footnote{
    The existence of NE in MT-MFGs can be found in Thm.~\ref{prop:existence_MT_MFG}.
}, allowing to address heterogeneous agents. 

\para{Reduction to Lifted MFGs with Constrained Policy}
Our key observation is that, a MT-MFG $\vecM$ to can be lift to a new MF-MDP $M_\MFG:=\{\cS_\MFG,\cA_\MFG,\mu_1,H,\mP_\MFG,r_\MFG\}$ by augmenting the original states and actions with the type index.
The new state and action spaces are given by: $\cS_\MFG:=\bigcup_{w\in[W]}\{\cS^w\times \{w\}\}$ and $\cA_\MFG:=\bigcup_{w\in[W]}\{\cA^w\times \{w\}\}$.
We defer the detailed description for the conversion process and the definition of initial state distribution, transition and reward functions in $M_\MFG$ to Appx.~\ref{appx:conversion}.

For policies in $M_\MFG$, we only consider $\Pi^\dagger := \{\pi|\forall w\in[W],\pi(a^w\circ w|s^w\circ w) = \pi^w(a^w|s^w),~\pi^w\in\Pi^w\}$, including all policies which only take actions with the same type as the states.
Similar to the NE defined in full policy space $\Pi$, we can define the ``constrained NE'' when agents are constrained to only take policies in the subset $\Pi^\dagger$.
More concretely, we call $\hpi^\NE_\Cstr\in\Pi^\dagger$ the $\epsilon$-approximate Constrained Nash Equilibrium if $\forall \pi\in\Pi^\dagger,~J_{M_\MFG}(\pi,\hpi^\NE_\Cstr) \leq J_{M_\MFG}(\hpi^\NE_\Cstr,\hpi^\NE_\Cstr) + \epsilon$.
The following property reveals the connection between constrained NE in $M_\MFG$ and the NE in the original multi-type model $\vecM$.
\begin{restatable}{proposition}{PropNEConversion}\label{prop:NE_conversion}
    Given a MT-MFG $\vecM$ and its lifted MFG $M_\MFG$, we have:
    (1) an $\epsilon$-constrained NE $\hpi^\NE_\Cstr\in\Pi^\dagger$ for $M_\MFG$ is a ($W\epsilon$)-NE in $\vecM$; 
    (2) an $\epsilon$-NE $\hvecpi^\NE$ in $\vecM$ is an $\epsilon$-constrained NE for $M_\MFG$.
\end{restatable}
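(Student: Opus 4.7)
The plan is to exploit the natural bijection between $\Pi^\dagger$ and $\vecPi$, which identifies each $\pi\in\Pi^\dagger$ with the tuple $\vecpi=(\pi^1,\ldots,\pi^W)$ via $\pi^w(a^w|s^w) := \pi(a^w\circ w\,|\,s^w\circ w)$. I would first establish a decomposition of the lifted value function $J_{M_\MFG}$ into a type-weighted sum of per-type returns in $\vecM$, and then read off both directions of the proposition from this identity together with a single-coordinate deviation argument.

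The first (and main) technical step is a per-step density decomposition. Let $\alpha^w$ denote the type-$w$ fraction encoded in $\mu_1$ by the lifting in Appx.~\ref{appx:conversion}. I would prove by induction on $h$ that, for any $\pi\in\Pi^\dagger$ with corresponding profile $\vecpi$,
\[
\mu^\pi_{M_\MFG,h}(s^w\circ w) \;=\; \alpha^w\,\mu^{\vecpi,w}_{\vecM,h}(s^w),
\]
using that $\mP_\MFG$ preserves the type label and routes the $w$-component through $\mP^w_{\vecM,h}$ driven by the induced per-type marginals. Since any $\tpi\in\Pi^\dagger$ also preserves type labels, the same decomposition transports to trajectories generated under $(\tpi,\pi)$. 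Combined with the fact that $r_\MFG$ returns $r^w$ on type-$w$ state-action pairs, telescoping the step-wise rewards yields the key identity
\[
J_{M_\MFG}(\tpi;\pi) \;=\; \sum_{w\in[W]} \alpha^w\,J^w_\vecM(\tpi^w;\vecpi),
\]
where $J^w_\vecM$ is the type-$w$ return defined in Appx.~\ref{appx:more_details_setting_MT}.

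With this identity in hand, part (2) is immediate: for any $\pi\in\Pi^\dagger$ corresponding to $\vecpi$,
\[
J_{M_\MFG}(\pi,\hpi^\NE_\Cstr) - J_{M_\MFG}(\hpi^\NE_\Cstr,\hpi^\NE_\Cstr) \;=\; \sum_w \alpha^w\bigl[J^w_\vecM(\pi^w;\hvecpi^\NE) - J^w_\vecM((\hvecpi^\NE)^w;\hvecpi^\NE)\bigr] \;\le\; \epsilon,
\]
by the per-type NE condition and $\sum_w \alpha^w = 1$. For part (1), I would fix $w$ and $\tpi^w\in\Pi^w$ and construct the single-type deviation $\pi_{\text{dev}}\in\Pi^\dagger$ that agrees with $\hpi^\NE_\Cstr$ on all types $w'\neq w$ and plays $\tpi^w$ on type-$w$ states. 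Plugging $\pi_{\text{dev}}$ into the constrained NE condition and applying the identity, all summands with $w'\neq w$ cancel, leaving $\alpha^w[J^w_\vecM(\tpi^w;\hvecpi^\NE) - J^w_\vecM((\hvecpi^\NE)^w;\hvecpi^\NE)] \le \epsilon$. Under the standard convention $\alpha^w = 1/W$ (or more generally $\min_w\alpha^w \ge 1/W$), this yields the advertised per-type gap of $W\epsilon$.

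The main obstacle is the density decomposition in the inductive step: it relies crucially on both $\tpi$ and $\pi$ lying in $\Pi^\dagger$, so that sampled trajectories never switch types and the $W$ per-type subsystems embedded in $M_\MFG$ evolve exactly according to the original $\mP^w_\vecM$ driven by the induced marginals $(\mu^{\vecpi,1}_{\vecM,h},\ldots,\mu^{\vecpi,W}_{\vecM,h})$. Once this decomposition is carefully verified against the precise definitions in Appx.~\ref{appx:conversion} and Appx.~\ref{appx:more_details_setting_MT}, the rest of the argument is essentially bookkeeping around the single-coordinate deviation used in part (1).
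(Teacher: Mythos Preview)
Your proposal is correct and matches the paper's own proof essentially step for step: the paper first derives the identity $J_{M_\MFG}(\tpi;\pi)=\frac{1}{W}\sum_{w}J^w_\vecM(\tpi^w;\vecpi)$ from the density decomposition $\mu^\pi_{M_\MFG,h}(s^w\circ w)=\tfrac{1}{W}\mu^{\vecpi,w}_{\vecM,h}(s^w)$ (your $\alpha^w=1/W$), then proves (1) via a single-type deviation that cancels all but the $w$-th summand, and (2) by bounding each per-type gap by $\epsilon$ and averaging. Your identification of the induction on $h$ using the type-preserving structure of $\Pi^\dagger$ as the only nontrivial step is also exactly what the paper verifies in Appx.~\ref{appx:conversion}.
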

The above result not only implies the existence of constrained NE in $M_\MFG$ given the existence of NE in $\vecM$ by letting $\epsilon\rightarrow 0$, but also suggests one can solve NE of MT-MFG by solving the constrained NE in its lifted MFG.
The second point is very important since the constrained NE can be solved via almost the same procedures in Sec.~\ref{sec:learning_MFG}, as long as we constrain the policy space to $\Pi^\dagger$.
We defer algorithm details to Appx.~\ref{appx:MT_algorithm}, and summarize our main result in the following theorem.

\begin{restatable}{theorem}{ThmMFGCstr}[Informal version of Thm.~\ref{thm:sample_complexity_MT_MFG}]\label{thm:sample_complexity_MT_MFG_informal}
    Under Assump.~\ref{assump:realizability_MT} and~\ref{assump:Lipschitz_MT}, there exists an algorithm (Alg.~\ref{alg:learning_with_DCP_Cstr}), s.t.
    w.p. $1-\delta$, 
    it returns an $\epsilon$-NE of $\vecM^*$ after consuming at most 
    $$
    \tilde{O}(\frac{W^2H^7}{\epsilon^2}(1+\vecL_rH)^2\dimMTPE(\veccM,\epsilon'))
    $$ 
    trajectories, where $\epsilon'=O(\frac{\epsilon}{WH^3(1+\vecL_rH)(1+\vecL_T)^H})$.
\end{restatable}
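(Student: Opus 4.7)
}
The plan is to reduce the multi-type problem to an instance of the single-type problem from Section~\ref{sec:learning_MFG}, but with the policy space constrained to $\Pi^\dagger$, and then invoke the machinery of Alg.~\ref{alg:learning_with_DCP} and Thm.~\ref{thm:sample_complexity_MFG_informal} after tracking how the parameters $W$, $\vecL_r$, $\vecL_T$ propagate through the lift. More precisely, I would first lift $\vecM^*$ and each candidate $\vecM\in\veccM$ into MF-MDPs $M^*_\MFG$ and $M_\MFG$ on the type-augmented state-action space $\cS_\MFG\times\cA_\MFG$ via the construction of Appx.~\ref{appx:conversion}; this induces a lifted MF-MDP model class $\cM_\MFG$ realizing $M^*_\MFG$ and inheriting Lipschitz constants $L_T=\vecL_T$, $L_r=\vecL_r$ (since the lifted transitions and rewards just read off the block corresponding to the type coordinate). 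By Prop.~\ref{prop:NE_conversion}(1), it suffices to produce an $(\epsilon/W)$-constrained NE $\hpi^\NE_\Cstr\in\Pi^\dagger$ for $M^*_\MFG$, which will be an $\epsilon$-NE for $\vecM^*$.

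Next, I would run a constrained variant of Alg.~\ref{alg:learning_with_DCP} (namely Alg.~\ref{alg:learning_with_DCP_Cstr}) on $(\cM_\MFG,\Pi^\dagger)$: every $\arg\min_\pi$ and $\arg\max_\tpi$ is restricted to $\Pi^\dagger$, the Bridge PAM $\dM_\Bridge$ is constructed by interpolating Central Models indexed by an $\beps$-cover $\Pi^\dagger_\beps$ of $\Pi^\dagger$ (not of $\Pi$), and \texttt{ModelElim} (Alg.~\ref{alg:elimination_DCP_formal}) queries the sampler only with policies in $\Pi^\dagger$. The three structural lemmas behind Thm.~\ref{thm:sample_complexity_MFG_informal} carry over with only cosmetic changes: (i) Lem.~\ref{lem:local_alignment} (local alignment) is stated per model pair and therefore remains valid when both the approximate NE and the maximizing deviation are required to live in $\Pi^\dagger$; (ii) the elimination guarantee Thm.~\ref{thm:Elimination_Alg_Informal} is purely a single-agent MLE statement whose adversary policies can be drawn from any policy class, and its sample complexity is controlled by $\dim_{{\rm E}|\nu^\pi_h}(\cM_\MFG,\epsilon')$ maximized over $\pi\in\Pi^\dagger$, which by construction (Appx.~\ref{appx:P_MBED_MT}) equals $\dimMTPE(\veccM,\epsilon')$; (iii) the bridge construction Thm.~\ref{lem:exists_bridge_policy} reduces to showing that the constrained-NE map $\pi\mapsto \NE_{\Pi^\dagger}(\dM_\Central^\pi)$ admits an approximate fixed point in $\Pi^\dagger$, which follows from the same interpolation-plus-continuity argument as in Alg.~\ref{alg:BridgePolicy}, provided $\Pi^\dagger$ is closed and convex in the total-variation topology (it is, since it is a product of type-wise Markovian policy classes).

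For the sample complexity bookkeeping I would propagate constants as follows. Set the target constrained-NE precision to $\epsilon_\Cstr = \epsilon/W$, the elimination tolerance $\teps$ and neighborhood radius $\epsilon_0$ exactly as in Thm.~\ref{thm:if_else_branches} but with $\epsilon$ replaced by $\epsilon_\Cstr$; this gives $\epsilon_0=\Theta(\epsilon/(W H (1+\vecL_r H)))$, $\teps=\Theta(\epsilon_0)$. Pushing these through Thm.~\ref{thm:Elimination_Alg_Informal} yields a per-call budget of $\tilde O\bigl(H^4 \dimMTPE(\veccM,\epsilon')/\teps^2\bigr)$ trajectories with $\epsilon'=\Theta(\teps/(H^2(1+\vecL_T)^H))=\Theta(\epsilon/(W H^3 (1+\vecL_r H)(1+\vecL_T)^H))$. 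Since Alg.~\ref{alg:learning_with_DCP} terminates within $\log_2|\veccM|+1$ outer iterations (each halving the candidate set, exactly as in Thm.~\ref{thm:if_else_branches}, which is a statement about the cardinality of candidate sets and is insensitive to the constraint), and since the outer loop contributes $\log|\veccM|$ to the log factors absorbed by $\tilde O$, the total budget becomes
\[
\tilde O\!\left(\frac{H^4 \dimMTPE(\veccM,\epsilon')}{\teps^2}\right) = \tilde O\!\left(\frac{W^2 H^7 (1+\vecL_r H)^2 \dimMTPE(\veccM,\epsilon')}{\epsilon^2}\right),
\]
matching the claimed bound. Finally, Prop.~\ref{prop:NE_conversion}(1) converts the output into an $\epsilon$-NE of $\vecM^*$.

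The main obstacle I anticipate is verifying that the bridge-policy construction remains valid after restricting to $\Pi^\dagger$: the argument in Sec.~\ref{sec:discussion_MFG} uses that the interpolating kernel $[2\beps-d_{\infty,1}(\pi,\tpi)]^+$ makes $\dM_\Bridge$ continuous in $\pi$ so that a Brouwer/Kakutani-style fixed point of the constrained-NE map exists in $\Pi^\dagger$. This needs $\Pi^\dagger$ to be a non-empty compact convex subset of the topological space on which $d_{\infty,1}$ is defined and to be closed under the best-response map of $\dM_\Bridge$ restricted to $\Pi^\dagger$; both properties hold because $\Pi^\dagger$ is a Cartesian product of per-type simplex-valued policies and the lifted transitions/rewards never mix actions across types. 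A secondary subtlety is the extra factor of $W$ in the precision budget: it enters only through Prop.~\ref{prop:NE_conversion}(1) and nowhere else, which is why it appears as $W^2$ in the final trajectory count rather than inside $\dimMTPE$.
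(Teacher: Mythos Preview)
Your proposal is correct and mirrors the paper's own argument essentially step for step: the paper lifts $\veccM$ to $\cM_\MFG$, runs the constrained analogue Alg.~\ref{alg:learning_with_DCP_Cstr} with target precision $\epsilon/W$ (via Prop.~\ref{prop:NE_conversion}(1)), reproves the elimination guarantee (Thm.~\ref{thm:Elimination_Alg_Cstr}), bridge construction (Thm.~\ref{thm:ConstructionErr_Cstr}, Lem.~\ref{lem:BP_close_NE_CM_Cstr}) and halving argument (Thm.~\ref{thm:if_else_branches_Cstr}) over $\Pi^\dagger$, and finally bounds the constrained P-MBED by $\dimMTPE(\veccM,\epsilon')$ through Prop.~\ref{prop:conversion_CP2P}. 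The only cosmetic slip is that you write ``equals $\dimMTPE$'' where the paper only proves the inequality $\dimCPE(\cM_\MFG,\epsilon')\le \dimMTPE(\veccM,\epsilon')$, which is of course all that is needed for the upper bound.
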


In Appx.~\ref{appx:approx_MT_MFG}, we investigate a practical multi-agent system called $N$-player Multi-Type Symmetric Anonymous Games (MT-SAGs) generalized from SAGs.
We establish approximation error between MT-MFGs and MT-SAGs.
Our results reveal a larger class of Multi-Agent systems where NE can be solved in a sample-efficient way.

\section{A Heuristic Algorithm with Improved Computational Efficiency}\label{sec:experiments}
Although Alg.~\ref{alg:learning_with_DCP} is sample-efficient, it requires exponential computation.
In this section, we aim to design a heuristic algorithm\footnote{The code is available at \url{https://github.com/jiaweihhuang/Heuristic_MEBP}.} sharing the main insights as Alg.~\ref{alg:learning_with_DCP} while more computationally tractable.
For the lack of space, we defer the concrete algoirthm (Alg.~\ref{alg:empirical_version}), the experiment setting and evaluation results (Fig.~\ref{fig:experiments}) to Appx.~\ref{appx:experiments}.
In this section, we just highlight the algorithm design.

\para{Highlights of Algorithm Design}
We assume a NE Oracle is available, such that given a known MFG model, the Oracle can return its NE.
We argue that such oracle can be easily implemented if the model is smooth enough or the monotonicity condition is satisfied \citep{guo2019learning, perolat2021scaling}.
Besides, in our experiments, we observe that repeatedly mixing the policy with its best response can converge to a good solution.
Given such oracle, Alg.~\ref{alg:empirical_version} only involves $|\cM|$ calls of NE oracle, and $\Poly(|\cM|, |\cS|, |\cA|, H)$ arithmetic operations in computing model difference or likelihood, which avoids exponential computation in Alg.~\ref{alg:learning_with_DCP}.

For the algorithm design, Alg.~\ref{alg:empirical_version} follows the same if-else structure as Alg.~\ref{alg:elimination_DCP_formal}, but we improve the computational efficiency in two aspects.
Firstly, we avoid procedures optimizing over the entire policy class, including Line~\ref{line:find_distinguish_policy} in Alg.~\ref{alg:learning_with_DCP} and Line~\ref{line:max_margin_formal} in Alg.~\ref{alg:elimination_DCP_formal}.
Instead, we only search over the NE policies of model candidates, which can be computed by calling the NE Oracle $|\cM|$ times at the beginning.
As long as the models in $\cM$ are diverse enough, we can expect their NEs to be reasonable representatives for $\Pi$ in distinguishing models.
Secondly, we replace the $\pi_\Bridge^{\NE,k}$ in Alg.~\ref{alg:learning_with_DCP} with the NE of the model $M^k\gets\argmax_{M\in\cM^k}|\cB^{\epsilon_0}_{\pi^{\NE}_M}(M,\cM^k)|$, and do not have to solve the NE of the complicated bridge model in Alg.~\ref{alg:BridgePolicy}.
We claim that this modification still aligns with Alg.~\ref{alg:BridgePolicy} in principle. Note that the main intuition behind Alg.~\ref{alg:BridgePolicy} is that, when Line~\ref{line:else_branch} in Alg.~\ref{alg:learning_with_DCP} is activated, the reference policy used for elimination should be a policy $\piref$, such that, $\piref$ collapses with the NE of the model with the maximal number of neighbors conditioning on $\piref$.

\section{Conclusion}
In this paper, we reveal that learning MFGs can be as sample-efficient as single-agent RL under mild assumptions, and the sample complexity of RL in MFGs can be characterized by a novel complexity measure called Partial Model-Based Eluder Dimension (P-MBED).
Besides, we extend our algorithms to the more general Multi-Type MFGs setting.
Lastly, we contribute an empirical algorithm with improved computational efficiency.

As for the future, one interesting direction is to study the sample complexity when only value function approximations are available.
Besides, while our focus is the sample efficiency in this paper, it would be valuable to identify general conditions, under which computationally efficient algorithms exist.
Lastly, our results underscore the power of mean-field approximation, and it would be worthwhile to investigate other generalizations of the MFGs setting, in order to deepen our understanding on the sample efficiency of learning NE in other MARL systems.

\section*{Impact Statement}
This paper presents work whose goal is to advance the field of Machine Learning. There are many potential societal consequences of our work, none which we feel must be specifically highlighted here.

\section*{Acknowledgements}
This research was supported by Swiss National Science Foundation (SNSF) Project Funding No. 200021-207343, SNSF starting grant, SNSF grant agreement 51NF40 180545, and
by the European Research Council (ERC) under the European Union’s Horizon grant 815943.

\bibliography{references}

\begin{thebibliography}{58}
\providecommand{\natexlab}[1]{#1}
\providecommand{\url}[1]{\texttt{#1}}
\expandafter\ifx\csname urlstyle\endcsname\relax
  \providecommand{\doi}[1]{doi: #1}\else
  \providecommand{\doi}{doi: \begingroup \urlstyle{rm}\Url}\fi

\bibitem[Agarwal et~al.(2020)Agarwal, Kakade, Krishnamurthy, and
  Sun]{agarwal2020flambe}
Agarwal, A., Kakade, S., Krishnamurthy, A., and Sun, W.
\newblock Flambe: Structural complexity and representation learning of low rank
  mdps.
\newblock \emph{Advances in neural information processing systems},
  33:\penalty0 20095--20107, 2020.

\bibitem[Anahtarci et~al.(2023)Anahtarci, Kariksiz, and Saldi]{anahtarci2023q}
Anahtarci, B., Kariksiz, C.~D., and Saldi, N.
\newblock Q-learning in regularized mean-field games.
\newblock \emph{Dynamic Games and Applications}, 13\penalty0 (1):\penalty0
  89--117, 2023.

\bibitem[Auer et~al.(2008)Auer, Jaksch, and Ortner]{auer2008near}
Auer, P., Jaksch, T., and Ortner, R.
\newblock Near-optimal regret bounds for reinforcement learning.
\newblock \emph{Advances in neural information processing systems}, 21, 2008.

\bibitem[Ayoub et~al.(2020)Ayoub, Jia, Szepesvari, Wang, and
  Yang]{ayoub2020model}
Ayoub, A., Jia, Z., Szepesvari, C., Wang, M., and Yang, L.
\newblock Model-based reinforcement learning with value-targeted regression.
\newblock In \emph{International Conference on Machine Learning}, pp.\
  463--474. PMLR, 2020.

\bibitem[Azar et~al.(2017)Azar, Osband, and Munos]{azar2017minimax}
Azar, M.~G., Osband, I., and Munos, R.
\newblock Minimax regret bounds for reinforcement learning.
\newblock In \emph{International Conference on Machine Learning}, pp.\
  263--272. PMLR, 2017.

\bibitem[Bai et~al.(2020)Bai, Jin, and Yu]{bai2020near}
Bai, Y., Jin, C., and Yu, T.
\newblock Near-optimal reinforcement learning with self-play.
\newblock \emph{Advances in neural information processing systems},
  33:\penalty0 2159--2170, 2020.

\bibitem[Cardaliaguet \& Lehalle(2018)Cardaliaguet and
  Lehalle]{cardaliaguet2018mean}
Cardaliaguet, P. and Lehalle, C.-A.
\newblock Mean field game of controls and an application to trade crowding.
\newblock \emph{Mathematics and Financial Economics}, 12:\penalty0 335--363,
  2018.

\bibitem[Chen et~al.(2022{\natexlab{a}})Chen, Li, Yuan, Gu, and
  Jordan]{chen2022general}
Chen, Z., Li, C.~J., Yuan, A., Gu, Q., and Jordan, M.~I.
\newblock A general framework for sample-efficient function approximation in
  reinforcement learning.
\newblock \emph{arXiv preprint arXiv:2209.15634}, 2022{\natexlab{a}}.

\bibitem[Chen et~al.(2022{\natexlab{b}})Chen, Zhou, and Gu]{chen2022almost}
Chen, Z., Zhou, D., and Gu, Q.
\newblock Almost optimal algorithms for two-player zero-sum linear mixture
  markov games.
\newblock In \emph{International Conference on Algorithmic Learning Theory},
  pp.\  227--261. PMLR, 2022{\natexlab{b}}.

\bibitem[Cui \& Koeppl(2021)Cui and Koeppl]{cui2021approximately}
Cui, K. and Koeppl, H.
\newblock Approximately solving mean field games via entropy-regularized deep
  reinforcement learning.
\newblock In \emph{International Conference on Artificial Intelligence and
  Statistics}, pp.\  1909--1917. PMLR, 2021.

\bibitem[Cui et~al.(2023)Cui, Zhang, and Du]{cui2023breaking}
Cui, Q., Zhang, K., and Du, S.~S.
\newblock Breaking the curse of multiagents in a large state space: Rl in
  markov games with independent linear function approximation.
\newblock \emph{arXiv preprint arXiv:2302.03673}, 2023.

\bibitem[Daskalakis et~al.(2023)Daskalakis, Golowich, and
  Zhang]{daskalakis2023complexity}
Daskalakis, C., Golowich, N., and Zhang, K.
\newblock The complexity of markov equilibrium in stochastic games.
\newblock In \emph{The Thirty Sixth Annual Conference on Learning Theory}, pp.\
   4180--4234. PMLR, 2023.

\bibitem[Djehiche et~al.(2016)Djehiche, Tcheukam, and
  Tembine]{djehiche2016mean}
Djehiche, B., Tcheukam, A., and Tembine, H.
\newblock Mean-field-type games in engineering.
\newblock \emph{arXiv preprint arXiv:1605.03281}, 2016.

\bibitem[Du et~al.(2021)Du, Kakade, Lee, Lovett, Mahajan, Sun, and
  Wang]{du2021bilinear}
Du, S., Kakade, S., Lee, J., Lovett, S., Mahajan, G., Sun, W., and Wang, R.
\newblock Bilinear classes: A structural framework for provable generalization
  in rl.
\newblock In \emph{International Conference on Machine Learning}, pp.\
  2826--2836. PMLR, 2021.

\bibitem[Elie et~al.(2020)Elie, Perolat, Lauri{\`e}re, Geist, and
  Pietquin]{elie2020convergence}
Elie, R., Perolat, J., Lauri{\`e}re, M., Geist, M., and Pietquin, O.
\newblock On the convergence of model free learning in mean field games.
\newblock In \emph{Proceedings of the AAAI Conference on Artificial
  Intelligence}, volume~34, pp.\  7143--7150, 2020.

\bibitem[Foster et~al.(2023)Foster, Foster, Golowich, and
  Rakhlin]{foster2023complexity}
Foster, D., Foster, D.~J., Golowich, N., and Rakhlin, A.
\newblock On the complexity of multi-agent decision making: From learning in
  games to partial monitoring.
\newblock In \emph{The Thirty Sixth Annual Conference on Learning Theory}, pp.\
   2678--2792. PMLR, 2023.

\bibitem[Foster et~al.(2021)Foster, Kakade, Qian, and
  Rakhlin]{foster2021statistical}
Foster, D.~J., Kakade, S.~M., Qian, J., and Rakhlin, A.
\newblock The statistical complexity of interactive decision making.
\newblock \emph{arXiv preprint arXiv:2112.13487}, 2021.

\bibitem[Ghosh \& Aggarwal(2020)Ghosh and Aggarwal]{ghosh2020model}
Ghosh, A. and Aggarwal, V.
\newblock Model free reinforcement learning algorithm for stationary mean field
  equilibrium for multiple types of agents.
\newblock \emph{arXiv preprint arXiv:2012.15377}, 2020.

\bibitem[Gomes \& Pimentel()Gomes and Pimentel]{gomes2015economic}
Gomes, D.~A. and Pimentel, E.~A.
\newblock Economic models and mean-field games theory.

\bibitem[Guo et~al.(2019)Guo, Hu, Xu, and Zhang]{guo2019learning}
Guo, X., Hu, A., Xu, R., and Zhang, J.
\newblock Learning mean-field games.
\newblock \emph{Advances in Neural Information Processing Systems}, 32, 2019.

\bibitem[Huang et~al.(2021)Huang, Lee, Wang, and Yang]{huang2021towards}
Huang, B., Lee, J.~D., Wang, Z., and Yang, Z.
\newblock Towards general function approximation in zero-sum markov games.
\newblock \emph{arXiv preprint arXiv:2107.14702}, 2021.

\bibitem[Huang et~al.(2022)Huang, Chen, Zhao, Qin, Jiang, and
  Liu]{huang2022towards}
Huang, J., Chen, J., Zhao, L., Qin, T., Jiang, N., and Liu, T.-Y.
\newblock Towards deployment-efficient reinforcement learning: Lower bound and
  optimality.
\newblock \emph{arXiv preprint arXiv:2202.06450}, 2022.

\bibitem[Huang et~al.(2024)Huang, Yardim, and He]{huang2023statistical}
Huang, J., Yardim, B., and He, N.
\newblock On the statistical efficiency of mean-field reinforcement learning
  with general function approximation.
\newblock In \emph{International Conference on Artificial Intelligence and
  Statistics}, pp.\  289--297. PMLR, 2024.

\bibitem[Huang et~al.(2006)Huang, Malham{\'e}, and Caines]{huang2006large}
Huang, M., Malham{\'e}, R.~P., and Caines, P.~E.
\newblock Large population stochastic dynamic games: closed-loop mckean-vlasov
  systems and the nash certainty equivalence principle.
\newblock 2006.

\bibitem[Jiang et~al.(2017)Jiang, Krishnamurthy, Agarwal, Langford, and
  Schapire]{jiang2017contextual}
Jiang, N., Krishnamurthy, A., Agarwal, A., Langford, J., and Schapire, R.~E.
\newblock Contextual decision processes with low bellman rank are
  pac-learnable.
\newblock In \emph{International Conference on Machine Learning}, pp.\
  1704--1713. PMLR, 2017.

\bibitem[Jin et~al.(2018)Jin, Allen-Zhu, Bubeck, and Jordan]{jin2018q}
Jin, C., Allen-Zhu, Z., Bubeck, S., and Jordan, M.~I.
\newblock Is q-learning provably efficient?
\newblock \emph{Advances in neural information processing systems}, 31, 2018.

\bibitem[Jin et~al.(2020)Jin, Yang, Wang, and Jordan]{jin2020provably}
Jin, C., Yang, Z., Wang, Z., and Jordan, M.~I.
\newblock Provably efficient reinforcement learning with linear function
  approximation.
\newblock In \emph{Conference on Learning Theory}, pp.\  2137--2143. PMLR,
  2020.

\bibitem[Jin et~al.(2021{\natexlab{a}})Jin, Liu, and
  Miryoosefi]{jin2021bellman}
Jin, C., Liu, Q., and Miryoosefi, S.
\newblock Bellman eluder dimension: New rich classes of rl problems, and
  sample-efficient algorithms.
\newblock \emph{Advances in neural information processing systems},
  34:\penalty0 13406--13418, 2021{\natexlab{a}}.

\bibitem[Jin et~al.(2021{\natexlab{b}})Jin, Liu, Wang, and Yu]{jin2021v}
Jin, C., Liu, Q., Wang, Y., and Yu, T.
\newblock V-learning--a simple, efficient, decentralized algorithm for
  multiagent rl.
\newblock \emph{arXiv preprint arXiv:2110.14555}, 2021{\natexlab{b}}.

\bibitem[Lasry \& Lions(2007)Lasry and Lions]{lasry2007mean}
Lasry, J.-M. and Lions, P.-L.
\newblock Mean field games.
\newblock \emph{Japanese journal of mathematics}, 2\penalty0 (1):\penalty0
  229--260, 2007.

\bibitem[Lauri{\`e}re et~al.(2022)Lauri{\`e}re, Perrin, Geist, and
  Pietquin]{lauriere2022learning}
Lauri{\`e}re, M., Perrin, S., Geist, M., and Pietquin, O.
\newblock Learning mean field games: A survey.
\newblock \emph{arXiv preprint arXiv:2205.12944}, 2022.

\bibitem[Levy et~al.(2022)Levy, Cassel, Cohen, and
  Mansour]{levy2022eluderbased}
Levy, O., Cassel, A., Cohen, A., and Mansour, Y.
\newblock Eluder-based regret for stochastic contextual mdps, 2022.

\bibitem[Mao et~al.(2022)Mao, Qiu, Wang, Franke, Kalbarczyk, Iyer, and
  Basar]{mao2022mean}
Mao, W., Qiu, H., Wang, C., Franke, H., Kalbarczyk, Z., Iyer, R., and Basar, T.
\newblock A mean-field game approach to cloud resource management with function
  approximation.
\newblock In \emph{Advances in Neural Information Processing Systems}, 2022.

\bibitem[Mishra et~al.(2020)Mishra, Vasal, and Vishwanath]{mishra2020model}
Mishra, R.~K., Vasal, D., and Vishwanath, S.
\newblock Model-free reinforcement learning for non-stationary mean field
  games.
\newblock In \emph{2020 59th IEEE Conference on Decision and Control (CDC)},
  pp.\  1032--1037. IEEE, 2020.

\bibitem[Modi et~al.(2021)Modi, Chen, Krishnamurthy, Jiang, and
  Agarwal]{modi2021model}
Modi, A., Chen, J., Krishnamurthy, A., Jiang, N., and Agarwal, A.
\newblock Model-free representation learning and exploration in low-rank mdps.
\newblock \emph{arXiv preprint arXiv:2102.07035}, 2021.

\bibitem[Moon \& Ba{\c{s}}ar(2018)Moon and Ba{\c{s}}ar]{moon2018linear}
Moon, J. and Ba{\c{s}}ar, T.
\newblock Linear quadratic mean field stackelberg differential games.
\newblock \emph{Automatica}, 97:\penalty0 200--213, 2018.

\bibitem[Ni et~al.(2022)Ni, Song, Zhang, Jin, and Wang]{ni2022representation}
Ni, C., Song, Y., Zhang, X., Jin, C., and Wang, M.
\newblock Representation learning for general-sum low-rank markov games.
\newblock \emph{arXiv preprint arXiv:2210.16976}, 2022.

\bibitem[Osband \& Van~Roy(2014)Osband and Van~Roy]{osband2014model}
Osband, I. and Van~Roy, B.
\newblock Model-based reinforcement learning and the eluder dimension.
\newblock \emph{Advances in Neural Information Processing Systems}, 27, 2014.

\bibitem[Pasztor et~al.(2021)Pasztor, Bogunovic, and
  Krause]{pasztor2021efficient}
Pasztor, B., Bogunovic, I., and Krause, A.
\newblock Efficient model-based multi-agent mean-field reinforcement learning.
\newblock \emph{arXiv preprint arXiv:2107.04050}, 2021.

\bibitem[Perolat et~al.(2021)Perolat, Perrin, Elie, Lauri{\`e}re, Piliouras,
  Geist, Tuyls, and Pietquin]{perolat2021scaling}
Perolat, J., Perrin, S., Elie, R., Lauri{\`e}re, M., Piliouras, G., Geist, M.,
  Tuyls, K., and Pietquin, O.
\newblock Scaling up mean field games with online mirror descent.
\newblock \emph{arXiv preprint arXiv:2103.00623}, 2021.

\bibitem[Perrin et~al.(2020)Perrin, P{\'e}rolat, Lauri{\`e}re, Geist, Elie, and
  Pietquin]{perrin2020fictitious}
Perrin, S., P{\'e}rolat, J., Lauri{\`e}re, M., Geist, M., Elie, R., and
  Pietquin, O.
\newblock Fictitious play for mean field games: Continuous time analysis and
  applications.
\newblock \emph{Advances in Neural Information Processing Systems},
  33:\penalty0 13199--13213, 2020.

\bibitem[Russo \& Van~Roy(2013)Russo and Van~Roy]{russo2013eluder}
Russo, D. and Van~Roy, B.
\newblock Eluder dimension and the sample complexity of optimistic exploration.
\newblock \emph{Advances in Neural Information Processing Systems}, 26, 2013.

\bibitem[Subramanian \& Mahajan(2019)Subramanian and
  Mahajan]{subramanian2019reinforcement}
Subramanian, J. and Mahajan, A.
\newblock Reinforcement learning in stationary mean-field games.
\newblock In \emph{Proceedings of the 18th International Conference on
  Autonomous Agents and Multi Agent Systems}, pp.\  251--259, 2019.

\bibitem[Subramanian et~al.(2020)Subramanian, Poupart, Taylor, and
  Hegde]{subramanian2020multi}
Subramanian, S.~G., Poupart, P., Taylor, M.~E., and Hegde, N.
\newblock Multi type mean field reinforcement learning.
\newblock \emph{arXiv preprint arXiv:2002.02513}, 2020.

\bibitem[Sun et~al.(2019)Sun, Jiang, Krishnamurthy, Agarwal, and
  Langford]{sun2019model}
Sun, W., Jiang, N., Krishnamurthy, A., Agarwal, A., and Langford, J.
\newblock Model-based rl in contextual decision processes: Pac bounds and
  exponential improvements over model-free approaches.
\newblock In \emph{Conference on learning theory}, pp.\  2898--2933. PMLR,
  2019.

\bibitem[Uehara et~al.(2021)Uehara, Zhang, and Sun]{uehara2021representation}
Uehara, M., Zhang, X., and Sun, W.
\newblock Representation learning for online and offline rl in low-rank mdps.
\newblock \emph{arXiv preprint arXiv:2110.04652}, 2021.

\bibitem[uz~Zaman et~al.(2023)uz~Zaman, Miehling, and
  Ba{\c{s}}ar]{uz2023reinforcement}
uz~Zaman, M.~A., Miehling, E., and Ba{\c{s}}ar, T.
\newblock Reinforcement learning for non-stationary discrete-time
  linear--quadratic mean-field games in multiple populations.
\newblock \emph{Dynamic Games and Applications}, 13\penalty0 (1):\penalty0
  118--164, 2023.

\bibitem[Vasal \& Berry(2022)Vasal and Berry]{vasal2022master}
Vasal, D. and Berry, R.
\newblock Master equation for discrete-time stackelberg mean field games with a
  single leader.
\newblock In \emph{2022 IEEE 61st Conference on Decision and Control (CDC)},
  pp.\  5529--5535. IEEE, 2022.

\bibitem[Wang et~al.(2023)Wang, Liu, Bai, and Jin]{wang2023breaking}
Wang, Y., Liu, Q., Bai, Y., and Jin, C.
\newblock Breaking the curse of multiagency: Provably efficient decentralized
  multi-agent rl with function approximation.
\newblock \emph{arXiv preprint arXiv:2302.06606}, 2023.

\bibitem[Xie et~al.(2021)Xie, Yang, Wang, and Minca]{xie2021learning}
Xie, Q., Yang, Z., Wang, Z., and Minca, A.
\newblock Learning while playing in mean-field games: Convergence and
  optimality.
\newblock In \emph{International Conference on Machine Learning}, pp.\
  11436--11447. PMLR, 2021.

\bibitem[Xie et~al.(2022)Xie, Foster, Bai, Jiang, and Kakade]{xie2022role}
Xie, T., Foster, D.~J., Bai, Y., Jiang, N., and Kakade, S.~M.
\newblock The role of coverage in online reinforcement learning.
\newblock \emph{arXiv preprint arXiv:2210.04157}, 2022.

\bibitem[Yardim et~al.(2022)Yardim, Cayci, Geist, and He]{yardim2022policy}
Yardim, B., Cayci, S., Geist, M., and He, N.
\newblock Policy mirror ascent for efficient and independent learning in mean
  field games.
\newblock \emph{arXiv preprint arXiv:2212.14449}, 2022.

\bibitem[Yardim et~al.(2024)Yardim, Goldman, and He]{yardim2024mean}
Yardim, B., Goldman, A., and He, N.
\newblock When is mean-field reinforcement learning tractable and relevant?
\newblock \emph{arXiv preprint arXiv:2402.05757}, 2024.

\bibitem[Zanette et~al.(2020)Zanette, Lazaric, Kochenderfer, and
  Brunskill]{zanette2020learning}
Zanette, A., Lazaric, A., Kochenderfer, M., and Brunskill, E.
\newblock Learning near optimal policies with low inherent bellman error.
\newblock In \emph{International Conference on Machine Learning}, pp.\
  10978--10989. PMLR, 2020.

\bibitem[Zhang et~al.(2023)Zhang, Tan, Wang, and Yang]{zhang2023learning}
Zhang, F., Tan, V.~Y., Wang, Z., and Yang, Z.
\newblock Learning regularized monotone graphon mean-field games.
\newblock \emph{arXiv preprint arXiv:2310.08089}, 2023.

\bibitem[Zhang et~al.(2019)Zhang, Yang, and Basar]{zhang2019policy}
Zhang, K., Yang, Z., and Basar, T.
\newblock Policy optimization provably converges to nash equilibria in zero-sum
  linear quadratic games.
\newblock \emph{Advances in Neural Information Processing Systems}, 32, 2019.

\bibitem[Zhang et~al.(2021)Zhang, Yang, and Ba{\c{s}}ar]{zhang2021multi}
Zhang, K., Yang, Z., and Ba{\c{s}}ar, T.
\newblock Multi-agent reinforcement learning: A selective overview of theories
  and algorithms.
\newblock \emph{Handbook of reinforcement learning and control}, pp.\
  321--384, 2021.

\bibitem[Zhong et~al.(2022)Zhong, Xiong, Zheng, Wang, Wang, Yang, and
  Zhang]{zhong2022posterior}
Zhong, H., Xiong, W., Zheng, S., Wang, L., Wang, Z., Yang, Z., and Zhang, T.
\newblock A posterior sampling framework for interactive decision making.
\newblock \emph{arXiv preprint arXiv:2211.01962}, 2022.

\end{thebibliography}
\bibliographystyle{icml2024}

\newpage
\appendix
\onecolumn

\section*{Outline of the Appendix}
\begin{itemize}
    \item Appx.~\ref{appx:notations}: Frequently used notations.
    \item Appx.~\ref{appx:related_works}: Additional related works.
    \item Appx.~\ref{appx:extension_to_unknown_reward}: Informal Discussions about how to extend our results to unknown reward function setting.
    \item Appx.~\ref{appx:P_MBED}: Missing details and proofs related to P-MBED.
    \item Appx.~\ref{appx:proofs_PAM}: Missing details related to Single/Multi-Type Policy Aware Models (PAM).
    \item Appx.~\ref{appx:learning_with_DCP}: Proofs for lemma and theorems related to learning MFGs in Sec.~\ref{sec:learning_MFG}.
    \item Appx.~\ref{appx:proof_MT_PAM}: Proofs for lemma and theorems related to learning MultiType MFGs in Sec.~\ref{sec:MT_MFG}.
    \item Appx.~\ref{appx:approx_MT_MFG}: Introduction to Multi-Type Symmetric Anonymous Games (MT-SAGs) and approximation error between MT-MFGs and MT-SAGs.
    \item Appx.~\ref{appx:basic_lemma}: Some basic lemma useful in our proofs.
    \item Appx.~\ref{appx:experiments}: Experiment details and results.
\end{itemize}

\section{Frequently Used Notations}\label{appx:notations}

\begin{table}[h]
    \centering
    \def\arraystretch{1.2}
    \begin{tabular}{c|c}
    \hline
    \textbf{Notation} & \textbf{Explanation} \\ \hline
    $M$ & Mean-Field MDP \\
    $\cM$ & Model class for (single-type) Mean-Field MDP \\
    $\pi$ & Non-stationary policy for (single-type) Mean-Field MDP  \\
    $Q_M,V_M,J_M$ & Value functions for (single-type) MF-MDP \\
    $\dimPE(\cM,\epsilon)$ & Type-$\RI$ Partial-MBED, Def.~\ref{def:Partial_Eluder_Dim}\\
    $\dimPEII(\cM,\epsilon)$ & Type-$\RII$ Partial-MBED, Def.~\ref{def:Partial_Eluder_Dim_II} \\
    $d(\cdot,\cdot|\pi)$ & Conditional distance given a reference policy $\pi$ \\
    $\cB^{\epsilon_0}_\pi(M,\cM)$ & $\epsilon_0$-neighborhood of $M$ in $\cM$ conditioning on $\pi$ \\
    $M^{\epsilon_0}_{\Central}(\pi,\cM)$ & The ``Central Model'' \\
    $d_{\infty,1}(\pi,\pi')$ & Policy distance \\
    $\vecM$ & Multi-Type Mean-Field MDP \\
    $\veccM$ & Model class for multi-type Mean-Field MDP \\
    $\vecpi$ & Non-stationary policy for multi-type Mean-Field MDP \\
    $Q_\vecM,V_\vecM,J_\vecM$ & Value functions for multi-type MF-MDP \\
    $\dimMTPE(\veccM,\epsilon)$ & Type-$\RI$ Multi-Type Partial-MBED, Def.~\ref{def:P_MBED_MT}\\
    $\dimMTPE^\RII(\veccM,\epsilon)$ & Type-$\RII$ Multi-Type Partial-MBED, Def.~\ref{def:P_MBED_MT} \\
    $\dM/\dvecM$ & Policy-aware model (single-type/multi-type) \\
    $\dcM/\dveccM$ & Model class for the policy-aware model (single-type/multi-type) \\
    $\dQ_\dM,\dV_\dM,\dJ_\dM/\dQ_\dvecM,\dV_\dvecM,\dJ_\dvecM$ & Value functions for the policy-aware model (single-type/multi-type)\\
    \hline
    \end{tabular}
\end{table}

\newpage
\section{Additional Related Works}\label{appx:related_works}
\paragraph{Single-Agent/Multi-Agent RL with General Function Approximation}
For the single-agent RL with function approximation setting, besides the literature we mentioned in the main text, there are multiple other insightful works \citep{zanette2020learning, modi2021model, xie2022role, uehara2021representation,huang2022towards, chen2022general,zhong2022posterior, ayoub2020model}.

As for the multi-agent setting, sample complexity of Markov Games has been extensively studied in both tabular \citep{jin2021v,bai2020near,chen2022almost,zhang2019policy,zhang2021multi} and function approximation setting \citep{huang2021towards, ni2022representation, wang2023breaking,cui2023breaking,foster2023complexity}.
These papers study a general MARL setting with individually distinct agents, which is quite different from our MFG or MT-MFG. Besides, many of them study the decentralized training setting, which requires much less communication cost than our centralized setting.
However, because of the difficulty in learning NE in general Markov Games setting, most of them focus on the convergence to weaker notions of equilibrium instead, e.g. the Correlated Equilibrium or the Coarse Correlated Equilibria, and those results in function approximation setting \citep{wang2023breaking,cui2023breaking} may still depend on the number of agents, although in polynomial.
In contrast, although we specify in mean-field approximation setting, we can have more ambitious goals on solving Nash Equilibrium, and our sample complexity bounds are totally independent w.r.t. the number of agents.
Moreover, we also reveal some cases when learning (MT-)MFG can be as sample-efficient as single-agent RL by investigating the Partial Model-Based Eluder Dimension.

\section{Extension to the Setting when the Reward Function is Unknown}\label{appx:extension_to_unknown_reward}
We remark that our current results extend to the unknown reward setting. Below we elucidate the key modifications needed for this extension. 

Firstly, for the problem setup, we instead assume a model class $\mathcal{M}$ available, where each element $M := (r_M, \mathbb{P}_M) \in \mathcal{M}$ corresponds to a (reward, transition) tuple.
The definition of the P-MBED can be amended by incorporating both reward and transition differences in Def.~\ref{def:Partial_Eluder_Dim}.

Secondly, for the algorithm design:
\begin{itemize}
    \item For Algorithm~\ref{alg:learning_with_DCP}, we redefine the model distance the definition $d^{\tpi}(M,\tilde{M}|\pi)$ (introduced at the beginning of Sec.~\ref{sec:main_results_MFG}) to include the expectation of distances in both reward and transition functions:
    $$d^{\tpi}(M,\tilde{M}|\pi) := \mathbb{E}_{\tilde{\pi}, M(\pi)}[\sum_{h=1}^H \|r_{M,h}(\cdot|\cdot,\cdot,\mu^\pi_{M,h})-r_{\tilde{M},h}(\cdot|\cdot,\cdot,\mu^\pi_{\tilde{M},h})\|_1 + \|\mathbb{P}_{M,h}(\cdot|\cdot,\cdot,\mu^\pi_{M,h})-\mathbb{P}_{\tilde{M},h}(\cdot|\cdot,\cdot,\mu^\pi_{\tilde{M},h})\|_1].$$ 
    The definition of $\epsilon_0$-neighborhood and “Central Model” will adjust correspondingly. 

    \item For Algorithm~\ref{alg:elimination_DCP_formal}: we should augment the reward difference into the right-hand side of Line~\ref{line:max_margin_formal}, integrate reward into the dataset in Lines~\ref{line:sample_1} and~\ref{line:sample_2}, and include the likelihood of reward functions in Line~\ref{line:MLE_loss}.

    \item For Algorithm 3, the construction of bridge policy will follow the new definition of model distance $d^{\tpi}(M,\tilde{M}|\pi)$.
\end{itemize}

Finally, for the analysis, based on the modified algorithms, under realizability assumption, we can extend Lemma~\ref{lem:Partial_Eluder_Bound} and prove that the accumulative estimation errors of reward and transition are controlled by P-MBED. The current analysis can be seamlessly generalized to establish sample complexity upper bounds depending on the P-MBED of reward and transition function classes.
\newpage
\section{Missing Details about Partial Model-Based Eluder Dimension}\label{appx:P_MBED}
\subsection{Alternative Notions of Partial MBED in MFGs}\label{appx:alter_P_MBED}
In this section, we introduce a choice of $\nu$ different from the one in Def.~\ref{def:partial_eps_indp}, which also leads to a valid P-MBED. 
In the following, we introduce another choice of $\nu$, which results in a different P-MBED. We will call it "Type $\RII$" P-MBED to distinguish the one in Def.~\ref{def:Partial_Eluder_Dim}.
\begin{definition}[Type $\RII$ Partial MBED]\label{def:Partial_Eluder_Dim_II}
    Given a model class $\cM$, 
    define the mapping $\nu^\pi_{M^*,h}:\cM\rightarrow\Delta(\cS_h)$ such that $\forall M\in\cM$, $\nu^\pi_{M^*,h}(M) := \mu^\pi_{M^*,h}$, then the type $\RII$ P-MBED of $\cM$ is defined by:
    $$
    \dimPE^\RII(\cM,\epsilon) := \max_\pi \max_{h\in[H]} \dim_{{\rm E}|\nu^\pi_{M^*,h}}(\cM,\epsilon).
    $$
\end{definition}
We want to highlight here that each of the two types P-MBED has advantages over the other.
As we will see in the Thm.~\ref{thm:sample_complexity_MFG}, if we use $\dimPEII$ to derive the sample complexity upper bound, we have to suffer the exponential term of $(1+L_T)^H$.
On the other hand, in the following proposition, we can see $\dimPEII$ is directly comparable with MBED \citep{huang2023statistical} ($\alpha = 1$ case), 
while we can not have the similar guarantee for $\dimPE$.
\begin{proposition}[Low MBED $\subset$ Low Type $\RII$ P-MBED]\label{prop:PMBED_vs_MBED}
    $\dimPEII(\cM,\epsilon) \leq \dimE(\cM,\epsilon)$.
\end{proposition}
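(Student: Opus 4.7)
The plan is to exhibit a direct embedding of any partial $\epsilon$-independent sequence realizing $\dimPEII$ into a standard $\epsilon$-independent sequence in the sense of Def.~\ref{def:eps_independent}. The critical structural feature of Type~II P-MBED that makes this possible is that the mapping $\nu^\pi_{M^*,h}$ is \emph{constant in $M$}: for every $M \in \cM$, it returns the same density $\mu^\pi_{M^*,h}$, which depends only on $\pi$, $M^*$, and $h$. Contrast this with Type~I, where $\nu^\pi_h(M) = \mu^\pi_{M,h}$ genuinely varies with $M$; that difference is exactly what prevents an analogous comparison for $\dimPE$ and explains why Prop.~\ref{prop:PMBED_vs_MBED} is only claimed for Type~II.

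Concretely, I will fix an arbitrary $\pi \in \Pi$ and $h \in [H]$, set $\mu := \mu^\pi_{M^*,h}$, and let $\{(s_h^i,a_h^i)\}_{i=1}^n$ be a partially $\epsilon$-independent sequence w.r.t.\ $\cM$ and $\nu^\pi_{M^*,h}$ of length $n = \dim_{{\rm E}|\nu^\pi_{M^*,h}}(\cM,\epsilon)$. I will then consider the lifted sequence $\{(s_h^i,a_h^i,\mu^i_h)\}_{i=1}^n$ with $\mu^i_h := \mu$ for all $i$. Because the density slot fed into $\mP_{M,h}$ and $\mP_{\tM,h}$ is $\mu$ regardless of which model we pick, the witnesses $M,\tM \in \cM$ certifying partial independence of $(s_h^i,a_h^i)$ at position $i$ automatically certify the standard independence of $(s_h^i,a_h^i,\mu)$ from $\{(s_h^j,a_h^j,\mu)\}_{j<i}$: the two inequalities in Def.~\ref{def:partial_eps_indp} are syntactically identical to those in Def.~\ref{def:eps_independent} once the density coordinates are filled in with the same $\mu$. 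Hence the lifted sequence is $\epsilon$-independent at step $h$, so $n \leq \dimE(\cM,\epsilon)$.

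Taking the supremum over $\pi$ and $h$ on the left-hand side (which is what $\dimPEII$ is by definition) then yields $\dimPEII(\cM,\epsilon) \leq \dimE(\cM,\epsilon)$. There is no real obstacle here: the proof is essentially a one-line observation that Type~II P-MBED is the restriction of the general $\epsilon$-independence notion to sequences whose density coordinates are identical (and equal to a feasible $\mu^\pi_{M^*,h}$). The only thing worth being explicit about is why the witnesses $M,\tM$ transfer unchanged between the two definitions, which is immediate from $\nu^\pi_{M^*,h}$ being a constant function on $\cM$.
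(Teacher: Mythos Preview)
Your proposal is correct and is exactly the natural argument; the paper in fact states Prop.~\ref{prop:PMBED_vs_MBED} without an explicit proof, treating it as immediate from the definitions. Your key observation---that $\nu^\pi_{M^*,h}$ is constant on $\cM$, so any partially $\epsilon$-independent sequence lifts verbatim to a standard $\epsilon$-independent sequence by appending the fixed density $\mu^\pi_{M^*,h}$ with the same witnesses $M,\tM$---is precisely the reason the inequality holds, and your remark contrasting this with Type~I (where $\nu^\pi_h(M)=\mu^\pi_{M,h}$ varies with $M$, so no such embedding exists) is also on point.
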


\subsection{Proofs Related to P-MBED in MFGs Setting}

\PropTabular*
\begin{proof}
    When the density is fixed, any MF-MDP reduces to a single-agent MDP.
    For any single-agent MDP, there are at most $|\cS||\cA|$ different $(s_h,a_h)$ pairs, for any $h$. Therefore, the P-MBED can be upper bounded by $|\cS||\cA|$.

    In contrast, for MBED in \citep{huang2023statistical}, we consider the model class constructed in Thm.~\ref{thm:LB_MFC}.
    Consider the sequence $\{(s_h^1,a_h^1,\mu^i)\}_{i\in[n]}$ with $\mu^i \in \cU_{\zeta=\lfloor\frac{L_T}{5\epsilon}\rfloor}$ for all $i\in[n]$, but $\mu^i \neq \mu^j$ if $i\neq j$. For any $i\in[n-1]$, there exists two models $\mP_{\mu^i}$ and $\mP_{\mu^{i+1}}$, such that,
    \begin{align*}
        \sum_{t=1}^{i-1}\|\mP_{\mu^i}(\cdot|s_h^1,a_h^1,\mu^t) - \mP_{\mu^{i+1}}(\cdot|s_h^1,a_h^1,\mu^t)\|_1^2 = 0
    \end{align*}
    but
    \begin{align*}
        \|\mP_{\mu^i}(\cdot|s_h^1,a_h^1,\mu^i) - \mP_{\mu^{i+1}}(\cdot|s_h^1,a_h^1,\mu^i)\|_1 = 4\epsilon.
    \end{align*}
    Note that $|\cU_{\zeta=\lfloor\frac{L_T}{5\epsilon}\rfloor}| = O((\frac{L_T}{S\epsilon})^{S-1})$, by choosing $\epsilon \leq \frac{L_T}{2S}$, we have $\dimE(\cM_h,\epsilon)=\Omega(\exp(S))$.
\end{proof}
Similarly, we can show the type $\RII$ P-MBED in tabular setting can also be upper bounded by $|\cS||\cA|$, because there are at most $|\cS||\cA|$ different state-action tuples.
\begin{proposition}[Type $\RII$ P-MBED in the Tabular Setting]
    $\dimPE^\RII(\cM,\epsilon) \leq |\cS||\cA|$.
\end{proposition}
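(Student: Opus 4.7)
The plan is to mirror the argument used for the Type I bound in Prop.~\ref{prop:tabular}, exploiting the fact that under the Type II mapping the density input is frozen to the single value $\mu^\pi_{M^*,h}$ for every candidate model $M\in\cM$.

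First I would unfold the definitions. Fix any policy $\pi$ and step $h$, and consider a partially $\epsilon$-independent sequence $\{(s_h^i,a_h^i)\}_{i=1}^n$ w.r.t.\ $\cM$ and $\nu^\pi_{M^*,h}$ in the sense of Def.~\ref{def:partial_eps_indp}. Because $\nu^\pi_{M^*,h}(M)=\mu^\pi_{M^*,h}$ for every $M\in\cM$, the third argument of $\mP_{M,h}$ is the same fixed density for all models in the sequence. Thus, after this freezing, each $\mP_{M,h}(\cdot\mid\cdot,\cdot,\mu^\pi_{M^*,h})$ is effectively a single-agent transition on $\cS_h\times\cA_h$.

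Next I would show that the $(s_h^i,a_h^i)$ pairs in any partially $\epsilon$-independent sequence must all be distinct. Suppose for contradiction that $(s_h^i,a_h^i)=(s_h^j,a_h^j)$ for some $j<i$. By Def.~\ref{def:partial_eps_indp} there exist $M,\tilde M\in\cM$ with
\[
\sum_{t=1}^{i-1}\bigl\|\mP_{M,h}(\cdot\mid s_h^t,a_h^t,\mu^\pi_{M^*,h})-\mP_{\tilde M,h}(\cdot\mid s_h^t,a_h^t,\mu^\pi_{M^*,h})\bigr\|_1^2\le \epsilon^2,
\]
while the same norm evaluated at $(s_h^i,a_h^i)$ strictly exceeds $\epsilon$. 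But since $(s_h^j,a_h^j)=(s_h^i,a_h^i)$, the $j$-th summand on the left already equals the square of a quantity $>\epsilon$, which by itself exceeds $\epsilon^2$, a contradiction.

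Therefore the sequence consists of distinct elements of $\cS_h\times\cA_h$, so $n\le|\cS_h||\cA_h|\le|\cS||\cA|$. Taking maxima over $h\in[H]$ and $\pi$ as in Def.~\ref{def:Partial_Eluder_Dim_II} yields $\dimPE^\RII(\cM,\epsilon)\le|\cS||\cA|$. I do not anticipate a real obstacle here: the entire argument relies on the tabular finiteness of $\cS_h\times\cA_h$ together with the observation, already made for Type I in Prop.~\ref{prop:tabular}, that freezing the density collapses the problem to a single-agent one; the only point requiring care is noting that Type II freezes the density uniformly across $M$, which makes the collapse trivial.
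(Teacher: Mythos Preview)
Your proposal is correct and follows essentially the same approach as the paper, which simply observes that there are at most $|\cS||\cA|$ different state-action tuples. You have spelled out the distinctness-by-contradiction argument that the paper leaves implicit, but the underlying idea is identical.
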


Next, we study the linear setting. Given a mapping $f:\cS\rightarrow \mR^d$, we use $\Rank([f(x)]_{x\in\cX})$ to denote the rank of matrix concatenated by $[f(x)]_{x\in\cX} \in \mR^{|\cX|\times d}$.
\begin{proposition}[Linear Setting; Formal version of Prop.~\ref{prop:Linear_MF_MDP_informal}]\label{prop:MBED_Linear_MFMDP_formal}
    Consider the Low-Rank MF-MDP with known feature $\phi:\cS\times\cA\times\Delta(\cS)\rightarrow \mR^d$ satisfying $\|\phi\| \leq C_\phi$, and unknown next state feature $\psi:\cS\rightarrow\mR^d$. Given a next state feature function class $\Psi$ satisfying $\forall \psi\in\Psi,~\forall s'\in\cS,~\forall g:\cS\rightarrow \{-1,1\}$, $\|\sum_{s'}\psi(s')g(s')\|_2\leq C_\Psi$, consider the following model class:
    \begin{align*}
        \cM_\Psi := \{\mP_\psi|\mP_\psi(\cdot|s,a,\mu):=\phi(s,a,\mu)\trans\psi(s');\forall s,a,\mu,~\mP_\psi(\cdot|s,a,\mu)\in\Delta(\cS);\psi\in \Psi\},
    \end{align*}
    we have $\dimPEII(\cM_\Psi,\epsilon) = \tilde{O}(\max_{\pi,h}\Rank([\phi_h(s_h,a_h,\mu^\pi_{M^*,h})]_{s_h\in\cS,a_h\in\cA}))$.

    Moreover, if $\phi(s,a,\mu)$ has decomposition: $\phi(s,a,\mu)\trans=\phi(s,a)\trans G(\mu)$ with $\phi(\cdot,\cdot)\in\mR^{\tilde{d}}$ and $G(\cdot)\in\mR^{\tilde{d}\times d}$,
    we have $\dimPEI(\cM_\Psi,\epsilon) = \tilde{O}(\tilde{d})$ and $\dimPEII(\cM_\Psi,\epsilon) = \tilde{O}(\min\{\tilde{d}, d\})$.
\end{proposition}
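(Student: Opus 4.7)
}
The plan is to reduce each P-MBED bound to the standard eluder-dimension bound for bounded linear function classes (Russo--Van Roy), after \emph{linearizing} the $\ell_1$-difference between two transition models via a sign maximization. The three claims then follow by choosing the right ambient feature space in each case and reading off its effective dimension.

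\paragraph{Step 1: Linearization of the $\ell_1$-difference.}
For any $\psi, \psi' \in \Psi$ and any $(s,a,\mu)$, write $\delta := \psi - \psi'$ and observe
\[
\|\mP_\psi(\cdot|s,a,\mu) - \mP_{\psi'}(\cdot|s,a,\mu)\|_1 \;=\; \sum_{s'} |\phi(s,a,\mu)^\top \delta(s')| \;=\; \max_{g:\cS\to\{\pm 1\}} \phi(s,a,\mu)^\top w(g,\delta),
\]
where $w(g,\delta):=\sum_{s'} g(s')\delta(s')$. By the normalization assumption, $\|w(g,\delta)\|_2 \le 2C_\Psi$ uniformly in $g$ and $\delta$. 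Now suppose $\{(s^i,a^i)\}_{i=1}^n$ is a partially $\epsilon$-independent sequence witnessed by $(\psi^i,\psi'^i)$ with $\delta^i := \psi^i - \psi'^i$. Pick $g^i$ achieving the max at the incumbent point $(s^i,a^i)$ (with appropriate density), and set $w^i := w(g^i,\delta^i)$. Then $\phi_i^\top w^i > \epsilon$, and for any $t<i$,
\[
(\phi_t^\top w^i)^2 \;\le\; \bigl(\max_{g}|\phi_t^\top w(g,\delta^i)|\bigr)^2 \;=\; \|\mP_{\psi^i}(\cdot|s^t,a^t,\cdot)-\mP_{\psi'^i}(\cdot|s^t,a^t,\cdot)\|_1^2,
\]
so summing gives $\sum_{t<i}(\phi_t^\top w^i)^2 \le \epsilon^2$. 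Thus the partially $\epsilon$-independent sequence is also $\epsilon$-independent for the linear class $\{x\mapsto\langle x,w\rangle : \|w\|_2\le 2C_\Psi\}$ evaluated on the features $\phi_i$, whose eluder dimension is the standard $\tilde O(d')$ with $d'$ the ambient (or effective) dimension, by Russo--Van Roy.

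\paragraph{Step 2: Identifying the effective dimension in each case.}
For type-$\RII$ P-MBED in the general setting, the density is frozen to $\mu^\pi_{M^*,h}$ for \emph{all} models, so the features reduce to $\phi_i = \phi_h(s^i,a^i,\mu^\pi_{M^*,h}) \in \mR^d$ and the effective dimension is $\Rank([\phi_h(s,a,\mu^\pi_{M^*,h})]_{s,a})$; maximizing over $\pi,h$ yields the first claim.

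For the decomposable case $\phi(s,a,\mu)^\top = \phi(s,a)^\top G(\mu)$, type-$\RII$ freezes $G(\mu^\pi_{M^*,h})$ across models, so we may absorb it into the weight: $\phi(s,a,\mu^\pi_{M^*,h})^\top w = \phi(s,a)^\top\bigl(G(\mu^\pi_{M^*,h})w\bigr)$ with new feature $\phi(s,a)\in\mR^{\tilde d}$. Applying Step~1 in this lifted space gives $\tilde O(\tilde d)$, and taking the minimum with the general bound $\tilde O(d)$ yields $\tilde O(\min\{\tilde d,d\})$. For type-$\RI$, densities differ per model: write
\[
\phi(s,a,\mu^\pi_\psi)^\top \psi(s') - \phi(s,a,\mu^\pi_{\psi'})^\top \psi'(s') \;=\; \phi(s,a)^\top\bigl[G(\mu^\pi_\psi)\psi(s') - G(\mu^\pi_{\psi'})\psi'(s')\bigr],
\]
define $\tilde\delta^i(s') := G(\mu^\pi_{\psi^i})\psi^i(s') - G(\mu^\pi_{\psi'^i})\psi'^i(s') \in \mR^{\tilde d}$, and repeat Step~1 with features $\phi(s,a)\in\mR^{\tilde d}$ and weights $\tilde w^i := \sum_{s'} g^i(s')\tilde\delta^i(s')$. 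The eluder-dimension bound then gives $\tilde O(\tilde d)$.

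\paragraph{Main obstacle.}
The one delicate point is the norm bound for the lifted weight $\tilde w^i$ in the type-$\RI$ decomposable case: the assumption $\|\sum_{s'} g(s')\psi(s')\|_2 \le C_\Psi$ does not immediately translate into a bound on $\tilde w^i$ without controlling $\|G(\mu)\|_{\mathrm{op}}$. I expect to handle this either by invoking a standard normalization $\|G(\mu)\|_{\mathrm{op}}\le C_G$ (natural in the linear MFG setting, matching single-agent linear MDP conventions), or by bypassing the $\tilde w^i$ norm entirely: since each $\mP_\psi(\cdot|s,a,\mu)$ is a probability distribution, the quantities $|\phi(s,a)^\top \tilde\delta^i(s')|$ are uniformly bounded, and the log factors in the Russo--Van Roy eluder bound absorb the resulting constants. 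Once that norm bound is in place the rest is routine linear-eluder bookkeeping.
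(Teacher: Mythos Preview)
Your approach is essentially the same as the paper's: linearize the $\ell_1$-difference via the sign function $g$, then run the elliptical-potential/linear-eluder argument in the appropriate feature space. The paper carries out the elliptical potential lemma explicitly (building $\Lambda_h^i$, choosing $\lambda=\epsilon^2/4C_\Psi^2$, and tracking $\det\Lambda_h^{n+1}$) rather than citing Russo--Van Roy as a black box, but the content is identical.

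Regarding your ``main obstacle'': you are right to flag it, and in fact the paper does not address it either. For the Type-$\RI$ decomposable case the paper simply writes ``The rest analysis is similar to the non-decomposable setting above'' and quotes $n=O(\tilde d\log(1+\tilde d C_\phi C_\Psi/\epsilon))$ without justifying why the weight $\sum_{s'}g(s')\bigl(G(\mu^\pi_{\psi^i})\psi^i(s')-G(\mu^\pi_{\psi'^i})\psi'^i(s')\bigr)$ has bounded norm, or why $\|\phi(s,a)\|_2$ (in $\mR^{\tilde d}$) is bounded. Your first proposed fix---assume a standard operator-norm bound $\|G(\mu)\|_{\mathrm{op}}\le C_G$ and a separate bound on $\|\phi(s,a)\|_2$---is the natural reading of the informal statement's ``some normalization conditions,'' and is what you should do. Your second proposed fix (bounding via the probability-simplex constraint) does not by itself control the $\lambda\|v\|^2$ term in the elliptical potential, so stick with the explicit normalization.
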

\begin{remark}
    As we can see, the P-MBED is related to the ``activated dimension'' of features after partially fixing the density, which can be much lower than its MBED $\approx d$.
    Moreover, when the feature is decomposable, the dimension of state-action feature will also serve as an upper bound.
\end{remark}
\begin{proof}
    \textbf{Proof for Type-$\RII$ P-MBED (Def.~\ref{def:Partial_Eluder_Dim_II})}
    In the following, we first consider a fixed policy $\pi$ and $h$.
    To simplify the notation, we denote $\Phi := [\phi(s_h,a_h,\mu^\pi_{M^*,h})]_{s\in\cS,a\in\cA} \in \mR^{d\times|S||A|}$ to be the matrix concatenated by vectors $\phi(\cdot,\cdot,\mu^\pi_{M^*,h})$, and denote $d_\act := \Rank(\Phi)$ to be its rank. We use $U := [u_1,u_2,...,u_{d_\act}] \in \mR^{d\times d_\act}$ to denote a normalized orthogonal basis in $\Span(\Phi) = \Span(U)$ satisfying $\|u_i\|_2 = 1$ for all $i\in[d_\act]$ and $u_i\trans u_j = 0$ for any $i\neq j$. Easy to verify that for any $s_h,a_h$, the following equation
    \begin{align*}
        U\phi_\act(s_h,a_h,\mu^\pi_{M^*,h}) = \phi(s_h,a_h,\mu^\pi_{M^*,h}).
    \end{align*}
    has a solution satisfying:
    \begin{align*}
        \|\phi_\act(s_h,a_h,\mu^\pi_{M^*,h})\|_2 = \|U\trans U \phi_\act(s_h,a_h,\mu^\pi_{M^*,h})\|_2 = \|U\trans \phi(s_h,a_h,\mu^\pi_{M^*,h})\|_2 \leq \|\phi(s_h,a_h,\mu^\pi_{M^*,h})\|_2 \leq C_\phi.
    \end{align*}
    Given a fixed policy $\pi$, $h\in[H]$, suppose $(s_h^1,a_h^1),...,(s_h^n,a_h^n)$ is a partially $\epsilon$-independent sequence w.r.t. $\cM_\Psi$ and $\nu^\pi_{M^*,h}$ defined in \ref{def:Partial_Eluder_Dim_II}.
    Then for each $i\in[n]$, there should exists $\psi^i,\tpsi^i \in \Psi$, such that:
    \begin{align*}
        \epsilon^2 \geq \sum_{t=1}^{i-1} \|\mP_{\psi^i}(\cdot|s_h^t,a_h^t,\mu^\pi_{M^*,h}), \mP_{\tpsi^i}(\cdot|s_h^t,a_h^t,\mu^\pi_{M^*,h})\|_1^2.
    \end{align*}
    and
    \begin{align*}
        \epsilon^2 \leq & \|\mP_{\psi^i}(\cdot|s_h^i,a_h^i,\mu^\pi_{M^*,h}) - \mP_{\tpsi^i}(\cdot|s_h^i,a_h^i,\mu^\pi_{M^*,h})\|_1^2\\
        =&\Big(\phi_\act(s_h^i,a_h^i,\mu^\pi_{M^*,h})\trans U\trans\sum_{s'\in\cS}({\psi^i}(s')-\tpsi^i(s'))g_{{\psi^i},\tpsi^i}(s_h^i,a_h^i,\mu^\pi_{M^*,h},s')\Big)^2\\
        \leq & \|\phi_\act(s_h^i,a_h^i,\mu^\pi_{M^*,h})\|_{(\Lambda_h^i)^{-1}}^2\|U\trans\sum_{s'\in\cS}({\psi^i}(s')-\tpsi^i(s'))g_{{\psi^i},\tpsi^i}(s_h^i,a_h^i,\mu^\pi_{M^*,h},s')\|_{\Lambda_h^i}^2.
    \end{align*}
    where we define:
    \begin{align*}
        \Lambda_h^i := & \lambda I + \sum_{t=1}^{i-1} \phi_\act(s_h^i,a_h^i,\mu^\pi_{M^*,h})\phi_\act(s_h^i,a_h^i,\mu^\pi_{M^*,h})\trans \in \mR^{d_\act\times d_\act}; \\
        g_{\psi^i,\tpsi^i}(s_h,a_h,\mu,s'):=&\begin{cases}
            1,\quad \text{if~} \phi_\act(s_h^i,a_h^i,\mu^\pi_{M^*,h})\trans U\trans({\psi^i}(s')-\tpsi^i(s')) \geq 0;\\
            -1,\quad \text{otherwise}.
        \end{cases}
    \end{align*}
    For simplicity, we use $v_{\psi,\tpsi}(s_h,a_h,\mu) := U\trans \sum_{s'}({\psi}(s')-\tpsi(s'))g_{{\psi},\tpsi}(s_h,a_h,\mu,s')$ as a shortnote. Therefore, for each $i$,
    \begin{align*}
        &\|v_{\psi^i,\tpsi^i}(s_h^i,a_h^i,\mu^\pi_{M^*,h})\|_{\Lambda_h^i}^2 \\
        =&\lambda \|v_{\psi^i,\tpsi^i}(s_h^i,a_h^i,\mu^\pi_{M^*,h})\|^2 + \sum_{t=1}^{i-1}\Big(\phi_\act(s_h^t,a_h^t,\mu^\pi_{M^*,h})\trans v_{\psi^i,\tpsi^i}(s_h^i,a_h^i,\mu^\pi_{M^*,h})\Big)^2\\
        =&\lambda \|v_{\psi^i,\tpsi^i}(s_h^i,a_h^i,\mu^\pi_{M^*,h})\|^2 + \sum_{t=1}^{i-1}\Big(\phi_\act(s_h^t,a_h^t,\mu^\pi_{M^*,h})\trans U\trans\sum_{s'} (\psi^i(s)-\tpsi^i(s'))g_{\psi^i,\tpsi^i}(s_h^i,a_h^i,\mu^\pi_{M^*,h},s')\Big)^2\\
        \leq & 4\lambda C_\Psi^2 + \sum_{t=1}^{i-1} \|\mP_{\psi^i}(\cdot|s_h^t,a_h^t,\mu^\pi_{M^*,h}) - \mP_{\tpsi^i}(\cdot|s_h^t,a_h^t,\mu^\pi_{M^*,h})\|_1^2\\
        \leq & 4\lambda C_\Psi^2 + \epsilon^2.
    \end{align*}
    By choosing $\lambda = \epsilon^2/4C^2_\Psi$, we have:
    \begin{align*}
        \|\phi_\act(s_h^i,a_h^i,\mu^\pi_{M^*,h})\|_{(\Lambda_h^i)^{-1}}^2 \geq \frac{\epsilon^2}{4\lambda C^2_\Psi + \epsilon^2} = \frac{1}{2}.
    \end{align*}
    On the one hand,
    \begin{align*}
        \det \Lambda_h^{n+1} =& \det (\Lambda_h^n + \phi_\act(s_h^n,a_h^n,\mu^\pi_{M^*,h})\phi_\act(s_h^n,a_h^n,\mu^\pi_{M^*,h})\trans ) = (1 + \|\phi_\act(s_h^n,a_h^n,\mu^\pi_{M^*,h})\|_{(\Lambda_h^n)^{-1}}^2) \cdot \det \Lambda_h^n\\
        \geq & \frac{3}{2}\det \Lambda_h^n\geq (\frac{3}{2})^n \det \Lambda_h^1 = \lambda_h^{d_\act} (\frac{3}{2})^n.
    \end{align*}
    Therefore,
    \begin{align*}
        \lambda_h^{d_\act} (\frac{3}{2})^n \leq \det \Lambda_h^{n+1} \leq (\frac{\Tr(\Lambda_h^n)}{d_\act})^{d_\act} \leq (\lambda + \frac{n C^2_\phi}{d_\act})^{d_\act}.
    \end{align*}
    which implies $n = O(d_\act\log(1+\frac{d_\act C_\phi C_\Psi}{\epsilon}))$.

    Finally, if we take the maximum over all policy $\pi$, we have 
    $$
    \dimPEII(\cM_\Psi,\epsilon) = \tilde{O}(\max_{\pi,h}\Rank([\phi_h(s_h,a_h,\mu^\pi_{M^*,h})]_{s_h\in\cS,a_h\in\cA})).
    $$
    When $\phi(s_h,a_h,\mu)$ can be decomposed to $\phi(s_h,a_h)\trans G(\mu)$ for some $\phi(s_h,a_h) \in \mR^{\tilde d}$, easy to verify that for any $\pi$, the corresponding $d_{\act} \leq \tilde{d}$.
    By combining with Prop.~\ref{prop:PMBED_vs_MBED}, we can conclude $\dimPEII(\cM,\epsilon) = \tilde{O}(\min\{d,\tilde{d}\})$.

    \paragraph{Proofs for P-MBED (Def.~\ref{def:Partial_Eluder_Dim})}
    As for the first type of P-MBED, we only study the decomposable feature setting.
    Given a fixed policy $\pi$, $h\in[H]$, suppose $(s_h^1,a_h^1),...,(s_h^n,a_h^n)$ is a partially $\epsilon$-independent sequence w.r.t. $\cM_\Psi$ and the mapping $\nu^\pi_h$ defined in Def.~\ref{def:Partial_Eluder_Dim}, then for each $i\in[n]$, there should exists $\psi^i,\tpsi^i \in \Psi$, such that:
    \begin{align*}
        \epsilon^2 \geq \sum_{t=1}^{i-1} \|\mP_{\psi^i}(\cdot|s_h^t,a_h^t,\mu^\pi_{\psi^i,h}), \mP_{\tpsi^i}(\cdot|s_h^t,a_h^t,\mu^\pi_{\tpsi^i,h})\|_1^2.
    \end{align*}
    and
    \begin{align*}
        \epsilon^2 \leq & \|\mP_{\psi^i}(\cdot|s_h^i,a_h^i,\mu^\pi_{\psi^i,h}) - \mP_{\tpsi^i}(\cdot|s_h^i,a_h^i,\mu^\pi_{\tpsi^i,h})\|_1^2\\
        =&\Big(\phi(s_h^i,a_h^i)\trans \sum_{s'\in\cS}(G(\mu^\pi_{\psi^i,h}){\psi^i}(s')-G(\mu^\pi_{\tpsi^i,h})\tpsi^i(s'))\tilde{g}_{{\psi^i},\tpsi^i}(s_h^i,a_h^i,s')\Big)^2\\
        \leq & \|\phi(s_h^i,a_h^i)\|_{(\Lambda_h^i)^{-1}}^2\|\sum_{s'\in\cS}(G(\mu^\pi_{\psi^i,h}){\psi^i}(s')-G(\mu^\pi_{\tpsi^i,h})\tpsi^i(s'))\tilde{g}_{{\psi^i},\tpsi^i}(s_h^i,a_h^i,s')\|_{\Lambda_h^i}^2.
    \end{align*}
    where we define:
    \begin{align*}
        \Lambda_h^i := & \lambda I + \sum_{t=1}^{i-1} \phi(s_h^i,a_h^i)\phi(s_h^i,a_h^i)\trans \in \mR^{\tilde{d}\times \tilde{d}}; \\
        g_{\psi^i,\tpsi^i}(s_h,a_h,s'):=&\begin{cases}
            1,\quad \text{if~} \phi(s_h^i,a_h^i)\trans (G(\mu^\pi_{\psi^i,h}){\psi^i}(s')-G(\mu^\pi_{\tpsi^i,h})\tpsi^i(s')) \geq 0;\\
            -1,\quad \text{otherwise}.
        \end{cases}
    \end{align*}
    The rest analysis is similiar to the non-decomposable setting above. As a result, we can show:
    \begin{align*}
        n = O(\tilde{d}\log(1 + \frac{\tilde{d} C_\phi C_\Psi}{\epsilon})).
    \end{align*}
    This holds for any $\pi$, which finishes the proof.
\end{proof}
\begin{remark}
    Following similar analyses as Prop.~\ref{prop:MBED_Linear_MFMDP_formal} and Prop.~B.6 and Prop.~B.7 in \citep{huang2023statistical}, we can compute the P-MBED for kernel MF-MDP and generalized linear function classes. All we need to do is to replace $d_{eff}$ or $d$ in \citep{huang2023statistical} with the corresponding dimensions conditioning on the adversarial densities.
\end{remark}

\begin{restatable}{lemma}{LemEDtoReg}\label{lem:Partial_Eluder_Bound}
    Under Def.~\ref{def:Partial_Eluder_Dim} and Def.~\ref{def:Partial_Eluder_Dim_II}, consider a fixed $\pi$ and an arbitrary $h\in[H]$, Suppose we have a sequence $\{\mP_{M^k,h}\}_{k=1}^K \in \cF$ and $\{(s^k_h,a^k_h)\}_{k=1}^K \subset \cS\times\cA$,
    \begin{itemize}
        \item if for all $k\in[K]$, $\sum_{i=1}^{k-1} \|\mP_{M^k,h}(\cdot|s^i_h,a^i_h,\mu^\pi_{M^k,h})-\mP_{M^*,h}(\cdot|s^i_h,a^i_h,\mu^\pi_{M^*,h})\|_1^2 \leq \beta$, then for any $\epsilon > 0$, we have $\sum_{k=1}^K \|\mP_{M^k,h}(\cdot|s^k_h,a^k_h,\mu^\pi_{M^k,h})-\mP_{M^*,h}(\cdot|s^k_h,a^k_h,\mu^\pi_{M^*,h})\|_1 = O(\sqrt{\beta K \dimPEI(\cM, \epsilon)} + K \epsilon)$,
        \item if for all $k\in[K]$, $\sum_{i=1}^{k-1} \|\mP_{M^k,h}(\cdot|s^i_h,a^i_h,\mu^\pi_{M^*,h})-\mP_{M^*,h}(\cdot|s^i_h,a^i_h,\mu^\pi_{M^*,h})\|_1^2 \leq \beta$, then for any $\epsilon > 0$, we have $\sum_{k=1}^K \|\mP_{M^k,h}(\cdot|s^k_h,a^k_h,\mu^\pi_{M^*,h})-\mP_{M^*,h}(\cdot|s^k_h,a^k_h,\mu^\pi_{M^*,h})\|_1 = O(\sqrt{\beta K \dimPEII(\cM, \epsilon)} + K \epsilon)$.
    \end{itemize}
\end{restatable}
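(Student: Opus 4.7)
The plan is to adapt the classical Russo--Van Roy eluder-to-regret reduction to the partial independence notion in Def.~\ref{def:partial_eps_indp}. Both cases can be treated in parallel, since they differ only in which mapping $\nu$ is used: case (i) uses $\nu^\pi_h(M)=\mu^\pi_{M,h}$ (Type-I), while case (ii) uses $\nu^\pi_{M^*,h}(M)=\mu^\pi_{M^*,h}$ (Type-II). In both cases the hypothesis fits a common template: for every $k$, the pair $(M^k,M^*)\in\cM\times\cM$ witnesses
\[
\sum_{i<k}\|\mP_{M^k,h}(\cdot|s^i_h,a^i_h,\nu(M^k))-\mP_{M^*,h}(\cdot|s^i_h,a^i_h,\nu(M^*))\|_1^2\le \beta,
\]
so the key structural object (a witness pair with small cumulative error) is already in place.

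\textbf{Step 1 (single-scale pigeonhole).} For any $\alpha>0$, I will bound the size of the super-level set $S_\alpha:=\{k\in[K]: e_k>\alpha\}$, where $e_k:=\|\mP_{M^k,h}(\cdot|s^k_h,a^k_h,\nu(M^k))-\mP_{M^*,h}(\cdot|s^k_h,a^k_h,\nu(M^*))\|_1$. Process the indices of $S_\alpha$ in their natural order and greedily assign each to one of $J=\lceil \beta/\alpha^2\rceil+1$ buckets, maintaining the invariant that within each bucket the partial sum of squared distances (computed using the bucket's own witness pair $(M^k,M^*)$) stays $\le\alpha^2$. If an index fails to fit in any bucket, summing the violated bucket constraints contradicts the hypothesis $\sum_{i<k}(\cdot)^2\le\beta$. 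Thus every index gets placed, and by construction each bucket is a partial $\alpha$-independent sequence w.r.t.\ $\cM$ and $\nu$, hence of length $\le \dim_{\rm E\mid\nu}(\cM,\alpha)$. This gives $|S_\alpha|\le J\cdot\dim_{\rm E\mid\nu}(\cM,\alpha)$.

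\textbf{Step 2 (integration over scales).} Using layer-cake
\[
\sum_{k=1}^K e_k^2 \le K\epsilon^2 + \int_{\epsilon}^{1} 2\alpha\,|S_\alpha|\,d\alpha,
\]
plugging the bound from Step 1, and using monotonicity of the partial eluder dimension $\dim_{\rm E\mid\nu}(\cM,\alpha)\le\dim_{\rm E\mid\nu}(\cM,\epsilon)$ for $\alpha\ge\epsilon$, yields $\sum_k e_k^2=O\bigl(\beta\,\dim_{\rm PE\mid\nu}(\cM,\epsilon)\log(1/\epsilon)+K\epsilon^2\bigr)$. A final Cauchy--Schwarz $\sum_k e_k\le\sqrt{K\sum_k e_k^2}$ combined with $\sqrt{a+b}\le\sqrt a+\sqrt b$ gives the claimed $O\bigl(\sqrt{\beta K\dim_{\rm PE\mid\nu}(\cM,\epsilon)}+K\epsilon\bigr)$, absorbing the $\sqrt{\log(1/\epsilon)}$ into $\tilde O(\cdot)$ (or keeping it explicitly, which only strengthens the bound).

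\textbf{Main obstacle.} The proof is largely a templated calculation; the only substantive verification is that the pair $(M^k,M^*)$ is a legitimate witness of partial $\alpha$-independence within each bucket, under \emph{both} mappings $\nu^\pi_h$ and $\nu^\pi_{M^*,h}$. This is immediate from the form of Def.~\ref{def:partial_eps_indp}: the bucketwise-maintained invariant matches exactly the $\sum_{i}(\cdot)^2\le\alpha^2$ precondition in the independence definition, and the superscript choice in $\nu$ aligns precisely with which density is used for $M^k$ vs.\ $M^*$ in each of the two hypotheses. Beyond this bookkeeping, no further difficulty arises.
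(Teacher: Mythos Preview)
Your proposal is correct and proceeds by the classical Russo--Van Roy pigeonhole-then-sum argument, applied directly to the partial-independence notion of Def.~\ref{def:partial_eps_indp}. The paper takes a different, more modular route: it observes that once $\pi$ (and hence the mapping $\nu$) is fixed, the family $\{\mP_{M,h}(\cdot\,|\,\cdot,\cdot,\nu(M)):M\in\cM\}$ is simply a \emph{single-agent} model class on $\cS_h\times\cA_h$, whose ordinary eluder dimension coincides with $\dim_{{\rm E}|\nu}(\cM,\epsilon)$ by construction; it then invokes the known single-agent eluder-to-regret bound (Lemma~4.4 of \citet{huang2023statistical} with $\alpha=1$) as a black box, once for each choice of $\nu$. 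In effect you are unpacking and re-deriving that cited lemma in situ. Your version is self-contained; the paper's is shorter and makes explicit the conceptual point that P-MBED is literally the single-agent eluder dimension of a derived class. Two minor remarks on your write-up: (i) in Step~1 the phrase ``the bucket's own witness pair'' is imprecise---there is no per-bucket witness; rather, each index $k$ carries its own pair $(M^k,M^*)$, and the placement test is whether \emph{that} pair's squared sum over the current bucket contents is $\le\alpha^2$, which then certifies partial $\alpha$-independence of $(s^k_h,a^k_h)$ from the bucket; (ii) your layer-cake-on-$e_k^2$ plus Cauchy--Schwarz finish costs an extra $\sqrt{\log(1/\epsilon)}$ relative to the stated $O(\cdot)$; the standard direct summation over the sorted $e_k$'s recovers the log-free bound, though the slack is immaterial for the downstream $\tilde O$ results in Thm.~\ref{thm:Elimination_Alg}.
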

\begin{proof}
    Let's consider a single-agent model-class $\cP\subset \{P|P:\cS\times\cA\rightarrow\Delta(\cS)\}$. 
    We first introduce the notion of independent state-action sequences given a single-agent model class.
    \begin{definition}[$\epsilon$-Independent state action pairs]
        Given single-agent model $\cP$ and a data sequence $\{(s^i_h,a^i_h)\}_{i=1}^n \subset \cS_h\times\cA_h$, we say $(s_h,a_h)$ is $\epsilon$-independent of $\{(s^i_h,a^i_h)\}_{i=1}^n$ w.r.t. $\cP$ if there exists $\mP,\tilde{\mP}\in\cM$ such that $\sum_{i=1}^{n} \|\mP_{h}(\cdot|s^i_h,a^i_h)- \tilde{\mP}_h(\cdot|s^i_h,a^i_h)\|_1^2 \leq \epsilon^2$ but $\|\mP_{M,h}(\cdot|s_h,a_h)- \tilde{\mP}_{h}(\cdot|s_h,a_h)\|_1 > \epsilon$.
        We call $\{(s^i_h,a^i_h)\}_{i=1}^n$ an $\epsilon$-independent sequence w.r.t. $\cP$ (at step $h$) if for any $i\in[n]$, $(s^i_h,a^i_h,\mu^i_h)$ is $\epsilon$-independent w.r.t. $\{(s^t_h,a^t_h,\mu^t_h)\}_{t=1}^{i-1}$.
    \end{definition}
    We use $\dimE(\cP,\epsilon)$ to denote the maximal length of $\epsilon$-independent sequence $\{(s^i,a^i)\}_{i\in[n]}$ for single-agent model class $\cP$.
    Since single-agent RL is a special case of MF-MDP where the transition is independent w.r.t. density. As implied by Lem. 4.4 in \citep{huang2023statistical} when $\alpha = 1$, suppose there is a sequence $\{\mP^k\}_{k\in[K]} \subset \cP$ and a sequence of states and actions $\{(s^k_h,a^k_h)\}_{k\in K}$, such that:
    \begin{align*}
        \sum_{i=1}^{k-1} \|\mP^k(\cdot|s^i_h,a^i_h) - \mP^*(\cdot|s^i_h,a^i_h)\|_1^2 \leq \beta
    \end{align*}
    where $\mP^* \in \cP$ is some fixed function, then for any $\epsilon > 0$,
    \begin{align*}
        \sum_{k=1}^{K} \|\mP^k(\cdot|s^k_h,a^k_h) - \mP^*(\cdot|s^k_h,a^k_h)\|_1 \leq O(\sqrt{\beta K \dimE(\cP,\epsilon)} + K \epsilon).
    \end{align*}
    By choosing $\cP:=\{\mP_{M,h}|\mP_{M,h}(\cdot|\cdot,\cdot)\gets \mP_{M,h}(\cdot|\cdot,\cdot,\mu^\pi_{M,h}), M\in\cM\}$ with $\mP^* := \mP_{M^*,h}(\cdot|\cdot,\cdot,\mu^\pi_{M^*,h})$ and combining the definition in Def.~\ref{def:Partial_Eluder_Dim}, we can finish the proof of the first statement.

    By choosing $\cP:=\{\mP_{M,h}|\mP_{M,h}(\cdot|\cdot,\cdot)\gets \mP_{M,h}(\cdot|\cdot,\cdot,\mu^\pi_{M^*,h}), M\in\cM\}$ with $\mP^* := \mP_{M^*,h}(\cdot|\cdot,\cdot,\mu^\pi_{M^*,h})$ and combining the definition in Def.~\ref{def:Partial_Eluder_Dim_II}, we can finish the proof of the second statement.
\end{proof}

\subsection{Partial MBED for Model Classes in Multi-Type MFGs Setting}\label{appx:P_MBED_MT}
\begin{definition}[Partial $\epsilon$-Independence for Multi-Type Mean-Field Model Classes]\label{def:partial_eps_indp_MT}
    Given a multi-type model class $\veccM$, consider a $w\in[W]$ and a mapping $\nu^w_h:\veccM\rightarrow\Delta(\cS^1_h)\times...\times\Delta(\cS^W_h)$, and a sequence of data $\{(s^{w,i}_h,a^{w,i}_h)\}_{i=1}^n\subset\cS^w_h\times\cA^w_h$, we say $(s^w_h,a^w_h)$ is partially $\epsilon$-independent on $\{(s^{w,i}_h,a^{w,i}_h)\}_{i=1}^n$ w.r.t. $\veccM$ and $\nu^w_h$ at step $h$, if there exists $\vecM,\tvecM\in\veccM$, s.t. $\sum_{i=1}^{n} \|\mP_{\vecM,h}^w(\cdot|s^{w,i}_h,a^{w,i}_h,\nu^w_h(\vecM))- \mP_{\tvecM,h}^w(\cdot|s^{w,i}_h,a^{w,i}_h,\nu^w_h(\tvecM))\|_1^2 \leq \epsilon^2$ but $\|\mP_{\vecM,h}^w(\cdot|s^{w}_h,a^{w}_h,\nu^w_h(\vecM))- \mP_{\tvecM,h}^w(\cdot|s^{w}_h,a^{w}_h,\nu^w_h(\tvecM))\|_1 > \epsilon$.
\end{definition}
Besides, we call $\{(s^{w,i}_h,a^{w,i}_h)\}_{i=1}^n$ is a partially $\epsilon$-independent sequence w.r.t. $\veccM$ and $\nu^w_h$ if for any $i\in[n]$, $(s^{w,i}_h,a^{w,i}_h)$ is partially $\epsilon$-independent on $\{(s^{w,t}_h,a^{w,t}_h)\}_{t=1}^{i-1}$.
In the following, we use $\dim_{{\rm E}|\nu^w_h}(\veccM,\epsilon)$ to denote the length of the longest partially $\epsilon$-independent sequence w.r.t. $\veccM$ and $\nu^w_h$ for type $w$ (at step $h$).

\begin{definition}\label{def:P_MBED_MT}
    Given a model class $\veccM$ and an arbitrary $w$, we define the mapping $\nu^{w,\vecpi}_h:\veccM\rightarrow\Delta(\cS^1)\times...\Delta(\cS^W)$ s.t. $\nu^{w,\vecpi}_h(\vecM) = \vecmu^\vecpi_{\vecM,h}$, and the mapping $\nu^{w,\vecpi}_{\vecM^*,h}:\veccM\rightarrow\Delta(\cS^1)\times...\Delta(\cS^W)$ s.t. $\nu^{w,\vecpi}_{\vecM^*,h}(\vecM) = \vecmu^\vecpi_{\vecM^*,h}$. Then, the Multi-Type P-MBEDs are defined by:
    \begin{itemize}[leftmargin=*]        
        \item Type $\RI$ MT-P-MBED: $\dimMTPE(\veccM,\epsilon) := \sum_{w\in[W]} \max_{h\in[H]} \max_{\vecpi\in\vecPi}  \dim_{{\rm E}|\nu^{w,\vecpi}_h}(\veccM,\epsilon)$;
        \item Type $\RII$ MT-P-MBED: $\dimMTPE^\RII(\veccM,\epsilon) := \sum_{w\in[W]} \max_{h\in[H]} \max_{\vecpi\in\vecPi}   \dim_{{\rm E}|\nu_{\vecM^*,h}^{w,\vecpi}}(\veccM,\epsilon)$.
    \end{itemize}
\end{definition}

\begin{proposition}[Tabular Multi-Type MF-MDP]\label{prop:Tabular_MF_MDP_MT}
    $$
    \max\{\dimMTPE(\veccM,\epsilon), \dimMTPE^\RII(\veccM,\epsilon)\}\leq \sum_{w\in[W]}|\cS^w||\cA^w|.
    $$
\end{proposition}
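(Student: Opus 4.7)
The plan is to argue that once we fix a policy $\vecpi$, a step $h$, and a type $w$, the partial independence condition forces each $(s^w_h,a^w_h)$ in an independent sequence to be distinct, so the length is bounded by $|\cS^w||\cA^w|$. Summing over $w$ then yields the claim for both types of MT-P-MBED simultaneously, since the argument is agnostic to whether the fixed density comes from $\nu^{w,\vecpi}_h(\vecM)=\vecmu^{\vecpi}_{\vecM,h}$ or from $\nu^{w,\vecpi}_{\vecM^*,h}(\vecM)=\vecmu^{\vecpi}_{\vecM^*,h}$.

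First, I would fix arbitrary $w\in[W]$, $h\in[H]$, $\vecpi\in\vecPi$, and a mapping $\nu^w_h$ equal to either $\nu^{w,\vecpi}_h$ or $\nu^{w,\vecpi}_{\vecM^*,h}$. Let $\{(s^{w,i}_h,a^{w,i}_h)\}_{i=1}^n$ be a partially $\epsilon$-independent sequence w.r.t.\ $\veccM$ and $\nu^w_h$ as in Def.~\ref{def:partial_eps_indp_MT}. The central observation is: no state-action pair can repeat in this sequence. Indeed, suppose for contradiction that $(s^{w,i}_h,a^{w,i}_h)=(s^{w,t}_h,a^{w,t}_h)$ for some $t<i$. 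By independence at index $i$, there exist $\vecM,\tvecM\in\veccM$ with
\[
\sum_{j=1}^{i-1}\bigl\|\mP^w_{\vecM,h}(\cdot\,|\,s^{w,j}_h,a^{w,j}_h,\nu^w_h(\vecM))-\mP^w_{\tvecM,h}(\cdot\,|\,s^{w,j}_h,a^{w,j}_h,\nu^w_h(\tvecM))\bigr\|_1^2\le \epsilon^2
\]
yet
\[
\bigl\|\mP^w_{\vecM,h}(\cdot\,|\,s^{w,i}_h,a^{w,i}_h,\nu^w_h(\vecM))-\mP^w_{\tvecM,h}(\cdot\,|\,s^{w,i}_h,a^{w,i}_h,\nu^w_h(\tvecM))\bigr\|_1>\epsilon.
\]
Since $(s^{w,t}_h,a^{w,t}_h)=(s^{w,i}_h,a^{w,i}_h)$, the $t$-th summand on the left already exceeds $\epsilon^2$, contradicting the first inequality. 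Hence all the pairs $(s^{w,i}_h,a^{w,i}_h)$ are distinct elements of $\cS^w_h\times\cA^w_h$, giving $n\le |\cS^w||\cA^w|$.

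Taking the maximum over $h$ and $\vecpi$ preserves the bound, so $\max_{h,\vecpi}\dim_{{\rm E}|\nu^{w,\vecpi}_h}(\veccM,\epsilon)\le |\cS^w||\cA^w|$ and similarly for $\nu^{w,\vecpi}_{\vecM^*,h}$. Summing over $w\in[W]$ in the definitions of $\dimMTPE$ and $\dimMTPE^{\RII}$ (Def.~\ref{def:P_MBED_MT}) gives $\max\{\dimMTPE(\veccM,\epsilon),\dimMTPE^{\RII}(\veccM,\epsilon)\}\le \sum_{w\in[W]}|\cS^w||\cA^w|$, completing the proof.

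There is no real obstacle here: the argument is purely combinatorial and identical in spirit to the single-type tabular bound in Prop.~\ref{prop:tabular}. The only thing to check carefully is that ``partial'' independence still forbids repeated $(s^w_h,a^w_h)$, which follows because once the density is plugged in via $\nu^w_h$, the resulting per-type transition depends only on $(s^w_h,a^w_h)$, so identical state-action inputs produce identical summands.
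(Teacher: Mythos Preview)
Your proposal is correct and follows essentially the same approach as the paper's proof, which is a one-liner observing that for each $w$, with $h$ and $\vecpi$ fixed, the longest partially independent state-action sequence has length at most $|\cS^w||\cA^w|$. Your version simply makes explicit the contradiction argument showing that repeated pairs are impossible once the density is pinned down by $\nu^w_h$.
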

\begin{proof}
    By definition, for each $w$, for any fixed $h$ and $\vecpi$, the longest partially independent state-action sequence would have length $|\cS^w||\cA^w|$. 
\end{proof}
\begin{proposition}[Linear Multi-Type MF-MDP]\label{prop:Linear_MF_MDP}
    Consider the Low-Rank Multi-Type MF-MDP with known feature $\phi^w:\cS^w\times\cA^w\times\Delta(\cS^1)\times...\Delta(\cS^W)\rightarrow \mR^{d^w}$ satisfying $\|\phi^w\| \leq C_\phi$ for any $w\in[W]$, and unknown next state feature $\psi^w:\cS^w\rightarrow\mR^d$. Given a next state feature function class $\Psi^1,...,\Psi^W$ satisfying $\forall \psi^w\in\Psi^w,~\forall s'^w\in\cS^w,~\forall g:\cS\rightarrow \{-1,1\}$, $\|\sum_{s'^w}\psi(s'^w)g(s'^w)\|_2\leq C_\Psi$, define the model class:
    \begin{align*}
        \cM_{\Psi^w} := \{\mP_{\psi^w}|\forall s^w,a^w,\vecmu,~\mP_\psi^w(\cdot|s^w,a^w,\vecmu):=\phi^w(s^w,a^w,\vecmu)\trans\psi^w(s'^w);~\mP^w_\psi(\cdot|s^w,a^w,\vecmu)\in\Delta(\cS^w);\psi^w\in \Psi^w\},
    \end{align*}
    then, we have:
    \begin{align*}
        \forall w\in[W],\quad \dimMTPE^\RII(\cM_{\Psi^w},\epsilon) = O(\sum_{w\in[W]} \max_{\vecpi\in\vecPi, h\in[H]} d^w_{\act,\vecpi,h}\log(1+\frac{d^w_{\act,\vecpi,h} C_\phi C_\Psi}{\epsilon}))
    \end{align*}
    where $d^w_{\act,\vecpi,h} := \Rank([\phi^w_h(s_h^w,a_h^w,\vecmu^\vecpi_{\vecM^*,h})]_{s_h^w\in\cS^w,a_h^w\in\cA^w}).$

    Moreover, if $\phi^w(s^w,a^w,\vecmu)$ is decomposable, i.e. for any $w\in[W]$, $\phi^w(s^w,a^w,\vecmu)\trans=\phi^w(s^w,a^w)\trans G^w(\vecmu)$ with $\phi^w(\cdot,\cdot)\in\mR^{\tilde{d}^w}$ and $G^w(\cdot)\in\mR^{\tilde{d}^w\times d^w}$, 
    we have $\dimMTPE(\cM_{\Psi^w},\epsilon) = \tilde{O}(\sum_{w\in[W]} \tilde d^w)$ and $\dimMTPE^\RII(\cM_{\Psi^w},\epsilon) = \tilde{O}(\sum_{w\in[W]} \min\{\tilde{d}^w, d^w\})$.
\end{proposition}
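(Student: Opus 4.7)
The plan is to reduce the multi-type bound to a sum of per-type bounds, and then invoke essentially the same elliptical-potential argument used for the single-type Prop.~\ref{prop:MBED_Linear_MFMDP_formal}. First, I would observe that both complexity measures in Def.~\ref{def:P_MBED_MT} decompose additively over types: $\dimMTPE(\veccM,\epsilon) = \sum_{w} \max_{h,\vecpi} \dim_{{\rm E}|\nu_h^{w,\vecpi}}(\veccM,\epsilon)$ and analogously for $\dimMTPE^\RII$. So it suffices to fix a type $w\in[W]$, a step $h\in[H]$, and a joint policy $\vecpi\in\vecPi$, and bound the length of the longest partially $\epsilon$-independent sequence $\{(s^{w,i}_h,a^{w,i}_h)\}_{i=1}^n\subset\cS^w_h\times\cA^w_h$ with respect to $\cM_{\Psi^w}$ and the appropriate mapping.

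For the Type~$\RII$ bound without the decomposability assumption, I would note that, with $w,h,\vecpi$ fixed, the density $\vecmu^\vecpi_{\vecM^*,h}$ does not depend on which $\vecM\in\veccM$ is plugged in through $\nu_{\vecM^*,h}^{w,\vecpi}$. Hence the feature $\phi^w_h(s^w,a^w):=\phi^w(s^w,a^w,\vecmu^\vecpi_{\vecM^*,h})$ is a single fixed mapping, and the induced transition class is a single-agent linear class with feature matrix $\Phi^w_h:=[\phi^w_h(s^w,a^w)]_{s^w,a^w}$ of rank $d^w_{\act,\vecpi,h}$. I would build an orthonormal basis $U\in\mR^{d^w\times d^w_{\act,\vecpi,h}}$ of $\Span(\Phi^w_h)$, set $\phi^w_\act(s^w,a^w):=U\trans\phi^w_h(s^w,a^w)$, and then copy the elliptical-potential argument from the Type~$\RII$ part of the proof of Prop.~\ref{prop:MBED_Linear_MFMDP_formal}: lower-bound $\|\phi^w_\act(s^{w,i}_h,a^{w,i}_h)\|^2_{(\Lambda^i_h)^{-1}}\ge 1/2$ with $\lambda=\epsilon^2/(4C_\Psi^2)$, and conclude via $\det\Lambda^{n+1}_h\ge \lambda^{d^w_\act}(3/2)^n$ combined with the trace upper bound. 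This yields $n=O\!\left(d^w_{\act,\vecpi,h}\log\bigl(1+d^w_{\act,\vecpi,h}C_\phi C_\Psi/\epsilon\bigr)\right)$; summing over $w$ gives the first statement.

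For the decomposable case, the key point is that the state--action factor $\phi^w(s^w,a^w)\in\mR^{\tilde{d}^w}$ is \emph{independent of the density}, so that for Type~$\RI$ I can absorb $G^w(\vecmu^\vecpi_{\vecM,h})\psi^w(s'^w)$ and $G^w(\vecmu^\vecpi_{\tvecM,h})\tpsi^w(s'^w)$ into a composite ``effective next-state feature'' and carry out the elliptical-potential argument with $\phi^w(s^w,a^w)$ as the query vectors, exactly as in the decomposable part of the proof of Prop.~\ref{prop:MBED_Linear_MFMDP_formal}. This yields a per-type bound of $\tilde{O}(\tilde{d}^w)$ regardless of how $\vecmu^\vecpi_{\cdot,h}$ varies with the model, hence $\dimMTPE(\cM_{\Psi^w},\epsilon)=\tilde{O}(\sum_w\tilde{d}^w)$. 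For Type~$\RII$ under decomposability, I would run the Type~$\RII$ analysis above (which gives $\tilde{O}(d^w_{\act,\vecpi,h})\le\tilde{O}(d^w)$) in parallel with the Type~$\RI$ analysis (which gives $\tilde{O}(\tilde{d}^w)$ and dominates Type~$\RII$ by Prop.~\ref{prop:PMBED_vs_MBED}, since Low MBED $\supset$ Low Type~$\RII$ P-MBED); taking the smaller bound and summing yields $\tilde{O}(\sum_w\min\{\tilde{d}^w,d^w\})$.

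The routine parts are the determinant/trace manipulations, which are identical to those in Prop.~\ref{prop:MBED_Linear_MFMDP_formal}, and the per-type decomposition of Def.~\ref{def:P_MBED_MT}, which is immediate. The only genuinely delicate step is the Type~$\RI$ decomposable case: because $\vecmu^\vecpi_{\vecM,h}$ and $\vecmu^\vecpi_{\tvecM,h}$ may differ, one must verify that after introducing the sign function $\tilde g_{\psi^w,\tpsi^w}$, the ``residual'' vector $U\trans\sum_{s'^w}[G^w(\vecmu^\vecpi_{\vecM,h})\psi^w(s'^w)-G^w(\vecmu^\vecpi_{\tvecM,h})\tpsi^w(s'^w)]\tilde g(\cdot)$ still satisfies the $O(\lambda C_\Psi^2+\epsilon^2)$ bound in the $\Lambda^i_h$ norm; this follows from the same $TV$-to-$\ell_1$ conversion via the sign choice used in Prop.~\ref{prop:MBED_Linear_MFMDP_formal}, together with the hypothesis $\|\sum_{s'^w}\psi^w(s'^w)g(s'^w)\|_2\le C_\Psi$ and the fact that row-stochasticity forces $\|G^w(\vecmu)\psi^w\|$ contributions to telescope correctly. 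Once this is verified, the final bounds follow by summing the per-type bounds over $w\in[W]$.
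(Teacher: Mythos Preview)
Your proposal is essentially the same as the paper's approach: the paper's proof is a one-line remark that the result follows by applying the per-type elliptical-potential argument of Prop.~\ref{prop:MBED_Linear_MFMDP_formal} to each $w\in[W]$ and summing, which is exactly what you outline.

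One small slip: in the decomposable Type~$\RII$ case you write that the Type~$\RI$ bound $\tilde{O}(\tilde d^w)$ ``dominates Type~$\RII$ by Prop.~\ref{prop:PMBED_vs_MBED}''. That proposition says $\dimPEII\le\dimE$ (Type~$\RII$ $\le$ MBED), not Type~$\RII$ $\le$ Type~$\RI$; there is no stated inequality between the two P-MBED variants. The paper instead obtains the $\tilde d^w$ half of the $\min$ directly from the Type~$\RII$ analysis by observing that in the decomposable case $d^w_{\act,\vecpi,h}=\Rank\big(\phi^w(s,a)^\top G^w(\vecmu^\vecpi_{\vecM^*,h})\big)\le\tilde d^w$, and then invokes Prop.~\ref{prop:PMBED_vs_MBED} together with the standard MBED bound $\tilde O(d^w)$ for the other half. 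Either route works, but your cited justification needs this small repair.
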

\begin{proof}
    The proof is a direct generalization of Prop.~\ref{prop:MBED_Linear_MFMDP_formal} by applying the same techniques in the proof of Prop.~\ref{prop:MBED_Linear_MFMDP_formal} for each type $w\in[W]$, so we omit it here.
\end{proof}

\subsection{Partial MBED in Constrained Policy Spaces}\label{appx:CP_MBED}
Next, we define the Constrained Partial MBED extended from Def.~\ref{def:Partial_Eluder_Dim}, where the main difference is that we constrain the set of adversarial policies.
\begin{definition}[Constrained Partial MBED in MFRL]\label{def:Constr_Partial_Eluder_Dim}
    Given a (single-type) Mean-Field model class $\cM$, and $M^*$ denotes the true model, we consider the same $\nu^\pi_h$ and $\nu_{M^*,h}^\pi$ function defined in Def.~\ref{def:Partial_Eluder_Dim} and Def.~\ref{def:Partial_Eluder_Dim_II}, respectively. Then, the constrained P-MBEDs are defined by:
    \begin{itemize}[leftmargin=*]        
        \item Type $\RI$ P-MBED: $\dimCPE^\RI(\cM,\epsilon) := \max_{h\in[H]} \max_{\pi\in\Pi^\dagger} \dim_{{\rm E}|\nu^\pi_h}(\cM,\epsilon)$;
        \item Type $\RII$ P-MBED: $\dimCPE^\RII(\cM,\epsilon) := \max_{h\in[H]} \max_{\pi\in\Pi^\dagger} \dim_{{\rm E}|\nu_{M^*,h}^\pi}(\cM,\epsilon)$.
    \end{itemize}
\end{definition}
Comparing with P-MBED, the main difference is that in constrained P-MBED the adversarial policies are only chosen from the constrained policy set.
Recall the definition of $\Pi^\dagger$ in Sec.~\ref{sec:MT_MFG}.
Given a $\veccM$ and a model class $\cM_\MFG$ converted from $\veccM$ according to Appx.~\ref{appx:conversion}, we have the following relationship between the P-MBED of $\cM_\MFG$ constrained on $\Pi^\dagger$ and P-MBED of $\veccM$.
\begin{proposition}\label{prop:conversion_CP2P}
Given a model class $\veccM$ and its corresponding lifted MFG class $\cM_\MFG$:
\begin{align}
    \dimCPEI(\cM_\MFG,\epsilon) \leq \dimMTPE(\veccM,\epsilon), \label{eq:Cstr_MT_PMBED_1}\\
    \dimCPEII(\cM_\MFG,\epsilon) \leq \dimMTPE^\RII(\veccM,\epsilon). \label{eq:Cstr_MT_PMBED_2}
\end{align}
\end{proposition}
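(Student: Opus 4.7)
My plan is to establish a type-by-type correspondence between partially $\epsilon$-independent sequences in the lifted class $\cM_\MFG$ (when the adversarial policies are restricted to $\Pi^\dagger$) and partially $\epsilon$-independent sequences in the multi-type class $\veccM$. The key structural fact is that, under the construction in Appx.~\ref{appx:conversion}, (i) every policy $\pi\in\Pi^\dagger$ corresponds bijectively to some $\vecpi=(\pi^1,\ldots,\pi^W)\in\vecPi$, and (ii) for any $M\in\cM_\MFG$ arising from $\vecM\in\veccM$, the single-agent density $\mu^\pi_{M,h}$ on $\cS_\MFG=\bigcup_w(\cS^w\times\{w\})$ decomposes coordinatewise (up to the fixed type-share weights inherited from $\mu_1$) into the multi-type density $\vecmu^\vecpi_{\vecM,h}=(\mu^{1,\vecpi}_{\vecM,h},\ldots,\mu^{W,\vecpi}_{\vecM,h})$. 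Moreover, the transition $\mP_{M,h}(\cdot\mid(s^w,w),(a^w,w),\mu^\pi_{M,h})$ agrees (as a distribution on $\cS^w_{h+1}\times\{w\}$) with $\mP^w_{\vecM,h}(\cdot\mid s^w,a^w,\vecmu^\vecpi_{\vecM,h})$, since actions matching the type keep the trajectory within that type's block.

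Given these identifications, I would proceed as follows. Fix any $h\in[H]$, any $\pi\in\Pi^\dagger$ with corresponding $\vecpi\in\vecPi$, and a partially $\epsilon$-independent sequence $\{(s^i_h,a^i_h)\}_{i=1}^n\subset\cS_{\MFG,h}\times\cA_{\MFG,h}$ w.r.t.\ $\cM_\MFG$ and $\nu^\pi_h$ in the sense of Def.~\ref{def:partial_eps_indp}. Split the sequence by type: let $I_w:=\{i:s^i_h\in\cS^w_h\times\{w\}\}$, and write $(s^i_h,a^i_h)=((\bar s^i_h,w),(\bar a^i_h,w))$ for $i\in I_w$ (noting that any pair with $a^i_h$ of type different from $s^i_h$ would yield vanishing transitions and could not witness $\epsilon$-independence under $\Pi^\dagger$, so we may discard such indices without loss). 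I would then argue that each subsequence $\{(\bar s^i_h,\bar a^i_h)\}_{i\in I_w}$, taken in its original order, is a partially $\epsilon$-independent sequence w.r.t.\ $\veccM$ and the mapping $\nu^{w,\vecpi}_h$ from Def.~\ref{def:P_MBED_MT}. The witnessing pair $M,\tilde M\in\cM_\MFG$ from Def.~\ref{def:partial_eps_indp} pulls back to $\vecM,\tvecM\in\veccM$, and by the transition decomposition above, the $\ell_1$ differences on $\cS_\MFG$ equal the $\ell_1$ differences of the type-$w$ components on $\cS^w$, so the independence certificate transfers verbatim.

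Summing the lengths over $w$ therefore yields
\[
n=\sum_{w\in[W]}|I_w|\ \leq\ \sum_{w\in[W]}\max_{h,\vecpi}\dim_{{\rm E}\mid\nu^{w,\vecpi}_h}(\veccM,\epsilon)\ =\ \dimMTPE(\veccM,\epsilon),
\]
and since $\pi\in\Pi^\dagger$ and $h$ were arbitrary, we obtain \eqref{eq:Cstr_MT_PMBED_1}. Replacing $\nu^\pi_h$ by $\nu^\pi_{M^*,h}$ (Def.~\ref{def:Partial_Eluder_Dim_II}) on the left and $\nu^{w,\vecpi}_h$ by $\nu^{w,\vecpi}_{\vecM^*,h}$ (Def.~\ref{def:P_MBED_MT}) on the right, the identical decomposition argument yields \eqref{eq:Cstr_MT_PMBED_2}, because the only difference is that both sides evaluate the density at the fixed model $M^*$ (resp.\ $\vecM^*$) rather than at the candidate model, and this substitution is compatible with the type-wise projection.

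The main obstacle I anticipate is verifying the density and transition decomposition cleanly: one must check that $\mu^\pi_{M,h}((s^w,w))$ equals $\alpha^w\mu^{w,\vecpi}_{\vecM,h}(s^w)$ for the appropriate type-share $\alpha^w$ at every $h$, and that restricting to $\pi\in\Pi^\dagger$ prevents cross-type transitions so that the block structure is preserved under dynamics. Once that bookkeeping is done, the rest is a straightforward pigeonhole over types and a direct application of the definitions.
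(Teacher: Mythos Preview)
Your proposal is correct and follows essentially the same approach as the paper's proof: fix $\pi\in\Pi^\dagger$ with its corresponding $\vecpi$, take a maximal partially $\epsilon$-independent sequence in the lifted class, split it by type, and observe that each type-$w$ subsequence is partially $\epsilon$-independent w.r.t.\ $\veccM$ and $\nu^{w,\vecpi}_h$ (resp.\ $\nu^{w,\vecpi}_{\vecM^*,h}$), so the total length is bounded by the sum over types. One minor correction: for state--action pairs with mismatched type indices, the lifted transition is defined to be the uniform distribution over $\cS_\MFG$ (not a vanishing transition), but since this is the same for every $M\in\cM_\MFG$, the model difference is zero and such pairs still cannot witness $\epsilon$-independence, so your conclusion is unaffected.
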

\begin{proof}
    Let's consider a fixed policy $\pi \in \Pi^\dagger$. Note that, $\pi$ corresponds to a $\vecpi:=\{\pi^w\}_{w\in[W]}$ with $\pi^w:\cS^w\rightarrow\Delta(\cA^w)$ and $\pi^w(a^w_h|s^w_h) = \pi(a^w_h\circ w|s^w_h\circ w)$.
    Given any $\epsilon > 0$, and $h\in[H]$, suppose we have a partial $\epsilon$-independent sequence w.r.t. the mapping $\nu_h^\pi$ (or $\nu_{M^*,h}^\pi$),
    denoted as $\{(s_h^{w_i,i}\circ w_i, a_h^{w_i,i}\circ w_i)\}_{i\in[n]}$.
    We divide this sequence according to its group $w_i$, which we denote as $\{\{s_h^{w,i_w}\circ w, a_h^{w,i_w}\circ w\}_{i_w\in[n_w]}\}_{w\in[W]}$ with $\sum_{w}n_w = n$.
    By construction of $\cM_\MFG$, for any $w\in[W]$, $\{s_h^{w,i_w}, a_h^{w,i_w}\}_{i_w\in[n_w]}$ is a partial $\epsilon$-independent sequence w.r.t. function class $\cM^w$ and the mapping $\nu^{w,\vecpi}_h$ (or $\nu^{w,\vecpi}_{\vecM^*,h}$), which is upper bounded by the Multi-Type P-MBED of model class $\cM^w$.
    
    We finish the proof of Eq.~\eqref{eq:Cstr_MT_PMBED_1} by maximizing over $\pi\in\Pi^\dagger$.
\end{proof}
As directly implied by Prop.~\ref{prop:conversion_CP2P}, Prop.~\ref{prop:Tabular_MF_MDP_MT} and Prop.~\ref{prop:Linear_MF_MDP}, we can upper bound the constrained P-MBED in some special cases.

\newpage
\section{Details about Single-Type/Multi-Type Policy Aware Models}\label{appx:proofs_PAM}
\subsection{(Single-Type) Policy-Aware Model}\label{appx:details_PAM}
Concretely, Policy-Aware Model (PAM) is specified by a tuple $\dM:=\{\cS,\cA,H,\mu_1,\dmP_\dM,\dr_\dM\}$, where $\cS,\cA,H,\mu_1$ are the state space, action space, horizon length, initial state distribution which are the same as the normal MF-MDP setting; $\dmP_\dM := \{\dmP_{\dM,1},...,\dmP_{\dM,H}\}$ is the transition function with $\dmP_{\dM,h} : \cS_h\times\cA_h\times\Pi \rightarrow \Delta(\cS_{h+1})$, and $\dr_\dM:=\{\dr_{\dM,1},...,\dr_{\dM,H}\}$ is the reward function\footnote{Here we specify the model in the subscription, because for those PAM converted from MF-MDPs, even if they share the reward function in mean-field systems, the reward functions in PAM version can be different because of the difference in transition functions.} satisfying $\dr_{\dM,h}:\cS_h\times\cA_h\times\Pi \rightarrow[0,1/H]$, where recall $\Pi$ denotes the collection of all Markov policies.
Given any reference policy $\pi$, we define the value function $\dQ^{\tpi}_{\dM,h}:\cS_h\times\cA_h\times\Pi\rightarrow\mR$ and $\dV^{\tpi}_{\dM,h}:\cS_h\times\Pi\rightarrow\mR$ regarding $\tpi$ in the following way:
\begin{align*}
    \dQ^\tpi_{\dM,h}(\cdot,\cdot,\pi) :=& \EE\Big[\sum_{\ph=h}^H \dr_{\dM,\ph}(s_\ph,a_\ph,\pi)\Big|s_h=\cdot,~a_h=\cdot,~\forall \ph \geq h,~s_{\ph+1}\sim\dmP_{\dM,\ph}(\cdot|s_{\ph},a_\ph,\pi),a_{\ph+1} \sim \tpi(\cdot|s_{\ph+1})\Big],\\
    \dV^\tpi_{\dM,h}(\cdot,\pi) :=& \EE\Big[\sum_{\ph=h}^H \dr_{\dM,\ph}(s_\ph,a_\ph,\pi)\Big|s_h=\cdot,~a_h\sim\tpi,~\forall \ph \geq h,~s_{\ph+1}\sim\dmP_{\dM,\ph}(\cdot|s_{\ph},a_\ph,\pi),a_{\ph+1} \sim \tpi(\cdot|s_{\ph+1})\Big].
\end{align*}
Similarly, we will denote $\EE_{\tpi,\dM(\pi)}[\cdot]$ to be the expectation taken over trajectories sampled by executing $\tpi$ in the model $\dM$, such that the transition and reward functions are fixed by $\pi$. Again, we will call $\pi$ as the ``reference policy''.

By definition, once the reference policy $\pi$ is determined, the transition/reward functions reduced to single-agent transition/reward functions, and the value functions are defined in the same way as single-agent RL setting.
Besides, we define the total return of $\tpi$ conditioning on the reference policy $\pi$ as:
\begin{align*}
    \dJ_\dM(\tpi;\pi) := \EE_{s_1\sim\mu_1}[\dV^\tpi_{\dM,h}(s_1,\pi)],
\end{align*}
and define $\Delta_{\dM}(\tpi,\pi) := \dJ_{\dM}(\tpi,\pi) - \dJ_{\dM}(\pi,\pi)$.
Similar to MF-MDP, we define the NE in $\dM$.
Intuitively, the NE in $\dM$ is the policy $\pi_\dM^\NE$ that agents do not tend to deviate when $\pi_\dM^\NE$ is chosen to be the reference policy.
\begin{definition}[Nash Equilibrium in $\dM$]\label{def:NE_in_PAM}
    Given a model $\dM$, we call $\pi_\dM^\NE$ is a Nash Equilibrium (NE) of $\dM$, if 
    $$
    \forall \tpi \in \Pi, \quad \dJ_\dM(\tpi;\pi^{\NE}_\dM) \leq \dJ_\dM(\pi^{\NE}_\dM;\pi^{\NE}_\dM).
    $$
    Besides, we call $\hpi_\dM^\NE$ is an $\epsilon$-approximate NE of $\dM$, if 
    $$
    \forall \tpi \in \Pi, \quad \dJ_\dM(\tpi;\pi^{\NE}_\dM) \leq \dJ_\dM(\pi^{\NE}_\dM;\pi^{\NE}_\dM) + \epsilon.
    $$
\end{definition}
Similar to the conditional distance $d(M,\tM|\pi)$ defined in Sec.~\ref{sec:learning_MFG}, we can define the conditional distance for PAM.
\begin{align*}
    d(\dM,\dtM|\pi) := \max_\tpi~\max\{\EE_{\tpi,\dM(\pi)} & [\sum_{h=1}^H \|\dmP_{\dM,h}(\cdot|\cdot,\cdot,\pi) - \dmP_{\dtM,h}(\cdot|\cdot,\cdot,\pi)\|_1],\\
    & \EE_{\tpi,\dtM(\pi)}[\sum_{h=1}^H \|\dmP_{\dM,h}(\cdot|\cdot,\cdot,\pi) - \dmP_{\dtM,h}(\cdot|\cdot,\cdot,\pi)\|_1]\}
\end{align*}
Given a PAM model class $\dcM$ and a model $\dM \in \dcM$, for any reference policy $\pi$, we define the $\epsilon_0$-neighborhood of $\dM$ in $\dcM$ to be 
$$
\cB_{\pi}^{\epsilon_0}(\dM;\dcM) := \{\dM'\in\dcM | d(\dM,\dM'|\pi)\leq \epsilon_0\}.
$$
Besides, we define the Central Model $\dM_{\Central}^{\epsilon_0}(\pi;\dcM)$ in $\dcM$ regarding $\pi$ to be the model with the largest neighborhood set:
\begin{align*}
    \dM_{\Central}^{\epsilon_0}(\pi;\dcM) \gets \arg\max_{\dM\in\dcM} |\cB_{\pi}^{\epsilon_0}(\dM;\dcM)|.
\end{align*}
\subsubsection{Existence of Nash Equilibrium in $\dM$}
Next, we investigate the existence of NE in $\dM$. Recall the definition 
\begin{equation}
d_{\infty,1}(\pi,\pi') := \max_{h\in[H],s_h\in\cS_h}\|\pi_h(\cdot|s_h)-\pi'_h(\cdot|s_h)\|_1.
\end{equation}
\begin{restatable}{theorem}{ThmPAMNE}[Existence of Nash Equilibrium in PAM]\label{thm:exist_NE_PAM}
    Given a PAM $\dM$ with discrete state and action spaces, such that, for any $h\in[H],s_{h+1}\in\cS_{h+1},s_h\in\cS_h,a_h\in\cA_h$, both $\dmP_{\dM,h}(s_{h+1}|s_h,a_h,\pi)$ and $\dr_{\dM,h}(s_h,a_h,\pi)$ are continuous at $\pi$ w.r.t. distance $d_{\infty,1}$, then $\dM$ has at least one NE satisfying Def.~\ref{def:NE_in_PAM}.
\end{restatable}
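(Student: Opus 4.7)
The plan is to realize the Nash equilibrium as a fixed point of the best-response correspondence and invoke Kakutani's theorem (or its Fan--Glicksberg extension if $\cS, \cA$ are infinite). Define $BR : \Pi \to 2^\Pi$ by
\[
BR(\pi) := \argmax_{\tpi \in \Pi}\, \dJ_\dM(\tpi; \pi).
\]
By Def.~\ref{def:NE_in_PAM}, any $\pi^\NE_\dM \in BR(\pi^\NE_\dM)$ is a Nash equilibrium of $\dM$, so it suffices to show the hypotheses of Kakutani: $\Pi$ is a nonempty compact convex subset of a topological vector space, and $BR$ is nonempty, convex-valued, and has closed graph (equivalently, is upper hemicontinuous).

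Next I would verify each hypothesis. For $\Pi$: since $\cA_h$ is discrete, $\pi_h(\cdot|s_h) \in \Delta(\cA_h)$ lies in a simplex, so $\Pi = \prod_{h, s_h} \Delta(\cA_h)$ is compact and convex (Tychonoff, under the product topology, which is compatible with $d_{\infty,1}$ in the finite case). For nonemptiness and convexity of $BR(\pi)$: once $\pi$ is fixed as a reference, $\dM$ collapses to a standard finite-horizon MDP with transitions $\dmP_{\dM,h}(\cdot|\cdot,\cdot,\pi)$ and rewards $\dr_{\dM,h}(\cdot,\cdot,\pi)$. The set of optimal Markov policies of any such MDP is exactly $\prod_{h, s_h} \Delta(\cA^\star_{h,s_h})$, where $\cA^\star_{h,s_h}$ is the nonempty set of Bellman-optimal actions at $(h, s_h)$; this is a nonempty convex subset of $\Pi$.

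The main obstacle is closedness of the graph of $BR$, which by Berge's maximum theorem reduces to joint continuity of $(\tpi, \pi) \mapsto \dJ_\dM(\tpi; \pi)$ with respect to $d_{\infty,1}$. To establish this I would expand
\[
\dJ_\dM(\tpi; \pi) = \sum_{h=1}^H \sum_{s_h, a_h} \rho^{\tpi}_{\dM, h}(s_h, a_h; \pi)\, \dr_{\dM,h}(s_h, a_h, \pi),
\]
where $\rho^{\tpi}_{\dM, h}(s_h, a_h; \pi) = \mu_1(s_1) \tpi_1(a_1|s_1) \prod_{h' < h} \dmP_{\dM, h'}(s_{h'+1}|s_{h'}, a_{h'}, \pi)\, \tpi_{h'+1}(a_{h'+1}|s_{h'+1})$ summed over the intermediate coordinates. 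Each occupancy is thus a finite (resp.\ countable) sum of products of factors $\dmP_{\dM, h'}(\cdot|\cdot,\cdot,\pi)$ and $\tpi_{h'}(\cdot|\cdot)$; by hypothesis each factor is continuous in $\pi$ (with respect to $d_{\infty,1}$), and each factor is trivially continuous in $\tpi$, so each summand is jointly continuous. For finite $\cS, \cA$ the finite sum is immediately continuous; for countable $\cS$ one applies dominated convergence using $\dr_{\dM,h} \le 1/H$ and $\sum \rho^{\tpi}_{\dM,h} = 1$. Hence $\dJ_\dM$ is jointly continuous, Berge gives upper hemicontinuity of $BR$, and Kakutani--Fan--Glicksberg produces the desired fixed point $\pi^\NE_\dM \in BR(\pi^\NE_\dM)$, completing the proof.
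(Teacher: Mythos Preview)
Your approach via Kakutani is natural, but there is a genuine gap in the convexity step. The correspondence you define is $BR(\pi) = \argmax_{\tpi \in \Pi} \dJ_\dM(\tpi;\pi)$, and you claim it equals $\prod_{h,s_h} \Delta(\cA^\star_{h,s_h})$. This is false: the product over Bellman-optimal actions is only a \emph{subset} of $BR(\pi)$. Because $\dJ_\dM(\tpi;\pi)$ is multilinear (not linear) in the components $\tpi_1,\dots,\tpi_H$, the argmax over $\tpi$ need not be convex. Concretely, take $H=2$, two states, two actions, with deterministic transitions such that at $h=1$ action $a$ leads to $s_0$ and $b$ leads to $s_1$, and at $h=2$ the unique optimal action at $s_0$ is $a$ while at $s_1$ it is $b$. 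The policy $(\pi_1=a,\ \pi_2(\cdot|s_0)=a,\ \pi_2(\cdot|s_1)=a)$ is in $BR(\pi)$ because $s_1$ is never reached; symmetrically $(\pi_1=b,\ \pi_2(\cdot|s_0)=b,\ \pi_2(\cdot|s_1)=b)$ is in $BR(\pi)$; but their $\tfrac12$-mixture randomizes at $h=2$ in both states and is strictly suboptimal. So Kakutani cannot be applied to $BR$ as written.

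The fix is to work instead with the \emph{Bellman}-best-response correspondence $BR^\star(\pi):=\prod_{h,s_h}\argmax_{u\in\Delta(\cA_h)} \dQ^{*}_{\dM,h}(s_h,\cdot;\pi)^\top u$, which is convex by construction; upper hemicontinuity then follows from continuity of $\dQ^{*}_{\dM,h}$ in $\pi$ (backward induction from your continuity hypothesis), and any fixed point of $BR^\star$ lies in $BR$, hence is an NE. This is essentially what the paper does, but via a different device: rather than invoke Kakutani on a set-valued map, the paper adds a strictly convex penalty and considers the \emph{single-valued} map
\[
\Gamma^{\SNE}_{\dM}(\pi)_h(\cdot|s_h):=\argmax_{u\in\Delta(\cA_h)}\ \dQ^{\pi}_{\dM,h}(s_h,\cdot;\pi)^\top u-\|\pi_h(\cdot|s_h)-u\|_2^2,
\]
shows it is continuous (again via continuity of $\dQ$), applies Brouwer on the polytope $\Pi$, and then checks that any fixed point satisfies the first-order condition $\pi_h(\cdot|s_h)\in\argmax_u \dQ^{\pi}_{\dM,h}(s_h,\cdot;\pi)^\top u$, i.e., is a (strict) NE. The regularization is precisely what lets the paper avoid the convexity issue you ran into.
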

\begin{proof}
    In Prop.~\ref{thm:exist_NE_PAM}, we establish the existence of NE in Multi-Type PAM, and the proof for this theorem is a special case when $W = 1$.
\end{proof}
As a direct result of Thm.~\ref{thm:exist_NE_PAM} and Lem.~\ref{lem:Lipschitz_PA_model}, we have the following corollary.
\begin{corollary}
    Given a MF-MDP model $M$ satisfying Assump.~\ref{assump:Lipschitz}, the PAM model $\dM$ converted from $M$ according to the rules in Eq.~\eqref{eq:conversion_to_dM} has at least one NE.
\end{corollary}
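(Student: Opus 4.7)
The plan is to simply verify the hypothesis of Thm.~\ref{thm:exist_NE_PAM} for the converted PAM $\dM$. Recall that Thm.~\ref{thm:exist_NE_PAM} guarantees existence of an NE whenever, for every $h, s_h, a_h, s_{h+1}$, the maps $\pi \mapsto \dmP_{\dM,h}(s_{h+1}\mid s_h, a_h, \pi)$ and $\pi \mapsto \dr_{\dM,h}(s_h, a_h, \pi)$ are continuous in $\pi$ with respect to $d_{\infty,1}$. So the task reduces to certifying this continuity for the particular PAM built from $M$ via Eq.~\eqref{eq:conversion_to_dM}.

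Unpacking Eq.~\eqref{eq:conversion_to_dM}, we have $\dmP_{\dM,h}(\cdot\mid s_h,a_h,\pi) = \mP_{M,h}(\cdot\mid s_h,a_h,\mu^{\pi}_{M,h})$ and $\dr_{\dM,h}(s_h,a_h,\pi) = r_h(s_h,a_h,\mu^{\pi}_{M,h})$. Thus, the PAM's dependence on $\pi$ factors through the induced density $\mu^{\pi}_{M,h}$. Under Assump.~\ref{assump:Lipschitz}, $\mP_{M,h}$ and $r_h$ are Lipschitz in $\mu$ with respect to $\|\cdot\|_1$. It therefore suffices to show that $\pi \mapsto \mu^{\pi}_{M,h}$ is continuous with respect to $d_{\infty,1}$; this is precisely (a consequence of) Lem.~\ref{lem:Lipschitz_PA_model}, proven by a straightforward induction on $h$ using the recursion $\mu^{\pi}_{M,h+1} = \Gamma^{\pi}_{M,h}(\mu^{\pi}_{M,h})$, where both the policy term and the transition kernel (via the Lipschitz assumption) are Lipschitz in the appropriate inputs.

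Putting the pieces together, the proof is essentially a one-liner: by Lem.~\ref{lem:Lipschitz_PA_model}, the converted PAM $\dM$ has transition and reward functions that are (Lipschitz and hence) continuous in $\pi$ with respect to $d_{\infty,1}$; applying Thm.~\ref{thm:exist_NE_PAM} then yields the existence of at least one NE for $\dM$. There is no real obstacle here since the corollary is explicitly flagged in the excerpt as a direct consequence of the two cited results, and the only substantive content (the Lipschitz propagation from $\pi$ through $\mu^{\pi}_{M,h}$ to $\mP_{M,h}$ and $r_h$) is already absorbed into Lem.~\ref{lem:Lipschitz_PA_model}.
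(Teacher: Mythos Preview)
Your proposal is correct and matches the paper's approach exactly: the paper states the corollary as ``a direct result of Thm.~\ref{thm:exist_NE_PAM} and Lem.~\ref{lem:Lipschitz_PA_model}'' with no further argument, which is precisely what you unpack---Lem.~\ref{lem:Lipschitz_PA_model} gives Lipschitz (hence continuous) dependence of $\dmP_{\dM,h}$ and $\dr_{\dM,h}$ on $\pi$ w.r.t.\ $d_{\infty,1}$, and Thm.~\ref{thm:exist_NE_PAM} then yields the NE.
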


\subsubsection{Useful Lemma Related to the PAM converted from MF-MDP}

\begin{restatable}{lemma}{LemPACLip}[Lipschitz Continuity of PAM]\label{lem:Lipschitz_PA_model}
    Given an MF-MDP $M$ satisfying the Lipschitz continuity condition in Assump.~\ref{assump:Lipschitz}, consider the PAM $\dM$ converted from $M$ according to Eq.~\eqref{eq:conversion_to_dM}, we have $\dM$ is also Lipschitz continuous that, $\forall h\in[H]$ and any $s_h\in\cS,a_h\in\cA$,
    \begin{align*}
        \|\dmP_{\dM,h}(\cdot|s_h,a_h,\pi) - \dmP_{\dM,h}(\cdot|s_h,a_h,\pi')\|_1 \leq d_{\infty,1}(\pi,\pi') L_T \sum_{\ph=1}^h (1+L_T)^{h-\ph}\\
        |\dr_{\dM,h}(s_h,a_h,\pi) - \dr_{\dM,h}(s_h,a_h,\pi')| \leq d_{\infty,1}(\pi,\pi') L_r \sum_{\ph=1}^h (1+L_T)^{h-\ph}.
    \end{align*}
\end{restatable}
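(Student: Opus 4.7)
\textbf{Proof plan for Lemma~\ref{lem:Lipschitz_PA_model}.} The plan is to reduce both inequalities, via the conversion rule in Eq.~\eqref{eq:conversion_to_dM} and Assump.~\ref{assump:Lipschitz}, to a single bound on the $\ell_1$-distance between the state densities $\mu^\pi_{M,h}$ and $\mu^{\pi'}_{M,h}$ induced by the two reference policies. Concretely, by Eq.~\eqref{eq:conversion_to_dM},
\[
\|\dmP_{\dM,h}(\cdot|s_h,a_h,\pi)-\dmP_{\dM,h}(\cdot|s_h,a_h,\pi')\|_1 = \|\mP_{M,h}(\cdot|s_h,a_h,\mu^\pi_{M,h})-\mP_{M,h}(\cdot|s_h,a_h,\mu^{\pi'}_{M,h})\|_1,
\]
which by Assump.~\ref{assump:Lipschitz} is at most $L_T\|\mu^\pi_{M,h}-\mu^{\pi'}_{M,h}\|_1$; the analogous reduction with $L_r$ handles the reward inequality. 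So the whole job reduces to showing that $\|\mu^\pi_{M,h}-\mu^{\pi'}_{M,h}\|_1 \leq d_{\infty,1}(\pi,\pi')\sum_{h'=1}^{h}(1+L_T)^{h-h'}$.

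The main step is an induction on $h$, using the transition operator $\Gamma^\pi_{M,h}$ stated in Sec.~\ref{sec:preliminary}. The base case $h=1$ is immediate since $\mu^\pi_{M,1}=\mu^{\pi'}_{M,1}=\mu_1$. For the inductive step, I would expand
\[
\mu^\pi_{M,h+1}(s') - \mu^{\pi'}_{M,h+1}(s') = \sum_{s_h,a_h}\Big[\mu^\pi_{M,h}(s_h)\pi_h(a_h|s_h)\mP_{M,h}(s'|s_h,a_h,\mu^\pi_{M,h}) - \mu^{\pi'}_{M,h}(s_h)\pi'_h(a_h|s_h)\mP_{M,h}(s'|s_h,a_h,\mu^{\pi'}_{M,h})\Big]
\]
and perform the standard ``add and subtract'' telescoping into three terms: one that isolates the difference $\mu^\pi_{M,h}-\mu^{\pi'}_{M,h}$ (bounded by the inductive hypothesis after summing over $s'$), one that isolates $\pi_h-\pi'_h$ (bounded by $d_{\infty,1}(\pi,\pi')$ since $\mu^{\pi'}_{M,h}$ is a probability measure), and one that isolates the transition kernel difference (bounded by $L_T\|\mu^\pi_{M,h}-\mu^{\pi'}_{M,h}\|_1$ via Assump.~\ref{assump:Lipschitz}).

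Summing after taking $\sum_{s'}|\cdot|$ on both sides yields the recursion
\[
\|\mu^\pi_{M,h+1}-\mu^{\pi'}_{M,h+1}\|_1 \leq (1+L_T)\|\mu^\pi_{M,h}-\mu^{\pi'}_{M,h}\|_1 + d_{\infty,1}(\pi,\pi'),
\]
which unrolls to a geometric series giving exactly the factor $\sum_{h'=1}^{h}(1+L_T)^{h-h'}$. Combining with the Lipschitz reduction finishes both claims. The only delicate part is the bookkeeping in the three-way decomposition—each piece is a standard $\ell_1$ estimate, but one has to be careful to align the reference densities correctly (e.g., keeping $\mu^\pi_{M,h}$ inside $\mP_{M,h}$ in the $\mu$-difference term and $\mu^{\pi'}_{M,h}$ in the transition-difference term) so that Assump.~\ref{assump:Lipschitz} applies cleanly.
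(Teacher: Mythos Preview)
Your proposal is correct and follows essentially the same approach as the paper: reduce both inequalities to bounding $\|\mu^\pi_{M,h}-\mu^{\pi'}_{M,h}\|_1$ via Assump.~\ref{assump:Lipschitz}, then establish the recursion $\|\mu^\pi_{M,h}-\mu^{\pi'}_{M,h}\|_1 \leq (1+L_T)\|\mu^\pi_{M,h-1}-\mu^{\pi'}_{M,h-1}\|_1 + d_{\infty,1}(\pi,\pi')$ by the same three-way ``add and subtract'' decomposition. The paper merely packages the decomposition into a standalone density-difference lemma (Lem.~\ref{lem:density_differences}) and proves the multi-type version first (Lem.~\ref{lem:Lipschitz_PAM_MT}) before specializing to $W=1$, but the argument is identical to yours.
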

\begin{proof}
    This lemma is a special case of Lem.~\ref{lem:Lipschitz_PAM_MT} when $W = 1$.
\end{proof}

\subsection{Multi-Type Policy-Aware Model}
In this section, we introduce Multi-Type Policy-Aware Model (MT-PAM) extended from PAM.
To distinguish with MT-MFG, we use $\dvecM$ as notation.

MT-PAM is specified by $\dvecM:=\{(\mu_1^w,\cS^w,\cA^w,H,\dmP^w_\dvecM,\dr^w_{\dvecM})_{w\in[W]}\}$\footnote{Here we specify the model in the subscription of the reward function, which will avoid confusion when we consider the PAMs converted from (Multi-Type) MF-MDPs}, where $\cS^w,\cA^w,H,\mu_1^w$ are defined the same as the Multi-Type MF-MDP setting; $\dmP^w_\dvecM := \{\dmP^w_{\dvecM,h}\}_{h\in[H]}$ is the transition function with $\dmP^w_{\dvecM,h} : \cS_h^w\times\cA_h^w\times\vecPi \rightarrow \Delta(\cS_{h+1})$ and $\dr_{\dvecM,h}: \cS^w_h\times\cA^w_h\times\vecPi \rightarrow [0,\frac{1}{H}]$, where recall $\vecPi$ denotes the set of all Markov policies $\vecpi:=\{\pi^w\}_{w\in[W]}$ with $\pi^w\in\Pi^w$.

Given a reference policy $\vecpi:=\{\pi^w\}_{w\in[W]}\in\vecPi$, for any $\tvecpi:=\{\tpi^w\}_{w\in[W]}\in\vecPi$, we define the value function for type $w$ $\dQ^{w,(\cdot)}_{\dvecM,h}:\cS^w_h\times\cA^w_h\times\vecPi\rightarrow\mR$ and $\dV^{w,(\cdot)}_{\dvecM,h}:\cS_h^w\times\vecPi\rightarrow\mR$ in the following way:
\begin{align*}
    \dQ^{w,\tvecpi}_{\dvecM,h}(\cdot,\cdot;\vecpi) := \EE\Big[\sum_{\ph=h}^H \dr_{\dvecM,\ph}^w(s_\ph^w,a_\ph^w,\vecpi)\Big|s_h^w=\cdot,~a_h^w=\cdot,~\forall \tldh \geq h,~~{s_{\tldh+1}^w\sim \dmP^w_{\dvecM,\tldh}(\cdot|s_\tldh^w,a_\tldh^w,\vecpi),~a_{\tldh+1}^w \sim \tpi_{\tldh+1}^w}\Big].\\
    V^{w,\tvecpi}_{\dvecM,h}(\cdot;\vecpi) := \EE\Big[\sum_{\ph=h}^H \dr_{\dvecM,\ph}^w(s_\ph^w,a_\ph^w,\vecpi)\Big|s_h^w=\cdot,~{a_{h}^w \sim \tpi_{h}^w,~~\forall \tldh \geq h,~~s_{\tldh+1}^w\sim \dmP^w_{\dvecM,\tldh}(\cdot|s_\tldh^w,a_\tldh^w,\vecpi),~a_{\tldh+1}^w \sim \tpi_{\tldh+1}^w}\Big].
\end{align*}
Similarly, we will denote $\EE_{\tvecpi,\dvecM(\vecpi)}[\cdot]$ to be the expectation taken over trajectories sampled by executing $\tvecpi$ in the model $\dvecM$, such that the transition and reward functions are fixed by $\vecpi$. Again, we will call $\vecpi$ as the ``reference policy''.

We denote $\dJ_\dvecM^w(\tvecpi;\vecpi) := \EE_{s_1^w\sim\mu_1^w}[V^{w,\tvecpi}_{\vecM,1}(s_1^w;\vecpi)]$ to be the expected return of agents in type $w$ in model $\dvecM$ by executing $\tvecpi$ given $\vecpi$ as the reference policy.
\begin{definition}[Nash Equilibrium in Multi-Type PAM]\label{def:NE_in_MTPAM}
    The Nash Equilibrium policy in Multi-Type PAM is defined to be the policy $\vecpi^\NE:=\{\pi^{w,\NE}\}_{w\in[W]}$ satisfying:
    \begin{align}
        \forall w \in [W],~\forall\tvecpi\in\vecPi,\quad \dJ^w_\dvecM(\tvecpi;\vecpi^\NE) \leq \dJ^w_\dvecM(\vecpi^\NE;\vecpi^\NE).\label{eq:NE_MTPAM}
    \end{align}
\end{definition}
Note that $\dJ^w_\dvecM(\tvecpi;\vecpi)$ actually only depends on $\vecpi$ and $\tpi^w$.

\subsubsection{Existence of Nash Equilibrium in MT-PAM}
We first investigate a stronger notion of NE, which we call the ``strict NE''.
\begin{definition}[Strict NE]\label{def:strict_NE}
    Given a MT-PAM $\dvecM$ with transitions and rewards $\{(\dmP^w_{\dvecM}, \dr^w_\dvecM)\}_{w\in[W]}$, the policy $\vecpi^\NE$ is a strict NE of $\dvecM$ if and only if the following holds:
    \begin{align}
        \forall w\in[W],\quad \pi_h^w(\cdot|s_h^w) \in \argmax_{u\in\Delta(\cA^w)} \dQ^{w,\vecpi}_{\dvecM,h}(s_h^w,\cdot,\vecpi)\trans u.
    \end{align}
\end{definition}
Note that this is a stronger notion than the NE defined in Def.~\ref{def:NE_in_MTPAM}, i.e. a strict NE is always a NE. In the following, we will focus on the existence of strict NE.
\begin{lemma}[Strict NE as Fixed Point]\label{lem:NE_FixedPoint_MT}
    Given a MT-PAM $\dvecM$ with transitions and rewards $\{(\dmP^w_{\dvecM}, \dr^w_\dvecM)\}_{w\in[W]}$, the policy $\vecpi^\SNE$ is a strict NE of $\dvecM$ if and only if the following holds:
    \begin{align*}
        \Gamma_{\dvecM}^{\SNE}(\vecpi^\SNE) = \vecpi^\SNE,
    \end{align*}
    where
    $$
    \Gamma_{\dvecM}^{\SNE}(\vecpi) := \{\tvecpi:=\{\tpi^w_h\}_{w\in[W],h\in[H]} |\forall w,s_h^w,~\tpi^w_h(\cdot|s_h^w) := \argmax_{u\in\Delta(\cA^w)} \dQ_{\dvecM,h}^{w,\vecpi}(s_h^w,\cdot, \vecpi)^\top u - \| \pi^w_h(\cdot|s_h^w) - u\|_2^2\}.
    $$
\end{lemma}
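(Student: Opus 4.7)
The plan is to prove both directions of the equivalence by exploiting the strict concavity of the regularized objective that defines the operator $\Gamma_{\dvecM}^{\SNE}$, specifically the fact that the quadratic regularizer $\|\pi_h^w(\cdot|s_h^w) - u\|_2^2$ has value zero and a vanishing gradient at $u = \pi_h^w(\cdot|s_h^w)$. Throughout I will fix an arbitrary $w\in[W]$, $h\in[H]$, $s_h^w\in\cS_h^w$ and abbreviate $q := \dQ_{\dvecM,h}^{w,\vecpi^\SNE}(s_h^w,\cdot,\vecpi^\SNE) \in \mR^{|\cA^w|}$ and $p := \pi_h^{w,\SNE}(\cdot|s_h^w) \in \Delta(\cA^w)$.

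For the forward direction (strict NE implies fixed point), I would start from Def.~\ref{def:strict_NE}, which gives $p \in \argmax_{u\in\Delta(\cA^w)} q^\trans u$. Then for any $u \in \Delta(\cA^w)$ with $u \neq p$, we have $q^\trans u \leq q^\trans p$ and $\|p - u\|_2^2 > 0$, hence
\[
q^\trans u - \|p - u\|_2^2 < q^\trans p = q^\trans p - \|p - p\|_2^2,
\]
so $p$ is the unique maximizer of the regularized objective, i.e.\ $\Gamma_{\dvecM}^{\SNE}(\vecpi^\SNE) = \vecpi^\SNE$ componentwise.

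For the backward direction (fixed point implies strict NE), I would argue by contradiction: suppose $p$ is the fixed point but $p \notin \argmax_{u\in\Delta(\cA^w)} q^\trans u$, so there exists $u^\star \in \Delta(\cA^w)$ with $q^\trans(u^\star - p) =: \delta > 0$. By convexity of $\Delta(\cA^w)$, $u_\lambda := (1-\lambda)p + \lambda u^\star \in \Delta(\cA^w)$ for all $\lambda \in [0,1]$, and a direct expansion gives
\[
q^\trans u_\lambda - \|p - u_\lambda\|_2^2 \;=\; q^\trans p + \lambda \delta - \lambda^2 \|u^\star - p\|_2^2.
\]
Choosing $\lambda \in (0, \min\{1, \delta/\|u^\star - p\|_2^2\})$ makes the right-hand side strictly exceed $q^\trans p$, contradicting the assumption that $p$ is the argmax of the regularized objective. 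Hence $p \in \argmax_{u\in\Delta(\cA^w)} q^\trans u$, and since $w,h,s_h^w$ were arbitrary, $\vecpi^\SNE$ is a strict NE in the sense of Def.~\ref{def:strict_NE}.

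There is no substantive obstacle here: once one notices that the regularizer has a vanishing first-order term at $u = p$, the fixed-point condition is equivalent to $p$ being a first-order optimal point of the underlying linear maximization over the simplex, and concavity upgrades first-order to global optimality. The only care needed is to handle the case where the linear maximizer over $\Delta(\cA^w)$ is non-unique (argmax is a face of the simplex), which is why I stated Def.~\ref{def:strict_NE} as $\pi^w_h(\cdot|s_h^w) \in \argmax(\cdot)$ rather than equality, and why the forward direction uses the containment $p \in \argmax$ rather than uniqueness.
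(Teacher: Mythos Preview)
Your proposal is correct and follows essentially the same approach as the paper: both directions hinge on the observation that the quadratic regularizer $-\|p-u\|_2^2$ vanishes to first order at $u=p$, so optimality of $p$ for the regularized problem is equivalent to optimality for the linear problem. The paper's backward direction invokes ``first order optimality'' in one line whereas you spell out the line-search/contradiction argument explicitly, but the underlying mechanism is identical.
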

\begin{proof}
    First of all, suppose $\vecpi$ is the NE of $\dvecM$ according to Def.~\ref{def:NE_in_MTPAM}, by the policy improvement theorem in single-agent RL, we have:
    \begin{align*}
        \forall w\in[W],\quad \pi_h^w(\cdot|s_h^w) \in \argmax_{u\in\Delta(\cA^w)} \dQ^{w,\vecpi}_{\dvecM,h}(s_h^w,\cdot,\vecpi)\trans u,
    \end{align*}
    which also implies 
    \begin{align*}
        \forall w\in[W],~\pi_h^w(\cdot|s^w_h) \in \arg\max_{u\in\Delta(\cA^w)} \dQ^{w,\vecpi}_{\dvecM,h}(s_h^w,\cdot,\vecpi)\trans u - \|\pi^w_h(\cdot|s_h^w) - u\|_2^2.
    \end{align*}
    Therefore, if $\vecpi$ is the strict NE of $\dvecM$, we have $\Gamma_{\dvecM}^{\SNE}(\vecpi) = \vecpi$.

    On the other hand, if $\Gamma_{\dM}^{\SNE}(\vecpi) = \vecpi$, it implies:
    \begin{align*}
        \forall w\in[W],~\pi_h^w(\cdot|s^w_h) \in \arg\max_{u\in\Delta(\cA^w)} \dQ^{w,\vecpi}_{\dvecM,h}(s_h^w,\cdot,\vecpi)\trans u - \|\pi^w_h(\cdot|s_h^w) - u\|_2^2.
    \end{align*}
    By the first order optimality condition of the RHS, we should have:
    \begin{align*}
        \forall w\in[W],\quad \pi_h^w(\cdot|s_h^w) \in \argmax_{u\in\Delta(\cA^w)} \dQ^{w,\vecpi}_{\dvecM,h}(s_h^w,\cdot,\vecpi)\trans u,
    \end{align*}
    Therefore, $\vecpi$ is the strict NE of $\dvecM$.
\end{proof}
\begin{definition}[Distance measure between policies]\label{def:d1_distance_MT}
    Given two policies $\vecpi:=\{\pi^w_h\}_{h\in[H],w\in[W]}$ and $\tvecpi:=\{\tpi^w_h\}_{h\in[H],w\in[W]}$, we define:
    \begin{align*}
        d_{\infty,1}(\vecpi,\tvecpi) := \max_{w\in[W],h\in[H],s_h^w\in\cS_h^w}\|\pi_h^w(\cdot|s_h^w)-\tpi_h^w(\cdot|s_h^w)\|_1.
    \end{align*}
\end{definition}
\begin{condition}\label{cond:continuity_MT}
    Given a model function $\dvecM:=\{(\dmP^w_{\dvecM,h}, \dr^w_{\dvecM,h})\}_{h\in[H]}$, for any fixed $w\in[W],h\in[H],s^w_{h+1}\in\cS^w_{h+1},s_h^w\in\cS_h^w,a_h^w\in\cA_h^w$ and any $\vecpi$, $\dmP^w_{\dvecM,h}(s_{h+1}^w|s_h^w,a_h^w,\vecpi)$ and $\dr^w_{\dvecM,h}(s_h^w,a_h^w,\vecpi)$ are continuous at $\vecpi$ w.r.t. distance $d_{\infty,1}$.
\end{condition}
\begin{lemma}[Continuity of $\dQ$]\label{lem:continuity_ddotQ_MT}
    Under Cond.~\ref{cond:continuity_MT}, for any $w\in[W]$, any $s_h^w,a_h^w$ and $\vecpi$, $\dQ^{w,\vecpi}_{\dvecM,h}(s_h^w,a_h^w,\vecpi)$ is continuous at $\vecpi$ w.r.t. the distance $d_{\infty,1}$ in Def.~\ref{def:d1_distance_MT}.
\end{lemma}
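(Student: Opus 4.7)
My approach is to prove the statement by backward induction on $h$ from $H$ down to $1$. To handle the fact that $\vecpi$ appears \emph{twice} in the quantity $\dQ^{w,\vecpi}_{\dvecM,h}(s_h^w, a_h^w, \vecpi)$ (once as the reference policy parameterizing the transitions/rewards, once as the evaluation policy), I will first prove the stronger \emph{joint continuity} statement: for every $h$, the function $(\tvecpi, \vecpi) \mapsto \dQ^{w,\tvecpi}_{\dvecM,h}(s_h^w, a_h^w, \vecpi)$ is continuous in $(\tvecpi, \vecpi)$ with respect to the product of the $d_{\infty,1}$ metric with itself. The lemma then follows by composition with the (trivially continuous) diagonal map $\vecpi \mapsto (\vecpi, \vecpi)$.

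\textbf{Base case $h = H$.} By definition, $\dQ^{w,\tvecpi}_{\dvecM,H}(s_H^w, a_H^w, \vecpi) = \dr^w_{\dvecM,H}(s_H^w, a_H^w, \vecpi)$, which depends only on $\vecpi$ and is continuous in $\vecpi$ (hence jointly in $(\tvecpi,\vecpi)$) by Condition~\ref{cond:continuity_MT}.

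\textbf{Inductive step.} Assume joint continuity at step $h+1$. Writing the Bellman recursion,
\begin{align*}
\dQ^{w,\tvecpi}_{\dvecM,h}(s_h^w, a_h^w, \vecpi) = \dr^w_{\dvecM,h}(s_h^w, a_h^w, \vecpi) + \sum_{s_{h+1}^w}\sum_{a_{h+1}^w} \dmP^w_{\dvecM,h}(s_{h+1}^w|s_h^w, a_h^w, \vecpi)\, \tpi_{h+1}^w(a_{h+1}^w|s_{h+1}^w)\, \dQ^{w,\tvecpi}_{\dvecM,h+1}(s_{h+1}^w, a_{h+1}^w, \vecpi),
\end{align*}
each factor is a continuous function of $(\tvecpi, \vecpi)$: the reward and transition entries are continuous in $\vecpi$ by Condition~\ref{cond:continuity_MT}; the policy factor $\tpi_{h+1}^w(a_{h+1}^w|s_{h+1}^w)$ is $1$-Lipschitz in $\tvecpi$ directly from the definition of $d_{\infty,1}$; and the step-$(h+1)$ Q-value is jointly continuous by the inductive hypothesis. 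A finite sum of products of continuous, bounded functions is continuous, closing the induction.

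\textbf{Main obstacle.} The only real subtlety is the double role of $\vecpi$, which I dispose of via the joint-continuity strengthening above. A secondary technicality arises if one wishes to allow countably infinite $\cS^w, \cA^w$: the interchange of the sum with a limiting argument needs justification. This is mild, since every $\dQ^{w,\tvecpi}_{\dvecM,h+1}$ is uniformly bounded by $1$ (as a sum of at most $H$ rewards lying in $[0, 1/H]$) and transitions are probability measures, so dominated convergence applies term-by-term; under the paper's stated ``discrete but can be arbitrarily large'' convention for $\cS^w, \cA^w$ (and in particular for the finite state/action spaces used throughout), no such care is needed and the induction goes through directly.
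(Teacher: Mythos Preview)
Your proof is correct and takes essentially the same approach as the paper. The paper's own proof is a one-liner observing that $\dQ^{w,\vecpi}_{\dvecM,h}(s_h^w,a_h^w,\vecpi)$ is obtained by finite multiplication and addition of the continuous building blocks $\dmP^w_{\dvecM,h}(\cdot|\cdot,\cdot,\vecpi)$, $\dr^w_{\dvecM,h}(\cdot,\cdot,\vecpi)$, and $\vecpi$; your backward induction and joint-continuity strengthening simply make this explicit, and your handling of the double role of $\vecpi$ via the diagonal map is a clean way to formalize what the paper leaves implicit.
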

\begin{proof}
    The proof is obvious by noting that $\dQ_{\dvecM,h}^{w,\vecpi}(s_h^w,a_h^w,\vecpi)$ is a function resulting from finite multiplication and addition among $\dmP^w_{\dvecM,h}(\cdot|\cdot,\cdot,\vecpi)$, $\dr^w_{\dvecM,h}(\cdot,\cdot,\vecpi)$ and $\vecpi$.
\end{proof}
In the following proposition, we will establish the existence of NE based on the existence of strict NE.
\begin{proposition}\label{prop:existence_MT_PAM}
    Under Cond.~\ref{cond:continuity_MT}, the MT-PAM has at least one NE policy $\vecpi^\NE$ satisfying Def.~\ref{def:NE_in_MTPAM}.
\end{proposition}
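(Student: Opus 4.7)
The plan is to prove existence of a NE by establishing existence of a \emph{strict} NE (Def.~\ref{def:strict_NE}), since any strict NE is a NE by the policy improvement direction already recorded in the proof of Lem.~\ref{lem:NE_FixedPoint_MT}. Thanks to Lem.~\ref{lem:NE_FixedPoint_MT}, this reduces to showing that the self-map $\Gamma_{\dvecM}^{\SNE}$ on the policy space $\vecPi$ has a fixed point, for which I would invoke Brouwer's fixed point theorem.

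To set this up, I would view $\vecPi$ as the product $\prod_{w\in[W],h\in[H],s_h^w\in\cS_h^w}\Delta(\cA_h^w)$, which is a nonempty, compact, convex subset of a finite-dimensional Euclidean space since all $\cS_h^w$ and $\cA_h^w$ are discrete. Next I would check that $\Gamma_{\dvecM}^{\SNE}$ is well-defined as a single-valued map into $\vecPi$: for each coordinate $(w,h,s_h^w)$, the objective $\dQ_{\dvecM,h}^{w,\vecpi}(s_h^w,\cdot,\vecpi)^\top u - \|\pi_h^w(\cdot|s_h^w) - u\|_2^2$ is \emph{strictly concave} in $u$ over the compact convex set $\Delta(\cA_h^w)$ because of the quadratic regularizer, so the argmax is a unique point in $\Delta(\cA_h^w)$.

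For continuity of $\Gamma_{\dvecM}^{\SNE}$ on $(\vecPi, d_{\infty,1})$, I would apply Berge's Maximum Theorem. The constraint correspondence $u\in\Delta(\cA_h^w)$ does not depend on $\vecpi$, hence is trivially continuous and compact-valued. The objective, viewed as a function of $(u,\vecpi)$, is jointly continuous: the $\dQ$ term is continuous in $\vecpi$ by Lem.~\ref{lem:continuity_ddotQ_MT} (which uses Cond.~\ref{cond:continuity_MT}) and continuous (in fact linear) in $u$, while the penalty term is jointly continuous in $(u, \pi_h^w(\cdot|s_h^w))$ and $\pi_h^w(\cdot|s_h^w)$ is itself a continuous coordinate of $\vecpi$ under $d_{\infty,1}$. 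Berge's theorem then ensures the argmax correspondence is upper hemicontinuous, and since it is single-valued it is continuous in the usual sense.

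With all hypotheses of Brouwer's theorem verified, there exists $\vecpi^\SNE\in\vecPi$ with $\Gamma_{\dvecM}^{\SNE}(\vecpi^\SNE)=\vecpi^\SNE$. By Lem.~\ref{lem:NE_FixedPoint_MT} this $\vecpi^\SNE$ is a strict NE, hence a NE in the sense of Def.~\ref{def:NE_in_MTPAM}, completing the proof. The only mildly delicate step is the continuity check for the argmax map; the strong convexity of the quadratic penalty is what makes Berge's theorem yield an actual continuous function rather than just an upper hemicontinuous correspondence, so I would make sure to highlight that point explicitly.
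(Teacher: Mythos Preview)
Your proposal is correct and follows essentially the same route as the paper: reduce to existence of a strict NE via Lem.~\ref{lem:NE_FixedPoint_MT}, verify that $\Gamma_{\dvecM}^{\SNE}$ is a continuous self-map on the compact convex polytope $\vecPi$, apply Brouwer's fixed point theorem, and conclude since strict NE $\subset$ NE. The only cosmetic difference is in the continuity step: the paper argues separately that the proximal argmax is continuous in $q$ and in $u$ (citing an external lemma) and then composes with Lem.~\ref{lem:continuity_ddotQ_MT}, whereas you invoke Berge's Maximum Theorem on the jointly continuous, strictly concave objective---your version is arguably cleaner, since separate continuity does not in general yield joint continuity, but the underlying idea and all the ingredients are identical.
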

\begin{proof}
    We first show the mapping $\Gamma_{\dvecM}^{\SNE}:\vecPi \rightarrow \vecPi$ is continuous under Cond.~\ref{cond:continuity_MT}.
    Based on a similar discussion as Lem. E.6 in \citep{huang2023statistical},
    \begin{align*}
        u \rightarrow \argmax_{u'\in\Delta(\cA^w)} q^\top u' - \|u - u' \|_2^2,
    \end{align*}
    is continuous for any fixed $q \in \mR^{|\cA^w|}$, and
    \begin{align*}
        q \rightarrow \argmax_{u'\in\Delta(\cA^w)} q^\top u' - \|u - u' \|_2^2
    \end{align*}    
    is also continuous for any fixed $u\in\Delta(\cA^w)$.
    
    By Lem.~\ref{lem:continuity_ddotQ_MT}, and the rule of composition of continuous functions, $\Gamma_{\dvecM}^{\SNE}$ is a continuous mapping.
    Therefore, $\Gamma_{\dvecM}^{\SNE}$ maps from the closed and convex polytope $\vecPi$ to a subset of itself. By Brouwers fixed point theorem it has a fixed point.
    By Lem.~\ref{lem:NE_FixedPoint_MT}, such fixed point is a strict NE of $\dvecM$.

    Comparing with Def.~\ref{def:NE_in_MTPAM} and Def.~\ref{def:strict_NE}, we know NE is a super-set of strict NE, which implies the existence of NE in the MT-PAM.
\end{proof}

\subsubsection{Existence of Nash Equilibrium in MT-MFG as Corollary}
\paragraph{Conversion from Multi-Type MF-MDP to MT-PAM}
Given a Multi-Type MF-MDP $\vecM$, we can convert it to a MT-PAM sharing the same $\{\mu_1^w,\cS^w,\cA^w,H\}_{w\in[W]}$ with $\vecM$, while the transition and reward functions of $\dvecM$ are defined by:
\begin{align}
    \forall w\in[W],\forall h\in[H],\quad \dmP^w_{\dvecM,h}(\cdot|\cdot,\cdot,\vecpi) := \mP^w_{\vecM,h}(\cdot|\cdot,\cdot,\vecmu^\vecpi_{\vecM,h}),~ \dr_{\dvecM,h}(\cdot,\cdot,\vecpi) := r_h(\cdot,\cdot,\vecmu^\vecpi_{\vecM,h}),\label{eq:conversion_MT}
\end{align}
where $\vecmu^\vecpi_{\vecM,h}$ is the density of agents in all types induced by policy $\vecpi$ in model $\vecM$ starting from $\vecmu^\vecpi_{\vecM,1}=\vecmu_1$.

\begin{restatable}{proposition}{PropMTNE}[Existence of NE in MT-MFG]\label{prop:existence_MT_MFG}
    Under Assump.~\ref{assump:Lipschitz_MT}, the Multi-Type MF-MDP has at least one NE policy $\vecpi^\NE$ satisfying Eq.~\eqref{eq:NE_MT}.
\end{restatable}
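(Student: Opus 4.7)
The plan is to reduce the existence of a Nash Equilibrium in the Multi-Type MF-MDP $\vecM$ to the existence of a Nash Equilibrium in an associated Multi-Type Policy-Aware Model $\dvecM$, then invoke Proposition~\ref{prop:existence_MT_PAM}. The main point is that MT-PAMs are the ``right'' abstraction: they express the dependence on the reference policy directly, bypassing the need to reason about the continuous density space.

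First I would construct the MT-PAM $\dvecM$ associated with $\vecM$ using the conversion rule in Eq.~\eqref{eq:conversion_MT}: for each $w\in[W]$ and $h\in[H]$, set $\dmP^w_{\dvecM,h}(\cdot|\cdot,\cdot,\vecpi) := \mP^w_{\vecM,h}(\cdot|\cdot,\cdot,\vecmu^\vecpi_{\vecM,h})$ and analogously for the reward. By construction, for any $\vecpi,\tvecpi\in\vecPi$, executing $\tvecpi$ in $\dvecM$ with reference $\vecpi$ produces the same trajectory distribution as executing $\tvecpi$ in $\vecM$ while fixing densities to $\vecmu^\vecpi_{\vecM,h}$. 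Consequently $\dJ^w_\dvecM(\tvecpi;\vecpi) = J^w_\vecM(\tvecpi;\vecpi)$ for every type $w$ and every $\tvecpi,\vecpi$, so Def.~\ref{def:NE_in_MTPAM} for $\dvecM$ coincides with the MT-MFG NE condition (Eq.~\eqref{eq:NE_MT}) for $\vecM$. Thus it suffices to exhibit an NE of $\dvecM$.

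Next I would verify that $\dvecM$ satisfies the continuity Cond.~\ref{cond:continuity_MT}. This is exactly where Assump.~\ref{assump:Lipschitz_MT} (the multi-type analogue of Lem.~\ref{lem:Lipschitz_PA_model}) enters: iterating the Lipschitz bound on the transition kernel through the density recursion $\vecmu^\vecpi_{\vecM,h+1} = \Gamma^\vecpi_{\vecM,h}(\vecmu^\vecpi_{\vecM,h})$ shows that $\vecpi \mapsto \vecmu^\vecpi_{\vecM,h}$ is continuous (in fact Lipschitz) in the $d_{\infty,1}$ metric of Def.~\ref{def:d1_distance_MT}, with a constant that compounds at most geometrically in $h$. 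Composing with the Lipschitz $\mP^w_{\vecM,h}$ and $r^w_h$ yields continuity of $\dmP^w_{\dvecM,h}(s_{h+1}^w|s_h^w,a_h^w,\vecpi)$ and $\dr^w_{\dvecM,h}(s_h^w,a_h^w,\vecpi)$ in $\vecpi$ for every fixed $(h,s_h^w,a_h^w,s_{h+1}^w)$, which is exactly Cond.~\ref{cond:continuity_MT}. This is the step I expect to be the main (but routine) obstacle, since it requires carefully propagating Lipschitz constants through $H$ steps while tracking all $W$ types simultaneously; the computation should mirror the $W=1$ case in Lem.~\ref{lem:Lipschitz_PA_model} in a type-wise fashion.

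Finally, with Cond.~\ref{cond:continuity_MT} verified, I would apply Proposition~\ref{prop:existence_MT_PAM} to $\dvecM$ to obtain a policy $\vecpi^\NE\in\vecPi$ with $\dJ^w_\dvecM(\tvecpi;\vecpi^\NE) \leq \dJ^w_\dvecM(\vecpi^\NE;\vecpi^\NE)$ for all $w\in[W]$ and all $\tvecpi\in\vecPi$. Using the identification $\dJ^w_\dvecM(\cdot;\cdot) \equiv J^w_\vecM(\cdot;\cdot)$ established in the first step, this translates directly to Eq.~\eqref{eq:NE_MT}, so $\vecpi^\NE$ is a Nash Equilibrium of $\vecM$, concluding the proof.
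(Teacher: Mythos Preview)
Your proposal is correct and follows essentially the same route as the paper: convert $\vecM$ to the MT-PAM $\dvecM$ via Eq.~\eqref{eq:conversion_MT}, invoke the multi-type Lipschitz lemma (Lem.~\ref{lem:Lipschitz_PAM_MT}) to verify Cond.~\ref{cond:continuity_MT}, apply Prop.~\ref{prop:existence_MT_PAM}, and observe that the resulting NE of $\dvecM$ is a NE of $\vecM$ since the value functions coincide. The ``main but routine obstacle'' you flag is precisely what the paper packages as Lem.~\ref{lem:Lipschitz_PAM_MT}.
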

\begin{proof}
    By Lem.~\ref{lem:Lipschitz_PAM_MT}, we know the MT-PAM converted from such Multi-Type MF-MDP satisfying Cond.~\ref{cond:continuity_MT}, and by Prop.~\ref{prop:existence_MT_PAM}, the MT-PAM has at least one NE.
    Easy to check that such NE is also a NE for the Multi-Type MF-MDP satisfying Eq.~\eqref{eq:NE_MT}.
\end{proof}

\subsubsection{Proofs Related to the MT-PAM converted from Multi-Type MF-MDP}
\begin{restatable}{lemma}{LemPACLip}[Lipschitz Continuity of MT-PAM]\label{lem:Lipschitz_PAM_MT}
    Given a Multi-Type MF-MDP $\vecM$ satisfying the Lipschitz continuity condition in Assump.~\ref{assump:Lipschitz_MT}, consider the MT-PAM $\dvecM$ converted from $\vecM$ according to Eq.~\eqref{eq:conversion_MT}, we have $\dvecM$ is also Lipschitz continuous that, $\forall w\in[W],h\in[H]$ and any $s_h^w\in\cS^w,a_h^w\in\cA^w$,
    \begin{align*}
        \quad &\|\dmP^w_{\dvecM,h}(\cdot|s_h^w,a_h^w,\vecpi) - \dmP^w_{\dvecM,h}(\cdot|s_h^w,a_h^w,\vecpi')\|_1 \leq d_{\infty,1}(\vecpi,\vecpi') W\vecL_T \sum_{\ph=1}^h (1+\vecL_T)^{h-\ph}\\
        &|\dr_{\dvecM,h}^w(s_h^w,a_h^w,\vecpi) - \dr_{\dvecM,h}^w(s_h^w,a_h^w,\vecpi')| \leq d_{\infty,1}(\vecpi,\vecpi') W\vecL_r \sum_{\ph=1}^h (1+\vecL_T)^{h-\ph}.
    \end{align*}
\end{restatable}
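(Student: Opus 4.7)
\textbf{Proof Plan for Lem.~\ref{lem:Lipschitz_PAM_MT}.} By the conversion rule (Eq.~\eqref{eq:conversion_MT}),
\[
\dmP^w_{\dvecM,h}(\cdot|s_h^w,a_h^w,\vecpi) - \dmP^w_{\dvecM,h}(\cdot|s_h^w,a_h^w,\vecpi') = \mP^w_{\vecM,h}(\cdot|s_h^w,a_h^w,\vecmu^\vecpi_{\vecM,h}) - \mP^w_{\vecM,h}(\cdot|s_h^w,a_h^w,\vecmu^{\vecpi'}_{\vecM,h}),
\]
and likewise for the reward. Applying the Lipschitz continuity of $\mP^w_{\vecM,h}$ and $r^w_h$ with respect to the density argument (Assump.~\ref{assump:Lipschitz_MT}) reduces the lemma to an estimate of the form $\|\vecmu^\vecpi_{\vecM,h}-\vecmu^{\vecpi'}_{\vecM,h}\|_1 \le W\,d_{\infty,1}(\vecpi,\vecpi')\sum_{\ph=1}^{h}(1+\vecL_T)^{h-\ph}$, where $\|\cdot\|_1$ denotes the relevant multi-type norm used in Assump.~\ref{assump:Lipschitz_MT}. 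So the bulk of the work will be establishing this density-stability estimate by induction on $h$.

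For the base case $h=1$, the claim is immediate since $\vecmu^\vecpi_{\vecM,1}=\vecmu^{\vecpi'}_{\vecM,1}=\vecmu_1$. For the inductive step, I expand the one-step evolution operator
\[
\mu^{w,\vecpi}_{\vecM,h+1}(s^w_{h+1}) = \sum_{s^w_h,a^w_h}\mu^{w,\vecpi}_{\vecM,h}(s^w_h)\pi^w_h(a^w_h|s^w_h)\mP^w_{\vecM,h}(s^w_{h+1}|s^w_h,a^w_h,\vecmu^\vecpi_{\vecM,h})
\]
and use a three-term telescoping decomposition ($\mu$-change, $\pi$-change, $\mP$-change) together with the triangle inequality. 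The $\mu$-term contributes $\|\mu^{w,\vecpi}_{\vecM,h}-\mu^{w,\vecpi'}_{\vecM,h}\|_1$; the $\pi$-term contributes at most $d_{\infty,1}(\vecpi,\vecpi')$ per type (after bounding $\sum_{a^w_h}|\pi^w_h(a^w_h|\cdot)-\pi'^w_h(a^w_h|\cdot)|$ by $d_{\infty,1}$); and the $\mP$-term is controlled via Assump.~\ref{assump:Lipschitz_MT} by $\vecL_T$ times the aggregate density discrepancy at step $h$.

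Summing (or taking the appropriate norm) over the $W$ types yields a linear recursion of the form $D_{h+1} \le (1+\vecL_T)\,D_h + W\,d_{\infty,1}(\vecpi,\vecpi')$ with $D_1=0$, where $D_h$ is the aggregated density discrepancy at step $h$. Unrolling this recursion gives $D_h \le W\,d_{\infty,1}(\vecpi,\vecpi')\sum_{\ph=1}^{h-1}(1+\vecL_T)^{h-1-\ph}$. Combining with the single-step Lipschitz bound $\vecL_T D_h$ on the transition discrepancy (and $\vecL_r D_h$ for the reward) produces the stated bounds, after harmlessly loosening the partial sum to $\sum_{\ph=1}^{h}(1+\vecL_T)^{h-\ph}$.

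The main bookkeeping obstacle is matching the constants and the norm conventions of the MT Lipschitz assumption so that the recursion accumulates $W$ exactly once (in front of $d_{\infty,1}$) while leaving $(1+\vecL_T)$ in the geometric factor; this is where the generalization of Lem.~\ref{lem:Lipschitz_PA_model} to multi-type is nontrivial, since a naive use of the sum norm over types could introduce spurious $W$ factors inside the geometric sum. Once the right norm is identified, the proof is a direct generalization of the single-type argument, with identical structure for the reward Lipschitz bound (replacing the leading $\vecL_T$ by $\vecL_r$).
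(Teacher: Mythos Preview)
Your proposal is correct and follows essentially the same route as the paper: the paper invokes Lem.~\ref{lem:density_differences} (the multi-type density difference lemma, which is precisely your three-term telescoping decomposition) specialized to $\vecM=\vecM'$ to obtain the same recursion $D_h\le(1+\vecL_T)D_{h-1}+W\,d_{\infty,1}(\vecpi,\vecpi')$, unrolls it, and then applies Assump.~\ref{assump:Lipschitz_MT} exactly as you describe. Your caution about the $W$ bookkeeping is well placed: because Assump.~\ref{assump:Lipschitz_MT} carries a $\vecL_T/W$ in front of the aggregate norm, summing the $\mP$-term over the $W$ types yields $\vecL_T D_{h-1}$ (no extra $W$), so the geometric factor stays $(1+\vecL_T)$ as you want; the single $W$ in the final bound comes solely from the $\pi$-term summed over types.
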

\begin{proof}
    Based on Lem.~\ref{lem:density_differences}, as a special case, when $\vecM = \vecM'$, we have:
    \begin{align*}
        \|\vecmu^{\vecpi}_{\vecM,h} - \vecmu^{\vecpi'}_{\vecM,h}\|_1 \leq & (1 + \vecL_T)\|\vecmu^{\vecpi}_{\vecM,h-1} - \vecmu^{\vecpi'}_{\vecM,h-1}\|_1 + W\cdot d_{\infty,1}(\vecpi,\vecpi') \\
        = & d_{\infty,1}(\vecpi,\vecpi') W\sum_{\ph=1}^h (1+\vecL_T)^{h-\ph}.
    \end{align*}
    Therefore, for any $w\in[W]$,
    \begin{align*}
        \|\ddot{\mP}^w_{\dvecM,h}(\cdot|s_h^w,a_h^w,\vecpi) - \ddot{\mP}^w_{\dvecM,h}(\cdot|s_h^w,a_h^w,\vecpi')\|_1 =& \|\mP^w_{\vecM,h}(\cdot|s_h^w,a_h^w,\vecmu^{\vecpi}_{\vecM,h}) - \mP^w_{\vecM,h}(\cdot|s_h^w,a_h^w,\vecmu^{\vecpi'}_{\vecM,h})\|_1 \\
        \leq & \vecL_T\|\vecmu^{\vecpi}_{\vecM,h} - \vecmu^{\vecpi'}_{\vecM,h}\|_1\leq d_{\infty,1}(\pi,\pi')W\vecL_T\sum_{\ph=1}^h (1+\vecL_T)^{h-\ph}.
    \end{align*}
    and
    \begin{align*}
        |\dr^w_{\dvecM,h}(s_h^w,a_h^w,\vecpi) - \dr^w_{\dvecM,h}(s_h^w,a_h^w,\vecpi')| =& |r^w_{h}(s_h^w,a_h^w,\mu^{w,\vecpi}_{\vecM,h}) - r^w_{h}(s_h^w,a_h^w,\mu^{w,\vecpi'}_{\vecM,h})| \\
        \leq & \vecL_r\|\vecmu^{\vecpi}_{\vecM,h} - \vecmu^{\vecpi'}_{\vecM,h}\|_1\leq d_{\infty,1}(\vecpi,\vecpi')W\vecL_r\sum_{\ph=1}^h (1+\vecL_T)^{h-\ph}.
    \end{align*}
\end{proof}
\newpage

\section{Missing Details and Proofs for Results in Sec.~\ref{sec:learning_MFG}}\label{appx:learning_with_DCP}

\subsection{Proofs for Lemma and Theorems used for Insights}
\LemLocalAlign*
\begin{proof}
    For any policy $\tpi$, we have:
    \begin{align*}
        & J_\tM(\tpi, \hpi^\NE_M) - J_\tM(\hpi^\NE_M, \hpi^\NE_M) \\
        \leq& J_\tM(\tpi, \hpi^\NE_M) - J_M(\tpi, \hpi^\NE_M) - \Big(J_\tM(\hpi^\NE_M, \hpi^\NE_M) - J_M(\hpi^\NE_M, \hpi^\NE_M)\Big) + J_M(\tpi, \hpi^\NE_M) - J_M(\hpi^\NE_M, \hpi^\NE_M) \\
        \leq & \epsilon_1 + J_\tM(\tpi, \hpi^\NE_M) - J_M(\tpi, \hpi^\NE_M) - \Big(J_\tM(\hpi^\NE_M, \hpi^\NE_M) - J_M(\hpi^\NE_M, \hpi^\NE_M)\Big) \tag{$\hpi^\NE_M$ is an $\epsilon_1$-NE of $M$} \\
        \leq & \epsilon_1 + 2d(M,\tM|\hpi^\NE_M)\\
        \leq & \epsilon_1 + 2\epsilon_2.
    \end{align*}
\end{proof}
\ThmExistBrPi*
\begin{proof}
    \textbf{Thm.~\ref{lem:exists_bridge_policy} is just a helper theorem to make it easy for the reader to understand our proofs. It will not be used in the proof of our main results Thm.~\ref{thm:sample_complexity_MFG}, so here we only show an informal proof.}

    Combining with Lem.~\ref{lem:Lipschitz_PA_model} and Thm.~\ref{thm:exist_NE_PAM}, we show the bridge model $\dM_\Bridge$ has at least one NE $\pi^\NE_\Bridge$.
    In Thm.~\ref{thm:ConstructionErr}, we provide upper bound for the distance between the central model of $\pi^\NE_\Bridge$ with $\dM_\Bridge$, which implies $\pi^\NE_\Bridge$ is an approximate NE of its central model.
\end{proof}

\subsection{Definition of $\epsilon$-cover of Policy Space}
\begin{proposition}[$\epsilon$-cover of $\Pi$]\label{prop:eps_cover_Pi}
    Consider the set 
    $$\Pi_\epsilon := \{\pi:=\{\pi_1,...,\pi_H\}|\forall h\in[H],s_h\in\cS_h,~\pi_h(\cdot|s_h)\in\cN_{\epsilon}\},
    $$
    where 
    $$
    \cN_\epsilon := \{(\frac{N_1}{N},...,\frac{N_A}{N})|N=\lceil\frac{2A}{\epsilon}\rceil;N_1,...,N_A\in\mN;\sum_{i=1}^A N_i = N\}.
    $$
    Then, $\Pi_\epsilon$ is an $\epsilon$-cover of the policy space $\Pi$ w.r.t. $d_{\infty,1}$ distance.
\end{proposition}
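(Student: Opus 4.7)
\textbf{Proof proposal for Proposition~\ref{prop:eps_cover_Pi}.}

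The plan is to reduce the statement to a purely single-distribution claim: it suffices to show that $\cN_\epsilon$ is an $\epsilon$-cover of $\Delta([A])$ in the $\ell_1$-distance, since $d_{\infty,1}(\pi,\pi') = \max_{h,s_h}\|\pi_h(\cdot|s_h)-\pi'_h(\cdot|s_h)\|_1$ decouples across $(h,s_h)$. Given any $\pi\in\Pi$, I would define $\pi'\in\Pi_\epsilon$ by choosing, for each pair $(h,s_h)$ independently, an element of $\cN_\epsilon$ that is $\epsilon$-close to $\pi_h(\cdot|s_h)$; taking the maximum over all $(h,s_h)$ then yields $d_{\infty,1}(\pi,\pi')\leq \epsilon$.

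The key single-distribution step is the standard rounding-with-denominator argument. Fix $p\in\Delta([A])$ and set $N=\lceil 2A/\epsilon\rceil$. Define $M_i := \lfloor Np_i\rfloor$ and let $r_i := Np_i - M_i\in[0,1)$ be the fractional parts, so that $k := N - \sum_i M_i = \sum_i r_i$ is a nonnegative integer with $0\leq k< A$. Construct $q$ by incrementing the $M_i$'s corresponding to the $k$ largest values of $r_i$ by one (breaking ties arbitrarily) and leaving the rest; call the resulting counts $N_i$. By construction $\sum_i N_i = N$ and each $N_i\in\mathbb{N}$, so $q:=(N_1/N,\dots,N_A/N)\in\cN_\epsilon$. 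For every coordinate we have $|p_i - N_i/N|\leq 1/N$, hence
\[
\|p-q\|_1 = \sum_{i=1}^A |p_i - N_i/N|\leq \frac{A}{N}\leq \frac{A}{2A/\epsilon} = \frac{\epsilon}{2}\leq \epsilon.
\]

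Applying this construction pointwise for every $(h,s_h)$ yields $\pi'\in\Pi_\epsilon$ with $d_{\infty,1}(\pi,\pi')\leq \epsilon/2\leq \epsilon$, which is exactly the $\epsilon$-covering property. There is no substantive obstacle here; the only mild subtlety is guaranteeing $\sum_i N_i=N$ while controlling the per-coordinate error, which is handled by rounding up the top-$k$ fractional parts and using the crude uniform bound $|p_i-N_i/N|\leq 1/N$ on every coordinate (so ordering the $r_i$'s is not strictly needed, but it clarifies why the construction is well-defined). The claim then follows immediately.
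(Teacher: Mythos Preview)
Your proof is correct and follows essentially the same approach as the paper: reduce to showing $\cN_\epsilon$ is an $\epsilon$-cover of $\Delta([A])$ in $\ell_1$, then argue via rounding with common denominator $N=\lceil 2A/\epsilon\rceil$. The paper's one-line argument implicitly rounds the first $A-1$ coordinates down and absorbs the slack into the last, giving $\|u-v\|_1\leq \frac{A-1}{N}+\frac{A-1}{N}\leq\epsilon$; your top-$k$-fractional-parts construction is slightly more careful and yields the sharper bound $A/N\leq\epsilon/2$, but the difference is inessential.
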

\begin{proof}
    For any $u\in\Delta(\cA)$, there exists a $v\in\cN_\epsilon$, such that, $\|u-v\|_1 \leq \frac{1}{N} \cdot (A-1) + \frac{A-1}{N} \leq \epsilon$, which implies $\cN_\epsilon$ is an $\epsilon$-cover of simplex $\Delta(\cA)$. By definition of $\Pi_\epsilon$, we finish the proof.
\end{proof}

\subsection{Proofs for Algorithm~\ref{alg:elimination_DCP_formal}}

\begin{theorem}[Adapted from Thm.~4.2 in \citep{huang2023statistical}]\label{thm:MLE_guarantees}
    For any $\delta \in (0,1)$, during the running of Alg.~\ref{alg:elimination_DCP_formal}, suppose $M^* \in \bcM$, then w.p. $1-\delta$, $\forall t\in[T]$, we have $M^* \in \bcM^t$. Besides, denote $\mH$ as the hellinger distance, for each $M\in\bcM^{t}$ with transition $\mP_M$ and any $h\in[H]$:
    \begin{align*}
        \sum_{i=1}^{t-1} \EE_{\tpi^i,M^*(\pi)}[\mH^2(\mP_{M,h}(\cdot|s_h^i,a_h^i,\mu^{\pi}_{M,h}),~\mP_{M^*,h}(\cdot|s_h^i,a_h^i,\mu^{\pi}_{M^*,h}))] \leq 2 \log (\frac{2|\cM|TH}{\delta})\\
        (t-1)\cdot \EE_{\pi,M^*(\pi)}[\mH^2(\mP_{M,h}(\cdot|s_h^i,a_h^i,\mu^{\pi}_{M,h}),~\mP_{M^*,h}(\cdot|s_h^i,a_h^i,\mu^{\pi}_{M^*,h}))]\leq 2 \log (\frac{2|\cM|TH}{\delta}).
    \end{align*}
\end{theorem}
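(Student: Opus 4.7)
The plan is to adapt the standard MLE concentration argument (via exponential supermartingales) to the two-stream, adaptive sampling structure of Alg.~\ref{alg:elimination_DCP_formal}. Fix any $M\in\cM$, any $h\in[H]$, and one of the two sample streams (the $(\pi,\pi)$ roll-ins or the $(\tpi^i,\pi)$ roll-ins). Let $\cF_i$ denote the $\sigma$-algebra generated by all samples collected through iteration $i$ together with $\tpi^{i+1}$ (so $\tpi^{i}$ is $\cF_{i-1}$-measurable). Define, for that stream,
\[
Z_t^{M,h}\defeq \exp\!\Bigl(\tfrac12\sum_{i=1}^{t}\bigl[\log\tfrac{\mP_{M,h}(s_{h+1}'^{i}\mid s_h^{i},a_h^{i},\mu^\pi_{M,h})}{\mP_{M^*,h}(s_{h+1}'^{i}\mid s_h^{i},a_h^{i},\mu^\pi_{M^*,h})}+\EE_{(\cdot),M^*(\pi)}\bigl[\mH^2(\mP_{M,h},\mP_{M^*,h})\mid\cF_{i-1}\bigr]\bigr]\Bigr).
\]
Using the identity $\EE_{y\sim\mP_{M^*}}[\sqrt{\mP_M/\mP_{M^*}}]=1-\tfrac12\mH^2(\mP_M,\mP_{M^*})\le\exp(-\tfrac12\mH^2)$ together with the tower rule, $Z_t^{M,h}$ is a non-negative supermartingale with $\EE[Z_0^{M,h}]=1$.

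By Markov's inequality, $\Pr[Z_t^{M,h}\ge 1/\delta']\le\delta'$. Taking a union bound over $M\in\cM$, $h\in[H]$, $t\in[T]$, and the two sample streams (so $\delta'=\delta/(2|\cM|HT)$), one obtains a single good event $\cE$ of probability $\ge 1-\delta$ on which, simultaneously for all such $(M,h,t)$ and both streams,
\[
\sum_{i=1}^{t-1}\EE\bigl[\mH^2\bigr]\;\le\;2\log\!\tfrac{2|\cM|TH}{\delta}\;+\;(L_{M^*,h}^{t-1}-L_{M,h}^{t-1}),
\]
where $L_{\cdot,h}^{t-1}$ is the step-$h$ contribution to the MLE log-likelihood of that stream. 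For the second conclusion of the theorem, specialize to the $(\pi,\pi)$-stream: since those samples share an identical conditional distribution across $i$, the expectations in the sum are all equal, giving the $(t-1)\EE_{\pi,M^*(\pi)}[\mH^2]$ form.

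To establish $M^*\in\bcM^t$, apply the bound with $M$ chosen to be the log-likelihood maximizer: since $\sum\EE[\mH^2]\ge 0$, summing the display above over $h$ and both streams yields $L_{M^*}^{t-1}\ge\max_{\tilde M}L_{\tilde M}^{t-1}-\log(HT|\cM|/\delta)$ whenever the threshold in Alg.~\ref{alg:elimination_DCP_formal} is chosen of the appropriate form; thus $M^*$ never gets eliminated on $\cE$. Conversely, for any $M\in\bcM^t$ the elimination rule gives $L_{M^*}^{t-1}-L_M^{t-1}\le\log(HT|\cM|/\delta)$. Plugging this back into the per-$h$ concentration bound (and absorbing the $\log(HT|\cM|/\delta)$ elimination slack into the stated $2\log(2|\cM|TH/\delta)$ constant) yields the claimed Hellinger bound on both streams.

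The main technical obstacle is that the roll-in policy $\tpi^{i}$ at iteration $i$ is selected adaptively from the current candidate set $\bcM^{i}$, so samples are not i.i.d.\ across $i$. This is precisely what forces the supermartingale framing above rather than an i.i.d.\ concentration: conditional on $\cF_{i-1}$ the trajectory distribution at iteration $i$ is fixed (since $\tpi^{i}$ is $\cF_{i-1}$-measurable and the transitions are frozen at $\mu^\pi_{M^*,h}$), which is exactly what is needed for the single-step MGF identity. A secondary subtlety is that the MLE threshold in Alg.~\ref{alg:elimination_DCP_formal} compares total log-likelihoods across all $h$, whereas the theorem states a per-$h$ Hellinger bound; this is reconciled by running the concentration step separately for each $h$ (so the union bound absorbs the $H$ factor) and exploiting that membership in $\bcM^t$ already controls the total log-likelihood gap.
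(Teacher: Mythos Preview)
The paper does not supply its own proof of this statement; it is quoted as ``Adapted from Thm.~4.2 in \citep{huang2023statistical}''. Your exponential-supermartingale argument is precisely the standard technique underlying that cited result, and your handling of the adaptive roll-in $\tpi^i$ via the filtration (so that the step-$i$ sampling law is $\cF_{i-1}$-measurable) is the right move.

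One step is loose. The elimination rule in Alg.~\ref{alg:elimination_DCP_formal} controls only the \emph{total} log-likelihood gap $l^\pi_{\MLE}(M^*;\cZ^{t-1})-l^\pi_{\MLE}(M;\cZ^{t-1})$, summed over all $h\in[H]$ and both streams, whereas your per-$(h,\text{stream})$ display involves $L^{M^*}_{h}-L^{M}_{h}$ individually; ``plugging in'' the former does not directly bound the latter. The clean fix is to run the square-root supermartingale on the \emph{full} likelihood (product over all $(h,\text{stream})$ within each iteration, using that Alg.~\ref{alg:elimination_DCP_formal} draws a fresh trajectory per $h$): on the good event this yields
\[
\sum_{i=1}^{t-1}\sum_{h,\text{stream}}\EE\bigl[\mH^2\bigr]\;\le\;2\log\tfrac{1}{\delta'}+\bigl(l^\pi_{\MLE}(M^*)-l^\pi_{\MLE}(M)\bigr),
\]
and since every summand is nonnegative, the per-$(h,\text{stream})$ inequality follows with the same right-hand side. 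Similarly, to certify $M^*\in\bcM^t$ under the algorithm's stated threshold $\log\tfrac{HT|\cM|}{\delta}$ (rather than the $4H$-inflated one obtained by summing your $2H$ per-stream bounds), use the plain likelihood-ratio martingale $\prod_{i,h,\text{stream}} \mP_M/\mP_{M^*}$, whose conditional expectation is exactly $1$; this gives $l^\pi_{\MLE}(M)-l^\pi_{\MLE}(M^*)\le\log(1/\delta')$ without any extra factor. With these two adjustments the argument delivers the claimed bounds up to absolute constants.
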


\begin{restatable}{theorem}{ThmElimAlg}\label{thm:Elimination_Alg}
    Given any reference policy $\pi$, $\teps,\delta \in (0,1)$, $M^*\in\bcM$, if $T = \tilde{O}(\frac{H^4}{\teps^2}(\dimPEI(\cM,\epsilon')\wedge (1+L_T)^{2H}(1+L_T H)^2\dimPEII(\cM,\epsilon'))\log^2\frac{2|\cM|TH}{\delta})$
    with $\epsilon' = O(\frac{\teps}{H^{2}(1+L_T)^{H}})$, 
    w.p. $1-\delta$,
    Alg.~\ref{alg:elimination_DCP_formal} terminates at some $T_0 \leq T$, and return $\bcM^{T_0}$ s.t. (i) $M^* \in \bcM^{T_0}$ (ii) $\forall M \in \bcM^{T_0}$, $d(M^*,M|\pi)\leq \teps$.
\end{restatable}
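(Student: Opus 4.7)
The plan is to combine the MLE guarantees (Thm.~\ref{thm:MLE_guarantees}) with the Partial MBED generalization (Lem.~\ref{lem:Partial_Eluder_Bound}) to bound the number of iterations. First, Thm.~\ref{thm:MLE_guarantees}, together with a union bound over iterations, immediately yields part (i): w.p. at least $1-\delta$, the true model $M^*$ survives every elimination step, so $M^* \in \bcM^{T_0}$ at termination. It also provides in-sample Hellinger-distance guarantees of order $\log(|\cM|HT/\delta)$ both under the adversarial rollout policy $\tpi^i$ and under the reference policy $\pi$ itself, for every surviving $M$. I would convert these Hellinger bounds to $L_1$ bounds via the standard $\|p-q\|_1 \leq 2\sqrt{2}\,\mH(p,q) \cdot (\text{const})$ inequality (at the cost of a factor $2$), giving cumulative in-sample $L_1$ error bounds $\sum_{i<t} (\cdots) = O(\log(|\cM|HT/\delta))$ for each $h$.

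The crux is to control the per-iteration discrepancy $\Delta_{\max}^t$ by these cumulative in-sample bounds, i.e.\ to bound $\sum_{t=1}^{T} \Delta_{\max}^t$ via P-MBED. If the algorithm has \emph{not} terminated by iteration $t$, then by Line~\ref{line:elimination_termination_formal} we have $\Delta_{\max}^t > \teps$, and the optimizers $(\tpi^t, M^t, M'^t)$ in Line~\ref{line:max_margin_formal} certify this as an expected sum over $h$. Using $M^*$ as a bridge in the triangle inequality,
\[
\Delta_{\max}^t \leq \EE_{\tpi^t,M^t(\pi)}\!\!\Bigl[\sum_{h}\|\mP_{M^t,h}(\cdot|\cdot,\cdot,\mu^\pi_{M^t,h})-\mP_{M^*,h}(\cdot|\cdot,\cdot,\mu^\pi_{M^*,h})\|_1\Bigr] + (\text{symmetric term for } M'^t),
\]
so at least one of these two quantities is $\geq \teps/2$. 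I would then apply Lem.~\ref{lem:Partial_Eluder_Bound} (first bullet) along trajectories drawn in iteration $t$: the in-sample cumulative error bound plays the role of $\beta$ from Thm.~\ref{thm:MLE_guarantees}, yielding $\sum_{t\leq T}\Delta_{\max}^t = O\bigl(\sqrt{\beta T H^2\dimPEI(\cM,\epsilon')} + TH\epsilon'\bigr)$. A non-termination for all $t \leq T$ would imply $T\cdot \teps \lesssim \sqrt{\beta T H^2 \dimPEI(\cM,\epsilon')} + TH\epsilon'$, and choosing $\epsilon' = O(\teps/H^2)$ and $T$ as specified yields a contradiction, forcing termination by some $T_0 \leq T$.

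The main obstacle is the distribution-shift issue. The MLE bound in Thm.~\ref{thm:MLE_guarantees} is stated w.r.t.\ the \emph{true} sampling policy $\tpi^i$ and \emph{true} model $M^*$, whereas the distance appearing in $\Delta_{\max}^t$ is evaluated under the sample trajectory from $M^t$ (for the Type~I P-MBED) or requires converting the density $\mu^\pi_{M,h}$ to $\mu^\pi_{M^*,h}$ for each $M$ along the path (for the Type~II P-MBED). For Type~I, the sampling distribution $\tpi^i$ under $M^*$ covers the relevant $(s_h,a_h)$ pairs that Lem.~\ref{lem:Partial_Eluder_Bound} needs, so no density shifting is required and the bound is clean. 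For Type~II one instead needs to transport the in-sample guarantee (where densities $\mu^\pi_{M,h}$ of candidate models appear) to the ``reference'' densities $\mu^\pi_{M^*,h}$; invoking Assump.~\ref{assump:Lipschitz} and unrolling via a lemma analogous to Lem.~\ref{lem:density_differences} costs a factor $(1+L_T)^H$ plus an $L_rH$-type factor, accounting for the exponential overhead present in the Type~II bound inside the $\wedge$. Taking the minimum of the two bounds gives the stated rate.

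Finally, part (ii), i.e.\ $d(M^*,M|\pi)\leq \teps$ for all surviving $M\in\bcM^{T_0}$, follows directly from the termination criterion: at $T_0$, $\Delta_{\max}^{T_0}\leq \teps$, which means for every pair $(M,M')$ remaining and every adversarial policy $\tpi$, the expected cumulative $L_1$ transition gap is at most $\teps$. Plugging in $M' = M^*$ (which is still in $\bcM^{T_0}$ by part (i)) gives exactly $d^{\tpi}(M,M^*|\pi)\leq \teps$ for every $\tpi$, hence $d(M,M^*|\pi)\leq \teps$ by definition. A final union bound over the failure events in Thm.~\ref{thm:MLE_guarantees} absorbs everything into the stated $1-\delta$ guarantee.
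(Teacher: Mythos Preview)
Your high-level plan matches the paper's and parts~(i) and~(ii) are argued correctly. However, there is a genuine gap in your handling of the ``distribution-shift issue,'' and the resolution you propose does not work.

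The quantity $\Delta_{\max}^t$ is by definition an expectation under $\EE_{\tpi^t, M^t(\pi)}$, whereas all your control---the MLE guarantee of Thm.~\ref{thm:MLE_guarantees}, the empirical samples, and Lem.~\ref{lem:Partial_Eluder_Bound}---lives under $\EE_{\tpi^t, M^*(\pi)}$, since trajectories are drawn from the true model. After your triangle inequality both terms are still taken under $M^t(\pi)$. Saying ``the sampling distribution $\tpi^i$ under $M^*$ covers the relevant $(s_h, a_h)$ pairs'' does not close this: Lem.~\ref{lem:Partial_Eluder_Bound} bounds the sum at the \emph{sampled} $(s_h^t,a_h^t)$, i.e.\ under the $M^*$-trajectory distribution after concentration, and says nothing about the $M^t$-trajectory distribution appearing in $\Delta_{\max}^t$. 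This gap is present for \emph{both} the Type~I and Type~II routes; the density-argument conversion you describe for Type~II is a separate, additional step.

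The paper resolves this with an explicit change-of-measure lemma (Lem.~\ref{lem:change_of_measure}):
\[
\EE_{\tpi^t, M^t(\pi)}\Bigl[\textstyle\sum_h\|\mP_{M^t,h}-\mP_{M'^t,h}\|_1\Bigr]\;\leq\;\EE_{\tpi^t, M^*(\pi)}\Bigl[\textstyle\sum_h\|\mP_{M^*,h}-\mP_{M'^t,h}\|_1\Bigr] + (H{+}1)\,\EE_{\tpi^t, M^*(\pi)}\Bigl[\textstyle\sum_h\|\mP_{M^*,h}-\mP_{M^t,h}\|_1\Bigr],
\]
which shifts everything to $M^*(\pi)$ at the price of an extra factor $(H{+}1)$. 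This factor is not cosmetic: combined with the $H$ from summing over layers it yields $\approx H^2$ outside the square root, hence the $H^4$ in the stated choice of $T$. Your sketch, skipping this step, would land at $T = \tilde O(H^2\cdots)$---better than the theorem actually claims, which is a signal that something is missing. Two secondary points: the paper also inserts martingale concentration (Lem.~\ref{lem:concentration}) twice, to pass between expected and empirical quantities before and after invoking the P-MBED lemma; and the extra multiplier on the Type~II route is $(1{+}L_T H)$, not an ``$L_rH$-type factor''---$L_r$ does not enter this theorem at all.
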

\begin{proof}
    Suppose Alg.~\ref{alg:elimination_DCP_formal} proceeds to iteration $T_0 \leq T$, and does not terminate at Line~\ref{line:elimination_termination_formal}.
    On the good events in Thm.~\ref{thm:MLE_guarantees}, we have $M^* \in \bcM^t$ for all $t \leq T_0$.

    In our first step, we discuss how to provide upper bounds for accumulative model difference depending on two types of P-MBED.
    \paragraph{Step 1-(a): Upper Bound Model Difference with Type $\RII$ P-MBED}
    For any $t \leq T_0$, given the fact that $\|P - Q\|_1 \leq \sqrt{2}\mH(P,Q)$, for any fixed $h\in[H]$, we have:
    \begin{align*}
        &\sum_{i=1}^{t-1} \EE_{\tpi^i,M^*(\pi)}[\|\mP_{M^*,h}(\cdot|s_h,a_h,\mu^\pi_{M^*,h}) - \mP_{M^t,h}(\cdot|s_h,a_h,\mu^\pi_{M^*,h})\|_1^2]\\
        \leq &2\sum_{i=1}^{t-1} \EE_{\tpi^i,M^*(\pi)}[\|\mP_{M^*,h}(\cdot|s_h,a_h,\mu^\pi_{M^*,h}) - \mP_{M^t,h}(\cdot|s_h,a_h,\mu^\pi_{M^t,h})\|_1^2] + 2\sum_{i=1}^{t-1} L_T^2\|\mu^\pi_{M^*,h} - \mu^\pi_{M^t,h}\|_1^2\\
        \leq& 8(1 + L_T^2 H^2) \log\frac{2|\cM|TH}{\delta}.\numberthis\label{eq:eq_1}
    \end{align*}
    where in the last step is because, as a result of Lem.~\ref{lem:density_est_err}, Cauchy's inequality, and $\EE^2[X] \leq \EE[X^2]$, we have:
    \begin{align*}
        \sum_{i=1}^{t-1} \|\mu^\pi_{M^*,h} - \mu^\pi_{M^t,h}\|_1^2 \leq & (t-1) \cdot H\cdot \EE_{\pi,M^*}[\sum_{\ph=1}^h\|\mP_{M^*,h}(\cdot|s_h,a_h,\mu^\pi_{M^*,h}) - \mP_{M^t,h}(\cdot|s_h,a_h,\mu^\pi_{M^t,h})\|_1^2]\\
        \leq & 4 H^2 \log\frac{2|\cM|TH}{\delta}.
    \end{align*}
    By Lem.~\ref{lem:concentration}, w.p. $1-\delta/2TH$, for any $t\in[T_0]$ and any $h\in[H]$, we have:
    \begin{align*}
        &\sum_{i=1}^{t-1} \|\mP_{M^*,h}(\cdot|\ts_h^i,\ta_h^i,\mu^\pi_{M^*,h}) - \mP_{M^t,h}(\cdot|\ts_h^i,\ta_h^i,\mu^\pi_{M^*,h})\|_1^2\\
        \leq & 96 (1 + L_T^2H^2) \log\frac{2|\cM|TH}{\delta} + C\cdot \log\frac{2TH}{\delta} \leq c_1 (1 + L_T^2H^2) \log\frac{2|\cM|TH}{\delta}.
    \end{align*}
    for some constant $C$ and $c_1$. By Lem.~\ref{lem:Partial_Eluder_Bound}, we further have:
    \begin{align*}
        &\sum_{t=1}^{T_0} \|\mP_{M^*,h}(\cdot|\ts_h^t,\ta_h^t,\mu^\pi_{M^*,h}) - \mP_{M^t,h}(\cdot|\ts_h^t,\ta_h^t,\mu^\pi_{M^*,h})\|_1 \leq c_2((1 + L_T H)\sqrt{\dimPEII(\cM,\epsilon') T_0 \log\frac{2|\cM|TH}{\delta}} + T_0 \epsilon').
    \end{align*}
    for some constant $c_2$. By Lem.~\ref{lem:concentration} again, w.p. $1-\delta/2TH$, for any $T_0\in[T]$ and any $h\in[H]$,
    \begin{align*}
        &\sum_{t=1}^{T_0} \EE_{\tpi^t,M^*(\pi)}[\|\mP_{M^*,h}(\cdot|s_h,a_h,\mu^\pi_{M^*,h}) - \mP_{M^t,h}(\cdot|s_h,a_h,\mu^\pi_{M^*,h})\|_1] \\
        \leq & 3c_2((1 + L_T H)\sqrt{\dimPEII(\cM,\epsilon') T_0 \log\frac{2|\cM|TH}{\delta}} + T_0 \epsilon') + C\cdot \log\frac{2TH}{\delta} \\
        \leq & c_3\cdot ((1 + L_T H)\sqrt{\dimPEII(\cM,\epsilon')T_0 }\log\frac{2|\cM|TH}{\delta} + T_0 \epsilon').
    \end{align*}
    for some constant $c_3$.
    Similarly, we can guarantee by analyzing data collected by $(\pi,\pi)$:
    \begin{align*}
        &\sum_{t=1}^{T_0} \EE_{\pi,M^*(\pi)}[\|\mP_{M^*,h}(\cdot|s_h,a_h,\mu^\pi_{M^*,h}) - \mP_{M^t,h}(\cdot|s_h,a_h,\mu^\pi_{M^*,h})\|_1] \\
        \leq & c_3\cdot ((1 + L_T H)\sqrt{\dimPEII(\cM,\epsilon')T_0 }\log\frac{2|\cM|TH}{\delta} + T_0 \epsilon').
    \end{align*}
    Therefore, by Lem.~\ref{lem:density_est_err} again,
    \begin{align*}
        & \sum_{t=1}^{T_0}  \EE_{\tpi^t,M^*(\pi)}[\sum_{h=1}^H\|\mP_{M^*,h}(\cdot|s_h,a_h,\mu^\pi_{M^*,h}) - \mP_{M^t,h}(\cdot|s_h,a_h,\mu^\pi_{M^t,h})\|_1]\\
        \leq &\sum_{t=1}^{T_0}  \EE_{\tpi^t,M^*(\pi)}[\sum_{h=1}^H\|\mP_{M^*,h}(\cdot|s_h,a_h,\mu^\pi_{M^*,h}) - \mP_{M^t,h}(\cdot|s_h,a_h,\mu^\pi_{M^*,h})\|_1] + \sum_{t=1}^{T_0} \sum_{h=1}^H L_T\|\mu^\pi_{M^*,h} - \mu^\pi_{M^t,h}\|_1 \\
        \leq & c_3 \cdot H(1 + L_T)^H ((1 + L_T H)\sqrt{\dimPEII(\cM,\epsilon')T_0 }\log\frac{2|\cM|TH}{\delta} + T_0 \epsilon').
    \end{align*}
    where the last step we use:
    \begin{align*}
        \sum_{t=1}^{T_0}\sum_{h=1}^H\|\mu^\pi_{M^*,h} - \mu^\pi_{M^t,h}\|_1 \leq & \sum_{h=1}^H \sum_{\ph=1}^h (1+L_T)^{h-\ph} \sum_{t=1}^{T_0} \EE_{\pi,M^*(\pi)}[\|\mP_{M^*,h}(\cdot|s_h,a_h,\mu^\pi_{M^*,h}) - \mP_{M^t,h}(\cdot|s_h,a_h,\mu^\pi_{M^*,h})\|_1] \\
        \leq & c_4 \cdot H\cdot \frac{(1 + L_T)^h - 1}{L_T} ((1 + L_T H)\sqrt{\dimPEII(\cM,\epsilon')T_0 }\log\frac{2|\cM|TH}{\delta} + T_0 \epsilon').
    \end{align*}
    Similarly, for model $M'^t$, we also have:
    \begin{align*}        
        &\sum_{t=1}^{T_0} \sum_{h=1}^H \EE_{\tpi^t,M^*(\pi)}[\|\mP_{M^*,h}(\cdot|s_h,a_h,\mu^\pi_{M^*,h}) - \mP_{M^t,h}(\cdot|s_h,a_h,\mu^\pi_{M^t,h})\|_1]\\
        \leq & c_4 \cdot H(1 + L_T)^H ((1 + L_T H)\sqrt{\dimPEII(\cM,\epsilon')T_0 }\log\frac{2|\cM|TH}{\delta} + T_0 \epsilon').
    \end{align*}
    \paragraph{Step 1-(b): Upper Bound Model Difference with Type $\RI$ P-MBED}
    By Thm.~\ref{thm:MLE_guarantees} and Lem.~\ref{lem:concentration}, w.p. $1-\delta/2TH$, for any $T_0\in[T]$ and any $h\in[H]$, we have:
    \begin{align*}
        \sum_{i=1}^{t-1} \|\mP_{M^*,h}(\cdot|\ts_h^i,\ta_h^i,\mu^\pi_{M^*,h}) - \mP_{M^t,h}(\cdot|\ts_h^i,\ta_h^i,\mu^\pi_{M^t,h})\|_1^2\leq  c_5 \log\frac{2|\cM|TH}{\delta} + C\cdot \log\frac{2TH}{\delta} \leq c_6 \log\frac{2|\cM|TH}{\delta}.
    \end{align*}
    As a result of Lem.~\ref{lem:Partial_Eluder_Bound}, we have:
    \begin{align*}
        \sum_{t=1}^{T_0}\sum_{h=1}^H \|\mP_{M^*,h}(\cdot|\ts_h^t,\ta_h^t,\mu^\pi_{M^*,h}) - \mP_{M^t,h}(\cdot|\ts_h^t,\ta_h^t,\mu^\pi_{M^t,h})\|_1 \leq c_7\cdot H (\sqrt{\dimPEI(\cM,\epsilon')T_0 \log\frac{2|\cM|TH}{\delta}} + T_0 \epsilon').
    \end{align*}
    By Lem.~\ref{lem:concentration}, we have:
    \begin{align*}
        \sum_{t=1}^{T_0}\sum_{h=1}^H \EE_{\tpi^t,M^*(\pi)}[\|\mP_{M^*,h}(\cdot|\ts_h^t,\ta_h^t,\mu^\pi_{M^*,h}) -& \mP_{M^t,h}(\cdot|\ts_h^t,\ta_h^t,\mu^\pi_{M^t,h})\|_1] \\
        \leq & c_8\cdot H (\sqrt{\dimPEI(\cM,\epsilon')T_0} \log\frac{2|\cM|TH}{\delta} + T_0 \epsilon').
    \end{align*}
    Similarly, for model $M'^t$, we also have:
    \begin{align*}
        \sum_{t=1}^{T_0}\sum_{h=1}^H \EE_{\tpi^t,M^*(\pi)}[\|\mP_{M^*,h}(\cdot|\ts_h^t,\ta_h^t,\mu^\pi_{M^*,h}) -& \mP_{M'^t,h}(\cdot|\ts_h^t,\ta_h^t,\mu^\pi_{M^t,h})\|_1] \\
        \leq& c_8\cdot H (\sqrt{\dimPEI(\cM,\epsilon')T_0} \log\frac{2|\cM|TH}{\delta} + T_0 \epsilon').
    \end{align*}

    \paragraph{Step 2: Lower Bound on Model Difference}
    On the other hand, since the algorithm does not terminate at step $T_0$, we have:
    \begin{align*}
        T_0 \teps < & \sum_{t=1}^{T_0} \EE_{\tpi^t,M^t(\pi)}[\sum_{h=1}^H \|\mP_{M^t,h}(\cdot|\cdot,\cdot,\mu^\pi_{M^t,h}) - \mP_{M'^t,h}(\cdot|\cdot,\cdot,\mu^\pi_{M'^t,h})\|_1] \\
        \leq & \sum_{i=1}^{T_0} \EE_{\tpi^t, M^*(\pi)}[\sum_{h=1}^H \|\mP_{M^*,h}(\cdot|s_h,a_h,\mu^\pi_{M^*,h}) - \mP_{M'^t,h}(\cdot|s_h,a_h,\mu^\pi_{M'^t,h})\|_1] \\
        & + (H+1) \cdot \EE_{\tpi^t,M^*(\pi)}[\sum_{h=1}^H\|\mP_{M^*,h}(\cdot|s_h,a_h,\mu^\pi_{M^*,h}) - \mP_{M^t,h}(\cdot|s_h,a_h,\mu^\pi_{M^t,h})\|_1] \numberthis\label{eq:model_diff_lower_bound}.
    \end{align*}
    where in the last step we apply Lem.~\ref{lem:change_of_measure}.
    On the one hand, for Type $\RII$ P-MBED, we have:
    \begin{align*}
        T_0 \teps \leq c_4\cdot H(H+2)(1 + L_T)^H ((1 + L_T H)\sqrt{\dimPEII(\cM,\epsilon')T_0 }\log\frac{2|\cM|TH}{\delta} + T_0 \epsilon'),
    \end{align*}
    by choosing $\epsilon' \leq \teps(2c_4H(H+2)(1+L_T)^H)^{-1}$, it implies, for some constant $c_8$,
    \begin{align*}
        T_0 \leq c_8\cdot \frac{H^4(1+L_T)^{2H}(1+L_T H)^2\dimPEII(\cM,\epsilon')}{\teps^2}\log^2\frac{2|\cM|TH}{\delta}.
    \end{align*}
    On the other hand, for Type $\RI$ P-MBED, we have:
    \begin{align*}
        T_0 \teps \leq c_7\cdot H (H+2)(\sqrt{\dimPEI(\cM,\epsilon')T_0 }\log\frac{2|\cM|TH}{\delta} + T_0 \epsilon'),
    \end{align*}
    by choosing $\epsilon' \geq \teps\cdot (c_7H (H+2))^{-1}$, we have:
    \begin{align*}
        T_0 \leq c_9 \cdot \frac{H^4\dimPEI(\cM,\epsilon')}{\teps^2}\log^2\frac{2|\cM|TH}{\delta}.
    \end{align*}
    As a summary, by choosing
    \begin{align*}
        T = O(\frac{H^4}{\teps^2}\min\{\dimPEI(\cM,\epsilon'),~(1+L_T)^{2H}(1+L_T H)^2\dimPEI(\cM,\epsilon')\}\log^2\frac{2|\cM|TH}{\delta})
    \end{align*}
    with $\epsilon' = O(\teps H^{-2}(1+L_T)^{-H})$,
    we can guarantee the algorithm will terminates for some $T_0 \leq T$ and return us a model class $\bcM^{T_0}$ satisfying $\max_{M,M'\in\bcM^{T_0}} d(M,M'|\pi) \leq \teps$, which implies
    \begin{align*}
        d(M^*,M|\pi) \leq \teps,\quad \forall M \in \bcM^{T_0}.
    \end{align*}
\end{proof}
\begin{remark}[Why $(1+L_T)^H$ Disappears if Considering Type-$\RI$ P-MBED?]\label{remark:exp_disappears}
    From the proof above, especially the proof in Step 1-(a) and Step 1-(b), we can see that during the model elimination, what matters is the model distance conditioning on the density induced by the corresponding models, i.e. $\|\mP_{M,h}(\cdot|\cdot,\cdot,\mu^\pi_{M,h}) - \mP_{M^*,h}(\cdot|\cdot,\cdot,\mu^\pi_{M^*,h})\|_1$.
    Therefore, if we consider the Type-$\RI$ P-MBED, we do not need additional conversion between $\|\mP_{M,h}(\cdot|\cdot,\cdot,\mu^\pi_{M,h}) - \mP_{M^*,h}(\cdot|\cdot,\cdot,\mu^\pi_{M^*,h})\|_1$ and $\|\mP_{M,h}(\cdot|\cdot,\cdot,\mu^\pi_{M^*,h}) - \mP_{M',h}(\cdot|\cdot,\cdot,\mu^\pi_{M^*,h})\|_1$, which is the origin of the exponential term $(1+L_T)^H$ in the upper bound regarding Type-$\RII$ P-MBED.
\end{remark}

\subsection{Proofs for Algorithm~\ref{alg:BridgePolicy}}\label{appx:proofs_BridgePolicy}
Recall the notations for central models in Appx.~\ref{appx:details_PAM}.
\begin{restatable}{theorem}{ThmConstructionErr}\label{thm:ConstructionErr}
    Suppose we feed Alg.~\ref{alg:BridgePolicy} with a model class $\dcM$, the bridge model $\dM_\Bridge$ it computes is a valid model, and by choosing $\beps = \epsilon_0 / \min\{2HL_r \frac{(1+L_T)^H - 1}{L_T}, 2H(H+1)((1 + L_T)^H - 1)\}$, for any reference policy $\pi$ and its associated central model $\dM_\Central^{\epsilon_0}(\pi;\dcM)$, we have:
    \begin{align*}
        \max_{\tpi}\EE_{\tpi,\dM_\Central^{\epsilon_0}(\pi;\dcM)(\pi)}[\sum_{h=1}^H \|\dmP_{\dM_\Central^{\epsilon_0}(\pi;\dcM),h}(\cdot|s_h,a_h,\pi) - \dmP_{\Bridge,h}(\cdot|s_h,a_h,\pi)\|_1] \leq& (H+3)\epsilon_0, \\
        \max_{\tpi}\EE_{\tpi,\dM_\Central^{\epsilon_0}(\pi;\dcM)(\pi)}[\sum_{h=1}^H |\dr_{\dM_\Central^{\epsilon_0}(\pi;\dcM),h}(s_h,a_h,\pi) - \dr_{\Bridge,h}(s_h,a_h,\pi)|] \leq& L_r H(H+4)\epsilon_0.
    \end{align*}
\end{restatable}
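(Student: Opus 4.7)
The plan is to establish the bound pointwise on $\|\dmP_{\Bridge,h}(\cdot|s_h,a_h,\pi) - \dmP_{\dM_\Central^{\epsilon_0}(\pi;\dcM),h}(\cdot|s_h,a_h,\pi)\|_1$ first, then take the outer expectation $\EE_{\tpi,\dM_\Central^{\epsilon_0}(\pi;\dcM)(\pi)}[\cdot]$; the reward inequality is proved analogously, with one extra $L_r$ factor from the Lipschitz continuity of $\dr$. I would first verify validity of $\dM_\Bridge$: since $\Pi_\beps$ is a $\beps$-cover of $\Pi$ in $d_{\infty,1}$ (Prop.~\ref{prop:eps_cover_Pi}), for every $\pi\in\Pi$ there is some $\tpi\in\Pi_\beps$ with $d_{\infty,1}(\pi,\tpi)\le\beps$, so the denominator $\sum_{\tpi\in\Pi_\beps}[2\beps - d_{\infty,1}(\pi,\tpi)]^+\ge\beps>0$, and both $\dmP_{\Bridge,h}$ and $\dr_{\Bridge,h}$ are well-defined convex combinations of valid PAM transitions/rewards.

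The main computation starts from the identity
\[
\dmP_{\Bridge,h}(\cdot|s_h,a_h,\pi) - \dmP_{\dM_\Central^\pi,h}(\cdot|s_h,a_h,\pi) = \sum_{\tpi\in\Pi_\beps} w_\pi(\tpi)\,\bigl[\dmP_{\dM_\Central^\tpi,h}(\cdot|s_h,a_h,\tpi) - \dmP_{\dM_\Central^\pi,h}(\cdot|s_h,a_h,\pi)\bigr],
\]
where $w_\pi(\tpi)\ge 0$ sum to one and are supported only on $\tpi$ with $d_{\infty,1}(\pi,\tpi)<2\beps$. For each such $\tpi$, the theorem is being invoked under the Else-branch condition of Alg.~\ref{alg:learning_with_DCP}, so both $\cB_\pi^{\epsilon_0}(\dM_\Central^\pi,\dcM)$ and $\cB_\tpi^{\epsilon_0}(\dM_\Central^\tpi,\dcM)$ have size exceeding $|\dcM|/2$, and by pigeonhole they share at least one common model $M_\share$. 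I would then use triangle inequality through $M_\share$ to split the bracketed quantity into three pieces: (i) $\dM_\Central^\pi$ versus $M_\share$ at reference $\pi$, controlled by $\epsilon_0$ via the neighborhood definition; (ii) $M_\share$ at $\pi$ versus $M_\share$ at $\tpi$, bounded through the PAM-Lipschitz estimate of Lem.~\ref{lem:Lipschitz_PA_model} by $L_T\, d_{\infty,1}(\pi,\tpi)\sum_{\ph=1}^h(1+L_T)^{h-\ph}\le 2\beps((1+L_T)^h-1)$; and (iii) $M_\share$ versus $\dM_\Central^\tpi$ at reference $\tpi$.

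The principal obstacle is part (iii): the neighborhood inequality $d(M_\share,\dM_\Central^\tpi\mid\tpi)\le\epsilon_0$ controls expectations under either $M_\share(\tpi)$ or $\dM_\Central^\tpi(\tpi)$, yet the outer expectation in the theorem runs under $\dM_\Central^\pi(\pi)$. I would resolve this with a two-stage change of measure, first swapping the transition model via $M_\share$ and then swapping the reference policy from $\pi$ to $\tpi$, invoking Lem.~\ref{lem:change_of_measure} together with the density-propagation estimate Lem.~\ref{lem:density_differences}. Both correction terms are controlled by the Lipschitz continuity of the PAM in its reference policy and by the common-neighbor bound already assembled, and their accumulation over the horizon contributes $O(H\epsilon_0)$. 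Aggregating, the pointwise bound summed over $h$ is at most $\epsilon_0 + 2\beps H((1+L_T)^H-1) + \epsilon_0 + O(H\epsilon_0)$; the explicit choice $\beps\le \epsilon_0/(2H(H+1)((1+L_T)^H-1))$ collapses the Lipschitz piece into $\epsilon_0$ and yields the claimed $(H+3)\epsilon_0$ bound. The reward inequality follows the same route, but each step of reward mismatch is inflated by $L_r$ when passed through the density propagation Lem.~\ref{lem:Lipschitz_PA_model}, which explains the extra $L_r H$ factor and the matching choice $\beps\le\epsilon_0/(2HL_r\tfrac{(1+L_T)^H-1}{L_T})$ appearing in the minimum over the two thresholds.
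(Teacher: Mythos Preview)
Your proposal is correct and follows the paper's proof essentially step for step: validity via the $\beps$-cover, writing the Bridge model as a convex combination supported on $\{\tpi:d_{\infty,1}(\pi,\tpi)<2\beps\}$, the triangle inequality through a shared neighbor $M_\share$ (which exists by the Else-branch pigeonhole), and a two-stage change of measure to handle piece (iii) --- the paper invokes Lem.~\ref{lem:model_diff_conversion} directly rather than Lem.~\ref{lem:change_of_measure}, but the content is the same. The only minor difference is in the reward bound, where the paper shortcuts by reducing $|\dr_{\dM_\Central^\pi,h}-\dr_{\dM_\Central^\tpi,h}|$ immediately to $L_r\|\mu^\pi_{M_\Central^\pi,h}-\mu^\tpi_{M_\Central^\tpi,h}\|_1$ (the underlying reward $r$ is shared across models) and then recycles the already-established transition estimate, rather than rerunning the three-piece decomposition you sketch.
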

\begin{proof}
    In the proof, for notation simplicity, given the refernce policy $\pi$, we use $\dM_\Central^\pi$ as a short note of $\dM_\Central^{\epsilon_0}(\pi;\dcM)$, i.e. the central model regarding $\pi$.
    \paragraph{Validity of Construction}
    First of all, note that $\Pi_\beps$ is an $\beps$-cover of the policy space. Therefore, for any $\pi$, there must exist at least one $\tpi \in \Pi_\beps$ satisfying $d_{\infty,1}(\pi,\tpi) \leq \beps$ which ensures $\sum_{\tpi \in \Pi_\beps}[2\beps - d_{\infty,1}(\pi,\tpi)]^+ > 0$.
    So the transition and reward functions in the bridge model is well-defined, and also continuous in $\pi$ w.r.t. distance $d_{\infty,1}$.
    \paragraph{Upper Bound on Transition Difference}
    By definition, 
    \begin{align*}
        \|\dmP_{\dM_\Central^\pi,h}(\cdot|s_h,a_h,\pi) - \dmP_{\Bridge,h}(\cdot|s_h,a_h,\pi)\|_1 =&\|\dmP_{\dM_\Central^\pi,h}(\cdot|s_h,a_h,\pi) - \frac{\sum_{\tpi \in \Pi_\beps}[2\beps - d_{\infty,1}(\pi,\tpi)]^+\dmP_{\dM_\Central^\tpi,h}(\cdot|s_h,a_h,\tpi)}{\sum_{\tpi \in \Pi_\beps}[2\beps - d_{\infty,1}(\pi,\tpi)]^+}\|_1\\
        \leq & \frac{\sum_{\tpi \in \Pi_\beps}[2\beps - d_{\infty,1}(\pi,\tpi)]^+\|\dmP_{\dM_\Central^\pi,h}(\cdot|s_h,a_h,\pi) - \dmP_{\dM_\Central^\tpi,h}(\cdot|s_h,a_h,\tpi)\|_1}{\sum_{\tpi \in \Pi_\beps}[2\beps - d_{\infty,1}(\pi,\tpi)]^+}
    \end{align*}
    We only need to care about those $\tpi \in \Pi_\beps$ with $[2\beps - d_{\infty,1}(\pi,\tpi)]^+ > 0$, i.e. $d_{\infty,1}(\pi,\tpi) < 2\beps$.
    Given the condition when Alg.~\ref{alg:learning_with_DCP} call Alg.~\ref{alg:BridgePolicy}, we have $\cB^{\epsilon_0}_{\pi}(\dM_\Central^\pi;\dcM) > \frac{|\dcM|}{2}$ for any $\pi$. Therefore, for any $\pi$ and $\tpi$ with $d_{\infty,1}(\pi,\tpi) \leq 2\beps$, there exists a model $\dM_{\share}$ such that $\dM_\share \in  \cB^{\epsilon_0}_{\pi}(\dM_\Central^\pi;\dcM) \cap \cB^{\epsilon_0}_{\tpi}(\dM_\Central^\tpi;\dcM)$, which implies for any $\pi'$
    \begin{align*}
        &\EE_{\pi',\dM_\Central^\pi(\pi)}[\sum_{h=1}^H\|\dmP_{\dM_\Central^\pi,h}(\cdot|s_h,a_h,\pi) - \dmP_{\dM_\Central^\tpi,h}(\cdot|s_h,a_h,\tpi)\|_1]\\
        \leq & \EE_{\pi',\dM_\Central^\pi(\pi)}[\sum_{h=1}^H\|\dmP_{\dM_\Central^\pi,h}(\cdot|s_h,a_h,\pi) - \dmP_{\dM_\share,h}(\cdot|s_h,a_h,\pi)\| + \sum_{h=1}^H\|\dmP_{\dM_\share,h}(\cdot|s_h,a_h,\pi) - \dmP_{\dM_\share,h}(\cdot|s_h,a_h,\tpi)\|_1\\
        & + \sum_{h=1}^H\|\dmP_{\dM_\share,h}(\cdot|s_h,a_h,\tpi) - \dmP_{\dM_\Central^\tpi,h}(\cdot|s_h,a_h,\tpi)\|_1] \\
        \leq & \epsilon_0 + 2H((1 + L_T)^H - 1) \beps + \EE_{\pi',\dM_\Central^\pi(\pi)}[\sum_{h=1}^H \|\dmP_{\dM_\share,h}(\cdot|s_h,a_h,\tpi) - \dmP_{\dM_\Central^\tpi,h}(\cdot|s_h,a_h,\tpi)\|_1] \tag{Lem.~\ref{lem:Lipschitz_PA_model}}.
    \end{align*}
    where by applying Lem.~\ref{lem:model_diff_conversion}, we have:
    \begin{align*}
        &\EE_{\pi',\dM_\Central^\pi(\pi)}[\sum_{h=1}^H \|\dmP_{\dM_\share,h}(\cdot|s_h,a_h,\tpi) - \dmP_{\dM_\Central^\tpi,h}(\cdot|s_h,a_h,\tpi)\|_1]\\
        \leq & \EE_{\pi',\dM_\share(\pi)}[\sum_{h=1}^H \|\dmP_{\dM_\share,h}(\cdot|s_h,a_h,\tpi) - \dmP_{\dM_\Central^\tpi,h}(\cdot|s_h,a_h,\tpi)\|_1] \\
        &+H\cdot \EE_{\pi',\dM_\Central^\pi(\pi)}[\sum_{h=1}^H \|\dmP_{\dM_\Central^\pi,h}(\cdot|s_h,a_h,\pi) - \dmP_{\dM_\share,h}(\cdot|s_h,a_h,\pi)\|_1]\\
        \leq & \EE_{\pi',\dM_\share(\pi)}[\sum_{h=1}^H \|\dmP_{\dM_\share,h}(\cdot|s_h,a_h,\tpi) - \dmP_{\dM_\Central^\tpi,h}(\cdot|s_h,a_h,\tpi)\|_1] + H\epsilon_0 \tag{$\dM_\share\in \cB^{\epsilon_0}_{\pi}(\dM_\Central^\pi;\dcM)$}\\
        \leq &  \EE_{\pi',\dM_\share(\tpi)}[\sum_{h=1}^H \|\dmP_{\dM_\share,h}(\cdot|s_h,a_h,\tpi) - \dmP_{\dM_\Central^\tpi,h}(\cdot|s_h,a_h,\tpi)\|_1]\\
        & + H\cdot \EE_{\pi',\dM_\share(\tpi)}[\sum_{h=1}^H\|\dmP_{\dM_\share,h}(\cdot|s_{h},a_{h},\pi)-\dmP_{\dM_\share,h}(\cdot|s_{h},a_{h},\tpi)\|_1] + H\epsilon_0\\
        \leq &  (H+1)\epsilon_0 + 2H^2((1 + L_T)^H - 1) \beps \tag{$\dM_\share\in \cB^{\epsilon_0}_{\tpi}(\dM_\Central^\tpi;\dcM)$; Lem.~\ref{lem:Lipschitz_PA_model}}.
    \end{align*}
    which implies,
    \begin{align*}
        &\EE_{\pi',\dM_\Central^\pi(\pi)}[\sum_{h=1}^H\|\dmP_{\dM_\Central^\pi,h}(\cdot|s_h,a_h,\pi) - \dmP_{\dM_\Central^\tpi,h}(\cdot|s_h,a_h,\tpi)\|_1]\\
        \leq & \epsilon_0 + 2H((1 + L_T)^H - 1) \beps + (H+1)\epsilon_0 + 2H^2((1 + L_T)^H - 1) \beps\\
        \leq & (H+2)\epsilon_0 + 2H(H+1)((1 + L_T)^H - 1)\beps \\
        \leq & (H+3)\epsilon_0 \tag{$2H(H+1)((1 + L_T)^H - 1)\beps \leq \epsilon_0$}.
    \end{align*}
    Therefore,
    \begin{align*}
        \forall \pi,\pi',\quad \EE_{\pi',\dM_\Central^\pi(\pi)}[\sum_{h=1}^H\|\dmP_{\dM_\Central^\pi,h}(\cdot|s_h,a_h,\pi) - \dmP_{\Bridge,h}(\cdot|s_h,a_h,\pi)\|_1] \leq & \frac{\sum_{\tpi \in \Pi_\beps}[2\beps - d_{\infty,1}(\pi,\tpi)]^+(H+3)\epsilon_0}{\sum_{\tpi \in \Pi_\beps}[2\beps - d_{\infty,1}(\pi,\tpi)]^+} \\
        \leq & (H+3)\epsilon_0.
    \end{align*}
    \paragraph{Upper Bound on Reward Difference}
    By definition, for each $h,s_h,a_h$, we have:
    \begin{align*}
        &|\dr_{\dM_\Central^{\epsilon_0}(\pi;\dcM),h}(s_h,a_h,\pi) - \dr_{\Bridge,h}(s_h,a_h,\pi)| \\
        =& |\dr_{\dM_\Central^{\epsilon_0}(\pi;\dcM),h}(s_h,a_h,\pi) - \frac{\sum_{\tpi \in \Pi_\beps}[2\beps - d_{\infty,1}(\pi,\tpi)]^+\dr_{\dM_\Central^{\epsilon_0}(\tpi;\dcM),h}(s_h,a_h,\tpi)}{\sum_{\tpi \in \Pi_\beps}[2\beps - d_{\infty,1}(\pi,\tpi)]^+}|\\
        =&\frac{\sum_{\tpi \in \Pi_\beps}[2\beps - d_{\infty,1}(\pi,\tpi)]^+|\dr_{\dM_\Central^{\epsilon_0}(\pi;\dcM),h}(s_h,a_h,\pi) - \dr_{\dM_\Central^{\epsilon_0}(\tpi;\dcM),h}(s_h,a_h,\tpi)|}{\sum_{\tpi \in \Pi_\beps}[2\beps - d_{\infty,1}(\pi,\tpi)]^+}.
    \end{align*}
    Similarly, for those $\tpi\in \Pi_\beps$ with $[2\beps - d_{\infty,1}(\pi,\tpi)]^+ > 0$, we have:
    \begin{align*}
        &|\dr_{\dM_\Central^\pi,h}(s_h,a_h,\pi) - \dr_{\dM_\Central^\tpi,h}(s_h,a_h,\tpi)|\\
        =&|r_{h}(s_h,a_h,\mu^\pi_{M_\Central^\pi,h}) - r_{h}(s_h,a_h,\mu^\tpi_{M_\Central^\tpi,h})| \\
        \leq & L_r \|\mu^\pi_{M_\Central^\pi,h} - \mu^\tpi_{M_\Central^\tpi,h}\|_1\\
        \leq & L_r  H d_{\infty,1}(\pi,\pi') + L_r \EE_{\pi,M_\Central^\pi(\pi)}[\sum_{\ph=1}^h \|\mP_{M_\Central^\pi,\ph}(\cdot|s_{\ph-1},a_{\ph-1},\mu^{\pi}_{M_\Central^\pi,\ph-1})-\mP_{M_\Central^\tpi,\ph}(\cdot|s_{\ph-1},a_{\ph-1},\mu^{\tpi}_{M_\Central^\tpi,\ph-1})\|_1]\\
        \leq & 2L_r H\beps + L_r \EE_{\pi,\dM_\Central^\pi(\pi)}[\sum_{h=1}^H\|\dmP_{\dM_\Central^\pi,h}(\cdot|s_h,a_h,\pi) - \dmP_{\dM_\Central^\tpi,h}(\cdot|s_h,a_h,\tpi)\|_1]\\
        \leq & L_r (H+4)\epsilon_0.
    \end{align*}
    Therefore,
    \begin{align*}
        \forall \pi,\pi',\quad &\EE_{\pi',\dM_\Central^\pi(\pi)}[\sum_{h=1}^H|\dr_{\dM_\Central^\pi,h}(s_h,a_h,\pi) - \dr_{\dM_\Central^\tpi,h}(s_h,a_h,\tpi)|] \leq & \frac{\sum_{\tpi \in \Pi_\beps}[2\beps - d_{\infty,1}(\pi,\tpi)]^+H(H+4)\epsilon_0}{\sum_{\tpi \in \Pi_\beps}[2\beps - d_{\infty,1}(\pi,\tpi)]^+} \\
        \leq & L_r H(H+4)\epsilon_0.
    \end{align*}
\end{proof}

Next we prove an important Lemma based on results in theorem above, which indicates that the bridge policy constructed in Alg.~\ref{alg:BridgePolicy} is close to the NE of its central model.
\begin{lemma}\label{lem:BP_close_NE_CM}
    Suppose the \texttt{Else}-branch in Line~\ref{line:else_branch} if activated in Alg.~\ref{alg:elimination_DCP_formal}, for policy $\pi^{\NE,k}_\Bridge$ and its corresponding central model $M^k_\Central := \argmax_{M\in\cM^k}|\cB^{\epsilon_0}_{\pi^{\NE,k}_\Bridge}(M;\cM^k)|$, we have:
    \begin{align*}
        \cE^{\NE}_{M^k_\Central}(\pi^{\NE,k}_\Bridge) := \max_\pi \Delta_{M^k_\Central}(\pi,\pi^{\NE,k}_\Bridge) \leq 2(1+L_r )(H+4)\epsilon_0.
    \end{align*}
\end{lemma}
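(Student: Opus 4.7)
\textbf{Proof plan for Lemma~\ref{lem:BP_close_NE_CM}.} The plan is to show that $\pi^{\NE,k}_\Bridge$, which Alg.~\ref{alg:BridgePolicy} returns as an \emph{exact} NE of the bridge PAM $\dM_\Bridge$ (Line~\ref{line:return_NE}), is approximately also an NE of the PAM version of $M^k_\Central$ because, by Thm.~\ref{thm:ConstructionErr}, the bridge model and its central model agree closely when $\pi^{\NE,k}_\Bridge$ itself is used as the reference policy. This is precisely the PAM analogue of the ``local alignment'' principle in Lem.~\ref{lem:local_alignment}: only conditional closeness at the candidate NE is needed, not global closeness.

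\textbf{Step 1 (setting up the two models).} Let $\dM_\Central$ denote the PAM obtained by applying the conversion~\eqref{eq:conversion_to_dM} to $M^k_\Central$. Since Alg.~\ref{alg:BridgePolicy} operates on PAM classes and Line~\ref{line:return_NE} sets $\pi^{\NE,k}_\Bridge \in \arg\min_\pi \max_{\tpi}\dJ_{\dM_\Bridge}(\tpi;\pi) - \dJ_{\dM_\Bridge}(\pi;\pi)$, the bridge policy satisfies $\max_{\tpi}\bigl[\dJ_{\dM_\Bridge}(\tpi;\pi^{\NE,k}_\Bridge) - \dJ_{\dM_\Bridge}(\pi^{\NE,k}_\Bridge;\pi^{\NE,k}_\Bridge)\bigr] \le 0$ (existence of an NE in $\dM_\Bridge$ follows from Lem.~\ref{lem:Lipschitz_PA_model} and Thm.~\ref{thm:exist_NE_PAM}, since the weights $[2\beps - d_{\infty,1}(\pi,\tpi)]^+$ make $\dM_\Bridge$ continuous in $\pi$).

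\textbf{Step 2 (invoke Thm.~\ref{thm:ConstructionErr} at the bridge policy).} Specializing Thm.~\ref{thm:ConstructionErr} with reference policy $\pi := \pi^{\NE,k}_\Bridge$ (and thus central model $\dM_\Central$), we get, for every deviation $\tpi$,
\[
\EE_{\tpi,\dM_\Central(\pi^{\NE,k}_\Bridge)}\Big[\sum_{h=1}^H \|\dmP_{\dM_\Central,h}(\cdot|s_h,a_h,\pi^{\NE,k}_\Bridge) - \dmP_{\Bridge,h}(\cdot|s_h,a_h,\pi^{\NE,k}_\Bridge)\|_1\Big] \le (H+3)\epsilon_0,
\]
and the analogous reward bound $L_r H(H+4)\epsilon_0$. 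This is where the choice of $\beps$ in Thm.~\ref{thm:ConstructionErr} is crucial.

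\textbf{Step 3 (simulation-lemma transfer).} Apply the standard single-agent simulation lemma to the PAMs $\dM_\Bridge$ and $\dM_\Central$ with the reference policy \emph{frozen} at $\pi^{\NE,k}_\Bridge$ (so both become ordinary non-stationary MDPs). Expanding from the $\dM_\Central$ side and using $\|\dV^\tpi_{\dM_\Bridge}\|_\infty \le 1$ (rewards are in $[0,1/H]$) gives, for every $\tpi$,
\[
\bigl|\dJ_{\dM_\Bridge}(\tpi;\pi^{\NE,k}_\Bridge) - \dJ_{\dM_\Central}(\tpi;\pi^{\NE,k}_\Bridge)\bigr| \le (H+3)\epsilon_0 + L_r H(H+4)\epsilon_0 \le (1+L_r H)(H+4)\epsilon_0.
\]
Apply this inequality both at $\tpi$ and at $\tpi = \pi^{\NE,k}_\Bridge$; then by adding and subtracting $\dJ_{\dM_\Bridge}(\tpi;\pi^{\NE,k}_\Bridge)$ and $\dJ_{\dM_\Bridge}(\pi^{\NE,k}_\Bridge;\pi^{\NE,k}_\Bridge)$, combined with the NE condition in Step 1,
\[
\Delta_{\dM_\Central}(\tpi,\pi^{\NE,k}_\Bridge) \le \Delta_{\dM_\Bridge}(\tpi,\pi^{\NE,k}_\Bridge) + 2(1+L_r H)(H+4)\epsilon_0 \le 2(1+L_r H)(H+4)\epsilon_0.
\]
Finally, since the PAM conversion~\eqref{eq:conversion_to_dM} preserves $\dJ_{\dM_\Central}(\cdot;\pi^{\NE,k}_\Bridge) = J_{M^k_\Central}(\cdot;\pi^{\NE,k}_\Bridge)$ for any policy when $\pi^{\NE,k}_\Bridge$ is the reference, this yields the claimed bound on $\cE^{\NE}_{M^k_\Central}(\pi^{\NE,k}_\Bridge)$.

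\textbf{Main obstacle.} The delicate point is that the $l_1$ bound in Thm.~\ref{thm:ConstructionErr} is written as $\EE_{\tpi,\dM_\Central(\pi)}[\cdots]$, i.e.\ under trajectories in the \emph{central} model, so the simulation lemma must be expanded starting from $\dM_\Central$ (using $\dV^{\tpi}_{\dM_\Bridge}$ as the reference value function); swapping the role would require another change-of-measure step and would incur an extra factor of $H$ via Lem.~\ref{lem:change_of_measure}. Keeping the directions consistent is what avoids such blow-ups and recovers the clean $2(1+L_r)(H+4)\epsilon_0$-type bound stated in the lemma.
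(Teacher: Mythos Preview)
Your proposal is correct and follows essentially the same route as the paper's proof: both use that $\pi^{\NE,k}_\Bridge$ is an exact NE of $\dM_\Bridge$ (so $\Delta_{\dM_\Bridge}(\tpi,\pi^{\NE,k}_\Bridge)\le 0$), apply Thm.~\ref{thm:ConstructionErr} at the reference policy $\pi^{\NE,k}_\Bridge$, run the single-agent simulation lemma with trajectories under $\dM_\Central$ to bound $|\dJ_{\dM_\Central}-\dJ_{\dM_\Bridge}|$ for both $\tpi$ and $\pi^{\NE,k}_\Bridge$, and combine to get $2(1+L_rH)(H+4)\epsilon_0$. Your remark that the expectation in Thm.~\ref{thm:ConstructionErr} is taken under $\dM_\Central$, and hence the simulation lemma must be expanded from that side, is exactly the care the paper (implicitly) takes as well; note also that both you and the paper's proof arrive at the factor $(1+L_rH)$, while the lemma \emph{statement} displays $(1+L_r)$, which appears to be a typo in the paper.
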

\begin{proof}
    For any policy $\pi$, we have
    \begin{align*}
        &\Delta_{M^k_\Central}(\pi, \pi^{\NE,k}_\Bridge) \\
        \leq & \Delta_{\dM^k_\Central}(\pi, \pi^{\NE,k}_\Bridge) - \Delta_{\dM_\Bridge}(\pi, \pi^{\NE,k}_\Bridge) \tag{$\Delta_{\dM_\Bridge}(\pi, \pi^{\NE,k}_\Bridge) \leq 0$}\\
        \leq & |\dJ_{\dM^k_\Central}(\pi, \pi^{\NE,k}_\Bridge) - \dJ_{\dM_\Bridge}(\pi, \pi^{\NE,k}_\Bridge)| + |\dJ_{\dM^k_\Central}(\pi^{\NE,k}_\Bridge, \pi^{\NE,k}_\Bridge) - \dJ_{\dM_\Bridge}(\pi^{\NE,k}_\Bridge, \pi^{\NE,k}_\Bridge)| \\
        \leq & \EE_{\pi^{\NE,k}_\Bridge,\dM^k_\Central(\pi^{\NE,k}_\Bridge)}[\sum_{h=1}^H |\dr_{\dM^k_\Central,h}(s_h,a_h,\pi^{\NE,k}_\Bridge) - \dr_{\dM_\Bridge,h}(s_h,a_h,\pi^{\NE,k}_\Bridge)| \\
        & \qquad\qquad\qquad\qquad + \|\dmP_{\dM^k_\Central,h}(\cdot|s_h,a_h,\pi^{\NE,k}_\Bridge), \dmP_{\dM_\Bridge,h}(\cdot|s_h,a_h,\pi^{\NE,k}_\Bridge)\|_1]\\
        & + \EE_{\pi,\dM^k_\Central(\pi^{\NE,k}_\Bridge)}[\sum_{h=1}^H |\dr_{\dM^k_\Central,h}(s_h,a_h,\pi^{\NE,k}_\Bridge) - \dr_{\dM_\Bridge,h}(s_h,a_h,\pi^{\NE,k}_\Bridge)| \\
        & \qquad\qquad\qquad\qquad + \|\dmP_{\dM^k_\Central,h}(\cdot|s_h,a_h,\pi^{\NE,k}_\Bridge), \dmP_{\dM_\Bridge,h}(\cdot|s_h,a_h,\pi^{\NE,k}_\Bridge)\|_1]\\
        \leq & 2\max_\pi \EE_{\pi,\dM^k_\Central(\pi^{\NE,k}_\Bridge)}[\sum_{h=1}^H |\dr_{\dM^k_\Central,h}(s_h,a_h,\pi^{\NE,k}_\Bridge) - \dr_{\dM_\Bridge,h}(s_h,a_h,\pi^{\NE,k}_\Bridge)|]\\
        & + 2\max_\pi \EE_{\pi,\dM^k_\Central(\pi^{\NE,k}_\Bridge)}[\sum_{h=1}^H \|\dmP_{\dM^k_\Central,h}(\cdot|s_h,a_h,\pi^{\NE,k}_\Bridge), \dmP_{\dM_\Bridge,h}(\cdot|s_h,a_h,\pi^{\NE,k}_\Bridge)\|_1]\\
        \leq & 2(1+L_r H)(H+4)\epsilon_0.\tag{Thm.~\ref{thm:ConstructionErr}}
    \end{align*}
    which finishes the proof.
\end{proof}

\subsection{Proofs for Algorithm~\ref{alg:learning_with_DCP}}

\ThmIfElseBranches*
\begin{proof}
    We separately discuss the if and else branches in the algorithm.
    \paragraph{Proof for \texttt{If-Branch} in Line~\ref{line:if_branch}}
    On the events in Thm.~\ref{thm:Elimination_Alg}, for any $\tM \not\in \cB^{\epsilon_0}_{\pi^k}(M^*;\cM^k)$, we have $d(M^*,\tM|\pi^k) \geq \epsilon_0 > \teps$, which implies $\tM \not\in \cM^{k+1}$.
    Combining the condition of \texttt{If-Branch}, we have:
    \begin{align*}
        |\cM^{k+1}| \leq |\cB^{\epsilon_0}_{\pi^k}(M^*; \cM^k)| \leq \frac{|\cM^k|}{2}.
    \end{align*}
    \paragraph{Proof for \texttt{Else-Branch} in Line~\ref{line:else_branch}}
    First of all, on the events in Thm.~\ref{thm:Elimination_Alg}, we have $d(M^*,\tM^k|\pi^{\NE,k}_\Bridge) \leq \teps$. By applying Lem.~\ref{lem:exploitability_diff}, it implies:
    \begin{align*}
        & |\Delta_{M^*}(\pi, \pi^{\NE,k}_\Bridge) - \Delta_{\tM^k}(\pi, \pi^{\NE,k}_\Bridge)| \\
        \leq & \EE_{\pi,M^*(\pi^{\NE,k}_\Bridge)}[\sum_{h=1}^H \|\mP_{M^*,h}(\cdot|s_h,a_h,\mu^{\pi^{\NE,k}_\Bridge}_{M^*,h}) - \mP_{\tM^k,h}(\cdot|s_h,a_h,\mu^{\pi^{\NE,k}_\Bridge}_{\tM^k,h})\|_1]\\
        & + (2L_rH + 1) \EE_{\pi^{\NE,k}_\Bridge,M^*(\pi^{\NE,k}_\Bridge)}[\sum_{h=1}^H  \|\mP_{M^*,h}(\cdot|s_h,a_h,\mu^{\pi^{\NE,k}_\Bridge}_{M^*,h}) - \mP_{\tM^k,h}(\cdot|s_h,a_h,\mu^{\pi^{\NE,k}_\Bridge}_{\tM^k,h})\|_1] \\
        \leq & 2(L_rH + 1) \teps.
    \end{align*}
    Also note that:
    \begin{align*}
        \Delta_{M^*}(\pi, \pi^{\NE,k}_\Bridge) =& \Delta_{M^*}(\pi, \pi^{\NE,k}_\Bridge) - \Delta_{\tM^k}(\pi, \pi^{\NE,k}_\Bridge) + \Delta_{\tM^k}(\pi, \pi^{\NE,k}_\Bridge) .
    \end{align*}
    In the following, we separately discuss two cases.
    \paragraph{Case 1: $\cE^\NE_{\tM^k}(\pi^{\NE,k}_\Bridge) \leq \frac{3\epsilon}{4}$ and Line~\ref{line:else_if_branch} is activated}
    Given that $\teps \leq \frac{\epsilon}{16(1+L_rH)}$:
    \begin{align*}
        \Delta_{M^*}(\pi, \pi^{\NE,k}_\Bridge)\leq & |\Delta_{M^*}(\pi, \pi^{\NE,k}_\Bridge) - \Delta_{\tM^k}(\pi, \pi^{\NE,k}_\Bridge)| + \cE^\NE_{\tM^k}(\pi^{\NE,k}_\Bridge) \tag{$\cE^\NE_{\tM^k}(\pi^{\NE,k}_\Bridge) = \max_\pi \Delta_{\tM^k}(\pi, \pi^{\NE,k}_\Bridge)$}\\
        \leq & 2(L_rH + 1)\teps + \frac{3\epsilon}{4} \leq \epsilon.
    \end{align*}
    which implies $\pi^{\NE,k}_\Bridge$ is an $\epsilon$-NE of $M^*$.
    \paragraph{Case 2: $\cE^\NE_{\tM^k}(\pi^{\NE,k}_\Bridge) > \frac{3\epsilon}{4}$ and Line~\ref{line:else_if_branch} is not activated}
    As a result, for any policy $\pi$,
    \begin{align*}
        \Delta_{M^*}(\pi, \pi^{\NE,k}_\Bridge) \geq - |\Delta_{M^*}(\pi, \pi^{\NE,k}_\Bridge) - \Delta_{\tM^k}(\pi, \pi^{\NE,k}_\Bridge)| + \Delta_{\tM^k}(\pi, \pi^{\NE,k}_\Bridge) \geq \Delta_{\tM^k}(\pi, \pi^{\NE,k}_\Bridge) - 2(L_rH + 1)\teps.
    \end{align*}
    Therefore, by our choice of $\teps$,
    \begin{align*}
        \max_\pi \Delta_{M^*}(\pi, \pi^{\NE,k}_\Bridge) \geq \cE^\NE_{\tM^k}(\pi^{\NE,k}_\Bridge) - 2(L_rH + 1)\teps \geq \frac{5\epsilon}{8}.
    \end{align*}
    On the other hand, by Lem.~\ref{lem:BP_close_NE_CM}, for any $\pi$, we have:
    \begin{align*}
        & \Delta_{M^*}(\pi, \pi^{\NE,k}_\Bridge) - 2(1+L_rH)(H+4)\epsilon_0 \\
        \leq &|\Delta_{M^*}(\pi, \pi^{\NE,k}_\Bridge)| - |\Delta_{M^k_\Central}(\pi, \pi^{\NE,k}_\Bridge)| \tag{Here we apply Lem.~\ref{lem:BP_close_NE_CM}}\\
        \leq & |\Delta_{M^*}(\pi, \pi^{\NE,k}_\Bridge) - \Delta_{M^k_\Central}(\pi, \pi^{\NE,k}_\Bridge)| \\
        \leq & \EE_{\pi,M^*(\pi^{\NE,k}_\Bridge)}[\sum_{h=1}^H \|\mP_{M^*,h}(\cdot|s_h,a_h,\mu^{\pi^{\NE,k}_\Bridge}_{M^*,h}) - \mP_{M^k_\Central,h}(\cdot|s_h,a_h,\mu^{\pi^{\NE,k}_\Bridge}_{M^k_\Central,h})\|_1]\\
        & + (2L_rH + 1) \EE_{\pi^{\NE,k}_\Bridge,M^*(\pi^{\NE,k}_\Bridge)}[\sum_{h=1}^H  \|\mP_{M^*,h}(\cdot|s_h,a_h,\mu^{\pi^{\NE,k}_\Bridge}_{M^*,h}) - \mP_{M^k_\Central,h}(\cdot|s_h,a_h,\mu^{\pi^{\NE,k}_\Bridge}_{M^k_\Central,h})\|_1] \\
        \leq & (2L_r H + 2) d(M^*,M^k_\Central|\pi^{\NE,k}_\Bridge).
    \end{align*}
    According to the choice of $\epsilon_0$, we have $2(1+L_rH)(H+4)\epsilon_0 \leq \frac{\epsilon}{4}$, therefore,
    \begin{align*}
        d(M^*,M^k_\Central|\pi^{\NE,k}_\Bridge) \geq \frac{1}{2L_r H + 2}\Big(\max_\pi \Delta_{M^*}(\pi, \pi^{\NE,k}_\Bridge) - 2(1+L_rH)(H+4)\epsilon_0\Big) \geq \frac{3\epsilon}{16(L_r H + 1)}.
    \end{align*}
    Next we try to show that models in $\cB^{\epsilon_0}_{\pi^{\NE,k}_\Bridge}(M^k_\Central, \cM^k)$ will be eliminated.
    For any $M \in \cB^{\epsilon_0}_{\pi^{\NE,k}_\Bridge}(M^k_\Central, \cM^k)$, because of $\teps < \frac{\epsilon}{48(L_r H + 1)}$ we have:
    \begin{align*}
        d(M,M^*|\pi^{\NE,k}_\Bridge) \geq d(M^k_\Central,M^*|\pi^{\NE,k}_\Bridge) - d(M,M^k_\Central|\pi^{\NE,k}_\Bridge) \geq \frac{3\epsilon}{16(L_r H + 1)} - \epsilon_0 \geq \frac{\epsilon}{16(L_r H + 1)} > \teps.
    \end{align*}
    On the event in Thm.~\ref{thm:Elimination_Alg} (which holds with probability $1-\delta$), $M \not\in \cM^{k+1}$, which implies,
    $$
        |\cM^{k+1}| \leq |\cM^k| - |\cB^{\epsilon_0}_{\pi^{\NE,k}_\Bridge}(M^k_\Central, \cM^k)| \leq |\cM^k| / 2.
    $$
\end{proof}

\begin{restatable}{theorem}{ThmMainResult}[Sample Complexity of Learning MFGs]\label{thm:sample_complexity_MFG}
    Under Assump.~\ref{assump:realizability} and~\ref{assump:Lipschitz},
    by running Alg.~\ref{alg:learning_with_DCP} with Alg.~\ref{alg:elimination_DCP_formal} as \texttt{ModelElim} and Alg.~\ref{alg:BridgePolicy} as \texttt{BridgePolicy}, and hyper-parameter choices according to Thm.~\ref{thm:Elimination_Alg},~\ref{thm:if_else_branches}, and~\ref{thm:ConstructionErr}, w.p. $1-\delta$, Alg.~\ref{alg:learning_with_DCP} will terminate at some $k \leq \log_2|\cM| + 1$ and return us an $\epsilon$-NE of $M^*$, and the number of trajectories consumed is at most $O(\frac{H^7}{\epsilon^2}(1+L_r)^2(\dimPEI(\cM,\epsilon') \wedge H^3(1+L_T H)^2(1+L_T)^{2H}\dimPEII(\cM,\epsilon'))\log^3\frac{|\cM|}{\delta})$ where $\epsilon'=O(\epsilon/H^3(1+L_r)(1+L_T)^H)$, and in $\tilde{O}$ we omit logarithmic terms of $\epsilon,H,\log|\cM|,\dimPE$, $1+L_T$ and $1 + L_r$.
\end{restatable}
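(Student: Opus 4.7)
The plan is to assemble Theorem~\ref{thm:sample_complexity_MFG} from three ingredients already proved: the per-call guarantee of the model-elimination subroutine (Thm.~\ref{thm:Elimination_Alg}), the per-iteration progress dichotomy of the outer loop (Thm.~\ref{thm:if_else_branches}), and a straightforward union bound. Set the hyperparameters as prescribed: $\epsilon_0 = \epsilon/[8(1+L_rH)(H+4)]$, $\teps = \epsilon_0/6$, $\beps$ as in Thm.~\ref{thm:ConstructionErr}, and split the failure budget uniformly by taking $\delta_0 = \delta/(\log_2|\cM|+1)$ across the at most $\log_2|\cM|+1$ invocations of \texttt{ModelElim} triggered in Alg.~\ref{alg:learning_with_DCP}.

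First, I would establish the global good event. Under Assump.~\ref{assump:realizability}, $M^*\in\cM^1$. Provided $M^*\in\cM^k$ on entry, Thm.~\ref{thm:Elimination_Alg} guarantees that one call to \texttt{ModelElim} terminates within $T = \tilde O\!\bigl(H^4\teps^{-2}\bigl(\dimPEI(\cM,\epsilon')\wedge(1+L_T)^{2H}(1+L_TH)^2\dimPEII(\cM,\epsilon')\bigr)\log^2(|\cM|/\delta_0)\bigr)$ outer iterations, with probability $1-\delta_0$, and returns $\cM^{k+1}$ containing $M^*$ and satisfying $d(M^*,M\mid\pi)\le\teps$ for every $M\in\cM^{k+1}$. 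A union bound over the at most $\log_2|\cM|+1$ calls shows that all such good events hold simultaneously with probability at least $1-\delta$; in particular, on this intersection, $M^*\in\cM^k$ for every $k$ reached.

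Next, on this good event I would iterate the dichotomy of Thm.~\ref{thm:if_else_branches}: each outer iteration $k$ either (a) returns a $\pi^{\NE,k}_\Bridge$ that is certified to be an $\epsilon$-NE of $M^*$, or (b) produces $\cM^{k+1}$ with $|\cM^{k+1}|\le|\cM^k|/2$. Starting from $|\cM^1|=|\cM|$, after at most $\log_2|\cM|$ halvings the surviving set has a single element, which must be $M^*$, and the final \texttt{If}-check at the bottom of the loop returns its NE. Hence the algorithm terminates at some $k\le\log_2|\cM|+1$ and the returned policy is an $\epsilon$-NE of $M^*$.

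Finally, for the sample complexity, observe that each call to Alg.~\ref{alg:elimination_DCP_formal} consumes $2H$ trajectories per outer iteration and at most $T$ iterations, for $2HT$ trajectories per call. Plugging in $\teps = \Theta(\epsilon/[(1+L_rH)H])$ gives one call a cost of $\tilde O\!\bigl(H^7(1+L_rH)^2\epsilon^{-2}(\dimPEI(\cM,\epsilon')\wedge(1+L_T)^{2H}(1+L_TH)^2\dimPEII(\cM,\epsilon'))\log^2(|\cM|/\delta)\bigr)$; multiplying by the $\log_2|\cM|+1$ calls converts one $\log$ factor to $\log^3$, which matches the claimed bound (the absorption of $\log(\log|\cM|/\delta)$ into $\log(|\cM|/\delta)$ is standard). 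The main obstacles are strictly bookkeeping: (i) correctly propagating the $(1+L_rH)$ factor from $\epsilon_0\to\teps\to\epsilon$ so that the final dependence is on $(1+L_rH)^2$ rather than a larger power, (ii) verifying that the two P-MBED-based elimination bounds in Thm.~\ref{thm:Elimination_Alg} compose unchanged under the outer loop so that the minimum persists, and (iii) making sure the $\delta_0$ union bound is consistent with the $\log^2(|\cM|/\delta_0)$ inside $T$, which only inflates logarithmic factors and is hidden in the $\tilde O$.
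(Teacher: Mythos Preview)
Your proposal is correct and follows essentially the same route as the paper: union-bound the $\log_2|\cM|+1$ calls to \texttt{ModelElim} (each succeeding with probability $1-\delta_0$ by Thm.~\ref{thm:Elimination_Alg}), invoke the halving/return dichotomy of Thm.~\ref{thm:if_else_branches} on the resulting good event to bound the number of outer iterations, and then multiply the per-call trajectory cost $2HT$ by $\log_2|\cM|+1$ while substituting $\teps=\Theta(\epsilon/[(1+L_rH)H])$. The bookkeeping concerns you flag (propagating $(1+L_rH)$, preserving the $\wedge$ of the two P-MBED bounds, and absorbing $\delta_0$ into the logs) are exactly the minor accounting the paper performs in its one-paragraph proof.
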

\begin{proof}
    As a result of Thm.~\ref{thm:if_else_branches}, w.p. $1-\frac{\delta}{\log_2|\cM|+1}\cdot(\log_2|\cM|+1) = 1-\delta$, there exists a step $k \leq \log_2|\cM| + 1$ such that Alg.~\ref{alg:learning_with_DCP} will terminate the return us an $\epsilon$-approximate NE of $M^*$. The total number of trajectories required is:
    \begin{align*}
        &(\log_2|\cM| + 1) \cdot 2HT \\
        =& (\log_2|\cM| + 1) \cdot \tilde{O}(\frac{H^5}{\teps^2}(\dimPEI(\cM,\epsilon')\wedge (1+L_T)^{2H}(1+L_T H)^2\dimPEII(\cM,\epsilon'))\log^2\frac{2|\cM|TH}{\delta}) \\
        =& \tilde{O}(\frac{H^7}{\epsilon^2}(1+L_rH)^2\Big(\dimPEI(\cM,\epsilon') \wedge H^3(1+L_T H)^2(1+L_T)^{2H}\dimPEII(\cM,\epsilon')\Big)\log^3|\cM|).
    \end{align*}
    where we use the fact that by Thm.~\ref{thm:Elimination_Alg}, we choose $\teps = \frac{\epsilon_0}{6} = O(\frac{\epsilon}{(1+L_rH)H})$, and $\epsilon'=O(\teps/H^2(1+L_T)^H) = O(\epsilon/H^3(1+L_rH)(1+L_T)^H)$.
\end{proof}

\subsection{Sample Complexity Separation between Mean-Filed Control and Mean-Field Games}\label{appx:separation_MFC_MFG}
In this section, we establish the separation between of RL in MFC and MFGs from information theoretical perspective.
\paragraph{A Basic Recap of the MFC Setting}
In MFC, similar to single-agent RL, we are interested in finding a policy $\hat\pi^*_\Opt$ to approximately minimize the optimality gap $\cE_{\Opt}(\pi):= \max_\tpi J_{M^*}(\tpi;\vecmu^\tpi_{M^*}) - J_{M^*}(\pi;\vecmu^{\pi}_{M^*})$, i.e.,
\begin{equation}
    \cE_{\Opt}(\hat\pi^*_\Opt)\leq \epsilon.\label{eq:objective_MFC}
\end{equation}

\paragraph{Exponential Lower Bound in Tabular RL for Mean-Field Control}
Our results are based on a different query model from Def.~\ref{def:collection_process} defined below.
\begin{definition}[Strong Query Model]\label{def:SQM}
    The Strong Query Model (SQM) can take a policy $\pi$ and return a sequence of transition function $\{\mP^\pi_h(\cdot|\cdot,\cdot)\}_{h=1}^H$, such that $\mP^\pi_h(\cdot|s_h,a_h) := \mP_{M^*,h}(\cdot|s_h,a_h,\mu^\pi_{M^*,h})$ for any $h\in[H],s_h\in\cS_h,a_h\in\cA_h$.
\end{definition}
\noindent The SQM is strictly stronger than the sample query model in Def.~\ref{def:collection_process}, because given the conditional model $\{\mP^\pi_h(\cdot|\cdot,\cdot)\}_{h=1}^H$, one can sample arbitrary trajectories by arbitrary policies from it, and therefore, recover the data collection process in Def.~\ref{def:collection_process}. 
In the following, we investigate the number of SQM queries required to identify $\epsilon$-optimal policy in MFC setting.
We show that, under Assump.~\ref{assump:realizability} and~\ref{assump:Lipschitz}, even in the tabular setting, MFC requires queries exponential to the number of states and actions.
\begin{figure}[h]
    \begin{center}
        \includegraphics[scale=0.7]{./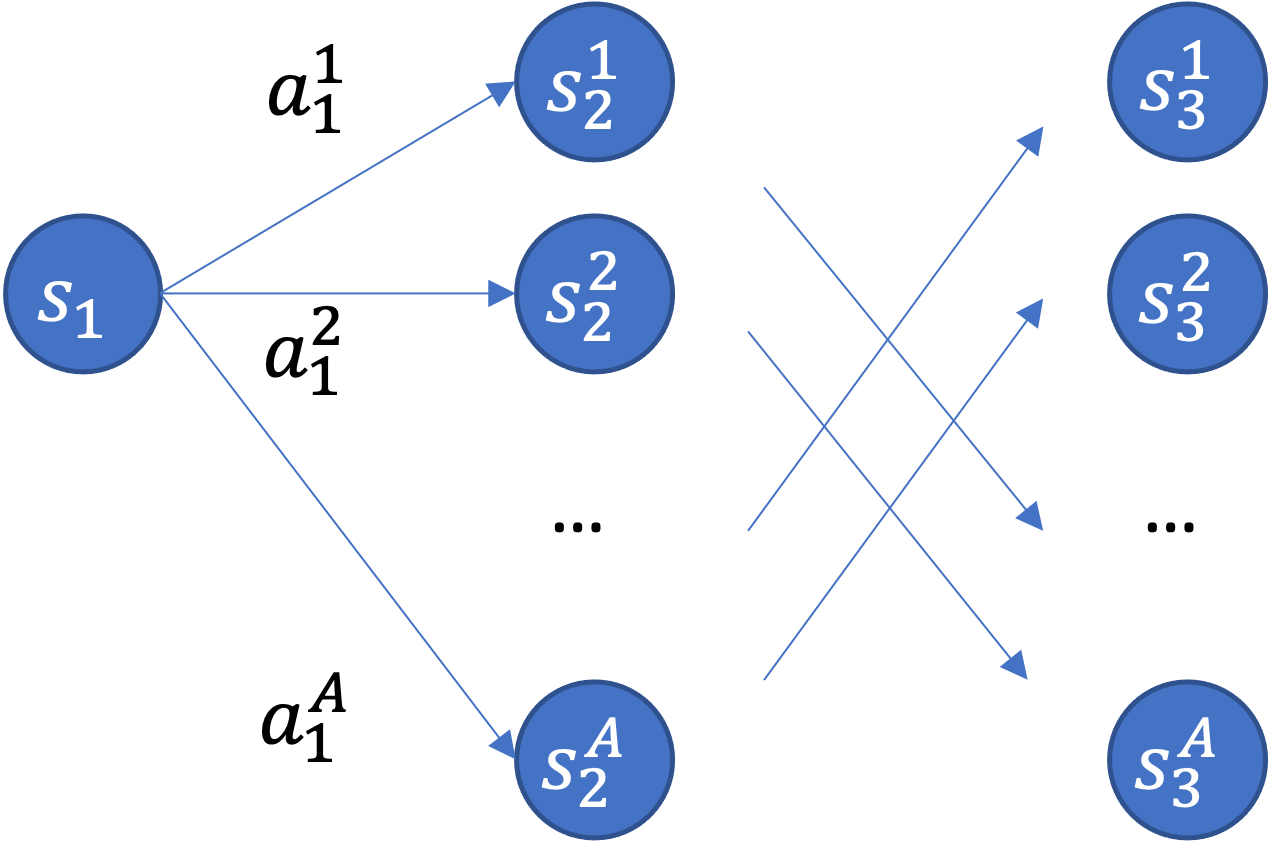}
        \caption{Construction of Lower Bound}\label{fig:lower_bound}
    \end{center}
\end{figure}
\begin{restatable}{theorem}{ThmLowerBound}[Exponential Lower Bound for MFC]\label{thm:LB_MFC}
    Given arbitrary $L_T > 0$ and $d \geq 2$, consider tabular MF-MDPs satisfying Assump.~\ref{assump:Lipschitz} with Lipschitz coefficient $L_T$, $|\cS| = |\cA| = d$ and $H=3$. For any algorithm $\Alg$, and any $\epsilon \leq \frac{L_T}{d + 1}$, there exists an MDP $M^*$ and a model class $\cM$ satisfying $M^* \in \cM$, and $|\cM| = \Omega((\frac{L_T}{d\epsilon})^{d-1})$, s.t., if $\Alg$ only queries GM or DCP for at most $K$ times with $K \leq |\cM|/2 - 1$, the probability that $\Alg$ produces an $\epsilon$-optimal policy is less than $1/2$.
\end{restatable}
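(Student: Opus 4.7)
\textbf{Proof proposal for Theorem~\ref{thm:LB_MFC}.}
The plan is a Yao minimax / needle-in-a-haystack argument built on top of the bump construction already used in the proof of Proposition~\ref{prop:tabular}: I will construct a finite family of tabular MFCs that are indistinguishable under few queries but whose $\epsilon$-optimal policy sets are pairwise disjoint. Concretely, take $H=3$, $|\cS|=|\cA|=d$, a deterministic $h=1$ layer $a^i\mapsto s^i_2$ from a single initial state (so $\mu^\pi_2\in\Delta(\cS)$ can be made equal to any distribution by choice of $\pi_1$), and the reward $r_3(s^1_3)=1/H$, $r_3(s^2_3)=0$. Let $\cU_\zeta$ be the $1/\zeta$-grid on $\Delta(\cS)$ with $\zeta=\Theta(L_T/\epsilon)$; a standard simplex-lattice count gives $|\cU_\zeta|=\Omega((L_T/(d\epsilon))^{d-1})$ as soon as $\epsilon\le L_T/(d+1)$. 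For each $\mu\in\cU_\zeta$ define $M_\mu$ whose $h=2$ kernel is the bump
\[
\mP_\mu(s^1_3 \mid s,a,\mu') \;=\; \tfrac12 + \alpha\bigl[1 - \beta\|\mu-\mu'\|_1\bigr]^+, \qquad \mP_\mu(s^2_3\mid s,a,\mu') \;=\; 1 - \mP_\mu(s^1_3\mid s,a,\mu'),
\]
with $\alpha=\Theta(\epsilon)$ and $\beta=\Theta(L_T/\epsilon)$ chosen so that $2\alpha\beta\le L_T$ (securing Assumption~\ref{assump:Lipschitz} in the density argument). Set $\cM:=\{M_\mu:\mu\in\cU_\zeta\}$, $M^*:=M_{\mu^*}$ with $\mu^*$ uniform on $\cU_\zeta$; Assumption~\ref{assump:realizability} is then immediate.

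A direct computation yields $J_{M_\mu}(\pi;\mu^\pi) = (1/H)(1/2+\alpha[1-\beta\|\mu-\mu^\pi_2\|_1]^+)$, so $\pi$ is $\epsilon$-optimal for $M_\mu$ iff $\|\mu-\mu^\pi_2\|_1\le r_{\mathrm{opt}}$ for an explicit $r_{\mathrm{opt}}=\Theta(\epsilon/L_T)$. The key move is to calibrate the lattice spacing $1/\zeta$ to be slightly above $2\max\{r_{\mathrm{opt}},\, 1/\beta\}$: this costs only a constant factor in $|\cU_\zeta|$, still leaving $\Omega((L_T/(d\epsilon))^{d-1})$ points, and simultaneously secures two geometric facts. \emph{(i) Disjointness of $\epsilon$-optimal sets}: for distinct $\mu,\mu'\in\cU_\zeta$ the sets $\{\pi:\|\mu-\mu^\pi_2\|_1\le r_{\mathrm{opt}}\}$ are disjoint, hence any fixed output $\hat\pi$ is $\epsilon$-optimal for at most one $\mu\in\cU_\zeta$. \emph{(ii) Single-candidate bandwidth}: the bump's support $\{\mu':\|\mu-\mu'\|_1<1/\beta\}$ contains at most one lattice point, so any query at density $\nu$ is ``active'' for at most one candidate $\mu_\nu\in\cU_\zeta$.

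For the information-theoretic half, note that an SQM query at density $\nu$ returns the kernel $\mP_{M^*,2}(\cdot\mid\cdot,\cdot,\nu)$, which by (ii) equals $1/2$ identically unless $\mu^*=\mu_\nu$, in which case the response discloses $\mu^*$ outright; a DCP query can only yield weaker sample-based information from the same kernel. Thus each query behaves as a binary ``is $\mu^*=\mu_\nu$?'' test. Unrolling any adaptive (possibly randomized) algorithm's decision tree of depth $K$, the map from $\mu^*$ to response sequence has only $K+1$ distinguishable outcomes: $K$ ``yes''-leaves each pinning down one tested lattice point, plus one all-``no'' leaf reached by every untested $\mu^*$. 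At each leaf the algorithm's output is a deterministic $\hat\pi$, and by (i) each such $\hat\pi$ can be $\epsilon$-optimal for at most one $\mu$. Hence the algorithm succeeds on at most $K+1$ values of $\mu^*$, giving $\Pr[\Alg \text{ succeeds}]\le (K+1)/|\cU_\zeta|\le 1/2$ whenever $K\le |\cU_\zeta|/2 - 1$; a mild strengthening of the construction (e.g.\ adding a dummy candidate that ensures the default ``all-no'' output cannot be $\epsilon$-optimal for any untested $\mu$) upgrades this to the strict inequality in the statement.

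The main obstacle is the joint calibration of three length scales --- lattice spacing $1/\zeta$, bump bandwidth $1/\beta$, and $\epsilon$-optimal radius $r_{\mathrm{opt}}$ --- so that Lipschitzness at level $L_T$, disjointness of $\epsilon$-optimal sets, ``at most one lattice point per bandwidth'', and the target cardinality $|\cU_\zeta|=\Omega((L_T/(d\epsilon))^{d-1})$ all hold at once. Since the natural bump forces $1/\beta$ to be a constant multiple of $r_{\mathrm{opt}}$, one is obliged to inflate the spacing to $\Theta(1/\beta)$ rather than $\Theta(r_{\mathrm{opt}})$, which is exactly where the condition $\epsilon\le L_T/(d+1)$ enters to keep $|\cU_\zeta|$ at the claimed order. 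A secondary delicate point is ruling out that real-valued SQM responses at non-lattice densities leak more than one effective bit about $\mu^*$; this follows because the bump is piecewise linear and supported on a ball that by (ii) meets the lattice in at most one point, so outside that point the response is identically $1/2$ in $\mu^*$.
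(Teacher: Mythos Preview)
Your proposal is correct and follows essentially the same route as the paper: the same three-layer bump construction $\mP_\mu(s^1_3\mid \cdot,\cdot,\mu')=\tfrac12+2\epsilon[1-\tfrac{L_T}{4\epsilon}\|\mu-\mu'\|_1]^+$, the same simplex lattice $\cN_{\lfloor L_T/(5\epsilon)\rfloor}$ with $\Omega((L_T/(d\epsilon))^{d-1})$ points, and the same needle-in-a-haystack averaging to get success probability at most $(K+1)/|\cM|$. The only cosmetic difference is that the paper makes your ``dummy candidate'' explicit from the start as a null model $\bar M$ with constant $1/2$ transitions and runs a coupling argument ($\Pr_{M^n}(\cE^{\complement}_{K,M^n})=\Pr_{\bar M}(\cE^{\complement}_{K,M^n})$), whereas you phrase the same indistinguishability via decision-tree outcome counting; both yield the identical $(K+1)/|\cM|$ bound.
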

\begin{proof}
    Our proof is divided into three parts: construction of hard MF-MDP instance, construction of model class $\cM$, and the proof of lower bound.
    \paragraph{Part 1: Construction of Hard Examples}
    We construct a three layer MDP as shown in Fig.~\ref{fig:lower_bound}. The initial state distribution is fixed to be $\mu_1(s_1) = 1$, and we have $S$ states and $A$ actions available at each layer with $S=A=d$.
    The transition at initial state is deterministic, i.e., $\mP(s_2^i|s_1,a_1^i,\mu_1)=1$.
    At the second layer, given $L_T \leq 1$, there exists an optimal state density $\mu_2^*$, such that, $\forall i\in[S],j\in[A]$ and $\forall \mu_2 \in \Delta(\cS)$:
    \begin{align*}
        \mP(s_3^1|s_2^i,a_2^j,\mu_2) = \frac{1}{2}+2\epsilon\cdot \Big[1-\frac{L_T}{4\epsilon}\|\mu_2 - \mu_2^*\|_1\Big]^+,\quad \mP(s_3^2|s_2^i,a_2^j,\mu_2) = \frac{1}{2}-2\epsilon\cdot\Big[1-\frac{L_T}{4\epsilon}\|\mu_2 - \mu_2^*\|_1\Big]^+.
    \end{align*}
    where $[x]^+ = \max\{x,0\}$.
    As for the reward function, we have zero reward at each state action in the previous two layers, and for the third layer, we have only have non-zero reward at $r_3(s_3^1,\cdot,\cdot) = 1$ and $r_3(s_3^i,\cdot,\cdot) = 0$ for all $i \neq 1$.
    
    As we can see, for arbitrary policy $\pi$, we have $\mu_2^\pi(s_2^i) = \pi(a_1^i|s_1)$.
    Besides, the optimal policy should be taking action to make sure $\mu_2 = \mu_2^*$, which can be achieved by setting $\pi^*(a_1^i|s_1) = \mu_2^*(s_2^i)$, and then take arbitrary policy at the second layer. 
    Even if the agent just wants to achieve $\epsilon$-near-optimal policy, it at least has to determine the position of set $\{\mu:\|\mu-\mu_2^*\|_1 \leq \frac{4\epsilon}{L_T}\}$.
    The key difficulty here is to explore and gather information which can be used to infer $\mu_2^*$.
    
    We further reduce the difficulty of the exploration by providing for the learner with the transition at initial state and the third layer (or equivalently, the available representation function for the first and third layers is unique) and all the information of reward function. All the learner need to do is to identify the correct feature for the second layer and use it to obtain the optimal policy (at the initial state) to maximize the return. 
    
    Next, we verify the above model belongs to the low-rank Mean-Field MDP.
    For $h=1$, it's easy to see $\mP(s_2^i|s_1,a_1^j,\mu_1) = \phi_1(s_1,a_1^j,\mu_1)\trans\psi_1(s_2^i)$, where $\phi_1(s_1,a_1^j,\mu_1) = \textbf{e}_j$ and $\psi_1(s_2^i)=\textbf{e}_i$, and  $\textbf{e}_{(\cdot)}$ is the one-hot vector with the $(\cdot)$-th element equal 1.
    For the second layer, given a density $\mu \in \Delta(\cS)$, we use $\phi_{\mu,L_T}$ to denote the following feature function class that, $\forall i\in[S],j\in[A],\mu'\in\Delta(\cS)$,
    \begin{align*}
        \phi_{\mu,L_T}(s_2^i,a_2^j,\mu') := (\frac{1}{2} + 2\epsilon\cdot\Big[1-\frac{L_T}{4\epsilon}\|\mu' - \mu\|_1\Big]^+, \frac{1}{2}-2\epsilon \cdot\Big[1-\frac{L_T}{4\epsilon}\|\mu' - \mu\|_1\Big]^+, 0, ..,0)\trans \in \mR^{d}.
    \end{align*}
    and the next state feature function is $\psi(s_3^i) = \textbf{e}_i\trans,\quad \forall i \in [d]$.
    It's easy to verify that the transition can be decomposed to $\phi_{\mu^*_2,L_T}(\cdot,\cdot,\mu_2)\trans\psi(s_3^i)$, and the above feature satisfies the normalization property:
    \begin{align*}
        \|\sum_{i\in[d]} \psi(s_3^i) g(s_3^i)\| \leq \sqrt{2}d,\quad \forall g:\cS\rightarrow\{-1,1\}.
    \end{align*}
    Besides, we verify that for any choice of $\mu$, the induced transition function is $L_T$-Lipschitz:
    \begin{align*}
        & \|\mP_{\mu,L_T}(\cdot|s_2^i,a_2^j,\mu') - \mP_{\mu,L_T}(\cdot|s_2^i,a_2^j,\mu'')\|_1 \\
        =&\sum_{l\in[S]}|\phi_{\mu,L_T}(s_2^i,a_2^j,\mu')\trans\psi(s_3^l) - \phi_{\mu,L_T}(s_2^i,a_2^j,\mu'')\psi(s_3^l)|\\
        =&2\cdot 2\epsilon |\Big[1-\frac{L_T}{4\epsilon}\|\mu - \mu''\|_1\Big]^+ - \Big[1-\frac{L_T}{4\epsilon}\|\mu - \mu'\|_1\Big]^+|\\
        \leq & L_T |\|\mu-\mu'\|_1 - \|\mu-\mu''\|_1|\leq L_T \|\mu' - \mu''\|_1
    \end{align*}
    \paragraph{Part 2: Construction of Model Class}
    Given an integer $\zeta$, we denote $\cN_{\zeta} := \{\mu|\mu(s^i_2) = N(s^i_2) / \zeta,~ N(s^i_2) \in \textbf{N},~\sum_{i\in[S]} N(s^i_2) = \zeta\}$. In another word, $\cN_{\zeta}$ includes all state density with resolution $1/\zeta$. Now, consider $\cN_{\lfloor\frac{L_T}{5\epsilon}\rfloor}$. For each $\mu,\mu'\in \cN_{\lfloor\frac{L_T}{5\epsilon}\rfloor}$, we should have:
    \begin{align*}
            \|\mu - \mu'\|_1 \geq 2 / \lfloor\frac{L_T}{5\epsilon}\rfloor \geq \frac{10\epsilon}{L_T} > \frac{8\epsilon}{L_T}.
    \end{align*}
    Therefore, if we consider the set $\cB(\mu, \frac{4\epsilon}{L_T}) := \{\mu'\in\Delta(\cS)|\|\mu-\mu'\|_1 \leq \frac{4\epsilon}{L_T}\}$, we can expect $\cB(\mu, \frac{4\epsilon}{L_T}) \cap \cB(\mu', \frac{4\epsilon}{L_T}) = \emptyset$ for any $\mu, \mu' \in \cN_{\lfloor\frac{L_T}{5\epsilon}\rfloor}$.
    Given arbitrary $N \leq |\cN_{\lfloor\frac{L_T}{5\epsilon}\rfloor}| = \frac{(\lfloor\frac{L_T}{5\epsilon}\rfloor + d - 1)!}{(\lfloor\frac{L_T}{5\epsilon}\rfloor)! (d-1)!} = \Omega((\frac{L_T}{d\epsilon})^{d-1})$, we can find $N - 1$ different elments $\{\mu_2^1,...,\mu_2^N\}\subset\cN_{\lfloor\frac{L_T}{5\epsilon}\rfloor}$ and construct (here we only specify the representation at the second layer, since we assume the other layers are known)
    \begin{align*}
        \cM^{[N]} := \{M^n := (\phi_{\mu^n_2, L_T}, \psi) | n\in[N]\}.
    \end{align*}
    For analysis, we introduce another model $\bar{M}$ which shares the transition and reward function as $M^n$s but for the transition of second layer, it has:
    \begin{align*}
        \mP(s^1_3|s_2^i,a_2^j,\mu_2) = \mP(s^2_3|s_2^i,a_2^j,\mu_2) = \frac{1}{2},\quad \forall i\in[S],j\in[A],\mu_2\in\Delta(\cS).
    \end{align*}
    We define:
    \begin{align*}
        \bar{\phi}(\cdot,\cdot,\cdot) = (\frac{1}{2},...,\frac{1}{2}) \in \mR^d.
    \end{align*}
    and define:
    \begin{align*}
        \cM := \cM^{[N]} \cup \{(\bar{\phi}, \psi)\}.
    \end{align*}
    Note that $\bar{M} = (\bar{\phi}, \psi) \in \cM$.
    
    \paragraph{Part 3: Establishing Lower Bound}
    Now, we consider the following learning setting: the environment randomly select one model $M$ from $\cM$ and provide the entire representation feature class $\cM$ (which is also the entire model class) to the learner; then, the learner can repeatedly use gathered information to compute a policy $\pi^k$ and query it with SQM for each iteration, and output a final policy after $K$ steps.
    We want to show that, for arbitrary algorithm, there exists at least one model in $\cM$ which cost number of queries linear w.r.t. $N$ before identifying the optimal policy.
    
    In the following, we use $\cE_{k,M^n}$ to denote the event that in the first $k$ trajectories, there is at least one policy (or equivalently, density $\mu^\pi_2$) used to query SQM resulting in $\|\mu^\pi_2 - \mu^n\|_1 \leq \frac{4\epsilon}{L_T}$. The key observation is that, given arbitrary algorithm $\Alg$, for arbitrary fixed $n\in[N]$, if $\Alg$ never deploy a policy $\pi$ (or equivalently, query an density $\mu^\pi_2$) satisfying $\|\mu^\pi_2 - \mu^n\|_1 \leq \frac{4\epsilon}{L_T}$, the algorithm can not distinguish between $M^n$ and $\bar{M}$, and should behave similar in both $M^n$ and $\bar{M}$. Therefore,
    \begin{align*}
        \textPr_{M^n,\Alg}(\cE_{k,M^n}^\complement) = \textPr_{\bar{M},\Alg}(\cE_{k,M^n}^\complement),\quad \forall k \in [K].
    \end{align*}
    which also implies:
    \begin{align*}
        \textPr_{M^n,\Alg}(\cE_{k,M^n}) = \textPr_{\bar{M},\Alg}(\cE_{k,M^n}),\quad \forall k \in [K].
    \end{align*}
    We use $\Alg(K)$ to denote the policy output by the algorithm in the final. 
    Besides, we use $\Pi( \mu, b_0) := \{\pi|\|\mu_2^\pi - \mu\|_1 \leq b_0\}$ to denote the set of policies, which can lead to a density $\mu_2^\pi$ close to $\mu$.
    Then, we have:
    \begin{align*}
        &\sum_{n\in[N]} \textPr_{M^n,\Alg}(\Alg(K) \in \Pi(\mu^n, \frac{4\epsilon}{L_T})) - \textPr_{\bar{M},\Alg}(\Alg(K) \in \Pi(\mu^n, \frac{4\epsilon}{L_T})) \\
        = & \sum_{n\in[N]} \textPr_{M^n,\Alg}(\{\Alg(K) \in \Pi(\mu^n, \frac{4\epsilon}{L_T})\} \cap \{\cE_{K,M^n}\}) - \textPr_{\bar{M},\Alg}(\{\Alg(K) \in \Pi(\mu^n, \frac{4\epsilon}{L_T})\}\cap \{\cE_{K,M^n}\}) \\
        & + \sum_{n\in[N]} \textPr_{M^n,\Alg}(\{\Alg(K) \in \Pi(\mu^n, \frac{4\epsilon}{L_T})\} \cap \{\cE_{K,M^n}^\complement\}) - \textPr_{\bar{M},\Alg}(\{\Alg(K) \in \Pi(\mu^n, \frac{4\epsilon}{L_T})\}\cap \{\cE_{K,M^n}^\complement\})\\
        = & \sum_{n\in[N]} \textPr_{M^n,\Alg}(\{\Alg(K) \in \Pi(\mu^n, \frac{4\epsilon}{L_T})\} \cap \{\cE_{K,M^n}\}) - \textPr_{\bar{M},\Alg}(\{\Alg(K) \in \Pi(\mu^n, \frac{4\epsilon}{L_T})\}\cap \{\cE_{K,M^n}\}) \\
        \leq & \sum_{n\in[N]} \textPr_{M^n,\Alg}(\cE_{k,M^n})\Big(\textPr_{M^n,\Alg}(\Alg(K) \in \Pi(\mu^n, \frac{4\epsilon}{L_T})|\cE_{K,M^n}) - \textPr_{\bar{M},\Alg}(\Alg(K) \in \Pi(\mu^n, \frac{4\epsilon}{L_T})|\cE_{K,M^n})\Big)\\
        \leq & \sum_{n\in[N]} \textPr_{M^n,\Alg}(\cE_{k,M^n}) = \sum_{n\in[N]} \textPr_{\bar{M},\Alg}(\cE_{k,M^n}) \leq K.
    \end{align*}
    where the last step is because,
    \begin{align*}
        \sum_{n\in[N]} \textPr_{\bar{M},\Alg}(\cE_{k,M^n}) \leq & \sum_{n\in[N]} \sum_{k=1}^K \textPr_{\bar{M},\Alg}(\|\mu^{\pi^k}_2 - \mu^n\|_1 \leq \frac{4\epsilon}{L_T})= \sum_{k=1}^K \sum_{n\in[N]} \textPr_{\bar{M},\Alg}(\|\mu^{\pi^k}_2 - \mu^n\|_1 \leq \frac{4\epsilon}{L_T}) \leq \sum_{k=1}^K 1 = K \tag{$\cB(\mu^i, \frac{4\epsilon}{L_T}) \cap \cB(\mu^j, \frac{4\epsilon}{L_T}) = \emptyset$ for all $i\neq j$}.
    \end{align*}
    Therefore, the average success probability would be:
    \begin{align*}
        &\textPr(M=\bar{M}) + \sum_{n\in[N]}\textPr(\{M=M^n\}\cap\{\Alg(K)\in \Pi(\mu^n, \frac{4\epsilon}{L_T})\})\tag{Each policy is optimal in $\bar{M}$.}\\
        =& \frac{1}{|\cM|} + \frac{1}{|\cM|} \sum_{n\in[N]} \textPr_{M^n,\Alg}(\Alg(K)\in \Pi(\mu^n, \frac{4\epsilon}{L_T})) \leq \frac{K+1}{|\cM|}.
    \end{align*}
    As a result, even if $K = \frac{|\cM|}{2} - 1 = O(N)$, there exists $n\in[N]$, such that, the failure rate 
    $$
    \textPr_{M^n,\Alg}(\Alg(K) \not\in \cB(\pi^*_{M^n}, \frac{4\epsilon}{L_T})) \geq \frac{1}{2}.
    $$
\end{proof}

\newpage
\section{Proofs for Multi-Type MFGs}\label{appx:proof_MT_PAM}

\subsection{More Details about the Setting}\label{appx:more_details_setting_MT}
In Multi-Type MF-MDP, we will denote $\mu_h^w \in \Delta(\cS^w)$ to be a state density at step $h$ in type $w$, and define $\mu^w := \{\mu_h^w\}_{h\in[H]}$ to be the collection over all $h$.
For the policies, we define $\Pi^w := \{\pi^w:=\{\pi^w_h\}|\forall h\in[H], \pi^w_h:\cS^w_h\rightarrow\Delta(\cA^w_h)\}$, and $\vecPi := \{\vecpi:=\{\pi^w\}_{w\in[W]}|\forall w\in[W],\pi^w\in\Pi^w\}$.
In this paper, we only consider policies in $\vecPi$, i.e. the set of non-stationary Markovian policies.

In order to distinguish with (single-type) MF-MDP setting, for notations regarding the collection of densities or policies over all groups, we use the bold font, i.e. $\vecmu_h := \{\mu_h^w\}_{w\in[W]}$ and $\vecmu:=\{\vecmu_h\}_{h\in[W]}$, $\vecpi := \{\pi^w\}_{w\in[W]}$ and $\vecpi_h := \{\pi^w_h\}_{w\in[W]}$.
When a policy $\vecpi$ and a model $\vecM$ is speicified, we use $\vecmu^{\vecpi}_\vecM := \{\mu^{w,\vecpi}_{\vecM}\}_{w\in[W]} = \{\vecmu^{\vecpi}_{\vecM,h}\}_{h\in[H]}$ to denote the collection of densities of $W$ groups induced by the policy $\pi$ in model $\vecM$, where $\mu^{w,\vecpi}_{\vecM} := \{\mu^{w,\vecpi}_{\vecM,h}\}_{h\in[H]}$ and $\vecmu^{\vecpi}_{\vecM,h}:=\{\mu^{w,\vecpi}_{\vecM,h}\}_{w\in[W]}$. 
When a policy $\vecpi \in \vecPi$ is specified, the evolution of the densities in all groups can be described by:
\begin{align*}
    \forall h\in[H],~\forall w\in[W], \quad& \mu^{w,\vecpi}_{\vecM,h+1} = \Gamma^{w,\pi^w}_{\vecM,h}(\mu^{w,\vecpi}_{\vecM,h}),\\
    &\text{with}~\Gamma^{w,\pi^w}_{\vecM,h}(\mu_h^w)(\cdot) := \sum_{s_h^w,a_h^w} \mu_h^w(s_h^w)\pi^w_h(a_h^w|s_h^w) \mP^w_{\vecM,h}(\cdot|s^w_h,a^w_h,\vecmu^{\vecpi}_{\vecM,h})
\end{align*}
Similarly to MF-MDP setting, given two policies $\tvecpi,\vecpi \in \vecPi$, we can define the value functions for each group following $\tvecpi$ while conditioning on $\vecpi$:
\begin{align*}
    Q^{w,\tvecpi}_{\vecM,h}(\cdot,\cdot;\vecmu^\vecpi_\vecM) :=& \EE_{\tvecpi,\vecM(\vecpi)}[\sum_{\ph=h}^H r_\ph^w(s_\ph^w,a_\ph^w,\vecmu^\vecpi_{\vecM,\ph})|s_h^w=\cdot,a_h^w=\cdot],\\ 
    V^{w,\tvecpi}_{\vecM,h}(\cdot;\vecmu^\vecpi_{\vecM}) :=& \EE_{\tvecpi,\vecM(\vecpi)}[\sum_{\ph=h}^H r_\ph^w(s_\ph^w,a_\ph^w,\vecmu^\vecpi_{\vecM,\ph})|s_h^w=\cdot],\\
    J^w_\vecM(\tvecpi; \vecpi) :=& \EE_{s_1^w\sim\mu_1^w}[V^{w,\tvecpi}_{\vecM,1}(s_1^w)].
\end{align*}
where we use $\EE_{\tvecpi,\vecM(\vecpi)}$ to denote the expectation over trajectories generated by executing policy $\tvecpi$ in $\vecM$ conditioning on $\vecpi$, i.e. the transitions $\mP_{\vecM,h}^w(\cdot|\cdot,\cdot,\vecmu^\vecpi_{\vecM,h})$ and rewards $r_h^w(\cdot,\cdot,\vecmu^\vecpi_{\vecM,h})$ are fixed by $\vecpi$.
Besides, we denote $J_\vecM^w(\tvecpi;\vecpi) := \EE_{s_1^w\sim\mu_1^w}[V^{w,\vecpi}_{\vecM,1}(s_1^w;\vecmu^\vecpi_\vecM)]$ to be the expected return of type $w$ in model $\vecM$ by executing $\tvecpi$ conditioning on $\vecpi$.
The Nash Equilibrium policy in Multi-Type MFG is defined to be the policy $\vecpi^\NE:=\{\pi^{w,\NE}\}_{w\in[W]}$ satisfying:
\begin{align}
    \forall w \in [W],~\forall\tvecpi \in \vecPi,\quad J^w_\vecM(\tvecpi;\vecpi^\NE) \leq J^w_\vecM(\vecpi^\NE;\vecpi^\NE).\label{eq:NE_MT}
\end{align}
We define $\Delta^w_\vecM(\tvecpi,\vecpi^w) := J^w_\vecM(\tvecpi;\vecpi) - J^w_\vecM(\vecpi;\vecpi)$, and define $\cE^{w,\NE}_{\vecM}(\vecpi) := \max_{\tvecpi} \Delta^w_\vecM(\tvecpi,\vecpi)$. Our goal in this setting is to find an $\epsilon$-approximate NE policy $\hvecpi^\NE := \{\hvecpi^{w,\NE}\}_{w\in[W]}$ such that:
\begin{align}
    \forall w\in[W],\quad \cE^{w,\NE}_{\vecM^*}(\hvecpi) \leq \epsilon.\label{eq:approx_NE_MT}
\end{align}

\subsection{Conversion from MT-MFG to MFG with Constrained Policy Space}\label{appx:conversion}
Intuitively, the construction is made by integrating the state and action spaces, which will result in a MFG with transition and reward functions following some block diagnoal structure.

Given a MT-MFG $\vecM:=\{(\mu_1^w,H,\cS^w,\cA^w,\mP_\vecM^w,r^w)_{w\in[W]}\}$, we denote the converted MF-MDP by $M_\MFG := \{\mu_1,H,\cS_\MFG,\cA_\MFG,\mP_\MFG, r_\MFG\}$, where we have the extended state space $\cS_\MFG := \bigcup_{w\in[W]}(\cS^w\times\{w\})$ and action space $\cA_\MFG := \bigcup_{w\in[W]}(\cA^w\times\{w\})$. 
As we can see, the new state/action space is the collection of all states/actions agumented by the group index $w\in[W]$.
In this way, states and actions in different groups can be distinguished by the group index $w$.
Next, we construct a new initial distribution $\mu_1 := [\frac{\mu_1^1}{W},\frac{\mu_1^2}{W},...,\frac{\mu_1^W}{W}]$ by concatenating all the initial distributions with normalization.
For the policy, we define 
\begin{align}
\Pi^\dagger := \{\pi|\forall w\in[W],\pi(a^w\circ w|s^w\circ w) = \pi^w(a^w\circ w|s^w\circ w),~\text{for some }\pi^w\in\Pi^w\}, \label{def:constrained_policy_space}
\end{align}
with $\Pi^w:=\{\pi^w:\cS^w\rightarrow\Delta(\cA^w)\}$.
In another word, $\Pi^\dagger$ includes and only includes policies taking actions sharing the same group index with states, and we only consider the policies $\pi \in \Pi^\dagger$.

\subsubsection{Definition of Transition/Reward Functions in the Lifted MF-MDP}\label{appx:def_tran_rew_lifted_MFMDP}
Next, given a density $\mu_h := [\frac{\mu_h^1}{W},...,\frac{\mu_h^W}{W}] \in \Delta(\cS_\MFG)$ with $\mu_h^w\in\Delta(\cS^w)$, the transition and reward functions in the converted MFG is defined by (note that by definition of $\Pi^\dagger$, we only need to consider the case when the state and action share the group index):
\begin{align*}
    \mP_{\MFG,h}(s_{h+1}^{\tw}\circ\tw|s_h^w\circ w,a_h^w\circ w,\mu_h) =& \begin{cases}
        \mP^w_{\vecM,h}(s_{h+1}^w|s_h^w,a_h^w,\mu_h^w),\quad &\text{if}~\tw = w\\        
        0,\quad &\text{otherwise}.
    \end{cases}\\
    r_{\MFG,h}(s_h^w\circ w,a_h^w\circ w,\mu_h) =& r^w_h(s_h^w,a_h^w,\mu_h^w).
\end{align*}
For the sake of rigor, we include the definition for the transition/reward functions on those $s_h^w\circ w$ and $a_h^\tw\circ \tw$ with $w \neq \tw$.
We define $\mP_{\MFG,h}(\cdot|s_h^w\circ w,a_h^\tw\circ \tw,\mu_h)$ to be a uniform distribution over $\cS_\MFG$, and $r_{\MFG,h}(s_h^w\circ w,a_h^\tw\circ \tw,\mu_h) = 0$, for any $\mu_h \in \Delta(\cS_\MFG)$.

After specifying a policy $\pi \in \Pi^\dagger$, denote $\vecpi:=\{\pi^w\}_{w\in[W]}$ to be the MT-MFG policy that $\pi$ corresponds to, we can verify that the state density $\mu^\pi_{M_\MFG,h}\in\Delta(\cS_\MFG)$ evolves according to:
\begin{align}
    \forall h \in [H]:\quad & \mu^\pi_{M_\MFG,h} \gets [\frac{\mu^{1,\vecpi}_{\vecM,h}}{W},...,\frac{\mu^{W,\vecpi}_{\vecM,h}}{W}].\label{eq:density_equivalence}
\end{align}
where recall $\mu^{w,\vecpi}_{\vecM,h}$ denotes the density of type $w$ induced by $\vecpi$ in model $\vecM$. To see this, by induction,
\begin{align*}
    \mu^\pi_{M_\MFG,h}(s^w\circ w) =& \sum_{s_{h-1}\in\cS_\MFG,a_{h-1}\in\cA_\MFG}\mu_{M_\MFG,h}^\pi(s_{h-1})\pi(a_{h-1}|s_{h-1})\mP_{\MFG,h}(s^w_h\circ w|s_{h-1},a_{h-1},\mu^\pi_{M_\MFG,h-1}) \\
    =& \sum_{s^w_{h-1}\in\cS^w,a^w_{h-1}\in\cA^w}\mu^{w,\pi}_{M_\MFG,h-1}(s^w_{h-1}\circ w)\cdot \pi(a^w_{h-1}\circ w|s^w_{h-1}\circ w)\mP^w_{\vecM,h}(s_h^w|s_{h-1}^w,a_{h-1}^w,\mu_{M_\MFG,h-1}^{w,\pi}),\\
    =& \sum_{s^w_{h-1}\in\cS^w,a^w_{h-1}\in\cA^w}\mu^{w,\vecpi}_{\vecM,h-1}(s^w_{h-1})/W\cdot \pi^w(a^w_{h-1}|s^w_{h-1})\mP^w_{\vecM,h}(s_h^w|s_{h-1}^w,a_{h-1}^w,\mu_{M_\MFG,h-1}^{w,\pi}),\\
    = & \mu^{w,\vecpi}_{\vecM,h}(s^w) / W.
\end{align*}
Intuitively, in the converted MFG, following a policy $\pi\in\Pi^\dagger$, if an agent starts from the initial state with index $w$, it will follow a trajectory as if it is generated in the original MT-MFG.
In the following, we will call $M_\MFG$ (or $\vecM$) the corresponding MFG (or MT-MFG) of $\vecM$ (or $M_\MFG$).
\subsection{Assumptions and Additional Definitions}\label{appx:MT_assumps_defs}
Recall the definition of $\{\cM^w\}_{w\in[W]}$ and $\veccM$ discussed in Sec.~\ref{sec:preliminary}, 
In the following, we use $\cM_\MFG$ to denote the model class including MFG models converted from models in $\veccM$ according to the method discussed in Appx.~\ref{appx:conversion}, and denote $M^*_\MFG$ to be the one converted from $\vecM^*$.

We have the following assumptions, which can be regarded as a generalization of Assump.~\ref{assump:realizability},~\ref{assump:Lipschitz} and Def.~\ref{def:collection_process}.
\begin{assumption}[Realizability]\label{assump:realizability_MT}
    The true model $\vecM^* \in \veccM$.
\end{assumption}
\begin{assumption}[Lipschitz Continuity for MT-MFG]\label{assump:Lipschitz_MT}
    For any $\vecM \in \veccM$, and for two arbitrary policies $\vecpi,\tvecpi$
    \begin{align*}
        \forall w\in[W],~\forall h,s^w,a^w,\quad &\|\mP_{\vecM,h}^w(\cdot|s_h^w,a_h^w,\vecmu_{\vecM,h}^\vecpi) - \mP_{\vecM,h}^w(\cdot|s_h^w,a_h^w,\vecmu_{\vecM,h}^\tvecpi)\|_1 \leq \frac{\vecL_T}{W}\|\vecmu_{\vecM,h}^\vecpi - \vecmu_{\vecM,h}^\tvecpi\|_1,\\
        &|r_h^w(s_h^w,a_h^w,\vecmu_{\vecM,h}^\vecpi) - r_h^w(s_h^w,a_h^w,\vecmu_{\vecM,h}^\tvecpi)| \leq \frac{\vecL_T}{W}\|\vecmu_{\vecM,h}^\vecpi - \vecmu_{\vecM,h}^\tvecpi\|_1.
    \end{align*}
\end{assumption}
Here we introduce a normalization factor $W$ given that $\|\vecmu_{\vecM,h}^\vecpi - \vecmu_{\vecM,h}^\tvecpi\|_1 = \sum_{w\in[W]}\|\mu_{\vecM,h}^{w,\vecpi} - \mu_{\vecM,h}^{w,\tvecpi}\|_1$.
\begin{definition}[Trajectory Sampling Model in MT-MFG]\label{def:collection_process_MT}
    The learner can query the sampling model with an arbitrary policy $\vecpi:=\{\pi^1,...,\pi^W\}$, a group index $w$ and another policy $\tvecpi := \{\tpi^1,...,\tpi^W\}$, and receive a trajectory by executing $\tpi^w$ while the transition and reward functions are fixed by $\vecpi$, i.e. $\mP^w_{\vecM^*,h}(\cdot|\cdot,\cdot,\vecmu^{\vecpi}_{\vecM^*,h})$ and $r^w_h(\cdot,\cdot,\vecmu^{\vecpi}_{\vecM^*,h})$.
\end{definition}
Similar to the sampling model in Def.~\ref{def:collection_process}, the model above can be implemented by utilizing the observation of an individually deviating agent with type $w$ following policy $\tpi^w$ while the other agents follows $\vecpi$ in a large Multi-Type MARL system.

Moreover, for learning in the lifted MFGs, note that a sampling model in $M^*_\MFG$ as described in Def.~\ref{def:collection_process} can be implemented by Def.~\ref{def:collection_process_MT}.
To see this, given two policies $\pi,\tpi\in\Pi^\dagger$, which correspond to $\vecpi:=\{\pi^1,...,\pi^W\}$ and $\tvecpi:=\{\tpi^1,...,\tpi^W\}$, respectively, the trajectory can be generated by first uniformly sample $w\in[W]$, and then sample a trajectory with Def.~\ref{def:collection_process_MT} with $\vecpi$ converted from $\pi$, type $w$ and policy $\tpi^w$.

\begin{proposition}
    Given a MT-MFG model class $\veccM$ satisfying Assump.~\ref{assump:Lipschitz_MT}, consider its converted MF-MDP model class $\cM_\MFG$, for any $M\in\cM_\MFG$, and any $\pi,\tpi\in\Pi^\dagger$, we have,
    \begin{align}
        \forall h\in[H],\quad  \|\mP_{M,h}(\cdot|s_h,a_h, \mu_{M,h}^\pi) - \mP_{M,h}(\cdot|s_h,a_h,\mu_{M,h}^{\tpi})\|_1 & \leq \vecL_T \cdot \|\mu_{M,h}^\pi-\mu_{M,h}^\tpi\|_1 \label{eq:Lip_model_H_pi_Cstr} .\\
            |r_h(s_h,a_h,\mu_{M,h}^\pi) - r_h(s_h,a_h,\mu_{M,h}^\tpi)| & \leq \vecL_r \cdot \|\mu_{M,h}^\pi-\mu_{M,h}^\tpi\|_1 \label{eq:Lip_rew_l1_pi_Cstr}
    \end{align}
\end{proposition}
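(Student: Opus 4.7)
The plan is to derive both Lipschitz bounds by direct substitution, leveraging the density equivalence identity Eq.~\eqref{eq:density_equivalence} together with the block-diagonal structure of the lifted transition and reward functions from Appx.~\ref{appx:def_tran_rew_lifted_MFMDP}.

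First I would record the key density relation: since $\pi,\tpi \in \Pi^\dagger$ correspond to $\vecpi,\tvecpi \in \vecPi$ respectively, Eq.~\eqref{eq:density_equivalence} yields $\mu_{M,h}^\pi = [\mu^{1,\vecpi}_{\vecM,h}/W, \dots, \mu^{W,\vecpi}_{\vecM,h}/W]$ and analogously for $\tpi$. Taking the $\ell_1$ norm block by block gives the scaling identity
\[
\|\mu_{M,h}^\pi - \mu_{M,h}^\tpi\|_1 \;=\; \tfrac{1}{W}\sum_{w\in[W]} \|\mu^{w,\vecpi}_{\vecM,h} - \mu^{w,\tvecpi}_{\vecM,h}\|_1 \;=\; \tfrac{1}{W}\|\vecmu^{\vecpi}_{\vecM,h} - \vecmu^{\tvecpi}_{\vecM,h}\|_1.
\]
This is the bridge between the lifted $\ell_1$ distance and the multi-type $\ell_1$ distance.

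Next I would do a short case analysis on $(s_h,a_h) \in \cS_\MFG \times \cA_\MFG$ using the definitions in Appx.~\ref{appx:def_tran_rew_lifted_MFMDP}. In the off-diagonal case $s_h = s_h^w \circ w$, $a_h = a_h^{\tw} \circ \tw$ with $w \neq \tw$, both the transition and reward are defined to be density-independent (uniform on $\cS_\MFG$ and $0$, respectively), so the Lipschitz inequalities hold trivially with zero on the LHS. In the diagonal case $w = \tw$, the lifted transition reduces to $\mP_{M,h}(\cdot|s_h^w \circ w, a_h^w \circ w, \mu^\pi_{M,h}) = \mP^w_{\vecM,h}(\cdot|s_h^w,a_h^w,\vecmu^{\vecpi}_{\vecM,h})$ (viewed as a measure on $\cS^w \circ w \subset \cS_\MFG$), and similarly for the reward. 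Applying Assump.~\ref{assump:Lipschitz_MT} then gives
\[
\|\mP^w_{\vecM,h}(\cdot|s_h^w,a_h^w,\vecmu^\vecpi_{\vecM,h}) - \mP^w_{\vecM,h}(\cdot|s_h^w,a_h^w,\vecmu^\tvecpi_{\vecM,h})\|_1 \leq \tfrac{\vecL_T}{W}\|\vecmu^\vecpi_{\vecM,h} - \vecmu^\tvecpi_{\vecM,h}\|_1,
\]
and combining with the scaling identity above, the factor $W$ in the denominator cancels the one from the density conversion, yielding exactly $\vecL_T\|\mu^\pi_{M,h} - \mu^\tpi_{M,h}\|_1$. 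The reward bound follows by the same argument with $\vecL_r$ in place of $\vecL_T$.

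There is no real obstacle here; the proof is essentially bookkeeping. The only subtle point worth highlighting is that the $1/W$ normalization in Assump.~\ref{assump:Lipschitz_MT} was designed precisely so that after lifting (which rescales densities by $1/W$), the effective Lipschitz constants $\vecL_T, \vecL_r$ in the lifted MFG match those of the original MT-MFG without an extra $W$ factor; verifying this cancellation is the one line of the proof that deserves explicit emphasis.
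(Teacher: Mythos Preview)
Your proposal is correct and follows essentially the same approach as the paper's proof: both dispose of the off-diagonal case as trivially density-independent, invoke the density-equivalence identity $\|\mu^\pi_{M,h}-\mu^{\tpi}_{M,h}\|_1 = \tfrac{1}{W}\|\vecmu^\vecpi_{\vecM,h}-\vecmu^{\tvecpi}_{\vecM,h}\|_1$, and then apply Assump.~\ref{assump:Lipschitz_MT} so that the $1/W$ factors cancel. Your write-up is simply more explicit about the cancellation step than the paper's terse version.
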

\begin{proof}
    According to the definition in Appx.~\ref{appx:def_tran_rew_lifted_MFMDP}, for those $s_h,a_h$ with different group index, their transition or reward differences will be 0. Therefore, we only need to consider the case when $s_h,a_h$ share the group index.

    As we explained in Eq.~\eqref{eq:density_equivalence}, given $\pi,\tpi\in\Pi^\dagger$, which corresponds to $\vecpi,\tvecpi\in\vecPi$, respectively, we have:
    \begin{align*}
        \|\mu^\pi_{M,h} - \mu^\tpi_{M,h}\|_1 = \frac{1}{W}\sum_{w\in[W]}\|\mu^{w,\vecpi}_{\vecM,h}, \mu^{w,\tvecpi}_{\vecM,h}\|_1,
    \end{align*}
    where $\vecM$ is the corresponding MT-MFG model of $M$. Combining with Assump.~\ref{assump:Lipschitz_MT}, we finish the proof.
\end{proof}

\subsection{Constrained Nash Equilibrium}
\PropNEConversion*
\begin{proof}
    Given any $\pi \in \Pi^\dagger$, we denote its corresponding policy in MT-MFG by $\vecpi := \{\pi^1,...,\pi^W\}$ with $\pi^w:\cS^w\rightarrow\Delta(\cA^w)$ and $\pi(a^w_h\circ w|s^w_h\circ w) = \pi^w(a^w_h|s^w_h)$.
    Conversely, given any $\vecpi:=\{\pi^1,...,\pi^W\}$, we can convert it to a policy in $\Pi^\dagger$, which we denote by $\pi$.
    For $\hpi^\NE_\Cstr$, we denote its correspondence in MT-MFG by $\hvecpi^\NE:=\{\hpi^{\NE,1},...\hpi^{\NE,W}\}$.

    Note that, given any $\pi,\tpi\in\Pi^\dagger$ and their correspondence $\vecpi := \{\pi^1,...\pi^W\}$ and $\tvecpi := \{\tpi^1,...\tpi^W\}$, we have:
    \begin{align*}
        J_{M_\MFG}(\tpi; \pi) = \EE_{\tpi;M_\MFG(\pi)}[\sum_{h=1}^H r_{M_\MFG,h}(s_h,a_h,\mu_{M_\MFG,h}^\pi)] = \frac{1}{W}\sum_{w=1}^W \EE_{\tpi^w;\vecM(\vecpi)}[\sum_{h=1}^H r^w_h(s^w_h,a^w_h,\vecmu^\vecpi_{\vecM,h})].
    \end{align*}
    where recall $r^w$ is the reward in type $w$ in MT-MFG and $\vecmu^\vecpi_{\vecM,h}:=\{\mu^{w,\pi}_{\vecM,h}\}_{w\in[W]}$ is the collection of densities for all groups.
    Consider the case when $\vecpi = \hvecpi^\NE$ and $\tpi^\tw \gets \hpi^{\NE,\tw}$ for all $\tw$ except $\tw = w$, we have:
    \begin{align*}
        \epsilon \geq& J_{M_\MFG}(\tpi;\hpi^\NE_\Cstr) - J_{M_\MFG}(\hpi^\NE_\Cstr;\hpi^\NE_\Cstr)  \\
        =& \frac{1}{W}\Big(\EE_{\tpi^w;\vecM(\hvecpi^\NE)}[\sum_{h=1}^H r^w_h(s_h^w,a_h^w,\vecmu^{\hvecpi^\NE}_{\vecM,h})] - \EE_{\hpi^{\NE,w};\vecM(\hvecpi^\NE)}[\sum_{h=1}^H r^w_h(s_h^w,a_h^w,\vecmu^{\hvecpi^\NE}_{\vecM,h})]\Big)\\
        =& \frac{1}{W} \Big(J^w_\vecM(\tvecpi,\hvecpi^\NE) - J^w_\vecM(\tvecpi,\hvecpi^\NE)\Big).
    \end{align*}
    By repeating such discussion for any $w\in[W]$ and any $\pi^w$, we complete the proof for argument (1).

    On the other hand, given an $\epsilon$-approximate NE $\hvecpi^\NE$ in $\vecM$ and its corresponding $\hpi^\NE_\Cstr$ in $M_\MFG$, for any $\pi \in \Pi^\dagger$ we have:
    \begin{align*}
        &J_{M_\MFG}(\pi; \hpi^\NE_\Cstr) - J_{M_\MFG}(\hpi^\NE_\Cstr; \hpi^\NE_\Cstr) \\
        = & \frac{1}{W}\sum_{w\in[W]} \Big(\EE_{\pi^w;\vecM(\hvecpi^\NE)}[\sum_{h=1}^H r^w_h(s_h^w,a_h^w,\vecmu^{\hvecpi^\NE}_{\vecM,h})] - \EE_{\hpi^{\NE,w};\vecM(\hvecpi^\NE)}[\sum_{h=1}^H r^w_h(s_h^w,a_h^w,\vecmu^{\hvecpi^\NE}_{\vecM,h})]\Big).
    \end{align*}
    To upper bound the RHS, for each $w\in[W]$, we consider an arbitrary policy $\tvecpi$ with with $\tpi^\tw = \hpi^{\NE,\tw}$ for all $\tw$ except $\tpi^w = \pi^w$, we should have:
    \begin{align*}
        &\EE_{\pi^w;\vecM(\hvecpi^\NE)}[\sum_{h=1}^H r^w_h(s_h^w,a_h^w,\vecmu^{\hvecpi^\NE}_{\vecM,h})] - \EE_{\hpi^{\NE,w};\vecM(\hvecpi^\NE)}[\sum_{h=1}^H r^w_h(s_h^w,a_h^w,\vecmu^{\hvecpi^\NE}_{\vecM,h})]= J^w_\vecM(\tvecpi,\hvecpi^\NE) - J^w_\vecM(\tvecpi,\hvecpi^\NE) \leq \epsilon,
    \end{align*}
    By repeating for all $w\in[W]$, we complete the proof for argument (2).
\end{proof}

\paragraph{Existence of Constrained NE Policy}
Before we introduce algorithms finding constrained NE(s) in MFG, we first investigate their existence, which is actually directly implied by Prop.~\ref{prop:NE_conversion}.
\begin{corollary}
    Given $\vecM$ satisfying Lipschitz continuity conditions in Assump.~\ref{assump:Lipschitz_MT}, the MFG $M_\MFG$ converted from $\vecM$ has at least one constrained NE satisfying $\forall \pi\in\Pi^\dagger,~J_{M_\MFG}(\pi,\hpi^\NE_\Cstr) \leq J_{M_\MFG}(\hpi^\NE_\Cstr,\hpi^\NE_\Cstr) + \epsilon$ with $\epsilon = 0$.
\end{corollary}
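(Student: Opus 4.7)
The plan is to chain together two results already established in the paper: the existence of a (genuine) Nash Equilibrium for the multi-type model $\vecM$, and the reduction from multi-type NE to constrained NE in the lifted single-type MFG $M_\MFG$. Concretely, I would first invoke Prop.~\ref{prop:existence_MT_MFG}, which under Assump.~\ref{assump:Lipschitz_MT} guarantees that $\vecM$ admits at least one (exact) Nash Equilibrium $\vecpi^\NE := \{\pi^{w,\NE}\}_{w\in[W]} \in \vecPi$ satisfying Eq.~\eqref{eq:NE_MT}. This is the substantive input, and it already encapsulates the work done via the MT-PAM abstraction (Prop.~\ref{prop:existence_MT_PAM}), Lem.~\ref{lem:Lipschitz_PAM_MT}, and Brouwer's fixed-point theorem.

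Next, I would apply Prop.~\ref{prop:NE_conversion}, part (2), in the limiting case $\epsilon = 0$. That statement shows that any $\epsilon$-NE of $\vecM$ lifts to an $\epsilon$-constrained NE of $M_\MFG$; setting $\epsilon = 0$ yields that the exact NE $\vecpi^\NE$ of $\vecM$ corresponds to an exact constrained NE $\hpi^\NE_\Cstr \in \Pi^\dagger$ of $M_\MFG$, namely the unique policy in $\Pi^\dagger$ determined by $\hpi^\NE_\Cstr(a^w\circ w\mid s^w\circ w) := \pi^{w,\NE}(a^w\mid s^w)$. By Prop.~\ref{prop:NE_conversion}(2), for every $\pi \in \Pi^\dagger$, $J_{M_\MFG}(\pi, \hpi^\NE_\Cstr) \leq J_{M_\MFG}(\hpi^\NE_\Cstr, \hpi^\NE_\Cstr)$, which is exactly the claimed zero-gap constrained NE condition.

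There is essentially no obstacle here: the proof is a two-line composition of results already in hand, and the lifting map $\vecpi \mapsto \pi$ on $\vecPi \to \Pi^\dagger$ is the canonical bijection used throughout Sec.~\ref{sec:MT_MFG}. The only point that deserves a brief sanity check is that the inequality in Prop.~\ref{prop:NE_conversion}(2) is preserved at $\epsilon = 0$ (which is immediate, since the bound in its proof is derived type-wise without any strict slackness). Thus the corollary follows, completing the existence claim for the constrained NE in $M_\MFG$.
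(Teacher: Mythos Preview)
Your proposal is correct and follows essentially the same approach as the paper: invoke Prop.~\ref{prop:existence_MT_MFG} to obtain an exact NE of $\vecM$, then apply Prop.~\ref{prop:NE_conversion}(2) with $\epsilon=0$ to lift it to a constrained NE of $M_\MFG$. The paper phrases the second step as ``$\epsilon\to 0$'' whereas you (correctly) note that one can simply set $\epsilon=0$ directly; otherwise the arguments are identical.
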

\begin{proof}
From Prop.~\ref{prop:existence_MT_MFG}, any MT-MFG $\vecM$ satisfying Assump.~\ref{assump:Lipschitz_MT} has at least one NE. 
As implied by Prop.~\ref{prop:NE_conversion} when $\epsilon \rightarrow 0$, any MFG $M_\MFG$ converted from an MT-MFG $\vecM$ with NE(s) should have at least one constrained NE.
Therefore, under Assump.~\ref{assump:Lipschitz_MT}, we can guarantee any model in the converted function class $\cM_\MFG$ has at least one constrained NE.
\end{proof}

\subsection{Algorithm Details}\label{appx:MT_algorithm}
We first generalize some notations in Sec.~\ref{sec:learning_MFG}.
We define the (constrained) conditional distance between models:
\begin{align*}
d^\dagger(M,\tM|\pi) := \max_{\tpi\in\Pi^\dagger}~\max\{&\EE_{\tpi,M(\pi)}[\sum_{h=1}^H\|\mP_{M,h}(\cdot|\cdot,\cdot,\mu^\pi_{M,h})-\mP_{\tM,h}(\cdot|\cdot,\cdot,\mu^\pi_{\tM,h})\|_1],\\
&\EE_{\tpi,\tM(\pi)}[\sum_{h=1}^H \|\mP_{M,h}(\cdot|\cdot,\cdot,\mu^\pi_{M,h}) - \mP_{\tM,h}(\cdot|\cdot,\cdot,\mu^\pi_{\tM,h})\|_1]\}.
\end{align*}
Besides, given a MF-MDP class $\cM$, a model $M\in\cM$, and any policy $\pi$, we define the $\epsilon_0$-neighborhood of $M$ in $\cM$ w.r.t. distance $d^\dagger(\cdot,\cdot|\pi)$ to be: $\cB_{\pi}^{\dagger,\epsilon_0}(M;\cM) := \{M'\in\cM | d^\dagger(M,M'|\pi)\leq \epsilon_0\}$.
The ``Central Model'' of $\cM$ w.r.t. policy $\pi$ and distance $d^\dagger$ is defined to be the model with the largest neighborhood set $M_{\Central}^{\dagger,\epsilon_0}(\pi;\cM) \gets \arg\max_{M\in\cM} |\cB_{\pi}^{\dagger,\epsilon_0}(M;\cM)|$.
When $\epsilon_0$ and $\cM$ is clear from context, we will use $M^{\dagger,\pi}_\Central$ as a short note.

Besides, we define $\cE^{\dagger,\NE}_{M}(\pi) := \max_{\tpi\in\Pi^\dagger}\Delta_{M}(\tpi,\pi) = \max_{\tpi\in\Pi^\dagger} J_M(\tpi,\pi) - J_M(\pi,\pi)$ to be the constrained NE gap.

\begin{algorithm}
    \textbf{Input}: Model Class $\veccM$; Policy Class $\Pi^\dagger$; Accuracy level $\epsilon_0,\teps,\beps$; Confidence level $\delta$\\
    Convert $\veccM$ to $\cM_\MFG$ as described in Appx.~\ref{appx:conversion}; $\cM_\MFG^1 \gets \cM_\MFG$, $\delta_0 \gets \frac{\delta}{\log_2|\cM_\MFG| + 1}$.\\
    \For{$k=1,2,...$}{
        $\pi^k \gets \argmin_{\pi\in\Pi^\dagger} |\cB_{\pi}^{\dagger,\epsilon_0}(M^{\dagger,\pi}_\Central;\cM_\MFG^k)|$;\\
        \lIf{$|\cB_{\pi^k}^{\dagger,\epsilon_0}(M^{\dagger,\pi^k}_\Central;\cM_\MFG^k)| \leq \frac{|\cM_\MFG^k|}{2}$}{\label{line:if_branch_Cstr}
            $\cM_\MFG^{k+1} \gets \texttt{ModelElimCstr}(\pi^k, \cM_\MFG^k, \teps, \delta_0)$.
        }
        \Else{\label{line:else_branch_Cstr}
            $\pi^{\dagger,\NE,k}_{\Bridge} \gets \texttt{BridgePolicyCstr}(\cM_\MFG^k,\beps)$; \\ 
            $\cM_\MFG^{k+1} \gets \texttt{ModelElimCstr}(\pi^{\dagger,\NE,k}_\Bridge, \cM_\MFG^k, \teps, \delta_0)$;\\
            Randomly pick $\tM^k$ from $\cM_\MFG^{k+1}$;\\
            $\cE^{\dagger,\NE}_{\tM^k}(\pi^{\dagger,\NE,k}_\Bridge) \gets \max_{\pi\in\Pi^\dagger} J_{\tM^k}(\pi,\pi^{\dagger,\NE,k}_\Bridge) - J_{\tM^k}(\pi^{\dagger,\NE,k}_\Bridge,\pi^{\dagger,\NE,k}_\Bridge)$; \\
            \lIf{$\cE^{\dagger,\NE}_{\tM^k}(\pi^{\dagger,\NE,k}_\Bridge) \leq \frac{3\epsilon}{4}$}{\label{line:else_if_branch_Cstr}
                \Return{$\pi^{\dagger,\NE,k}_\Bridge$}
            }
        }
        \lIf{$|\cM_\MFG| = 1$}{
            Return the NE of the model in $\cM_\MFG$.
        }
    }
    Return the constrained NE policy of the model in $\cM_\MFG^{k}$.
    \caption{Multi-Type MFG Learning with Constrained Policy Space}\label{alg:learning_with_DCP_Cstr}
\end{algorithm}

\begin{algorithm}
    \textbf{Input}: Reference Policy $\pi$; Policy Class $\Pi^\dagger$; Model Class $\bcM$; Accuracy level $\teps$; Confidence $\delta$\\
    $\bcM^1 \gets \cM$;~$\bar{\epsilon}$; Choosing $T$ according to Thm.~\ref{thm:Elimination_Alg_Cstr}\\
    \For{$t=1,2,...,T$}{
        $\tpi^t, M^t, M'^t \gets \arg\max_{\tpi\in\Pi^\dagger} \max_{M,M' \in \bcM^t}\EE_{\tpi,M(\pi)}[\sum_{h=1}^H \|\mP_{M,h}^w(\cdot|\cdot,\cdot,\mu^\pi_{M,h}) - \mP_{M',h}^w(\cdot|\cdot,\cdot,\mu^\pi_{M',h})\|_1]$. \label{line:argmax_Cstr}\\
        Denote the value taken at the above as $\Delta_{\max}^t$. \\
        \lIf{$\Delta_{\max}^t \leq \teps$}{\label{line:elimination_termination_Cstr}
            \Return $\bcM^t$
        }
        \Else{
            $\cZ^t\gets\{\}$\\
            \For{$h=1,2...,H$}{
                \For{$w\in[W]$}{
                    // \blue{Trajectory sampling in $M_\MFG^*$ can be implemented by Def.~\ref{def:collection_process_MT}.}\\
                    Sample a trajectory with $(\pi,\pi)$, and collect the data at step $h$: $\{(s_h^{w,t},a_h^{w,t},s_{h+1}'^{w,t})\}$.\\
                    Sample a trajectory with $(\tpi^t,\pi)$, and collect the data at step $h$: $\{(\ts_h^{w,t},\ta_h^{w,t},\ts_{h+1}'^{w,t})\}$.\\ 
                    $\cZ^t \gets \cZ^t \cup \{(s_h^{w,t},a_h^{w,t},s_{h+1}'^{w,t})\} \{(\ts_h^{w,t},\ta_h^{w,t},\ts_{h+1}'^{w,t})\}$.
                }
            }
            $\forall M \in \bcM^t$, define 
            $$
            l^\pi_\MLE(M;\cZ^t):=\sum_{k=1}^K\sum_{w=1}^W\sum_{h=1}^H \log \mP^w_{M,h}(s_{h+1}'^{w,t}|s_h^{w,t},a_h^{w,t},\mu^\pi_{M,h}) + \log \mP^w_{M,h}(\ts_{h+1}'^{w,t}|\ts_h^{w,t},\ta_h^{w,t},\mu^\pi_{M,h}).
            $$
            $\bcM^{t+1} \gets \{M\in\bcM^t|~l^\pi_{\MLE}(M;\cZ^t) \geq \max_{\tM}l^\pi_{\MLE}(\tM;\cZ^t) - \log\frac{WHT|\cM|}{\delta}\}$.
        }
    }
    \caption{ModelElimCstr}\label{alg:elimination_DCP_Cstr}
\end{algorithm}

\begin{algorithm}
    \textbf{Input}: MF-MDP model class $\cM$; Policy Space $\Pi^\dagger$; Accuracy Level $\beps$, $\epsilon_0$\\
    Convert Policy-Aware MDP Model Class $\dcM$ from $\cM$ by Eq.~\eqref{eq:conversion_to_dM}.\\
    Construct $\beps$-cover of the policy space $\Pi^\dagger$ w.r.t. $d_{\infty,1}$ distance, denoted as $\Pi_\beps^\dagger$.\\
    \lFor{$\tpi \in \Pi_\beps^\dagger$}{
        Find the central model $\dM_\Central^{\dagger,\epsilon_0}(\tpi;\dcM) \gets \arg\max_{\dM \in \dcM}|\cB^{\dagger,\epsilon_0}_\pi(\dM;\dcM)|$
    }
    Construct the new PAM $\dM_\Bridge$ with transition and reward functions $\forall w\in[W],h\in[H]$:
    \begin{align*}
        \dmP_{\Bridge,h}^w(\cdot|s_h,a_h,\pi) :=& \frac{\sum_{\tpi \in \Pi_\beps^\dagger}[2\beps - d_{\infty,1}(\pi,\tpi)]^+\dmP_{\dM^\tpi,h}^w(\cdot|s_h,a_h,\tpi)}{\sum_{\tpi \in \Pi_\beps^\dagger}[2\beps - d_{\infty,1}(\pi,\tpi)]^+}\\
        \dr_{\Bridge,h}^w(s_h,a_h,\pi) := & \frac{\sum_{\tpi \in \Pi_\beps^\dagger}[2\beps - d_{\infty,1}(\pi,\tpi)]^+\dr_{\dM^\tpi,h}^w(s_h,a_h,\tpi)}{\sum_{\tpi \in \Pi_\beps^\dagger}[2\beps - d_{\infty,1}(\pi,\tpi)]^+}.
    \end{align*}\\
    Find the NE of bridge model: $\pi^{\dagger,\NE}_\Bridge \gets \arg\min_{\pi\in\Pi^\dagger} \max_{\tpi\in\Pi^\dagger}\dJ_{\dM_\Bridge}(\tpi;\pi) - \dJ_{\dM_\Bridge}(\pi;\pi)$.\\
    \Return $\pi^{\dagger,\NE}_\Bridge$.
    \caption{BridgePolicyCstr}\label{alg:BridgePolicy_Cstr}
\end{algorithm}

\newpage
\subsection{Proofs for Algorithm~\ref{alg:learning_with_DCP_Cstr}}\label{appx:MT_proofs}
\begin{theorem}\label{thm:Elimination_Alg_Cstr}
    Under Assump.~\ref{assump:realizability_MT} and~\ref{assump:Lipschitz_MT}, in Alg.~\ref{alg:elimination_DCP_Cstr}, given any $\teps$, reference policy $\pi$, $\delta \in (0,1)$, and $M^*\in\bcM$, by choosing $T = \tilde{O}(\frac{H^4}{\teps^2}(\dimCPE(\cM,\epsilon')\wedge (1+\vecL_T)^{2H}(1+\vecL_T H)^2\dimCPEII(\cM,\epsilon')))$
    with $\epsilon' = O(\frac{\teps}{H^{2}(1+\vecL_T)^{H}})$, , w.p. $1-\delta$,
    the algorithm terminates at some $T_0 \leq T$, and return $\bcM^{T_0}$ satisfying (i) $M^* \in \bcM^{T_0}$ (ii) $\forall M \in \bcM^{T_0}$, $d^\dagger(M^*,M|\pi)\leq \teps$.
\end{theorem}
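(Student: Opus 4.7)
The plan is to mirror the proof of Theorem~\ref{thm:Elimination_Alg} almost line-by-line, making the modifications needed to (a) handle the sum over $W$ types in the MLE likelihood $l^\pi_\MLE$ used in Algorithm~\ref{alg:elimination_DCP_Cstr}, (b) restrict the adversarial policies to $\Pi^\dagger$ (which is exactly what $\dimCPE$ and $\dimCPEII$ are designed for), and (c) invoke the MT-MFG Lipschitz assumption (Assump.~\ref{assump:Lipschitz_MT}) in place of Assump.~\ref{assump:Lipschitz}. First I would state a multi-type analogue of Theorem~\ref{thm:MLE_guarantees}: since at each iteration $t$ we collect one trajectory per type under both the reference policy $\pi$ and the maximizer $\tpi^t$ (so $2W$ trajectories per step $h$), the standard MLE concentration argument yields, with probability $1-\delta$, that $M^* \in \bcM^t$ for all $t \leq T$, and that for every $M \in \bcM^t$, every $h \in [H]$, and every $w \in [W]$, the in-sample sum of squared Hellinger distances of the type-$w$ transitions under the densities induced by $\vecmu^\pi_{\vecM^*,h}$ is at most $2\log(2|\cM|TH/\delta)$.

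Next I would convert these Hellinger bounds into $l_1$ bounds in the lifted MFG, separately for the two P-MBED variants. For $\dimCPEII$, I bound $\sum_{i<t} \|\mP_{M^*,h}(\cdot|s_h,a_h,\mu^\pi_{M^*,h}) - \mP_{M^t,h}(\cdot|s_h,a_h,\mu^\pi_{M^*,h})\|_1^2$ and use the density-difference bound (Lem.~\ref{lem:density_est_err}) together with the Lipschitz constant $\vecL_T$ (Eq.~\eqref{eq:Lip_model_H_pi_Cstr}) to control the cross-density discrepancy, picking up a factor of $(1+\vecL_T H)^2$. I then apply Lemma~\ref{lem:Partial_Eluder_Bound} specialized to the constrained policy class — which is valid because the adversarial policies $\tpi^t$ drawn in Line~\ref{line:argmax_Cstr} of Algorithm~\ref{alg:elimination_DCP_Cstr} lie in $\Pi^\dagger$, exactly matching Def.~\ref{def:Constr_Partial_Eluder_Dim} — to pass from in-sample to cumulative on-policy $l_1$ error, and then Lemma~\ref{lem:concentration} to replace empirical quantities by expectations. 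The resulting bound contains the $(1+\vecL_T)^H$ factor coming from the propagation of density differences through the horizon. For $\dimCPE$, I skip the density-swap step entirely and directly bound the on-policy $l_1$ error, avoiding the exponential factor (cf.~Remark~\ref{remark:exp_disappears}).

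Then I combine these on-policy bounds using the change-of-measure inequality (Lem.~\ref{lem:change_of_measure}) in exactly the form of Eq.~\eqref{eq:model_diff_lower_bound}, namely
\[
T_0 \teps \;<\; \sum_{t=1}^{T_0}\EE_{\tpi^t,M^*(\pi)}\Big[\sum_{h=1}^H \|\mP_{M^*,h}(\cdot|\cdot,\cdot,\mu^\pi_{M^*,h}) - \mP_{M'^t,h}(\cdot|\cdot,\cdot,\mu^\pi_{M'^t,h})\|_1\Big] + (H+1)\cdot(\text{analogous term for } M^t),
\]
which follows from the non-termination condition in Line~\ref{line:elimination_termination_Cstr}. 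Substituting the two upper bounds from the previous step and choosing $\epsilon' = O(\teps/(H^2(1+\vecL_T)^H))$ so that the $T_0\epsilon'$ residual is absorbed, solving for $T_0$ yields the advertised $T = \tilde O\bigl(\frac{H^4}{\teps^2}(\dimCPE(\cM,\epsilon') \wedge (1+\vecL_T)^{2H}(1+\vecL_T H)^2 \dimCPEII(\cM,\epsilon'))\bigr)$. Since the algorithm must terminate at some $T_0 \leq T$, the returned class $\bcM^{T_0}$ automatically satisfies $\max_{M,M'\in\bcM^{T_0}}\max_{\tpi\in\Pi^\dagger}\EE_{\tpi,M(\pi)}[\sum_h \|\mP_{M,h}(\cdot,\cdot,\mu^\pi_{M,h})-\mP_{M',h}(\cdot,\cdot,\mu^\pi_{M',h})\|_1] \leq \teps$, which together with $M^* \in \bcM^{T_0}$ from Step~1 gives $d^\dagger(M^*,M|\pi) \leq \teps$ for every $M \in \bcM^{T_0}$.

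The only place requiring genuine care (as opposed to mechanical translation) is making sure that the constrained P-MBED really suffices: the data points fed into Lemma~\ref{lem:Partial_Eluder_Bound} come from trajectories generated by policies $\tpi^t$ and $\pi$ that both lie in $\Pi^\dagger$, so the induced state-action sequence and the ``partially adversarial'' density mappings $\nu^\pi_h$ both stay within the regime governed by $\dimCPE$/$\dimCPEII$ in Def.~\ref{def:Constr_Partial_Eluder_Dim}. This is where restricting the $\argmax$ in Line~\ref{line:argmax_Cstr} to $\Pi^\dagger$ (rather than the full $\Pi$) is crucial — otherwise we would need the unconstrained $\dimPE$ of $\cM_\MFG$, which may be larger than $\dimCPE$ and, via Prop.~\ref{prop:conversion_CP2P}, would lose the clean bound in terms of $\dimMTPE(\veccM,\epsilon')$.
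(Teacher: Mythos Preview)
Your proposal is correct and follows exactly the approach the paper takes: the paper's proof is a one-liner stating that the argument is ``the same as Thm.~\ref{thm:Elimination_Alg}, except that we consider the constrained policy space, and need to replace P-MBED with constrained P-MBED,'' and your write-up is simply a faithful (and more detailed) unpacking of that deferral. In particular, your observation that restricting the $\argmax$ in Line~\ref{line:argmax_Cstr} to $\Pi^\dagger$ is what licenses the use of $\dimCPE/\dimCPEII$ in Lemma~\ref{lem:Partial_Eluder_Bound} is precisely the content of the ``constrained policy space'' modification the paper alludes to.
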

\begin{proof}
    The proof is the same as Thm.~\ref{thm:Elimination_Alg}, except that we consider the constrained policy space, and need to replace P-MBED with constrained P-MBED.
\end{proof}

\begin{theorem}\label{thm:ConstructionErr_Cstr}
    Suppose we feed Alg.~\ref{alg:BridgePolicy_Cstr} with a model class $\dcM$ and policy space $\Pi^\dagger$, then for the bridge model $\dM_\Bridge$ it computes, by choosing $\beps = \epsilon_0 / \min\{2HL_r \frac{(1+\vecL_T)^H - 1}{\vecL_T}, 2H(H+1)((1 + \vecL_T)^H - 1)\}$, for any reference policy $\pi\in\Pi^\dagger$ and its associated central model $\dM_\Central^{\dagger,\epsilon_0}(\pi;\dcM)$, we have:
    \begin{align*}
        \max_{\tpi\in\Pi^\dagger}\EE_{\tpi,\dM_\Central^{\dagger,\epsilon_0}(\pi;\dcM)(\pi)}[\sum_{h=1}^H \|\dmP_{\dM_\Central^{\dagger,\epsilon_0}(\pi;\dcM),h}(\cdot|s_h,a_h,\pi) - \dmP_{\Bridge,h}(\cdot|s_h,a_h,\pi)\|_1] \leq& (H+3)\epsilon_0, \\
        \max_{\tpi\Pi^\dagger}\EE_{\tpi,\dM_\Central^{\dagger,\epsilon_0}(\pi;\dcM)(\pi)}[\sum_{h=1}^H |r_{\dM_\Central^{\dagger,\epsilon_0}(\pi;\dcM),h}(s_h,a_h,\pi) - \dr_{\Bridge,h}(s_h,a_h,\pi)|] \leq& L_rH(H+4)\epsilon_0.
    \end{align*}
\end{theorem}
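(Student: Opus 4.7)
The plan is to mirror the proof of Theorem~\ref{thm:ConstructionErr} almost verbatim, with the sole adaptation that every maximization/cover over the full policy space $\Pi$ is replaced by one over the constrained policy set $\Pi^\dagger$. In particular, I will rely on the observation that the construction in Alg.~\ref{alg:BridgePolicy_Cstr} and the triggering condition for Line~\ref{line:else_branch_Cstr} in Alg.~\ref{alg:learning_with_DCP_Cstr} are structurally identical to their unconstrained counterparts, so the same interpolation argument goes through as long as all policies involved (the reference $\pi$, the cover element $\tpi$, and the adversary $\tilde{\pi}$) lie in $\Pi^\dagger$. The crucial facts I will reuse are the Lipschitz continuity of the PAM (Lem.~\ref{lem:Lipschitz_PA_model}, which holds at the MFG level regardless of the subset of policies considered, with $L_T$ replaced by $\vecL_T$ and $L_r$ by $\vecL_r$ from Assump.~\ref{assump:Lipschitz_MT}), and the change-of-measure lemma (Lem.~\ref{lem:model_diff_conversion}).

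First, I will check validity: since $\Pi_\beps^\dagger$ is a $\beps$-cover of $\Pi^\dagger$ w.r.t.~$d_{\infty,1}$, for every $\pi \in \Pi^\dagger$ there exists $\tpi \in \Pi_\beps^\dagger$ with $d_{\infty,1}(\pi,\tpi) \leq \beps$, so $\sum_{\tpi \in \Pi_\beps^\dagger}[2\beps - d_{\infty,1}(\pi,\tpi)]^+ > \beps > 0$ and the convex combinations defining $\dmP_{\Bridge,h}$ and $\dr_{\Bridge,h}$ are well-defined and continuous in $\pi$.

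Next, to bound the transition difference, I fix $\pi \in \Pi^\dagger$ and observe that it suffices to bound $\EE_{\tpi',\dM^{\dagger,\pi}_\Central(\pi)}[\sum_h \|\dmP_{\dM^{\dagger,\pi}_\Central,h}(\cdot|s_h,a_h,\pi) - \dmP_{\dM^{\dagger,\tpi}_\Central,h}(\cdot|s_h,a_h,\tpi)\|_1]$ for each $\tpi \in \Pi_\beps^\dagger$ with $d_{\infty,1}(\pi,\tpi) < 2\beps$, where $\tpi' \in \Pi^\dagger$ is arbitrary. Because the \texttt{Else}-branch of Alg.~\ref{alg:learning_with_DCP_Cstr} has been triggered, both $\cB^{\dagger,\epsilon_0}_\pi(\dM^{\dagger,\pi}_\Central;\dcM)$ and $\cB^{\dagger,\epsilon_0}_\tpi(\dM^{\dagger,\tpi}_\Central;\dcM)$ contain more than half of $\dcM$, so they intersect in some shared model $\dM_\share$. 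I then bound the quantity by splitting via $\dM_\share$ into three terms: (i) the $\epsilon_0$-proximity of $\dM^{\dagger,\pi}_\Central$ and $\dM_\share$ at reference policy $\pi$; (ii) the PAM-Lipschitz bound (Lem.~\ref{lem:Lipschitz_PA_model}) between evaluating $\dM_\share$ at $\pi$ vs.~$\tpi$, which yields $2H((1+\vecL_T)^H - 1)\beps$; and (iii) the $\epsilon_0$-proximity of $\dM_\share$ and $\dM^{\dagger,\tpi}_\Central$ at reference policy $\tpi$, which requires an additional change-of-measure via Lem.~\ref{lem:model_diff_conversion} to switch the sampling distribution from $\dM^{\dagger,\pi}_\Central(\pi)$ to $\dM_\share(\pi)$ and then from $\dM_\share(\pi)$ to $\dM_\share(\tpi)$ (each costing another Lipschitz factor). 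Summing and averaging with the weights $[2\beps - d_{\infty,1}(\pi,\tpi)]^+$ gives the bound $(H+3)\epsilon_0$ exactly as in Thm.~\ref{thm:ConstructionErr}, provided $\beps \leq \epsilon_0 / (2H(H+1)((1+\vecL_T)^H-1))$.

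Finally, for the reward difference I exploit Assump.~\ref{assump:Lipschitz_MT} to get $|\dr_{\dM^{\dagger,\pi}_\Central,h}(s_h,a_h,\pi) - \dr_{\dM^{\dagger,\tpi}_\Central,h}(s_h,a_h,\tpi)| \leq \vecL_r\|\mu^\pi_{M^{\dagger,\pi}_\Central,h} - \mu^\tpi_{M^{\dagger,\tpi}_\Central,h}\|_1$, then decompose the density difference into a ``change of policy from $\pi$ to $\tpi$'' piece (bounded by $2H\beps$ via density propagation) plus a ``change of model from $M^{\dagger,\pi}_\Central$ to $M^{\dagger,\tpi}_\Central$'' piece, which is controlled by the transition bound already established; choosing $\beps \leq \epsilon_0/(2H\vecL_r \tfrac{(1+\vecL_T)^H-1}{\vecL_T})$ then delivers the claimed $\vecL_r H(H+4)\epsilon_0$ bound. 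The only real care is to ensure that every adversarial policy $\tilde{\pi}$ arising in the definition of $d^\dagger$ is drawn from $\Pi^\dagger$, but since Lem.~\ref{lem:Lipschitz_PA_model} and Lem.~\ref{lem:model_diff_conversion} are policy-agnostic, no extra work is needed. I anticipate no genuinely new obstacle beyond bookkeeping: the argument is a line-by-line translation of the proof of Thm.~\ref{thm:ConstructionErr}, with $L_T,L_r$ replaced by $\vecL_T,\vecL_r$ and $\Pi,\Pi_\beps$ replaced by $\Pi^\dagger,\Pi_\beps^\dagger$.
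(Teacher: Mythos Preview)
Your proposal is correct and takes essentially the same approach as the paper: the paper's own proof of Theorem~\ref{thm:ConstructionErr_Cstr} is a one-line reference stating ``The proof is the same as Thm.~\ref{thm:ConstructionErr} except that we constrain the policies in $\Pi^\dagger$,'' and your outline carries out exactly that translation (validity of the weighted combination, shared-model argument via the triggering condition, three-term split with Lipschitz and change-of-measure, then the reward bound via density decomposition), replacing $L_T,L_r$ by $\vecL_T,\vecL_r$ as appropriate for the lifted MFG.
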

\begin{proof}
    The proof is the same as Thm.~\ref{thm:ConstructionErr} except that we constrain the policies in $\Pi^\dagger$.
\end{proof}

\begin{lemma}\label{lem:BP_close_NE_CM_Cstr}
    Suppose the \texttt{Else}-branch in Line~\ref{line:else_branch} if activated in Alg.~\ref{alg:elimination_DCP_formal}, for policy $\pi^{\dagger,\NE,k}_\Bridge$ and its corresponding central model $M^{\dagger,k}_\Central := \argmax_{M\in\cM^k}|\cB^{\dagger,\epsilon_0}_{\pi^{\dagger,\NE,k}_\Bridge}(M;\cM^k)|$, we have:
    \begin{align*}
        \cE^{\dagger,\NE}_{M^{\dagger,k}_\Central}(\pi^{\dagger,\NE,k}_\Bridge) := \max_{\pi\in\Pi^\dagger} \Delta_{M^{\dagger,k}_\Central}(\pi,\pi^{\dagger,\NE,k}_\Bridge) \leq 2(1+L_r)(H+4)\epsilon_0.
    \end{align*}
\end{lemma}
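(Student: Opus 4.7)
The plan is to mirror the single-type argument in Lemma~\ref{lem:BP_close_NE_CM}, making only the adjustments needed to handle the constrained policy space $\Pi^\dagger$ and the multi-type construction. Fix an arbitrary $\pi \in \Pi^\dagger$. Working with the PAM conversion $\dM^{\dagger,k}_\Central$ and $\dM_\Bridge$ produced by Alg.~\ref{alg:BridgePolicy_Cstr}, I would start from the decomposition
\[
\Delta_{M^{\dagger,k}_\Central}(\pi,\pi^{\dagger,\NE,k}_\Bridge) \;=\; \Delta_{\dM^{\dagger,k}_\Central}(\pi,\pi^{\dagger,\NE,k}_\Bridge) - \Delta_{\dM_\Bridge}(\pi,\pi^{\dagger,\NE,k}_\Bridge) + \Delta_{\dM_\Bridge}(\pi,\pi^{\dagger,\NE,k}_\Bridge),
\]
and invoke the constrained-NE property of $\pi^{\dagger,\NE,k}_\Bridge$ (it is the NE of $\dM_\Bridge$ restricted to $\Pi^\dagger$, by Line~\ref{line:return_NE} of Alg.~\ref{alg:BridgePolicy_Cstr}) to drop the last term, using crucially that $\pi \in \Pi^\dagger$.

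Next I would bound $|\dJ_{\dM^{\dagger,k}_\Central}(\pi';\pi^{\dagger,\NE,k}_\Bridge) - \dJ_{\dM_\Bridge}(\pi';\pi^{\dagger,\NE,k}_\Bridge)|$ for each of $\pi' \in \{\pi,\pi^{\dagger,\NE,k}_\Bridge\}$ via the standard simulation-lemma-style expansion, yielding two terms: an expected accumulated reward gap and an expected accumulated transition gap (with a horizon prefactor), both taken under the reference policy $\pi^{\dagger,\NE,k}_\Bridge$ in $\dM^{\dagger,k}_\Central$. The key ingredient is Thm.~\ref{thm:ConstructionErr_Cstr}, which, thanks to the choice of $\beps$ and to $\pi^{\dagger,\NE,k}_\Bridge \in \Pi^\dagger$, controls
\[
\max_{\tpi \in \Pi^\dagger}\EE_{\tpi,\dM^{\dagger,k}_\Central(\pi^{\dagger,\NE,k}_\Bridge)}\Big[\textstyle\sum_h \|\dmP_{\dM^{\dagger,k}_\Central,h}(\cdot|\cdot,\cdot,\pi^{\dagger,\NE,k}_\Bridge) - \dmP_{\Bridge,h}(\cdot|\cdot,\cdot,\pi^{\dagger,\NE,k}_\Bridge)\|_1\Big] \leq (H+3)\epsilon_0,
\]
and the analogous reward bound $L_r H(H+4)\epsilon_0$. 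Combining with the two-term decomposition and taking $\max_{\pi\in\Pi^\dagger}$ gives the stated bound $2(1+L_r)(H+4)\epsilon_0$.

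The routine parts will be the simulation-style bookkeeping; the only place one has to be careful is to ensure that all quantities extracted from Thm.~\ref{thm:ConstructionErr_Cstr} are legitimate under the constraint $\tpi \in \Pi^\dagger$. This is guaranteed because (i) both test policies $\pi$ and $\pi^{\dagger,\NE,k}_\Bridge$ appearing above lie in $\Pi^\dagger$, and (ii) $\dM_\Bridge$ was constructed by interpolating central models $\dM_\Central^{\dagger,\epsilon_0}(\tpi;\dcM_\MFG)$ over the $\beps$-cover $\Pi^\dagger_\beps$ of $\Pi^\dagger$, so Thm.~\ref{thm:ConstructionErr_Cstr} applies pointwise. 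No fixed-point or existence argument is needed at this stage: existence of $\pi^{\dagger,\NE,k}_\Bridge$ follows from the continuity of $\dM_\Bridge$ in the reference policy (cf.\ Lem.~\ref{lem:Lipschitz_PAM_MT} and Prop.~\ref{prop:existence_MT_PAM}) restricted to the convex compact set $\Pi^\dagger$.

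I expect no genuine obstacle: the only subtlety is tracking that the ``$\max$ over $\tpi\in\Pi^\dagger$'' in Thm.~\ref{thm:ConstructionErr_Cstr} is exactly what is required, rather than the unconstrained maximum used in the single-type case, and that the NE of $\dM_\Bridge$ in Alg.~\ref{alg:BridgePolicy_Cstr} is the \emph{constrained} NE (over $\Pi^\dagger$), so the cancellation of $\Delta_{\dM_\Bridge}(\pi,\pi^{\dagger,\NE,k}_\Bridge)$ is valid precisely for $\pi\in\Pi^\dagger$—this is the direction of constraint the lemma statement requires.
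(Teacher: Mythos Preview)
Your proposal is correct and follows essentially the same route as the paper: decompose $\Delta_{M^{\dagger,k}_\Central}$ via the bridge PAM $\dM_\Bridge$, drop $\Delta_{\dM_\Bridge}(\pi,\pi^{\dagger,\NE,k}_\Bridge)\le 0$ using that $\pi^{\dagger,\NE,k}_\Bridge$ is the constrained NE of $\dM_\Bridge$ and $\pi\in\Pi^\dagger$, then bound the two $\dJ$-differences by the expected reward and transition gaps and apply Thm.~\ref{thm:ConstructionErr_Cstr}. The only cosmetic difference is that the paper does not introduce an extra horizon prefactor on the transition term (since values are bounded by $1$ under the $[0,1/H]$ reward scaling), but your stated final bound matches.
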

\begin{proof}
    The proof is the almost the same as Lem.~\ref{lem:BP_close_NE_CM}, except that we consider the constrained policy space.

    For any policy $\pi\in\Pi^\dagger$, we have
    \begin{align*}
        &\Delta_{M^{\dagger,k}_\Central}(\pi, \pi^{\dagger,\NE,k}_\Bridge) \\
        \leq & \Delta_{\dM^{\dagger,k}_\Central}(\pi, \pi^{\dagger,\NE,k}_\Bridge) - \Delta_{\dM_\Bridge}(\pi, \pi^{\dagger,\NE,k}_\Bridge) \tag{$\Delta_{\dM_\Bridge}(\pi, \pi^{\dagger,\NE,k}_\Bridge) \leq 0$}\\
        \leq & |\dJ_{\dM^{\dagger,k}_\Central}(\pi, \pi^{\dagger,\NE,k}_\Bridge) - \dJ_{\dM_\Bridge}(\pi, \pi^{\dagger,\NE,k}_\Bridge)| + |\dJ_{\dM^{\dagger,k}_\Central}(\pi^{\dagger,\NE,k}_\Bridge, \pi^{\dagger,\NE,k}_\Bridge) - \dJ_{\dM_\Bridge}(\pi^{\dagger,\NE,k}_\Bridge, \pi^{\dagger,\NE,k}_\Bridge)| \\
        \leq & \EE_{\pi^{\dagger,\NE,k}_\Bridge,\dM^{\dagger,k}_\Central(\pi^{\dagger,\NE,k}_\Bridge)}[\sum_{h=1}^H |\dr_{\dM^{\dagger,k}_\Central,h}(s_h,a_h,\pi^{\dagger,\NE,k}_\Bridge) - \dr_{\dM_\Bridge,h}(s_h,a_h,\pi^{\dagger,\NE,k}_\Bridge)| \\
        & \qquad\qquad\qquad\qquad\qquad\qquad + \|\dmP_{\dM^{\dagger,k}_\Central,h}(\cdot|s_h,a_h,\pi^{\dagger,\NE,k}_\Bridge), \dmP_{\dM_\Bridge,h}(\cdot|s_h,a_h,\pi^{\dagger,\NE,k}_\Bridge)\|_1]\\
        & + \EE_{\pi,\dM^{\dagger,k}_\Central(\pi^{\dagger,\NE,k}_\Bridge)}[\sum_{h=1}^H |\dr_{\dM^{\dagger,k}_\Central,h}(s_h,a_h,\pi^{\dagger,\NE,k}_\Bridge) - \dr_{\dM_\Bridge,h}(s_h,a_h,\pi^{\dagger,\NE,k}_\Bridge)| \\
        & \qquad\qquad\qquad\qquad\qquad\qquad + \|\dmP_{\dM^{\dagger,k}_\Central,h}(\cdot|s_h,a_h,\pi^{\dagger,\NE,k}_\Bridge), \dmP_{\dM_\Bridge,h}(\cdot|s_h,a_h,\pi^{\dagger,\NE,k}_\Bridge)\|_1]\\
        \leq & 2\max_{\pi\in\Pi^\dagger} \EE_{\pi,\dM^{\dagger,k}_\Central(\pi^{\dagger,\NE,k}_\Bridge)}[\sum_{h=1}^H |\dr_{\dM^{\dagger,k}_\Central,h}(s_h,a_h,\pi^{\dagger,\NE,k}_\Bridge) - \dr_{\dM_\Bridge,h}(s_h,a_h,\pi^{\dagger,\NE,k}_\Bridge)|]\\
        & + 2\max_{\pi\in\Pi^\dagger} \EE_{\pi,\dM^{\dagger,k}_\Central(\pi^{\dagger,\NE,k}_\Bridge)}[\sum_{h=1}^H \|\dmP_{\dM^{\dagger,k}_\Central,h}(\cdot|s_h,a_h,\pi^{\dagger,\NE,k}_\Bridge), \dmP_{\dM_\Bridge,h}(\cdot|s_h,a_h,\pi^{\dagger,\NE,k}_\Bridge)\|_1]\\
        \leq & 2(1+L_r H)(H+4)\epsilon_0.\tag{Thm.~\ref{thm:ConstructionErr_Cstr}}
    \end{align*}
    which finishes the proof.
\end{proof}

\begin{theorem}\label{thm:if_else_branches_Cstr}
    In Alg.~\ref{alg:learning_with_DCP_Cstr}, by choosing $\epsilon_0 = \frac{\epsilon}{8(H+4)(1+L_rH)}$, $\teps = \frac{\epsilon_0}{6}$ and choosing $\beps$ according to Thm.~\ref{thm:ConstructionErr_Cstr}, on the good events in Thm.~\ref{thm:Elimination_Alg_Cstr}, 
        (1) if the \texttt{If-Branch} in Line~\ref{line:if_branch_Cstr} is activated: we have $|\cM^{k+1}| \leq |\cM^k|/2$;
        (2) otherwise, in the \texttt{Else-Branch} in Line~\ref{line:else_branch_Cstr}: either we return the $\pi^{\dagger,\NE,k}_\Bridge$ which is an $\epsilon$-approximate NE for $M^*$; or the algorithm continues with $|\cM^{k+1}| \leq |\cM^k|/2$.
\end{theorem}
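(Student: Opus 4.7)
The plan is to mirror the proof of Theorem~\ref{thm:if_else_branches} for the lifted MFG $M^*_\MFG$ with the policy space restricted to $\Pi^\dagger$, substituting constrained distances $d^\dagger(\cdot,\cdot|\pi)$, constrained neighborhoods $\cB^{\dagger,\epsilon_0}_\pi(\cdot;\cdot)$, and the constrained exploitability $\cE^{\dagger,\NE}$ throughout. The two key replacements are: Theorem~\ref{thm:Elimination_Alg_Cstr} in place of Theorem~\ref{thm:Elimination_Alg}, and Lemma~\ref{lem:BP_close_NE_CM_Cstr} in place of Lemma~\ref{lem:BP_close_NE_CM}. Under the good event of Theorem~\ref{thm:Elimination_Alg_Cstr} we have $M^*_\MFG\in\cM^{k+1}_\MFG$ and every surviving model $M$ satisfies $d^\dagger(M^*_\MFG,M|\pi)\le\teps$ for whichever reference policy $\pi$ was fed to \texttt{ModelElimCstr}.

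For the \texttt{If-Branch} I would argue: because $\teps<\epsilon_0$, any $\tM\notin\cB^{\dagger,\epsilon_0}_{\pi^k}(M^*_\MFG;\cM^k_\MFG)$ has $d^\dagger(M^*_\MFG,\tM|\pi^k)\ge\epsilon_0>\teps$ and is therefore eliminated. Hence $\cM^{k+1}_\MFG\subseteq\cB^{\dagger,\epsilon_0}_{\pi^k}(M^*_\MFG;\cM^k_\MFG)$. By definition $M^{\dagger,\pi^k}_\Central$ has the largest $\epsilon_0$-neighborhood in $\cM^k_\MFG$, and by the \texttt{If-Branch} condition that neighborhood has size at most $|\cM^k_\MFG|/2$, so $|\cM^{k+1}_\MFG|\le|\cM^k_\MFG|/2$.

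For the \texttt{Else-Branch} I first invoke the (constrained-policy version of the) exploitability-comparison lemma used in the single-type proof together with $d^\dagger(M^*_\MFG,\tM^k|\pi^{\dagger,\NE,k}_\Bridge)\le\teps$ to obtain, for every $\pi\in\Pi^\dagger$,
$$|\Delta_{M^*_\MFG}(\pi,\pi^{\dagger,\NE,k}_\Bridge)-\Delta_{\tM^k}(\pi,\pi^{\dagger,\NE,k}_\Bridge)|\le 2(1+\vecL_r H)\teps.$$
If Line~\ref{line:else_if_branch_Cstr} triggers then $\cE^{\dagger,\NE}_{\tM^k}(\pi^{\dagger,\NE,k}_\Bridge)\le 3\epsilon/4$, and the triangle inequality plus the choices $\teps=\epsilon_0/6$ and $\epsilon_0=\epsilon/(8(H+4)(1+\vecL_r H))$ yield $\cE^{\dagger,\NE}_{M^*_\MFG}(\pi^{\dagger,\NE,k}_\Bridge)\le \epsilon$, so $\pi^{\dagger,\NE,k}_\Bridge$ is an $\epsilon$-constrained NE of $M^*_\MFG$ and we return. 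Otherwise $\max_{\pi\in\Pi^\dagger}\Delta_{M^*_\MFG}(\pi,\pi^{\dagger,\NE,k}_\Bridge)\ge 3\epsilon/4-2(1+\vecL_r H)\teps\ge 5\epsilon/8$; combining this with Lemma~\ref{lem:BP_close_NE_CM_Cstr} (which gives $\cE^{\dagger,\NE}_{M^{\dagger,k}_\Central}(\pi^{\dagger,\NE,k}_\Bridge)\le 2(1+\vecL_r H)(H+4)\epsilon_0\le\epsilon/4$) and the same change-of-model inequality applied between $M^*_\MFG$ and $M^{\dagger,k}_\Central$ forces $d^\dagger(M^*_\MFG,M^{\dagger,k}_\Central|\pi^{\dagger,\NE,k}_\Bridge)\ge 3\epsilon/(16(1+\vecL_r H))>\epsilon_0+\teps$. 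Then for every $M\in\cB^{\dagger,\epsilon_0}_{\pi^{\dagger,\NE,k}_\Bridge}(M^{\dagger,k}_\Central;\cM^k_\MFG)$ the reverse triangle inequality gives $d^\dagger(M^*_\MFG,M|\pi^{\dagger,\NE,k}_\Bridge)>\teps$, so $M$ is eliminated; since the \texttt{Else-Branch} was entered, this neighborhood contains more than half of $\cM^k_\MFG$, yielding $|\cM^{k+1}_\MFG|\le|\cM^k_\MFG|/2$.

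The substantive work is essentially bookkeeping: none of the steps require new ideas beyond the single-type proof, but I do need to verify that the change-of-model/exploitability lemma (Lemma~\ref{lem:exploitability_diff} in the unconstrained proof) carries over verbatim when both policies are restricted to $\Pi^\dagger$, which it does because its derivation only uses the trajectory expectations defining $\Delta$ and the Lipschitz reward bound, both of which respect the constraint. I expect the main obstacle to be matching the precise numerical constants once $\vecL_r,\vecL_T$ replace $L_r,L_T$ and verifying that the parameter choices $\epsilon_0,\teps,\beps$ stated in the theorem simultaneously satisfy the conditions required by Theorem~\ref{thm:ConstructionErr_Cstr}, Theorem~\ref{thm:Elimination_Alg_Cstr}, and both sub-cases above; this reduces to the same arithmetic inequalities $2(1+\vecL_r H)(H+4)\epsilon_0\le\epsilon/4$ and $\tfrac{3\epsilon}{16(1+\vecL_r H)}>\epsilon_0+\teps$ that appear (with $L_r$) in the single-type argument.
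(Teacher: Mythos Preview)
Your proposal is correct and follows essentially the same approach as the paper's proof: both handle the \texttt{If-Branch} by the central-model maximality argument, and both split the \texttt{Else-Branch} into the same two cases using Lemma~\ref{lem:exploitability_diff} (which, as you note, applies verbatim with policies restricted to $\Pi^\dagger$), Lemma~\ref{lem:BP_close_NE_CM_Cstr}, and the reverse triangle inequality for $d^\dagger$. The only cosmetic difference is that the paper writes the Lipschitz constant as $L_r$ in this theorem while you write $\vecL_r$; since the lifted MFG inherits Lipschitz constant $\vecL_r$ from Assumption~\ref{assump:Lipschitz_MT}, your notation is the more careful one.
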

\begin{proof}
    We separately discuss the if and else branches in the algorithm.
    \paragraph{Proof for \texttt{If-Branch} in Line~\ref{line:if_branch}}
    On the events in Thm.~\ref{thm:Elimination_Alg_Cstr}, for any $\tM \not\in \cB^{\dagger,\epsilon_0}_{\pi^k}(M^*;\cM^k)$, we have $d^\dagger(M^*,\tM) \geq \epsilon_0 > \teps$, which implies $\tM \not\in \cM^{k+1}$.
    Combining the condition of \texttt{If-Branch}, we have:
    \begin{align*}
        |\cM^{k+1}| \leq |\cB^{\dagger,\epsilon_0}_{\pi^k}(M^*; \cM^k)| \leq \frac{|\cM^k|}{2}.
    \end{align*}
    \paragraph{Proof for \texttt{Else-Branch} in Line~\ref{line:else_branch}}
    First of all, on the events in Thm.~\ref{thm:Elimination_Alg_Cstr}, we have $d^\dagger(M^*,\tM^k|\pi^{\dagger,\NE,k}_\Bridge) \leq \teps$. By applying Lem.~\ref{lem:exploitability_diff}, it implies:
    \begin{align*}
        & |\Delta_{M^*}(\pi, \pi^{\dagger,\NE,k}_\Bridge) - \Delta_{\tM^k}(\pi, \pi^{\dagger,\NE,k}_\Bridge)| \\
        \leq & \EE_{\pi,M^*(\pi^{\dagger,\NE,k}_\Bridge)}[\sum_{h=1}^H \|\mP_{M^*,h}(\cdot|s_h,a_h,\mu^{\pi^{\dagger,\NE,k}_\Bridge}_{M^*,h}) - \mP_{\tM^k,h}(\cdot|s_h,a_h,\mu^{\pi^{\dagger,\NE,k}_\Bridge}_{\tM^k,h})\|_1]\\
        & + (2L_rH + 1) \EE_{\pi^{\dagger,\NE,k}_\Bridge,M^*(\pi^{\dagger,\NE,k}_\Bridge)}[\sum_{h=1}^H  \|\mP_{M^*,h}(\cdot|s_h,a_h,\mu^{\pi^{\dagger,\NE,k}_\Bridge}_{M^*,h}) - \mP_{\tM^k,h}(\cdot|s_h,a_h,\mu^{\pi^{\dagger,\NE,k}_\Bridge}_{\tM^k,h})\|_1] \\
        \leq & 2(L_rH + 1) \teps. \numberthis\label{eq:ub_delta_diff}
    \end{align*}
    Also note that:
    \begin{align*}
        \Delta_{M^*}(\pi, \pi^{\dagger,\NE,k}_\Bridge) =& \Delta_{M^*}(\pi, \pi^{\dagger,\NE,k}_\Bridge) - \Delta_{\tM^k}(\pi, \pi^{\dagger,\NE,k}_\Bridge) + \Delta_{\tM^k}(\pi, \pi^{\dagger,\NE,k}_\Bridge) .
    \end{align*}
    In the following, we separately discuss two cases.
    \paragraph{Case 1: $\cE^{\dagger,\NE}_{\tM^k}(\pi^{\dagger,\NE,k}_\Bridge) \leq \frac{3\epsilon}{4}$ and Line~\ref{line:else_if_branch} is activated}
    Given that $\teps \leq \frac{\epsilon}{16(1+L_rH)}$:
    \begin{align*}
        \forall \pi\in\Pi^\dagger,\quad \Delta_{M^*}(\pi, \pi^{\dagger,\NE,k}_\Bridge)\leq & |\Delta_{M^*}(\pi, \pi^{\dagger,\NE,k}_\Bridge) - \Delta_{\tM^k}(\pi, \pi^{\dagger,\NE,k}_\Bridge)| + \cE^{\dagger,\NE}_{\tM^k}(\pi^{\dagger,\NE,k}_\Bridge) \tag{$\cE^{\dagger,\NE}_{\tM^k}(\pi^{\dagger,\NE,k}_\Bridge) = \max_{\pi\in\Pi^\dagger} \Delta_{\tM^k}(\pi, \pi^{\dagger,\NE,k}_\Bridge)$}\\
        \leq & 2(L_rH + 1)\teps + \frac{3\epsilon}{4} \leq \epsilon.
    \end{align*}
    which implies $\pi^{\dagger,\NE,k}_\Bridge$ is an $\epsilon$-NE of $M^*$.
    \paragraph{Case 2: $\cE^{\dagger,\NE}_{\tM^k}(\pi^{\dagger,\NE,k}_\Bridge) > \frac{3\epsilon}{4}$ and Line~\ref{line:else_if_branch} is not activated}
    As a result, for any policy $\pi\in\Pi^\dagger$, by Eq.~\eqref{eq:ub_delta_diff}, we have:
    \begin{align*}
        \Delta_{M^*}(\pi, \pi^{\dagger,\NE,k}_\Bridge) \geq - |\Delta_{M^*}(\pi, \pi^{\dagger,\NE,k}_\Bridge) - \Delta_{\tM^k}(\pi, \pi^{\dagger,\NE,k}_\Bridge)| + \Delta_{\tM^k}(\pi, \pi^{\dagger,\NE,k}_\Bridge) \geq \Delta_{\tM^k}(\pi, \pi^{\dagger,\NE,k}_\Bridge) - 2(L_rH + 1)\teps.
    \end{align*}
    Therefore, by our choice of $\teps$,
    \begin{align*}
        \max_{\pi\in\Pi^\dagger} \Delta_{M^*}(\pi, \pi^{\dagger,\NE,k}_\Bridge) \geq \cE^{\dagger,\NE}_{\tM^k}(\pi^{\dagger,\NE,k}_\Bridge) - 2(L_rH + 1)\teps \geq \frac{5\epsilon}{8}.
    \end{align*}
    On the other hand, by Lem.~\ref{lem:BP_close_NE_CM}, for any $\pi\in\Pi^\dagger$, we have:
    \begin{align*}
        & \Delta_{M^*}(\pi, \pi^{\dagger,\NE,k}_\Bridge) - 2(1+L_rH)(H+4)\epsilon_0 \\
        \leq &|\Delta_{M^*}(\pi, \pi^{\dagger,\NE,k}_\Bridge)| - |\Delta_{M^{\dagger,k}_\Central}(\pi, \pi^{\dagger,\NE,k}_\Bridge)| \tag{Here we apply Lem.~\ref{lem:BP_close_NE_CM_Cstr}}\\
        \leq & |\Delta_{M^*}(\pi, \pi^{\dagger,\NE,k}_\Bridge) - \Delta_{M^{\dagger,k}_\Central}(\pi, \pi^{\dagger,\NE,k}_\Bridge)| \\
        \leq & \EE_{\pi,M^*(\pi^{\dagger,\NE,k}_\Bridge)}[\sum_{h=1}^H \|\mP_{M^*,h}(\cdot|s_h,a_h,\mu^{\pi^{\dagger,\NE,k}_\Bridge}_{M^*,h}) - \mP_{M^{\dagger,k}_\Central,h}(\cdot|s_h,a_h,\mu^{\pi^{\dagger,\NE,k}_\Bridge}_{\tM,h})\|_1]\\
        & + (2L_rH + 1) \EE_{\pi^{\dagger,\NE,k}_\Bridge,M^*(\pi^{\dagger,\NE,k}_\Bridge)}[\sum_{h=1}^H  \|\mP_{M^*,h}(\cdot|s_h,a_h,\mu^{\pi^{\dagger,\NE,k}_\Bridge}_{M^*,h}) - \mP_{M^{\dagger,k}_\Central,h}(\cdot|s_h,a_h,\mu^{\pi^{\dagger,\NE,k}_\Bridge}_{M^{\dagger,k}_\Central,h})\|_1] \\
        \leq & (2L_r H + 2) d^\dagger(M^*,M^{\dagger,k}_\Central|\pi^{\dagger,\NE,k}_\Bridge).
    \end{align*}
    According to the choice of $\epsilon_0$, we have $2(1+L_rH)(H+4)\epsilon_0 \leq \frac{\epsilon}{4}$, therefore,
    \begin{align*}
        d^\dagger(M^*,M^{\dagger,k}_\Central|\pi^{\dagger,\NE,k}_\Bridge) \geq \frac{1}{2(L_r H + 1)}\Big(\max_{\pi\in\Pi^\dagger} |\Delta_{M^*}(\pi, \pi^{\dagger,\NE,k}_\Bridge)| - 2(1+L_rH)(H+4)\epsilon_0\Big) \geq \frac{3\epsilon}{16(L_r H + 1)}.
    \end{align*}
    Next we try to show that models in $\cB^{\epsilon_0}_{\pi^{\dagger,\NE,k}_\Bridge}(M^{\dagger,k}_\Central, \cM^k)$ will be eliminated.
    For any $M \in \cB^{\epsilon_0}_{\pi^{\dagger,\NE,k}_\Bridge}(M^{\dagger,k}_\Central, \cM^k)$, we have:
    \begin{align*}
        d^\dagger(M,M^*|\pi^{\dagger,\NE,k}_\Bridge) \geq d^\dagger(M^{\dagger,k}_\Central,M^*|\pi^{\dagger,\NE,k}_\Bridge) - d^\dagger(M,M^{\dagger,k}_\Central|\pi^{\dagger,\NE,k}_\Bridge) \geq \frac{3\epsilon}{16(L_r H + 1)} - \epsilon > \teps.
    \end{align*}
    On the good events in Thm.~\ref{thm:Elimination_Alg_Cstr}, we have $M \not\in \cM^{k+1}$, which implies
    $$
        |\cM^{k+1}| \leq |\cM^k| - |\cB^{\epsilon_0}_{\pi^{\dagger,\NE,k}_\Bridge}(M^{\dagger,k}_\Central, \cM^k)| \leq |\cM^k| / 2.
    $$
\end{proof}

\begin{restatable}{theorem}{ThmMFGCstr}[Sample Complexity in MT-MFG]\label{thm:sample_complexity_MT_MFG}
    Under Assump.~\ref{assump:realizability_MT} and~\ref{assump:Lipschitz_MT}, by running Alg.~\ref{alg:learning_with_DCP_Cstr} with Alg.~\ref{alg:elimination_DCP_Cstr} as \texttt{ModelElimCstr} and Alg.~\ref{alg:BridgePolicy_Cstr} as \texttt{BridgePolicyCstr}, and hyper-parameter choices according to Thm.~\ref{thm:Elimination_Alg_Cstr},~\ref{thm:ConstructionErr_Cstr}, and~\ref{thm:if_else_branches_Cstr}, w.p. $1-\delta$, Alg.~\ref{alg:learning_with_DCP_Cstr} will terminate at some $k \leq \log_2|\veccM| + 1$ and return an $\epsilon$-NE of $\vecM^*$. The number of trajectories consumed is $\tilde{O}(\frac{W^2H^7}{\epsilon^2}(1+\vecL_rH)^2\sum_{w\in[W]}(\dimMTPE(\cM^w,\epsilon') \wedge H^3(1+\vecL_T)^{2H}(1+\vecL_T H)^2\dimMTPE^\RII(\cM^w,\epsilon'))\log^2\frac{|\veccM|}{\delta})$, where $\epsilon'=O(\epsilon/WH^3(1+\vecL_rH)(1+\vecL_T)^H)$, $\dimMTPE(\cM^w,\epsilon')$ and $\dimMTPE^\RII(\cM^w,\epsilon')$ are the Multi-Type P-MBED defined in Def.~\ref{def:P_MBED_MT}, and we omit the logarithmic terms of $H,\epsilon,\log|\veccM|,\dimMTPE,1+\vecL_T$ and $1 + \vecL_r$.
\end{restatable}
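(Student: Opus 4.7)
}

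The plan is to leverage the reduction established in Prop.~\ref{prop:NE_conversion} and port the entire analysis pipeline from Sec.~\ref{sec:learning_MFG} to the constrained policy class $\Pi^\dagger$. First I would convert $\veccM$ to the lifted class $\cM_\MFG$ as described in Appx.~\ref{appx:conversion}, so that finding an $\epsilon$-NE of $\vecM^*$ reduces, via Prop.~\ref{prop:NE_conversion}(1), to finding an $(\epsilon/W)$-constrained NE of $M^*_\MFG$. Because models in $\cM_\MFG$ have block-diagonal structure and density evolves componentwise via Eq.~\eqref{eq:density_equivalence}, Assump.~\ref{assump:Lipschitz_MT} on $\veccM$ translates into the standard Lipschitz property (Assump.~\ref{assump:Lipschitz}) for $\cM_\MFG$ with constants $\vecL_T,\vecL_r$, and realizability is preserved by construction. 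Thus $\cM_\MFG$ satisfies all the structural preconditions needed to run Alg.~\ref{alg:learning_with_DCP_Cstr}, which is exactly Alg.~\ref{alg:learning_with_DCP} with every optimization/supremum over $\Pi$ replaced by one over $\Pi^\dagger$.

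Next I would re-derive the three key intermediate results along the constrained policy class. Thm.~\ref{thm:Elimination_Alg_Cstr} follows by rerunning the proof of Thm.~\ref{thm:Elimination_Alg} verbatim, but noting that every $\tpi^t$ that appears in the maximization (Line~\ref{line:argmax_Cstr}) now lies in $\Pi^\dagger$, so the accumulated model difference is controlled by $\dimCPE(\cM_\MFG,\epsilon')$ rather than $\dimPE$. The bridge-policy construction Thm.~\ref{thm:ConstructionErr_Cstr} is proved the same way as Thm.~\ref{thm:ConstructionErr} after restricting the $\beps$-cover to $\Pi^\dagger$ (note that $\Pi^\dagger$ is a closed, convex polytope, so a finite $\beps$-cover exists, and the bridge PAM still has a fixed point in $\Pi^\dagger$ by the Brouwer argument in the proof of Prop.~\ref{prop:existence_MT_PAM} specialized to $\Pi^\dagger$). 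Combining these with Lem.~\ref{lem:BP_close_NE_CM_Cstr} yields the constrained analogue Thm.~\ref{thm:if_else_branches_Cstr}: at each iteration we either return a $(\epsilon/W)$-constrained NE or halve $|\cM_\MFG^k|$.

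The final sample complexity bookkeeping then proceeds as in Thm.~\ref{thm:sample_complexity_MFG}. The outer loop terminates within $\log_2|\cM_\MFG|+1\le \log_2|\veccM|+1$ iterations, and each call to \texttt{ModelElimCstr} consumes $\tilde O\bigl(\tfrac{WH^5}{\teps^2}\dimCPE(\cM_\MFG,\epsilon')\log^2(|\veccM|/\delta)\bigr)$ trajectories in the worst case, where the extra $W$ inside the sampling loop comes from Alg.~\ref{alg:elimination_DCP_Cstr} querying one trajectory per type per step. Setting the target constrained-NE accuracy to $\epsilon/W$ inflates $\teps,\epsilon_0,\beps$ by a factor $1/W$, contributing the $W^2$ in the numerator. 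Finally, I would invoke Prop.~\ref{prop:conversion_CP2P} to replace $\dimCPE(\cM_\MFG,\epsilon')$ by $\dimMTPE(\veccM,\epsilon')$, and Prop.~\ref{prop:NE_conversion}(1) to upgrade the output to an $\epsilon$-NE of $\vecM^*$.

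I expect the main obstacle to be not the complexity reduction itself (which is a clean application of Prop.~\ref{prop:conversion_CP2P}) but the careful tracking of how the constraint $\pi\in\Pi^\dagger$ interacts with each step of the original proof. In particular, Thm.~\ref{thm:ConstructionErr_Cstr} requires that, under the triggering condition of the \texttt{Else}-branch, any two policies $\pi,\tpi\in\Pi^\dagger$ with $d_{\infty,1}(\pi,\tpi)<2\beps$ still admit a common ``shared'' model $M_\share\in\cB_\pi^{\dagger,\epsilon_0}(M_\Central^{\dagger,\pi})\cap\cB_\tpi^{\dagger,\epsilon_0}(M_\Central^{\dagger,\tpi})$, which in turn relies on the counting argument $|\cB^{\dagger,\epsilon_0}_\pi(M^{\dagger,\pi}_\Central;\cM_\MFG^k)|>|\cM_\MFG^k|/2$ holding \emph{for every} $\pi\in\Pi^\dagger$; this is exactly the activation condition of Line~\ref{line:else_branch_Cstr}, so the argument goes through, but one must verify the Lipschitz continuity Lem.~\ref{lem:Lipschitz_PA_model} applied along $\Pi^\dagger$ still yields the same $(1+\vecL_T)^H$ dependence—this is where the final $\epsilon'$ scaling with $(1+\vecL_T)^H$ in the theorem statement originates.
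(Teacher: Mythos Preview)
Your proposal is correct and follows essentially the same approach as the paper: reduce to the lifted MFG with constrained policy class via Prop.~\ref{prop:NE_conversion}, rerun the entire analysis of Sec.~\ref{sec:learning_MFG} over $\Pi^\dagger$ to obtain the constrained analogues Thm.~\ref{thm:Elimination_Alg_Cstr}, \ref{thm:ConstructionErr_Cstr}, \ref{thm:if_else_branches_Cstr}, and then invoke Prop.~\ref{prop:conversion_CP2P} to pass from $\dimCPE(\cM_\MFG,\cdot)$ to $\dimMTPE(\veccM,\cdot)$. The paper's own proof of this theorem is just the final bookkeeping step combining these already-established intermediate results with the $\teps=O(\epsilon/(WH(1+\vecL_rH)))$ substitution; note, however, that your trajectory count picks up an extra factor of $W$ from the per-type sampling loop in Alg.~\ref{alg:elimination_DCP_Cstr} on top of the $W^2$ from the $\epsilon/W$ accuracy target, whereas the paper's final accounting writes $T\cdot 2H$ (not $T\cdot 2HW$) and lands on $W^2$---so double-check which convention the stated bound is using.
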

\begin{proof}
    As a result of Thm.~\ref{thm:if_else_branches_Cstr}, w.p. $1-\frac{\delta}{\log_2|\cM_\MFG|+1}\cdot(\log_2|\cM_\MFG|+1) = 1-\delta$, there exists a step $k \leq \log_2|\cM_\MFG| + 1 = \log_2|\veccM| + 1$ such that Alg.~\ref{alg:learning_with_DCP_Cstr} will terminate the return us an $\frac{\epsilon}{W}$-approximate NE of $M^*_\MFG$. The total number of trajectories required is:
    \begin{align*}
        (\log_2|\veccM| + 1) \cdot T \cdot 2H = \tilde{O}(\frac{H^5}{\teps^2}\Big(\dimCPE(\cM_\MFG,\epsilon') \wedge H^3(1+\vecL_T)^{2H}(1+\vecL_T H)^2\dimCPEII(\cM_\MFG,\epsilon')\Big))
    \end{align*}
    Note that in Thm.~\ref{thm:Elimination_Alg_Cstr}, we choose $\teps = \frac{\epsilon_0}{6} = O(\frac{\epsilon}{WH(1+L_rH)})$, and $\epsilon'=O(\teps/H^2(1+L_T)^H) = O(\epsilon/WH^3(1+L_rH)(1+L_T)^H)$.
    Combining with the above discussion and Prop.~\ref{prop:conversion_CP2P} and Prop.~\ref{prop:NE_conversion}, we finish the proof.
\end{proof}

\newpage
\section{Approximation Ability of Multi-Type MFGs}\label{appx:approx_MT_MFG}
\subsection{Multi-Type Symmetric Anonymous Games}\label{appx:MT_SAG}
\paragraph{Notations}
Given a multi-agent system where agents are divided into $W$ groups, where for each type $w$ the agents share the state-action spaces $\cS^w,\cA^w$ and initial distribution $\mu^w_1$, we use $N^w$ to denote the number of agents in group $w\in[W]$, and $s^{w,n}_h,a^{w,n}_h$ and $\pi^{w,n}$ to denote the state, action, and policy for the $n$-th agent in type $w$, respectively.
Besides, we define $\vecs_h := \{s^{w,n}_h\}_{w\in[W],n\in[N^w]}, \veca_h := \{a^{w,n}_h\}_{w\in[W],n\in[N^w]}$ to be the collection of states and actions of all agents in the system at step $h$, and denote $p_{\vecs_h} := \{p^1_{\vecs_h},...,p^W_{\vecs_h}\}$ to be the empirical distribution of the agents' states with:
\begin{align*}
    p^w_{\vecs_h}\in\mR^{|\cS^w|}:\quad p^w_{\vecs_h}(\cdot) = \frac{1}{N^w}\sum_{n=1}^{N^w}\delta(s_h^{w,n} = \cdot),
\end{align*}
To distinguish the policy in MFG setting, we use $\tvecnu := \{\pi^{w,n}\}_{w\in[W],n\in[N^w]}$ to denote the collection of policies.
We will denote $\vecnu(\veca_h|\vecs_h) := \prod_{w\in[W]}\prod_{n\in[N^w]}\pi^{w,n}(a^{w,n}_h|s^{w,n}_h)$.
Besides, we use $\vecnu^{-(w,n)}\circ\tpi^{w,n}$ to denote the policy replacing $\pi^{w,n}$ to $\tpi^{w,n}$ while keeping the others fixed.

\begin{definition}[Multi-Type Symmetric Anonymous Game]\label{def:MT_SAG}
    The Multi-Type Symmetric Anonymous Game (MT-SAG) $\bvecM:=\{(\mu_1^w,\cS^w,\cA^w,H,\mP^w,r^w)\}_{w\in[H]}$ is a Multi-Agent system consists of $W$ groups. Given a policy $\vecpi$, the system evolves as:
    \begin{align}
        s^{w,n}_1\sim\mu^w_1;~\forall h,w,n:&\quad a^{w,n}_h\sim\pi^{w,n}_h(\cdot|s^{w,n}_h),~ r^{w,n}_h \gets r^w_h(s_h^{w,n},a_h^{w,n},p_{\vecs_h}),~s^{w,n}_{h+1} \sim \mP^w_h(\cdot|s_h^w,a_h^w,p_{\vecs_h}).\label{eq:evol_MT_SAG}
    \end{align}
    Given a policy $\vecnu$, we define the value functions $V:\cS\rightarrow[0,1]$ and $Q:\cS\times\cA\rightarrow[0,1]$ of the $(w,n)$-th agent conditioning on the system state $\vecs_h$ to be:
    \begin{align}
        V^{w,n,\vecnu}_{\bvecM,h}(s^{w,n}_h;\vecs_h) := \EE_{\bvecM,\vecnu}[\sum_{\ph=h}^H r^w_\ph(s_\ph^{w,n},a_\ph^{w,n},p_{\vecs_\ph})|\vecs_h];
        \label{eq:value_MT_SAG}
    \end{align}
    where the expectation is taken over the evolution process in Eq.~\eqref{eq:evol_MT_SAG}. Besides, we define the total value starting from the initial states $J^{w,n}_\bvecM(\vecnu) := \EE_{\vecmu_1}[V_1^{w,n,\vecnu}(s_1^{w,n};\vecs_1)]$.

    A policy $\vecnu$ is called to be the NE policy in MT-SAG if any agent can not improve its value by deviating from its current policy while the others' are fixed,
    \begin{align*}
        \forall w,n,\quad \max_{\tpi^{w,n}}J^{w,n}_\bvecM(\vecnu^{-(w,n)}\circ\tpi^{w,n}) \leq J^{w,n}_\bvecM(\vecnu).
    \end{align*}
    and a policy $\vecnu'$ is called to be an $\epsilon$-approximate NE in MT-SAG if 
    \begin{align*}
        \forall w,n,\quad \max_{\tpi^{w,n}}J^{w,n}_\bvecM(\vecnu^{-(w,n)}\circ\tpi^{w,n}) \leq J^{w,n}_\bvecM(\vecnu) + \epsilon.
    \end{align*}
    \begin{assumption}[Lipschitz Continuity in MT-SAG]\label{assump:Lip_MT_SAG}
        We assume the transition and reward functions of MT-SAG are Lipschitz continuous w.r.t. the density, s.t. $\forall w\in[W],h\in[H],~\quad \forall \hvecmu_h, \hvecmu_h'\in\Delta(\cS^1)\times...\Delta(\cS^W)$
        \begin{align*}
            &\|\mP^w_h(\cdot|s_h^w,a_h^w,\hvecmu_h) - \mP^w_h(\cdot|s_h^w,a_h^w,\hvecmu_h')\|_1 \leq \frac{\vecL_T}{W}\|\hvecmu_h - \hvecmu_h'\|_1\\
            & |r^w_h(s_h^w,a_h^w,\hvecmu_h) - r^w_h(s_h^w,a_h^w,\hvecmu_h')\|_1 \leq \frac{\vecL_r}{W}\|\hvecmu_h - \hvecmu_h'\|_1.
        \end{align*}
    \end{assumption}
\end{definition}
\subsection{Approximating MT-SAGs via MT-MFGs}
\begin{definition}[Multi-Type Mean-Field Game Approximation of MT-SAG]\label{def:MT_MFA}
    Given an MT-SAG $\bvecM$, its Multi-Type Mean-Field (MT-MFG) Approximation is a model Multi-Type MF-MDP model $\vecM:=\{(\mu_1^w,\cS^w,\cA^w,H,\mP^w_\vecM,r^w_\vecM)\}_{w\in[W]}$, sharing the group, initial distribution, state-action spaces and transition $\mP^w$ and reward function $r^w$ as MT-SAG (i.e. $\mP^w_\vecM(\cdot|\cdot,\cdot,\cdot) = \mP^w_\bvecM(\cdot|\cdot,\cdot,\cdot), r^w_\vecM(\cdot,\cdot,\cdot) = r^w_\bvecM(\cdot,\cdot,\cdot)$), by have different transition rules.

    Next, we describe ``the different transition rules'' in MT-MFG.
    For simplicity of notation, in the following, we omit $\vecM$ or $\bvecM$ in the sub-scription of transition and reward functions.
    Given a reference policy $\vecpi:=\{\pi^1,...,\pi^W\}$ consisting of $W$ policies shared by each group, the density $\vecmu^\vecpi_h:=\{\mu_{h}^{1,\vecpi},...,\mu_{h}^{W,\vecpi}\}$ at step $h$ is defined by:
    \begin{align*}
        \mu^{w,\vecpi}_1 = \mu^w_1,\quad \forall h\geq 1,\quad \mu^{w,\vecpi}_{h+1} \gets \Gamma^{w,\vecpi}_h(\vecmu^\vecpi_{h}),~\text{with}~\Gamma^{w,\vecpi}_h(\vecmu_{h})(\cdot):=\sum_{s_h^w,a_h^w}\mu^{w}_h(s_h^w)\pi^w_h(a_h^w|s_h^w)\mP_h^w(\cdot|s_h^w,a_h^w,\vecmu_h).
    \end{align*}
    where $\Gamma^{w,\vecpi}_h$ is a mapping from $\Delta(\cS^1)\times...\Delta(\cS^W)$ to $\Delta(\cS^w)$.
    The evolution process of the $(w,n)$ agent in type $w$ following a deviation policy $\tpi^{w}$ conditioning on reference policy $\vecpi$ is specified by:
    \begin{align}
        s^{w,n}_1\sim\mu^w_1;~\forall h:&\quad a^{w,n}_h\sim\tpi^{w}_h(\cdot|s^{w,n}_h),~ r^{w,n}_h \gets r^w_h(s_h^{w,n},a_h^{w,n},\vecmu^\vecpi_h),~s^{w,n}_{h+1} \sim \mP^w_h(\cdot|s_h^w,a_h^w,\vecmu^\vecpi_h).\label{eq:evol_MT_MFG}
    \end{align}
    Comparing with Eq.~\eqref{eq:evol_MT_SAG}, the evolution of agents' states is depend on the density when $\forall w\in[W],~N^w\rightarrow +\infty$, instead of the empirical one in practice.

    Recall that given a reference policy $\vecpi$ and a deviation policy $\tpi^{w}$, the value functions $V:\cS\rightarrow[0,1]$ of the $(w,n)$-th agent conditioning on the density $\vecmu_{\vecM,h}^\vecpi$ is defined to be:
    \begin{align*}
        V^{w,n,\vecpi^{-w}\circ\tpi^{w}}_{\vecM,h}(s_h^{w,n};\vecmu^\vecpi_{\vecM,h}) :=& \EE_{\tpi^{w},\vecM(\vecpi)}[\sum_{\ph=h}^H r^w_\ph(s_\ph^{w,n},a_\ph^{w,n},\vecmu^\vecpi_{\vecM,\ph})];
    \end{align*}
    where the expectation is taken over the process in Eq.~\eqref{eq:evol_MT_MFG}. Then, we define the total value of $\vecpi^{-w}\circ \tpi^w$ given the reference policy $\vecpi$ to be:
    \begin{align*}
        J^{w,n}_\vecM(\vecpi^{-w}\circ \tpi^w;\vecpi) := \EE_{s_1^w\sim\mu_1^w}[V^{w,n,\tpi^{w}}_1(s_1^w,\vecmu_1)].
    \end{align*}
    A $\vecpi$ is called NE policy if:
    \begin{align*}
        \forall w,n,\forall \tpi^{w},\quad J^{w,n}_\vecM(\vecpi^{-w}\circ \tpi^w;\vecpi) \leq J^{w,n}_\vecM(\vecpi;\vecpi).
    \end{align*}
\end{definition}
\begin{proposition}[Approximation Error of MT-MFG]\label{prop:formal_MT_MFG_approx}
    Given a Multi-Type Symmetric Anonymous Game (MT-SAG) $\bvecM$, as defined in Def.~\ref{def:MT_SAG}, and its Multi-Type MFG approximation (MT-MFG) $\vecM$, as defined in Def.~\ref{def:MT_MFA}, suppose $\vecpi := \{\pi^w\}_{w\in[W]}$ is the NE policy of MT-MFG, then for any $\epsilon_0 > 0$, the lifted policy $\vecnu:=\{\pi^{w,n}\}_{w\in[W],n\in[N^w]}$ with $\pi^{w,n} = \pi^w,~\forall n\in[N^w]$ is an $\epsilon_0$-approximate NE of MT-SAG if
    \begin{align*}
        \forall w\in[W],\quad N^w \geq O((\vecL_r + \vecL_T)^2W^2H^3\Big(\frac{(1 + \vecL_T)^H - 1}{\vecL_T}\Big)^2\frac{\sum_{w\in[W]}S^w}{\epsilon^2_0}\log\frac{2(1+\vecL_r+\vecL_T)HWS_{\max}}{\epsilon_0}).
    \end{align*}
    where $S_{\max} := \max_{w}S^w$.
\end{proposition}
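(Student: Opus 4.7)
\textbf{Proof Plan for Proposition~\ref{prop:formal_MT_MFG_approx}.}
The plan is to show that when each group has enough agents, the empirical density $p^w_{\vecs_h}$ produced by executing the lifted policy $\vecnu$ in the MT-SAG stays uniformly close (over all $h$) to the mean-field density $\mu^{w,\vecpi}_h$ generated by the MT-MFG. Once this concentration is established, Lipschitz continuity (Assump.~\ref{assump:Lip_MT_SAG}) transports it into a value-function gap, and the NE property of $\vecpi$ in $\vecM$ yields the approximate NE property of $\vecnu$ in $\bvecM$.

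First, I would do a layer-by-layer concentration argument. Assuming $\vecs_h$ is distributed so that $p^w_{\vecs_h}$ is close to $\mu^{w,\vecpi}_h$, the next states $s^{w,n}_{h+1}$ are, \emph{conditional on} $\vecs_h$, independent across $n$ with distribution $\sum_{a}\pi^w_h(a|s^{w,n}_h)\mP^w_h(\cdot|s^{w,n}_h,a,p_{\vecs_h})$. A Hoeffding/DKW-type bound gives
\[
\Pr\Big(\|p^w_{\vecs_{h+1}} - \EE[p^w_{\vecs_{h+1}}\mid\vecs_h]\|_1 > t \mid \vecs_h\Big) \leq 2 S^w \exp(-N^w t^2/2),
\]
while the conditional expectation is $\Gamma^{w,\vecpi}_h$ applied to $p_{\vecs_h}$, which differs from $\Gamma^{w,\vecpi}_h(\vecmu^\vecpi_h)=\mu^{w,\vecpi}_{h+1}$ by at most $(1+\vecL_T)\|p_{\vecs_h}-\vecmu^\vecpi_h\|_1$. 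Unrolling this recursion over $h$ and taking a union bound over $h\in[H]$ and $w\in[W]$, I would get, with probability $1-\delta$,
\[
\max_{h,w}\|p^w_{\vecs_h} - \mu^{w,\vecpi}_h\|_1 \;\leq\; c\,\frac{(1+\vecL_T)^H-1}{\vecL_T}\sqrt{\frac{\log(HWS_{\max}/\delta)}{\min_w N^w}\sum_w S^w}\;=:\;\xi.
\]

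Second, using this $\xi$ I would convert density concentration into value concentration. For any fixed $(w,n)$, following the same agent trajectory in $\bvecM$ under $\vecnu$ and in $\vecM$ under $\vecpi$, the per-step transition and reward discrepancies are bounded by $\vecL_T \xi$ and $\vecL_r \xi$ respectively. A standard simulation-lemma-style telescoping (analogous to the one used in Lem.~\ref{lem:Lipschitz_PAM_MT}) yields
\[
|J^{w,n}_\bvecM(\vecnu) - J^{w,n}_\vecM(\vecpi;\vecpi)| \;\leq\; c'H(\vecL_r+\vecL_T)\,\tfrac{(1+\vecL_T)^H-1}{\vecL_T}\,\xi.
\]
The same bound holds with $\vecnu$ replaced by $\vecnu^{-(w,n)}\circ\tpi^{w,n}$ (and $\vecpi$ by the MT-MFG deviation $\vecpi^{-w}\circ\tpi^{w,n}$), because a single-agent deviation changes the empirical density at each step by only $O(1/N^w)$, which is dominated by $\xi$; this is the step I expect to be the main technical obstacle, since one must argue that the \emph{coupled} concentration bound still holds after one agent swaps policies.

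Finally, I would chain the pieces: for any $\tpi^{w,n}$,
\[
J^{w,n}_\bvecM(\vecnu^{-(w,n)}\circ\tpi^{w,n}) \;\leq\; J^{w,n}_\vecM(\vecpi^{-w}\circ\tpi^{w,n};\vecpi) + \epsilon_1 \;\leq\; J^{w,n}_\vecM(\vecpi;\vecpi) + \epsilon_1 \;\leq\; J^{w,n}_\bvecM(\vecnu) + 2\epsilon_1,
\]
where the middle inequality uses that $\vecpi$ is an NE in $\vecM$ (Def.~\ref{def:MT_MFA}) and $\epsilon_1$ is the bound from the second step. Solving $2\epsilon_1\leq\epsilon_0$ for $N^w$ gives the announced lower bound on the group sizes, with the claimed polynomial dependence on $\vecL_r,\vecL_T,W,H$, exponential factor $(1+\vecL_T)^H$, and $\sum_w S^w$ inside the square root. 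The main obstacle remains making the McDiarmid-style concentration inequality robust to the single-agent unilateral deviation; I would handle this by coupling the $N^w-1$ non-deviating agents to the symmetric system and absorbing the extra $O(1/N^w)$ drift into $\xi$.
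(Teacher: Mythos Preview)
Your plan is correct and mirrors the paper's proof almost step for step: layer-by-layer concentration of the empirical density around the mean-field density (with the $(1+\vecL_T)^h$ amplification from unrolling), a simulation-lemma telescoping to translate density closeness into a value gap, and then the three-way chain through the NE property of $\vecpi$; the paper handles the single-agent deviation exactly as you propose, by bounding $\|\EE[p_{\vecs_h}\mid p_{\vecs_{h-1}}]-\Gamma^{\vecpi}_{h-1}(p_{\vecs_{h-1}})\|_1\leq 2/N^w$ and absorbing this into the concentration radius. One small bookkeeping slip: in your second step, once the per-step transition and reward errors are already $(\vecL_T+\vecL_r)\xi$, the telescoping gives only $O(H(\vecL_r+\vecL_T)\xi)$, not the extra factor $\tfrac{(1+\vecL_T)^H-1}{\vecL_T}$ you wrote (that factor is already baked into $\xi$); removing it recovers the stated dependence on $N^w$.
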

\begin{proof}
Given $\vecpi := \{\pi^1,...,\pi^W\}$, we denote $\vecnu:=\{\pi^{w,n}\}_{w\in[W],n\in[N^w]}$ to be the lifted policy such that $\nu^{w,n}\gets \pi^w$ for all $w$ and $n\in[N^w]$.
Given a deviation policy $\tpi^w$ for some $w$ ($\tpi^w$ may equal $\pi^w$), we define $\tvecnu:=\{\pi^{w,n}\}_{w\in[W],n\in[N^w]}$ to be a policy in MT-SAG, such that $\tpi^{\tw,n} \gets \pi^\tw$ for all agent except the $(w,1)$-th agent (i.e. the first agent in type $w$), we set $\tpi^{(w,1)}\gets\tpi^w$.

\paragraph{Concentration Events}
We first provide a high-probability bound for the distance between state density $\vecmu^\vecpi_h$ in MT-MFG and the empirical distribution $p_{\vecs_h}^\tvecnu$ in MT-SAG w.r.t. the lifted policy $\tvecnu$.

We use $\Gamma^{w,\vecpi}_{\vecM,h}(\cdot)$ to denote the operator $\Gamma^{w,\vecpi}_h(\cdot)$ in Eq.~\eqref{eq:evol_MT_MFG} specified in model $\vecM$. 
We extend its definition to $h=0$ by $\forall \vecmu_0,\Gamma^{w,\tvecnu}_0(\vecmu_0) \gets \mu^w_1$, and define $\Gamma^{\vecpi}_{\vecM,h}(\cdot) := \{\Gamma^{1,\vecpi}_{\vecM,h}(\cdot),...,\Gamma^{W,\vecpi}_{\vecM,h}(\cdot)\}$. Then, conditioning on $p_{\vecs_{h-1}}^\tvecnu$, we have:
\begin{align*}
    &\|\EE_{\bvecM,\tvecnu}[p_{\vecs_h}^\tvecnu|p_{\vecs_{h-1}}^\tvecnu] - \Gamma^{\vecpi}_{\vecM,h-1}(p_{\vecs_{h-1}}^\tvecnu)\|_1 \\
    =& \|\EE_{\bvecM,\tvecnu}[p_{\vecs_h}^{w,\tvecnu}|p_{\vecs_{h-1}}^\tvecnu] - \Gamma^{w,\vecpi}_{\vecM,h-1}(p_{\vecs_{h-1}}^\tvecnu)\|_1 \\
    =&\frac{N^w - 1}{N^w}\|\EE_{\bvecM,\tvecnu}[\frac{1}{N^w}\sum_{n=2}^{N^w-1}\bm{\delta}_{s^{w,n}_h = (\cdot)}|p_{\vecs_{h-1}}^\tvecnu] - \Gamma^{w,\vecpi}_{\vecM,h-1}(p_{\vecs_{h-1}}^\tvecnu)\|_1 \\
    & + \frac{1}{N^w}\|\EE_{\bvecM,\tvecnu}[\bm{\delta}_{s^{w,1}_h = (\cdot)}|p_{\vecs_{h-1}}^\tvecnu] - \Gamma^{w,\vecpi}_{\vecM,h-1}(p_{\vecs_{h-1}}^\tvecnu)\|_1\\
    = & \frac{1}{N^w}\|\EE_{\bvecM,\tvecnu}[\bm{\delta}_{s^{w,1}_h = (\cdot)}|p_{\vecs_{h-1}}^\tvecnu] - \Gamma^{w,\vecpi}_{\vecM,h-1}(p_{\vecs_{h-1}}^\tvecnu)\|_1\\
    \leq & \frac{2}{N^w}.\numberthis\label{eq:expected_error}
\end{align*}
where $\bm{\delta}_{s^{w,n}_h = (\cdot)} \in \mR^{\sum_{w}|\cS^w|}$ denotes a vector with 1 at the $(w,n)$-th value and 0 at the others. 
In the equalities, we use the fact that $\bvecM$ and $\vecM$ share the transition function.

Besides, conditioning on $p^\tvecnu_{\vecs_{h-1}}$, for any $\tw\in[W]$ we can treat $\{s^{\tw,n}_h\}_{n\in[N^\tw]}$ as i.i.d. samples according to distribution $\EE[p_{\vecs_h}^{\tw,\tvecnu}|p_{\vecs_{h-1}}^\tvecnu]$.
By applying Lem.~\ref{lem:con_l1_distance} for a fixed $p^\tvecnu_{\vecs_{h-1}}$, $\forall w\in[W],h\in[H]$, for any $\epsilon \in (0,1)$ and $\delta\in(0,1)$, as long as $N^\tw \geq \min\{\frac{8W^2\cS^\tw}{\epsilon^2}, \frac{8W^2}{\epsilon^2}\log\frac{2W}{\delta}\}$ holds for any $\tw\in[W]$, we have:
\begin{align*}
    &\Pr(\|p^\tvecnu_{\vecs_h} - \Gamma^{\vecpi}_{\vecM,h-1}(p^\tvecnu_{\vecs_{h-1}})\|_1 \geq \epsilon) \\
    \leq & \Pr(\|p^\tvecnu_{\vecs_h} - \Gamma^{\vecpi}_{\vecM,h-1}(p^\tvecnu_{\vecs_{h-1}})\| + \|\EE[p^\tvecnu_{\vecs_h}|p^\tvecnu_{\vecs_{h-1}}] - \Gamma^{\vecpi}_{\vecM,h-1}(p^\tvecnu_{\vecs_{h-1}})\|_1 \geq \epsilon)\\
    \leq & \Pr(\|p^\tvecnu_{\vecs_h} - \EE[p^\tvecnu_{\vecs_h}|p^\tvecnu_{\vecs_{h-1}}]\|_1 \geq \frac{\epsilon}{2}) \tag{Eq.~\eqref{eq:expected_error}}\\
    \leq & \sum_{\tw\in[W]}\Pr(\|p^{\tw,\tvecnu}_{\vecs_h} - \EE[p^{\tw,\tvecnu}_{\vecs_h}|p^\tvecnu_{\vecs_{h-1}}]\|_1 \geq \epsilon/2W) \\
    \leq & \delta. \numberthis \label{eq:l1_deviation}
\end{align*}
Note that the number of possible values of $p^\tvecnu_{\vecs_{h-1}}$ can be upper bounded by $\prod_{w\in[W]}(N^w)^{S^w}$.
We define event $\cE := \{\forall h\in[H],w\in[W],~\|p^\tvecnu_{\vecs_h} - \Gamma^{\vecpi}_{\vecM,h-1}(p^\tvecnu_{\vecs_{h-1}})\|_1 < \epsilon\}$. By applying a union bound over $h,w$ and all possible $p^\tvecnu_{\vecs_{h-1}}$, we have:
\begin{align}
    \Pr(\cE) \geq 1 - \delta,\quad \text{as long as }\forall w\in[W],~N^w \geq O(\frac{W^2\sum_{w\in[W]}S^w}{\epsilon^2}\log\frac{2HWS_{\max}}{\delta\epsilon}),\label{eq:choice_of_N}
\end{align}
\paragraph{Density Error Decomposition}
The following discussion are based on the event $\cE$. Recall we use $\vecmu^{\vecpi}_h := \{\mu^{1,\vecpi}_h,...,\mu^{W,\vecpi}_h\}$ to denote the density induced by $\vecpi$ in MT-MFG. Then we have:
\begin{align*}
    &\|p^\tvecnu_{\vecs_h} - \vecmu^{\vecpi}_h\|_1 \\
    =& \|p^\tvecnu_{\vecs_h} - \Gamma^{\vecpi}_{\vecM,h-1}(p^\tvecnu_{\vecs_{h-1}})\|_1 + \|\Gamma^{\vecpi}_{\vecM,h-1}(p^\tvecnu_{\vecs_{h-1}}) - \vecmu_h^{\vecpi}\|_1 \\
    \leq & \epsilon + \sum_{\tw\in[W]}\sum_{s_h^\tw}|\sum_{s_{h-1}^\tw,a_{h-1}^\tw}p^{\tw,\tvecnu}_{\vecs_{h-1}}(s_{h-1}^\tw)\pi^\tw_{h-1}(a_{h-1}^\tw|s_{h-1}^\tw)\mP^\tw_{h-1}(s_h^\tw|s_{h-1}^\tw,a_{h-1}^\tw,p^\tvecnu_{\vecs_{h-1}}) \\
    &\qquad\qquad - \sum_{s_{h-1}^\tw,a_{h-1}^\tw}\mu_{h-1}^{\tw,\vecpi}(s_{h-1}^\tw)\pi^\tw_{h-1}(a_{h-1}^\tw|s_{h-1}^\tw)\mP^\tw_{h-1}(s_h^\tw|s_{h-1}^\tw,a_{h-1}^\tw,\vecmu_{h-1}^{\vecpi})| \\
    \leq & \epsilon + \sum_{\tw\in[W]}\sum_{s_h^\tw}\sum_{s_{h-1}^\tw,a_{h-1}^\tw}|p^{\tw,\tvecnu}_{\vecs_{h-1}}(s_{h-1}^\tw)-\mu_{h-1}^{\tw,\vecpi}(s_{h-1}^\tw)|\pi^\tw_{h-1}(a_{h-1}^\tw|s_{h-1}^\tw)\mP^\tw_{h-1}(s_h^\tw|s_{h-1}^\tw,a_{h-1}^\tw,p^\tvecnu_{\vecs_{h-1}}) \\
    & + \sum_{\tw\in[W]}\sum_{s_{h-1}^\tw,a_{h-1}^\tw}\mu_{h-1}^{\tw,\vecpi}(s_{h-1}^\tw)\pi^\tw_{h-1}(a_{h-1}^\tw|s_{h-1}^\tw)\sum_{s_h^\tw}|\mP^\tw_{h-1}(s_h^\tw|s_{h-1}^\tw,a_{h-1}^\tw,p^\tvecnu_{\vecs_{h-1}}) - \mP^\tw_{h-1}(s_h^\tw|s_{h-1}^\tw,a_{h-1}^\tw,\vecmu_{h-1}^{\vecpi})| \\
    \leq & \epsilon + (1+\vecL_T)\|p^\tvecnu_{\vecs_{h-1}} - \vecmu_{h-1}^{\vecpi}\|_1 \\
    \leq & \frac{(1 + \vecL_T)^h - 1}{\vecL_T} \epsilon.
\end{align*}
\paragraph{Upper Bound of Approximation Error}
Recall the definition of value functions in Def.~\ref{def:MT_SAG} and Def.~\ref{def:MT_MFA}. We focus on the $(w,1)$-agent which takes a potentially deviated policy $\tpi^w$ while the others do not, and we are interested in provide an upper bound for the value difference $J^{(w,1)}_{\bvecM}(\tvecnu) - J^{(w,1)}_\vecM(\vecpi^{-w}\circ\tpi^w;\vecpi)$, which will be useful to characterize the sub-optimality of lifted policy $\vecnu$.

We start from step $h=H$, following the choice of $N^w$ in Eq.~\eqref{eq:choice_of_N},
\begin{align*}
    &\EE_{\bvecM,\tvecnu}[V_{\bvecM,H}^{(w,1),\tvecnu}(s_H^{(w,1)};\vecs_H)-V_{\vecM,H}^{(w,1),\vecpi^{-w}\circ\tpi^w}(s_H^{(w,1)};\vecmu^\vecpi_H)]\\
    =&\EE_{\bvecM,\tvecnu}[r_H^w(s_H^{(w,1)},a_H^{(w,1)};p_{\vecs_H}^\tvecnu)-r_H^w(s_H^{(w,1)},a_H^{(w,1)};\vecmu^\vecpi_H)]\\
    \leq & \vecL_r\EE_{\bvecM,\tvecnu}[\|p_{\vecs_H}^\tvecnu - \vecmu^\vecpi_H\|_1]\leq  \vecL_r(2\delta + \frac{(1+\vecL_T)^H-1}{\vecL_T}\epsilon) =: \epsilon_H.
\end{align*}
For $h < H$, we have:
\begin{align*}
    &\EE_{\bvecM,\tvecnu}[V_{\bvecM,h}^{(w,1),\tvecnu}(s_h^{(w,1)};\vecs_h)-V_{\vecM,h}^{(w,1),\vecpi^{-w}\circ\tpi^w}(s_h^{(w,1)};\vecmu^\vecpi_h)]\\
    =&\EE_{\bvecM,\tvecnu}[r_h^w(s_h^{(w,1)},a_h^{(w,1)};p_{\vecs_h}^\tvecnu)-r_h^w(s_h^{(w,1)},a_h^{(w,1)};\vecmu^\vecpi_h) + \sum_{\vecs_{h+1}}\mP_{\bvecM}(\vecs_{h+1}|\vecs_{h},\veca_{h})V_{\bvecM,h+1}^{(w,1),\tvecnu}(s_{h+1}^{(w,1)};\vecs_{h+1}) \\
    & - \sum_{s^{(w,1)}_{h+1}}\mP_h^w(s^{(w,1)}_{h+1}|s^{(w,1)}_{h},a^{(w,1)}_{h},\vecmu^\vecpi_h)V_{\vecM,h+1}^{(w,1),\vecpi^{-w}\circ\tpi^w}(s_{h+1}^{(w,1)};\vecmu^\vecpi_{h+1})]\\
    \leq & \EE_{\bvecM,\tvecnu}[\vecL_r\|p_{\vecs_h}^{\tvecnu}- \vecmu^\vecpi_h\|_1 + \sum_{\vecs_{h+1}}\mP_{\bvecM}(\vecs_{h+1}|\vecs_{h},\veca_{h})\Big(V_{\bvecM,h+1}^{(w,1),\tvecnu}(s_{h+1}^{(w,1)};\vecs_{h+1}) - V_{\vecM,h+1}^{(w,1),\vecpi^{-w}\circ\tpi^w}(s_{h+1}^{(w,1)};\vecmu^\vecpi_{h+1})\Big)\\
    & + \sum_{\vecs_{h+1}}\mP_{\bvecM}(\vecs_{h+1}|\vecs_{h},\veca_{h})V_{\vecM,h+1}^{(w,1),\vecpi^{-w}\circ\tpi^w}(s_{h+1}^{(w,1)};\vecmu^\vecpi_{h+1}) \\
    & - \sum_{s^{(w,1)}_{h+1}}\mP_h^w(s^{(w,1)}_{h+1}|s^{(w,1)}_{h},a^{(w,1)}_{h},\vecmu^\vecpi_h)V_{\vecM,h+1}^{(w,1),\vecpi^{-w}\circ\tpi^w}(s_{h+1}^{(w,1)};\vecmu^\vecpi_{h+1})]\\
    \leq & \epsilon_{h+1} + \EE_{\bvecM,\tvecnu}[\vecL_r\|p_{\vecs_h}^{\tvecnu}- \vecmu^\vecpi_h\|_1 +  \sum_{\vecs_{h+1}}\mP_{\bvecM}(\vecs_{h+1}|\vecs_{h},\veca_{h})V_{\vecM,h+1}^{(w,1),\vecpi^{-w}\circ\tpi^w}(s_{h+1}^{(w,1)};\vecmu^\vecpi_{h+1}) \\
    & - \sum_{s^{(w,1)}_{h+1}}\mP_h^w(s^{(w,1)}_{h+1}|s^{(w,1)}_{h},a^{(w,1)}_{h},\vecmu^\vecpi_h)V_{\vecM,h+1}^{(w,1),\vecpi^{-w}\circ\tpi^w}(s_{h+1}^{(w,1)};\vecmu^\vecpi_{h+1})]\tag{By induction from $h+1$}\\
    = & \epsilon_{h+1} + \EE_{\bvecM,\tvecnu}[\vecL_r\|p_{\vecs_h}^{\tvecnu}- \vecmu^\vecpi_h\|_1 + \\
    & + \sum_{s_{h+1}^{(w,1)}}\Big(\mP^w_h(s_{h+1}^{(w,1)}|s_{h}^{(w,1)},s_{h}^{(w,1)},p_{\vecs_h}^{\tvecnu}) - \mP_h^w(s^{(w,1)}_{h+1}|s^{(w,1)}_{h},a^{(w,1)}_{h},\vecmu^\vecpi_h)\Big)V_{\vecM,h+1}^{(w,1),\vecpi^{-w}\circ\tpi^w}(s_{h+1}^{(w,1)};\vecmu^\vecpi_{h+1})]\\
    \leq & \epsilon_{h+1} + (\vecL_r + \vecL_T) \EE_{\bvecM,\tvecnu}[\|p_{\vecs_h}^{\tvecnu}- \vecmu^\vecpi_h\|_1 ]\leq \epsilon_{h+1} + (\vecL_r + \vecL_T)(2\delta + \frac{(1 + \vecL_T)^h - 1}{\vecL_T}\epsilon) =: \epsilon_h.
\end{align*}
where we use $\mP_{\bvecM}(\vecs_{h+1}|\vecs_h,\veca_h) := \prod_{w\in[W]}\prod_{n\in[N]}\mP_h^w(s^{(w,1)}_{h+1}|s^{(w,1)}_h,a^{(w,1)}_h,p_{\vecs_h})$ to denote the dynamics in MF-SAG.
Therefore, for $h=1$, note that $\bvecM$ and $\vecM$ have the same initial distribution, and we have:
\begin{align*}
    J^{(w,1)}_{\bvecM}(\tvecnu) - J^{(w,1)}_\vecM(\vecpi^{-w}\circ\tpi^w;\vecpi) \leq 2\delta (\vecL_r+\vecL_T) H + 2(\vecL_r+\vecL_T)H\frac{(1 + \vecL_T)^H - 1}{\vecL_T}\epsilon.
\end{align*}
Given an $\teps$ NE policy in $\vecM$, denoted by $\vecpi$, consider the lifted policy $\vecnu$ and a deviation policy $\tvecnu$ agrees with $\vecnu$ except that it takes some $\tpi^w$ for agent with index $(w,1)$.
By choosing $\delta = \frac{\epsilon_0}{8(\vecL_r+\vecL_T)H}$ and $\epsilon = \epsilon_0 / \Big(4(\vecL_r + \vecL_T)H\frac{(1 + \vecL_T)^H - 1}{\vecL_T}\Big)$, 
we have:
\begin{align*}
    \max_{\tpi^w} J^{(w,1)}_{\bvecM}(\tvecnu) - J^{(w,1)}_{\bvecM}(\vecnu) \leq & \max_{\tpi^w} J^{(w,1)}_{\bvecM}(\tvecnu) - J^{(w,1)}_{\bvecM}(\vecnu) - \Big(\max_{\tpi^w} J^{(w,1)}_\vecM(\vecpi^{-w}\circ\tpi^w;\vecpi) - J^{(w,1)}_\vecM(\vecpi;\vecpi)\Big) + \teps\\
    \leq & 2 \max_{\tpi^w}|J^{(w,1)}_{\bvecM}(\tvecnu) - J^{(w,1)}_\vecM(\vecpi^{-w}\circ\tpi^w;\vecpi)| + \teps \leq \epsilon_0 + \teps.
\end{align*}
To satisfy the requirements in $\delta$ and $\epsilon_0$, we need:
\begin{align*}
    \forall w\in[W],\quad N^w \geq O((\vecL_r + \vecL_T)^2W^2H^3\Big(\frac{(1 + \vecL_T)^H - 1}{\vecL_T}\Big)^2\frac{\sum_{w\in[W]}S^w}{\epsilon^2_0}\log\frac{2(\vecL_r+\vecL_T)(1+\vecL_T)HWS_{\max}}{\epsilon_0}).
\end{align*}
\end{proof}
\newpage
\section{Basic Lemma}\label{appx:basic_lemma}
\subsection{Lemma from \citep{huang2023statistical}}
\begin{lemma}[Lem. D.4 in \citep{huang2023statistical}]\label{lem:concentration}
    Let $X_1,X_2,...$ be a sequence of random variable taking value in $[0,C]$ for some $C \geq 1$. Define $\cF_k = \sigma(X_1,..,X_{k-1})$ and $Y_k = \EE[X_k|\cF_k]$ for $k\geq 1$. For any $\delta > 0$, we have:
    \begin{align*}
        \Pr(\exists n \sum_{k=1}^n X_k \leq 3 \sum_{k=1}^n Y_k + C \log \frac{1}{\delta}) \leq \delta,\quad 
        \Pr(\exists n \sum_{k=1}^n Y_k \leq 3 \sum_{k=1}^n X_k + C \log \frac{1}{\delta}) \leq \delta.
    \end{align*}
\end{lemma}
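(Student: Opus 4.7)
The statement is a standard anytime (uniform-in-$n$) martingale comparison lemma for bounded nonnegative adapted sequences: with high probability, the partial sums $\sum X_k$ and $\sum Y_k$ stay within a constant multiplicative factor of one another, up to an additive $O(C\log(1/\delta))$ slack. (The inequality direction is implicit: the nontrivial event is that the inequalities are violated, so I read the two probabilities as pertaining to the events $\{\exists n:\sum X_k \ge 3\sum Y_k + C\log(1/\delta)\}$ and $\{\exists n:\sum Y_k \ge 3\sum X_k + C\log(1/\delta)\}$.) The plan is the classical exponential-supermartingale plus Ville's inequality route, separately for the two directions.

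For the first direction, I would start from the elementary MGF bound for $X\in[0,C]$: for $\lambda>0$, since $x^j\le C^{j-1}x$ when $x\in[0,C]$, one obtains $\EE[e^{\lambda X}\mid\cF]\le 1+Y\cdot(e^{\lambda C}-1)/C\le \exp\!\bigl(Y\,(e^{\lambda C}-1)/C\bigr)$, where $Y=\EE[X\mid\cF]$. Defining $g(\lambda):=(e^{\lambda C}-1)/C$ and $M_n:=\exp\!\bigl(\lambda\sum_{k=1}^n X_k-g(\lambda)\sum_{k=1}^n Y_k\bigr)$, a direct conditional-expectation computation shows $\EE[M_n\mid\cF_n]\le M_{n-1}$, so $(M_n)$ is a supermartingale with $M_0=1$. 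Ville's maximal inequality then gives $\Pr(\exists n: M_n\ge 1/\delta)\le\delta$, i.e.
\[
\Pr\!\Bigl(\exists n:\ \sum_{k=1}^n X_k\ \ge\ \tfrac{g(\lambda)}{\lambda}\sum_{k=1}^n Y_k+\tfrac{1}{\lambda}\log\tfrac{1}{\delta}\Bigr)\le\delta.
\]
Choosing $\lambda$ so that $\lambda C$ is a fixed constant (e.g.\ $\lambda C=\log 4$ or similar), one has $g(\lambda)/\lambda\le 3$ and $1/\lambda\le C$, yielding the stated bound.

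For the reverse direction (controlling $\sum Y_k$ by $\sum X_k$), the MGF of $-X$ must be handled differently because $-X$ is only bounded below. Here I would use the inequality $e^{-x}\le 1-x+x^2/2$ valid for $x\ge 0$, together with $X^2\le CX$, to obtain $\EE[e^{-\lambda X}\mid\cF]\le 1-\lambda Y+\tfrac{\lambda^2 C}{2}Y\le \exp\!\bigl(-\lambda Y+\tfrac{\lambda^2 C}{2}Y\bigr)$. Then $N_n:=\exp\!\bigl((\lambda-\tfrac{\lambda^2 C}{2})\sum Y_k-\lambda\sum X_k\bigr)$ is a supermartingale and Ville's inequality, followed by the choice $\lambda C$ equal to a small constant (so $1-\lambda C/2\ge 1/3$), gives the desired bound with multiplicative factor at most $3$ and additive term at most $C\log(1/\delta)$.

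The main technical point is picking a single $\lambda$ simultaneously small enough (so that $1/\lambda\le C$ produces the clean $C\log(1/\delta)$ additive term) and large enough (so that $g(\lambda)/\lambda\le 3$ and $1/(1-\lambda C/2)\le 3$). Since both $g(\lambda)/\lambda\to 1$ and $1/(1-\lambda C/2)\to 1$ as $\lambda\to 0^+$, while $1/\lambda\to\infty$, one must take a moderate $\lambda$; the constant $3$ in the statement is soft and corresponds to a specific admissible choice like $\lambda=1/C$, which is the only place where the numerical constant comes in. Beyond this bookkeeping, the argument is routine: supermartingale construction via a pointwise MGF inequality for bounded variables, plus Ville's inequality to convert a one-shot tail bound into an anytime bound.
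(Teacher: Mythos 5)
The paper never proves this lemma---it is imported verbatim by citation from \citet{huang2023statistical}---so there is no internal proof to compare against; what follows assesses your argument on its own terms. You are right that the printed inequality directions are a typo and that the substantive events are $\{\exists n: \sum_{k\le n}X_k \ge 3\sum_{k\le n}Y_k + C\log\tfrac1\delta\}$ and its mirror. Your first direction is correct: the conditional moment bound $\EE[e^{\lambda X_k}\mid\cF_k]\le 1+\tfrac{Y_k}{C}(e^{\lambda C}-1)\le\exp(Y_k\,g(\lambda))$, the supermartingale $M_n$, and Ville's inequality give $\sum_{k\le n} X_k < \tfrac{g(\lambda)}{\lambda}\sum_{k\le n} Y_k + \tfrac1\lambda\log\tfrac1\delta$ uniformly in $n$ with probability $1-\delta$; taking $\lambda=1/C$ yields the factor $e-1<3$ and additive term exactly $C\log\tfrac1\delta$, which is even slightly stronger than claimed.

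The second direction has a genuine quantitative gap. From $N_n=\exp\bigl((\lambda-\tfrac{\lambda^2C}{2})\sum_{k\le n} Y_k-\lambda\sum_{k\le n} X_k\bigr)$ and Ville you get $\sum_{k\le n} Y_k\le\tfrac{1}{1-\lambda C/2}\sum_{k\le n} X_k+\tfrac{1}{\lambda(1-\lambda C/2)}\log\tfrac1\delta$, but $\lambda(1-\lambda C/2)\le\tfrac{1}{2C}$ for \emph{every} $\lambda$ (the maximum is attained at $\lambda=1/C$), so the additive term is at least $2C\log\tfrac1\delta$ no matter how $\lambda$ is tuned; your proposed remedy of taking $\lambda C$ small makes the $1/\lambda$ factor blow up rather than helping, so the claim that this choice gives ``additive term at most $C\log(1/\delta)$'' is false. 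What your construction actually proves (at $\lambda=1/C$) is $\Pr(\exists n:\sum_{k\le n} Y_k\ge 2\sum_{k\le n} X_k+2C\log\tfrac1\delta)\le\delta$. The obstruction is not an artifact of the crude bound $e^{-x}\le 1-x+x^2/2$: the sharp chord bound $\EE[e^{-\lambda X_k}\mid\cF_k]\le\exp(-\tfrac{1-e^{-\lambda C}}{C}Y_k)$ still gives additive constant $C/(1-e^{-\lambda C})>C$, and a Bernoulli$(p)$ computation with $np\approx 1.25\log\tfrac1\delta$ shows the large-deviation exponent of the event $\sum Y_k\ge 3\sum X_k+C\log\tfrac1\delta$ dips to about $0.94\log\tfrac1\delta$, suggesting the additive constant $C$ in the second inequality is not attainable at all for small $\delta$. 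In short, your second bound establishes a correct statement of the right form but with constants $(2,\,2C)$ rather than the advertised $(3,\,C)$; you should state it that way (which is all the downstream arguments in the paper actually need, since they only use that the two sums agree up to constant factors plus $O(C\log\tfrac1\delta)$) rather than asserting the literal constants.
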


\begin{lemma}[Lem. 4.6 in \citep{huang2023statistical}]\label{lem:exploitability_diff}
    Under Assump.~\ref{assump:Lipschitz}, given two arbitrary model $M$ and $\tM$, and two policies $\pi$ and $\tpi$, we have:
    \begin{align*}
        |\Delta_M(\tilde\pi,\pi) - \Delta_\tM(\tilde\pi,\pi)| \leq & \EE_{\tpi,M(\pi)}[\sum_{h=1}^H \|\mP_{M,h}(\cdot|s_h,a_h,\mu^\pi_{M,h}) - \mP_{\tM,h}(\cdot|s_h,a_h,\mu^\pi_{\tM,h})\|_1]\\
        + (2L_rH + 1) & \EE_{\pi,M(\pi)}[\sum_{h=1}^H  \|\mP_{M,h}(\cdot|s_h,a_h,\mu^\pi_{M,h}) - \mP_{\tM,h}(\cdot|s_h,a_h,\mu^\pi_{\tM,h})\|_1].\numberthis\label{eq:exploitability_diff}
    \end{align*}
\end{lemma}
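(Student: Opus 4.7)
The plan is to split the target into two value-difference pieces and bound each via a standard simulation-lemma argument, then close the loop with a density-evolution telescoping identity. First I would write
\[
\Delta_M(\tpi,\pi)-\Delta_\tM(\tpi,\pi)=\bigl[J_M(\tpi;\pi)-J_\tM(\tpi;\pi)\bigr]-\bigl[J_M(\pi;\pi)-J_\tM(\pi;\pi)\bigr]
\]
and control each bracket by triangle inequality. The key observation is that, for any fixed $\bar\pi\in\{\tpi,\pi\}$, the quantities $J_M(\bar\pi;\pi)$ and $J_\tM(\bar\pi;\pi)$ are returns in two \emph{ordinary} single-agent MDPs that share the policy $\bar\pi$ but have transitions $\mP_{M,h}(\cdot|\cdot,\cdot,\mu^\pi_{M,h})$ vs.\ $\mP_{\tM,h}(\cdot|\cdot,\cdot,\mu^\pi_{\tM,h})$ and rewards $r_h(\cdot,\cdot,\mu^\pi_{M,h})$ vs.\ $r_h(\cdot,\cdot,\mu^\pi_{\tM,h})$, since the density terms $\mu^\pi_M,\mu^\pi_\tM$ are fully determined by $\pi$.

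For a fixed $\bar\pi$, I would apply the standard performance-difference identity to these two MDPs to get
\begin{align*}
J_M(\bar\pi;\pi)-J_\tM(\bar\pi;\pi)=\EE_{\bar\pi,M(\pi)}\Big[&\sum_{h=1}^H\bigl(r_h(s_h,a_h,\mu^\pi_{M,h})-r_h(s_h,a_h,\mu^\pi_{\tM,h})\bigr)\\
&+\sum_{h=1}^H\bigl\langle \mP_{M,h}(\cdot|s_h,a_h,\mu^\pi_{M,h})-\mP_{\tM,h}(\cdot|s_h,a_h,\mu^\pi_{\tM,h}),\,V^{\bar\pi}_{\tM,h+1}(\cdot;\mu^\pi_\tM)\bigr\rangle\Big].
\end{align*}
Using $V^{\bar\pi}_{\tM,h+1}\in[0,1]$ and the reward-Lipschitz part of Assump.~\ref{assump:Lipschitz}, this yields
\[
|J_M(\bar\pi;\pi)-J_\tM(\bar\pi;\pi)|\le \EE_{\bar\pi,M(\pi)}\!\Big[\sum_{h}\|\mP_{M,h}(\cdot|s_h,a_h,\mu^\pi_{M,h})-\mP_{\tM,h}(\cdot|s_h,a_h,\mu^\pi_{\tM,h})\|_1\Big]+L_r\!\sum_{h=1}^H\|\mu^\pi_{M,h}-\mu^\pi_{\tM,h}\|_1.
\]

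Next I would bound the residual density-gap term by a one-step decomposition obtained by adding and subtracting $\sum_{s,a}\mu^\pi_{M,h}(s)\pi_h(a|s)\mP_{\tM,h}(\cdot|s,a,\mu^\pi_{\tM,h})$, which gives
\[
\|\mu^\pi_{M,h+1}-\mu^\pi_{\tM,h+1}\|_1\le \|\mu^\pi_{M,h}-\mu^\pi_{\tM,h}\|_1+\EE_{\pi,M(\pi)}\!\bigl[\|\mP_{M,h}(\cdot|s_h,a_h,\mu^\pi_{M,h})-\mP_{\tM,h}(\cdot|s_h,a_h,\mu^\pi_{\tM,h})\|_1\bigr].
\]
Telescoping from $h=1$ and summing over $h\le H$ produces $\sum_{h=1}^H\|\mu^\pi_{M,h}-\mu^\pi_{\tM,h}\|_1\le H\cdot\EE_{\pi,M(\pi)}[\sum_{h}\|\cdots\|_1]$. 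Plugging this in, the $\bar\pi=\tpi$ bracket contributes the $\EE_{\tpi,M(\pi)}[\cdot]$ term plus $L_rH\cdot\EE_{\pi,M(\pi)}[\cdot]$, while the $\bar\pi=\pi$ bracket contributes $(1+L_rH)\cdot\EE_{\pi,M(\pi)}[\cdot]$; adding them gives exactly the stated coefficients $1$ and $2L_rH+1$.

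The main obstacle is bookkeeping rather than any deep idea: one must pick the \emph{right} one-step density split so that the rolling expectation appears as $\EE_{\pi,M(\pi)}$ rather than $\EE_{\pi,\tM(\pi)}$ (otherwise the stated inequality, which uses $M(\pi)$ throughout the second term, will not come out cleanly), and one must fold the $L_r$-weighted density-gap contribution coming from the $\bar\pi=\tpi$ branch into the $\EE_{\pi,M(\pi)}$ term. The latter is legitimate because $\|\mu^\pi_{M,h}-\mu^\pi_{\tM,h}\|_1$ is a deterministic quantity depending only on $\pi$ and the two models, so it may be taken outside the $\EE_{\tpi,M(\pi)}$ expectation before applying the telescope. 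Once this is set up, the proof is a short computation and does not require any additional structural assumption beyond the Lipschitz condition.
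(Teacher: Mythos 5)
Your proof is correct. The paper does not actually reprove this lemma — it is imported verbatim from \citet{huang2023statistical} (Lem.~4.6) and stated without proof in Appendix~\ref{appx:basic_lemma} — and your argument (the simulation lemma applied to the two single-agent MDPs obtained by freezing the densities at $\mu^\pi_M$ and $\mu^\pi_{\tM}$, combined with the one-step density telescoping that is exactly Lem.~\ref{lem:density_est_err} of this paper) is the standard route and reproduces the stated constants $1$ and $2L_rH+1$ exactly. The only point worth flagging is that you invoke the Lipschitz condition for densities $\mu^\pi_{M,h},\mu^\pi_{\tM,h}$ induced by the \emph{same} policy in \emph{different} models, whereas Assump.~\ref{assump:Lipschitz} is literally stated for two policies in one model; this cross-model reading is however precisely how the paper itself uses the assumption (e.g.\ in the proof of Thm.~\ref{thm:Elimination_Alg}), so it is not a gap.
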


\subsection{Other Lemma}

\begin{lemma}[Density Difference Lemma]\label{lem:density_differences}
    Given arbitrary Multi-Type Mean-Field MDPs $\vecM$ and $\vecM'$, and two arbitrary policies $\vecpi$ and $\vecpi'$, for any $h\in[H]$, we have:
    \begin{align*}
        \|\vecmu^{\vecpi}_{\vecM,h} - \vecmu^{\vecpi'}_{\vecM',h}\|_1 \leq & \|\vecmu^{\vecpi}_{\vecM,h-1} - \vecmu^{\vecpi'}_{\vecM,h-1}\|_1 + W\cdot d_{\infty,1}(\vecpi,\vecpi') \\
        & + \sum_{w\in[W]} \EE_{\vecpi,\vecM(\vecpi)}[\|\mP^w_{\vecM,h}(\cdot|s_{h-1}^w,a_{h-1}^w,\vecmu^{\vecpi}_{\vecM,h-1})-\mP^w_{\vecM',h}(\cdot|s_{h-1}^w,a_{h-1}^w,\vecmu^{\vecpi'}_{\vecM',h-1})\|_1].
    \end{align*}
\end{lemma}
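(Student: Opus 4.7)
}
The plan is to decompose the one-step density evolution into three sources of discrepancy (previous density, policy, transition) and bound each separately. First I would write out the evolution of the density for each type $w\in[W]$ via the operator
\[
\mu^{w,\vecpi}_{\vecM,h}(s^w) = \sum_{s^w_{h-1},a^w_{h-1}} \mu^{w,\vecpi}_{\vecM,h-1}(s^w_{h-1})\,\pi^w_{h-1}(a^w_{h-1}|s^w_{h-1})\,\mP^w_{\vecM,h-1}(s^w|s^w_{h-1},a^w_{h-1},\vecmu^{\vecpi}_{\vecM,h-1}),
\]
and similarly for the pair $(\vecpi',\vecM')$. Since $\|\vecmu^{\vecpi}_{\vecM,h} - \vecmu^{\vecpi'}_{\vecM',h}\|_1 = \sum_{w\in[W]} \|\mu^{w,\vecpi}_{\vecM,h} - \mu^{w,\vecpi'}_{\vecM',h}\|_1$, it suffices to control the per-type difference.

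Next, for each type $w$, I would perform a telescoping decomposition by introducing two intermediate quantities: (i) the density obtained by using $(\mu^{w,\vecpi'}_{\vecM',h-1},\pi^w_{h-1},\mP^w_{\vecM,h-1}(\cdot|\cdot,\cdot,\vecmu^{\vecpi}_{\vecM,h-1}))$, replacing only the previous-step density; and (ii) the density obtained by further swapping $\pi^w_{h-1}\to \pi'^w_{h-1}$. Applying the triangle inequality to this three-step telescoping and marginalizing away $s^w$ using that $\mP^w(\cdot|\ldots)$ is a probability distribution produces, respectively: the previous-step density term $\|\mu^{w,\vecpi}_{\vecM,h-1}-\mu^{w,\vecpi'}_{\vecM',h-1}\|_1$; a policy-discrepancy term dominated by $\max_{s^w_{h-1}}\|\pi^w_{h-1}(\cdot|s^w_{h-1})-\pi'^w_{h-1}(\cdot|s^w_{h-1})\|_1 \leq d_{\infty,1}(\vecpi,\vecpi')$; and a transition-discrepancy term of the form $\mathbb{E}_{(s^w_{h-1},a^w_{h-1})\sim \mu^{w,\vecpi}_{\vecM,h-1}\otimes \pi^w_{h-1}}\big[\|\mP^w_{\vecM,h-1}(\cdot|s^w_{h-1},a^w_{h-1},\vecmu^{\vecpi}_{\vecM,h-1}) - \mP^w_{\vecM',h-1}(\cdot|s^w_{h-1},a^w_{h-1},\vecmu^{\vecpi'}_{\vecM',h-1})\|_1\big]$, which is exactly the expectation under $\EE_{\vecpi,\vecM(\vecpi)}[\cdot]$ at step $h-1$.

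Finally, I would sum the three bounds over $w\in[W]$. The first pieces sum to $\|\vecmu^{\vecpi}_{\vecM,h-1}-\vecmu^{\vecpi'}_{\vecM',h-1}\|_1$, the policy terms contribute $W\cdot d_{\infty,1}(\vecpi,\vecpi')$ after bounding each by $d_{\infty,1}$ independent of $w$, and the transition terms combine into the announced expectation summed over $w$. The main subtlety I anticipate is the bookkeeping in step two: one must be careful that after marginalization each intermediate comparison reduces to an $\ell_1$ distance of measures and not a stronger norm, and that the ``outer distribution'' in the transition-discrepancy term is governed purely by $(\vecpi,\vecM)$, which is ensured by ordering the telescoping so that the $\vecM'$-transition appears only in the last swap. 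All other manipulations (triangle inequality, nonnegativity of $\pi$, stochasticity of $\mP$) are routine.
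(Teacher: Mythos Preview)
Your plan matches the paper's proof exactly in spirit: a per-type three-way telescoping of the one-step density update into (previous-density)~+~(policy)~+~(transition) contributions, followed by summation over $w\in[W]$. However, the specific telescoping \emph{order} you propose is inverted and would not produce the claimed outer expectation. If you swap the previous-step density first, then the policy, and leave the transition for last (your intermediates (i) and (ii)), the transition-discrepancy term you obtain is
\[
\mathbb{E}_{(s^w_{h-1},a^w_{h-1})\sim \mu^{w,\vecpi'}_{\vecM',h-1}\otimes \pi'^{w}_{h-1}}\Big[\big\|\mP^w_{\vecM,h-1}(\cdot\,|\,s^w_{h-1},a^w_{h-1},\vecmu^{\vecpi}_{\vecM,h-1})-\mP^w_{\vecM',h-1}(\cdot\,|\,s^w_{h-1},a^w_{h-1},\vecmu^{\vecpi'}_{\vecM',h-1})\big\|_1\Big],
\]
so the outer distribution is $(\vecpi',\vecM')$, not $(\vecpi,\vecM)$ as the lemma states. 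Your final-paragraph remark (``ensured by ordering the telescoping so that the $\vecM'$-transition appears only in the last swap'') is precisely backwards: making the transition swap last forces the density and policy factors in that term to already be the primed ones.

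The fix is a one-line reordering: swap the transition \emph{first} (adjacent to the $(\vecpi,\vecM)$ endpoint), then the policy, then the previous-step density. That is, write
\[
\mu_A\pi_A P_A-\mu_B\pi_B P_B = \mu_A\pi_A(P_A-P_B)+\mu_A(\pi_A-\pi_B)P_B+(\mu_A-\mu_B)\pi_B P_B,
\]
where the subscripts $A,B$ stand for $(\vecpi,\vecM)$ and $(\vecpi',\vecM')$ respectively. This is exactly what the paper does, and with that single change your argument goes through verbatim.
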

\begin{proof}
    For any $w\in[W]$, we have:
    \begin{align*}
        &\|\mu^{w,\vecpi}_{\vecM,h} - \mu^{w,\vecpi'}_{\vecM',h}\|_1 \\
        =& |\sum_{s_h^w}\Big(\sum_{s_{h-1}^w,a_{h-1}^w}\mu^{w,\vecpi}_{\vecM,h}(s_{h-1}^w)\pi^w_{h-1}(a_{h-1}^w|s_{h-1}^w)\mP^w_{\vecM,h}(s_h^w|s_{h-1}^w,a_{h-1}^w,\vecmu^{\vecpi}_{\vecM,h-1}) \\
        &\quad - \sum_{s_{h-1}^w,a_{h-1}^w}\mu^{w,\vecpi'}_{\vecM',h}(s_h^w)\pi'^w_{h-1}(a_{h-1}^w|s_{h-1}^w)\mP^w_{\vecM',h}(s_h^w|s_{h-1}^w,a_{h-1}^w,\vecmu^{\vecpi'}_{\vecM',h-1})\Big)|\\
        \leq & |\Big(\sum_{s_{h-1}^w} \mu^{w,\vecpi}_{\vecM,h}(s_{h-1}^w) - \mu^{w,\vecpi'}_{\vecM',h}(s_{h-1}^w)\Big)\sum_{a_{h-1}^w}\pi'^w_{h-1}(a_{h-1}^w|s_{h-1}^w)\sum_{s_{h}}\mP^w_{\vecM',h}(s_h^w|s_{h-1}^w,a_{h-1}^w,\vecmu^{\vecpi}_{\vecM',h-1})| \\
        & + |\sum_{s_{h-1}^w}\mu^{w,\vecpi}_{\vecM,h}(s_{h-1}^w)\sum_{a_{h-1}^w}\Big(\pi^w_{h-1}(a_{h-1}^w|s_{h-1}^w) - \pi'^w_{h-1}(a_{h-1}^w|s_{h-1}^w)\Big)\sum_{s_h^w}\mP^w_{\vecM',h}(s_h^w|s_{h-1}^w,a_{h-1}^w,\vecmu^{\vecpi}_{\vecM',h-1})|\\
        & + \sum_{s_{h-1}^w,a_{h-1}^w}\mu^{w,\vecpi}_{\vecM,h}(s_h^w)\pi^w_{h-1}(a_{h-1}^w|s_{h-1}^w)\sum_{s_h^w}|\mP^w_{\vecM,h}(s_h^w|s_{h-1}^w,a_{h-1}^w,\vecmu^{\vecpi}_{\vecM,h-1})-\mP^w_{\vecM',h}(s_h^w|s_{h-1}^w,a_{h-1}^w,\vecmu^{\vecpi'}_{\vecM',h-1})| \tag{Assump.~\ref{assump:Lipschitz}}\\
        \leq & \|\mu^{w,\vecpi}_{\vecM,h-1} - \mu^{w,\vecpi'}_{\vecM',h-1}\|_1 + d_{\infty,1}(\vecpi,\vecpi') + \EE_{\vecpi,\vecM(\vecpi)}[\|\mP^w_{\vecM,h}(\cdot|s_{h-1}^w,a_{h-1}^w,\vecmu^{\vecpi}_{\vecM,h-1})-\mP^w_{\vecM',h}(\cdot|s_{h-1}^w,a_{h-1}^w,\vecmu^{\vecpi'}_{\vecM',h-1})\|_1]
    \end{align*}
    By repeating the above discussion for every $w\in[W]$, we have:
    \begin{align*}
        &\|\vecmu^{\vecpi}_{\vecM,h} - \vecmu^{\vecpi'}_{\vecM',h}\|_1 \\
        \leq& \|\vecmu^{\vecpi}_{\vecM,h-1} - \vecmu^{\vecpi'}_{\vecM,h-1}\|_1 + W\cdot d_{\infty,1}(\vecpi,\vecpi') \\
        & + \sum_{w\in[W]} \EE_{\vecpi,\vecM(\vecpi)}[\|\mP^w_{\vecM,h}(\cdot|s_{h-1}^w,a_{h-1}^w,\vecmu^{\vecpi}_{\vecM,h-1})-\mP^w_{\vecM',h}(\cdot|s_{h-1}^w,a_{h-1}^w,\vecmu^{\vecpi'}_{\vecM',h-1})\|_1].
    \end{align*}
\end{proof}
\begin{lemma}\label{lem:density_est_err_MT}
    Given two model $\vecM$ and $\vecM'$ and a policy $\vecpi$, for any $h\in[H],w\in[W]$, we have:
    \begin{align}
        \|\vecmu^\vecpi_{\vecM,h+1} - \vecmu^\vecpi_{\vecM',h+1}\|_1 \leq& \sum_{w\in[W]} \EE_{\vecpi,\vecM(\vecpi)}[\sum_{\ph=1}^h \|\mP_{\vecM,\ph}^w(\cdot|\cdot,\cdot,\vecmu^\vecpi_{\vecM,\ph}) - \mP_{\vecM',\ph}^w(\cdot|\cdot,\cdot,\vecmu^\vecpi_{\vecM',\ph})\|_1].\label{eq:density_diff1_MT}
    \end{align}
    Besides, under Assump.~\ref{assump:Lipschitz}, we have:
    \begin{align}
        \|\vecmu^\vecpi_{\vecM,h+1} - \vecmu^\vecpi_{\vecM',h+1}\|_1 \leq& \sum_{w\in[W]} \EE_{\vecpi,\vecM(\vecpi)}[\sum_{\ph=1}^h (1+\vecL_T)^{h-\ph} \|\mP_{\vecM,\ph}^w(\cdot|\cdot,\cdot,\vecmu^\vecpi_{\vecM,\ph}) - \mP_{\vecM',\ph}^w(\cdot|\cdot,\cdot,\vecmu^\vecpi_{\vecM,\ph})\|_1].
        \label{eq:density_diff2_MT}
    \end{align}
\end{lemma}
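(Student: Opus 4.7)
The proof plan is to invoke the already-proved density difference Lemma~\ref{lem:density_differences} as a single-step recursion, with the same policy $\vecpi$ and the same initial distribution on both sides so that the two ``policy mismatch'' terms ($W\cdot d_{\infty,1}(\vecpi,\vecpi')$ and the base case at $h=1$) vanish, and then unroll.

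More concretely, setting $\vecpi'=\vecpi$ in Lemma~\ref{lem:density_differences} gives the one-step bound
$$
\|\vecmu^{\vecpi}_{\vecM,h+1} - \vecmu^{\vecpi}_{\vecM',h+1}\|_1 \leq \|\vecmu^{\vecpi}_{\vecM,h} - \vecmu^{\vecpi}_{\vecM',h}\|_1 + \sum_{w\in[W]} \EE_{\vecpi,\vecM(\vecpi)}\bigl[\|\mP^w_{\vecM,h}(\cdot|s_h^w,a_h^w,\vecmu^{\vecpi}_{\vecM,h}) - \mP^w_{\vecM',h}(\cdot|s_h^w,a_h^w,\vecmu^{\vecpi}_{\vecM',h})\|_1\bigr].
$$
Since both processes share the initial distribution $\vecmu^\vecpi_{\vecM,1}=\vecmu^\vecpi_{\vecM',1}=\vecmu_1$, telescoping from $h$ down to $1$ immediately yields Eq.~\eqref{eq:density_diff1_MT}.

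For Eq.~\eqref{eq:density_diff2_MT}, I will insert a triangle inequality on the transition difference to isolate the contribution coming purely from the model difference at the density $\vecmu^\vecpi_{\vecM,h}$:
$$
\|\mP^w_{\vecM,h}(\cdot|s_h^w,a_h^w,\vecmu^{\vecpi}_{\vecM,h}) - \mP^w_{\vecM',h}(\cdot|s_h^w,a_h^w,\vecmu^{\vecpi}_{\vecM',h})\|_1
\leq \|\mP^w_{\vecM,h}(\cdot|s_h^w,a_h^w,\vecmu^{\vecpi}_{\vecM,h}) - \mP^w_{\vecM',h}(\cdot|s_h^w,a_h^w,\vecmu^{\vecpi}_{\vecM,h})\|_1 + \tfrac{\vecL_T}{W}\|\vecmu^\vecpi_{\vecM,h}-\vecmu^\vecpi_{\vecM',h}\|_1,
$$
where the second summand uses Assump.~\ref{assump:Lipschitz_MT} applied to $\vecM'$. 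Summing over $w\in[W]$ absorbs the $1/W$ factor, turning the one-step bound into the linear recurrence
$$
\|\vecmu^{\vecpi}_{\vecM,h+1} - \vecmu^{\vecpi}_{\vecM',h+1}\|_1 \leq (1+\vecL_T)\|\vecmu^{\vecpi}_{\vecM,h} - \vecmu^{\vecpi}_{\vecM',h}\|_1 + \sum_{w\in[W]} \EE_{\vecpi,\vecM(\vecpi)}\bigl[\|\mP^w_{\vecM,h}(\cdot|s_h^w,a_h^w,\vecmu^{\vecpi}_{\vecM,h}) - \mP^w_{\vecM',h}(\cdot|s_h^w,a_h^w,\vecmu^{\vecpi}_{\vecM,h})\|_1\bigr].
$$
Unrolling this recurrence from $h$ down to $1$, with zero base case, produces the geometric weight $(1+\vecL_T)^{h-\ph}$ on the $\ph$-th summand, giving Eq.~\eqref{eq:density_diff2_MT}.

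I do not anticipate a main obstacle: the bulk of the work is already discharged by Lemma~\ref{lem:density_differences}. The only point requiring small care is bookkeeping the normalization $\vecL_T/W$ in Assump.~\ref{assump:Lipschitz_MT} against the sum over $w$, and checking that when $\vecpi'=\vecpi$ the third expectation term inside Lemma~\ref{lem:density_differences} is exactly what appears after telescoping (i.e.\ the expectation is taken under $\vecpi$ and $\vecM$, matching the statement of Lemma~\ref{lem:density_est_err_MT}).
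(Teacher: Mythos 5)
Your proposal is correct and is precisely the argument the paper intends: its own proof is the one-line instruction to apply Lemma~\ref{lem:density_differences} with $\vecpi'=\vecpi$ and combine with Assumption~\ref{assump:Lipschitz_MT}, which is exactly the telescoping and the triangle-inequality-plus-Lipschitz recurrence you spell out. The bookkeeping of the $\vecL_T/W$ normalization against the sum over $w$ and the unrolling to the $(1+\vecL_T)^{h-\ph}$ weights are both handled correctly.
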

\begin{proof}
    By applying Lem.~\ref{lem:density_differences} to the case when $\vecpi = \vecpi'$, and combining with Assump.~\ref{assump:Lipschitz_MT}, we finish the proof.
\end{proof}
\begin{lemma}\label{lem:density_est_err}
    Given two model $M$ and $M'$, and two arbitrary policies $\pi$ and $\pi'$, for any $h\in[H]$, we have:
    \begin{align*}
        &\|\mu^{\pi}_{M,h} - \mu^{\pi'}_{M',h}\|_1 \\
        \leq& \|\mu^{\pi}_{M,h-1} - \mu^{\pi'}_{M,h-1}\|_1 + d_{\infty,1}(\pi,\pi') + \EE_{\pi,M(\pi)}[\|\mP_{M,h}(\cdot|s_{h-1},a_{h-1},\mu^{\pi}_{M,h-1})-\mP_{M',h}(\cdot|s_{h-1},a_{h-1},\mu^{\pi'}_{M',h-1})\|_1].
    \end{align*}
    Moreover, as a special case when $\pi=\pi'$, we have:
    \begin{align}
        \|\mu^\pi_{M,h+1} - \mu^\pi_{M',h+1}\|_1 \leq& \EE_{\pi,M(\pi)}[\sum_{\ph=1}^h \|\mP_{M,\ph}(\cdot|s_\ph,a_\ph,\mu^\pi_{M,\ph}) - \mP_{M',\ph}(\cdot|s_\ph,a_\ph,\mu^\pi_{M',\ph})\|_1].\label{eq:density_diff1}
    \end{align}
    Besides, under Assump.~\ref{assump:Lipschitz}, we have:
    \begin{align}
        \|\mu^\pi_{M,h+1} - \mu^\pi_{M',h+1}\|_1 \leq \EE_{\pi,M(\pi)}[\sum_{\ph=1}^h (1+L_T)^{h-\ph} \|\mP_{M,\ph}(\cdot|s_\ph,a_\ph,\mu^\pi_{M,\ph}) - \mP_{M',\ph}(\cdot|s_\ph,a_\ph,\mu^\pi_{M,\ph})\|_1].\label{eq:density_diff2}
    \end{align}
\end{lemma}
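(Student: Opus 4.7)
\textbf{Proof proposal for Lemma~\ref{lem:density_est_err}.} The statement is essentially the $W=1$ specialization of Lemma~\ref{lem:density_differences} combined with an iterated application, so the plan is to reuse that machinery rather than redo the one-step expansion from scratch. First, I would instantiate Lemma~\ref{lem:density_differences} with $W=1$ (dropping the superscript $w$ and the sum over types), which immediately yields the first (one-step recurrence) inequality of the lemma. No Lipschitz assumption is needed for this step, only the triangle inequality on the three sources of change between $\mu^{\pi}_{M,h}$ and $\mu^{\pi'}_{M',h}$: the previous-step density gap, the policy gap (via $d_{\infty,1}(\pi,\pi')$), and the transition gap evaluated under the respective reference densities.

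Next, for the special case $\pi=\pi'$ giving Eq.~\eqref{eq:density_diff1}, I would set $d_{\infty,1}(\pi,\pi')=0$ in the recurrence and unroll it from $h+1$ down to the base case $\mu^\pi_{M,1}=\mu^\pi_{M',1}=\mu_1$. At each unrolling, the transition-difference term at step $\ph$ becomes an expectation $\EE_{\pi,M(\pi)}[\cdot]$ because the marginals $\mu^\pi_{M,\ph}$ under policy $\pi$ and model $M$ are precisely the state distributions appearing in this expectation (and the accumulated policy actions contribute $\pi(a_\ph|s_\ph)$). The telescoping then produces the stated sum over $\ph=1,\dots,h$.

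For the Lipschitz-augmented bound in Eq.~\eqref{eq:density_diff2}, the key move is to replace the transition difference $\|\mP_{M,\ph}(\cdot|\cdot,\cdot,\mu^\pi_{M,\ph})-\mP_{M',\ph}(\cdot|\cdot,\cdot,\mu^\pi_{M',\ph})\|_1$ (as in Eq.~\eqref{eq:density_diff1}) by $\|\mP_{M,\ph}(\cdot|\cdot,\cdot,\mu^\pi_{M,\ph})-\mP_{M',\ph}(\cdot|\cdot,\cdot,\mu^\pi_{M,\ph})\|_1$, at the price of an additive $L_T\|\mu^\pi_{M,\ph}-\mu^\pi_{M',\ph}\|_1$ using Assumption~\ref{assump:Lipschitz} applied to $\mP_{M',\ph}$. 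Let $e_h:=\|\mu^\pi_{M,h}-\mu^\pi_{M',h}\|_1$ and $\delta_\ph:=\EE_{\pi,M(\pi)}[\|\mP_{M,\ph}(\cdot|\cdot,\cdot,\mu^\pi_{M,\ph})-\mP_{M',\ph}(\cdot|\cdot,\cdot,\mu^\pi_{M,\ph})\|_1]$; then the substitution turns Eq.~\eqref{eq:density_diff1} into the one-step recursion $e_{h+1}\le (1+L_T)e_h+\delta_h$, which I would solve by induction on $h$ to obtain $e_{h+1}\le\sum_{\ph=1}^h(1+L_T)^{h-\ph}\delta_\ph$, exactly the claimed bound.

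The only mild subtlety is making the expectation rewriting rigorous in the second part: when unrolling the recurrence, the one-step transition-difference term at layer $\ph$ is naturally an expectation under $(\mu^\pi_{M,\ph},\pi_\ph)$, which equals $\EE_{\pi,M(\pi)}[\cdot]$ restricted to step $\ph$, so summing yields the single expectation $\EE_{\pi,M(\pi)}[\sum_{\ph=1}^h(\cdot)]$. There is no real obstacle here — this lemma is a direct consequence of Lemma~\ref{lem:density_differences} and a one-line Gronwall-style induction — so the proof should be short and mechanical once the reduction to $W=1$ is stated.
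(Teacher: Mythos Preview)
Your proposal is correct and follows essentially the same route as the paper. The paper's proof of Lemma~\ref{lem:density_est_err} is literally ``set $W=1$ in Lemma~\ref{lem:density_est_err_MT}'', and that lemma in turn is proved by applying Lemma~\ref{lem:density_differences} with $\vecpi=\vecpi'$ and then invoking the Lipschitz assumption---exactly your plan of (i) specializing Lemma~\ref{lem:density_differences} to $W=1$ for the one-step recurrence, (ii) unrolling with $\pi=\pi'$ to get Eq.~\eqref{eq:density_diff1}, and (iii) inserting the Lipschitz correction to obtain the Gronwall-type recursion $e_{h+1}\le(1+L_T)e_h+\delta_h$ for Eq.~\eqref{eq:density_diff2}.
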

\begin{proof}
    The proof is simply completed by setting $W=1$ in Lem.~\ref{lem:density_est_err_MT}.
\end{proof}

\begin{lemma}[Concentration w.r.t. $l_1$-distance]\label{lem:con_l1_distance}
    Given a discrete domain $\cX$ and a distribution $p$ on $\cX$, suppose we draw $N$ i.i.d. samples $\{x^n\}_{n\in[N]}$ from $p$ and provide an estimation $\hat p \in \Delta(\cX)$ with $\hat p(x) = \frac{1}{N}\sum_{n=1}^N \delta(x^n = x)$, then for any $\delta\in(0,1)$ and $\epsilon > 0$, as long as $N \geq \max\{\frac{2|\cX|}{\epsilon^2}, \frac{2}{\epsilon^2}\log\frac{2}{\delta}\}$, we have:
    $$
    \mP(\|p - \hat p\|_1 \geq \epsilon) \leq \delta.
    $$
\end{lemma}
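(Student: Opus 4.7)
The plan is to prove this standard $\ell_1$ concentration lemma via a two-step ``mean bound plus bounded-differences'' argument. First, I would view the map $f(x^1,\ldots,x^N) := \|p - \hat p\|_1$ as a function of $N$ i.i.d. inputs. Replacing any single sample $x^n$ by some $x'^n$ alters $\hat p$ in at most two coordinates, each by $1/N$, so the $\ell_1$ value of $\hat p$ moves by at most $2/N$ and therefore $|f| $ changes by at most $2/N$. McDiarmid's bounded-differences inequality then yields, for every $t>0$,
\[
\mP\bigl(\|p - \hat p\|_1 - \EE\|p - \hat p\|_1 \,\geq\, t\bigr) \;\leq\; \exp\!\bigl(-Nt^{2}/2\bigr).
\]

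The second step controls the expectation. For each fixed $x \in \cX$, $\hat p(x)$ is a sample mean of Bernoulli$(p(x))$ variables, so $\EE|\hat p(x) - p(x)| \leq \sqrt{\mathrm{Var}(\hat p(x))} = \sqrt{p(x)(1-p(x))/N} \leq \sqrt{p(x)/N}$. Summing over $\cX$ and applying Cauchy--Schwarz with $\sum_{x}\sqrt{p(x)} \leq \sqrt{|\cX| \sum_{x} p(x)} = \sqrt{|\cX|}$ gives
\[
\EE\|p - \hat p\|_1 \;=\; \sum_{x\in\cX}\EE|\hat p(x) - p(x)| \;\leq\; \sqrt{|\cX|/N}.
\]

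Combining the two, with probability at least $1 - \exp(-Nt^{2}/2)$ one has $\|p-\hat p\|_1 \leq \sqrt{|\cX|/N} + t$. I would then take $t = \epsilon/2$; the condition $N \geq 2|\cX|/\epsilon^{2}$ (up to the same $O(1)$ slack as in the statement) forces $\sqrt{|\cX|/N} \leq \epsilon/2$, so the event $\{\|p-\hat p\|_1 \geq \epsilon\}$ is contained in the McDiarmid tail, whose probability is at most $\exp(-N\epsilon^{2}/8)$. Demanding this to be at most $\delta$ gives the second threshold $N \geq (c/\epsilon^{2})\log(2/\delta)$. Taking the max of the two thresholds yields exactly the form in the statement, up to universal constants.

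There is essentially no obstacle: both ingredients are textbook, and the only place to be slightly careful is the bounded-differences constant $2/N$ (not $1/N$), since perturbing one sample redistributes mass between two bins. As an alternative single-shot proof I could instead invoke the Bretagnolle--Huber--Carol inequality $\mP(\|p-\hat p\|_1 \geq \epsilon) \leq 2^{|\cX|}\exp(-N\epsilon^{2}/2)$ and solve $2^{|\cX|}\exp(-N\epsilon^{2}/2) \leq \delta$ for $N$, which again produces the same $\max\{|\cX|/\epsilon^{2},\log(1/\delta)/\epsilon^{2}\}$ scaling and differs only in absolute constants.
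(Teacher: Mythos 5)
Your proof follows essentially the same route as the paper's: bound $\EE\|p-\hat p\|_1$ by $O(\sqrt{|\cX|/N})$ via the per-coordinate Bernoulli variance and Cauchy--Schwarz, then apply McDiarmid with bounded-differences constant $2/N$ and split $\epsilon$ between the mean and the deviation. Your acknowledgement that the thresholds match only up to universal constants is fair (the paper's own final exponent $e^{-N\epsilon^2/2}$ is in fact what one gets from a $1/N$ difference bound rather than the $2/N$ it states, so its constants are equally loose), and nothing further is needed.
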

\begin{proof}
    We first provide an upper bound for $\EE[\|p - \hat p\|_1]$:
    \begin{align*}
        \EE[\|p - \hat p\|_1] =&  \sum_{x\in\cX}\EE[|p(x) - \hat p (x)|] \leq \sum_{x\in\cX}\sqrt{\EE[|p(x) - \hat p (x)|^2]} = \sqrt{\frac{1}{N}}\sum_{x\in\cX}\sqrt{p(x)(1-p(x))} \leq \frac{1}{2}\sqrt{\frac{|\cX|}{N}}.
    \end{align*}
    where we use the fact that $\hat p(x)$ is a Bernoulli random variable with mean $p(x)$ and variance $\frac{1}{N}p(x)(1-p(x))$.

    Next, and note that deviation of any $x^n$ will only result in $2/N$ deviation of $\|p - \hat p\|_1$. By McDiarmid's inequality, for any $\epsilon$, we have:
    \begin{align*}
        \mP(\|p - \hat p\|_1 \geq \frac{\epsilon}{2} + \frac{1}{2}\sqrt{\frac{2|\cX|}{N}}) \leq \mP(|\|p - \hat p\|_1 - \EE[\|p - \hat p\|_1]| \geq \frac{\epsilon}{2}) \leq 2 \exp(-\frac{N}{2}\epsilon^2),
    \end{align*}
    By assigning appropriate values for $N$, we finish the proof.
\end{proof}

\begin{restatable}{lemma}{LemModelDiff}[Model Difference Lemma]\label{lem:model_diff_conversion}
    For any policies $\tpi$, $\pi$ and $\pi'$, and any bounded functions $f_1,f_2,...,f_H\in\{f|f:\cS\times\cA\rightarrow [0,1]\}$, 
    
    (i) Given any two MF-MDPs $M$ and $M'$, we have:
    \begin{align*}
        \Big|\EE_{\tpi,M(\pi)}[\sum_{h=1}^H f_h(s_h,a_h)] - \EE_{\tpi,M'(\pi')}[\sum_{h=1}^H f_h(s_h,a_h)]\Big| \leq & H\cdot \EE_{\tpi,M(\pi)}[\sum_{h=1}^H\|\mP_{M}(\cdot|s_{h},a_{h},\mu^\pi_{M,h})-\mP_{M'}(\cdot|s_{h},a_{h},\mu^{\pi'}_{M',h})\|_1].
    \end{align*}

    (ii) Given any two PAMs $\dM$ and $\dM'$, we have:
    \begin{align*}
        \Big|\EE_{\tpi,\dM(\pi)}[\sum_{h=1}^H f_h(s_h,a_h)] - \EE_{\tpi,\dM'(\pi')}[\sum_{h=1}^H f_h(s_h,a_h)]\Big| \leq & H\cdot \EE_{\tpi,\dM(\pi)}[\sum_{h=1}^H\|\dmP_{\dM}(\cdot|s_{h},a_{h},\pi)-\dmP_{\dM'}(\cdot|s_{h},a_{h},\pi')\|_1].
    \end{align*}
\end{restatable}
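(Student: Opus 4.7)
My plan is to recognize this as a standard simulation / performance-difference lemma in disguise. Once the reference policies $\pi$ and $\pi'$ are fixed, the density-dependent transitions collapse to ordinary time-inhomogeneous single-agent transitions $P_h(\cdot|s,a) := \mP_{M,h}(\cdot|s,a,\mu^\pi_{M,h})$ and $P'_h(\cdot|s,a) := \mP_{M',h}(\cdot|s,a,\mu^{\pi'}_{M',h})$, so the quantities $\EE_{\tpi,M(\pi)}[\sum_h f_h]$ and $\EE_{\tpi,M'(\pi')}[\sum_h f_h]$ are exactly the values of $\tpi$ in two single-agent finite-horizon MDPs that share the same (random) reward sequence $f_h$ but differ in their transitions. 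I will prove (i) via a telescoping Bellman argument and then observe that (ii) for PAMs requires zero change since fixing a reference policy in a PAM yields the same single-agent object.

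For the telescope, define $V^\tpi_h(s) := \EE_{\tpi,P}[\sum_{\ph=h}^H f_\ph(s_\ph,a_\ph) \mid s_h = s]$ and analogously $V'^\tpi_h$ under $P'$; both are uniformly bounded by $H-h+1 \leq H$ since each $f_\ph \in [0,1]$. Adding and subtracting the "mixed" Bellman operator and using that the reward terms cancel (both MDPs share $f_h$), I obtain
\[
V^\tpi_h(s) - V'^\tpi_h(s) \;=\; \EE_{a\sim\tpi_h,\, s'\sim P_h}\!\bigl[V^\tpi_{h+1}(s') - V'^\tpi_{h+1}(s')\bigr] \;+\; \EE_{a\sim\tpi_h}\!\Bigl[\bigl(P_h(\cdot|s,a) - P'_h(\cdot|s,a)\bigr)^{\!\top} V'^\tpi_{h+1}\Bigr].
\]
Unrolling this identity from $h=1$ to $H$ under the law induced by $(\tpi, P)$ starting at $\mu_1$ pushes the first (recursive) term to zero at the horizon and leaves
\[
J(M) - J(M') \;=\; \sum_{h=1}^{H}\EE_{\tpi,M(\pi)}\!\Bigl[\bigl(P_h(\cdot|s_h,a_h) - P'_h(\cdot|s_h,a_h)\bigr)^{\!\top} V'^\tpi_{h+1}\Bigr].
\]
The key point here — and the only place one must be slightly careful — is that the outer expectation lands on the $(\tpi,M(\pi))$ measure (not $M'$); this is exactly what the statement requires and is forced by the choice of which Bellman operator to "expand" versus "contract" in the telescope.

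Finally, I apply Hölder at each step: $|(P_h - P'_h)^\top V'^\tpi_{h+1}| \leq \|P_h(\cdot|s_h,a_h) - P'_h(\cdot|s_h,a_h)\|_1 \cdot \|V'^\tpi_{h+1}\|_\infty \leq H \cdot \|P_h - P'_h\|_1$, take absolute values, and re-substitute $P_h$ and $P'_h$ to recover the densities $\mu^\pi_{M,h}$ and $\mu^{\pi'}_{M',h}$. This yields part (i) verbatim. For (ii), the PAM transitions $\dmP_\dM(\cdot|s,a,\pi)$ and $\dmP_{\dM'}(\cdot|s,a,\pi')$ are already single-agent kernels indexed by the step $h$, so replacing $P_h$ by $\dmP_{\dM,h}(\cdot|\cdot,\cdot,\pi)$ and $P'_h$ by $\dmP_{\dM',h}(\cdot|\cdot,\cdot,\pi')$ in the above argument gives the PAM bound without any further work. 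There is no real obstacle; the only subtlety is aligning the expectation measure on the right-hand side with the "expanded" side of the telescope, which the argument handles cleanly.
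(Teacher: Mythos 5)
Your proof is correct, but it takes a genuinely different route from the paper's. The paper proves the lemma by controlling the \emph{state marginals}: it shows by a one-step recursion that
\[
\|\mu^\tpi_{\dM(\pi),h} - \mu^\tpi_{\dM'(\pi'),h}\|_1 \;\leq\; \EE_{\tpi,\dM(\pi)}\Bigl[\textstyle\sum_{\ph=1}^{h-1}\|\dmP_{\dM}(\cdot|s_{\ph},a_{\ph},\pi)-\dmP_{\dM'}(\cdot|s_{\ph},a_{\ph},\pi')\|_1\Bigr],
\]
then uses $|\EE_{\mu_h}[f_h]-\EE_{\mu'_h}[f_h]|\leq\|\mu_h-\mu'_h\|_1$ for $f_h\in[0,1]$ and sums over $h$, so the factor $H$ arises from adding up $H$ marginal-difference terms, each of which is dominated by the full per-trajectory sum of transition discrepancies. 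You instead run the standard simulation-lemma telescope on the value functions $V^\tpi_h, V'^\tpi_h$ of the two (reference-policy-collapsed) single-agent MDPs, landing the residual on the $(\tpi,M(\pi))$ measure and extracting the factor $H$ from $\|V'^\tpi_{h+1}\|_\infty\leq H$ via H\"older. Both arguments are equally elementary and yield the same constant; your version makes the measure-alignment issue (why the right-hand side is an expectation under the \emph{first} model) more transparent, since it is forced by which Bellman operator you expand, whereas the paper's version has the advantage of producing the marginal-difference bound as a reusable intermediate fact (it is essentially Lemma~\ref{lem:density_est_err} restated for PAMs). Your observation that (i) and (ii) are the same statement once the reference policies are frozen matches the paper, which likewise derives (i) from (ii) by substituting $\mP_{M,h}(\cdot|\cdot,\cdot,\mu^\pi_{M,h})$ for $\dmP_{\dM,h}(\cdot|\cdot,\cdot,\pi)$.
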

\begin{proof}
    We first proof (ii). We use $\mu^\pi_{\dM(\pi'),h}$ to denote the density induced by $\pi$ in model $\dM$ given $\pi'$ as the reference policy.
    \begin{align*}
        &\|\mu^\tpi_{\dM(\pi),h} - \mu^\tpi_{\dM'(\pi'),h}\|_1 \\
        = & |\sum_{s_h}\Big( \sum_{s_{h-1},a_{h-1}} \mu^\tpi_{\dM(\pi),h-1}(s_{h-1}) \tpi(a_{h-1}|s_{h-1})\dmP_{\dM}(s_h|s_{h-1},a_{h-1},\pi) \\
        &\quad - \sum_{s_{h-1},a_{h-1}} \mu^\tpi_{\dM'(\pi'),h-1}(s_{h-1})\tpi(a_{h-1}|s_{h-1})\dmP_{\dM'}(s_h|s_{h-1},a_{h-1},\pi')\Big)| \\
        \leq & |\sum_{s_h}\sum_{s_{h-1},a_{h-1}} \mu^\tpi_{\dM(\pi),h-1}\tpi(a_{h-1}|s_{h-1})(\dmP_{\dM}(s_h|s_{h-1},a_{h-1},\pi)-\dmP_{\dM'}(s_h|s_{h-1},a_{h-1},\pi'))|\\
        & + |\sum_{s_h} \sum_{s_{h-1},a_{h-1}} (\mu^\tpi_{\dM(\pi),h-1}(s_{h-1}) - \mu^\tpi_{\dM'(\pi'),h-1}(s_{h-1}))\tpi(a_{h-1}|s_{h-1})\dmP_{\dM'}(s_h|s_{h-1},a_{h-1},\pi')| \\
        \leq & \EE_{\tpi,\dM(\pi)}[\|\dmP_{\dM}(\cdot|s_{h-1},a_{h-1},\pi)-\dmP_{\dM'}(\cdot|s_{h-1},a_{h-1},\pi')\|_1] + \|\mu^\tpi_{\dM(\pi),h-1} - \mu^\tpi_{\dM'(\pi'),h-1}\|_1\\
        \leq & \EE_{\tpi,\dM(\pi)}[\sum_{\ph=1}^{h-1}\|\dmP_{\dM}(\cdot|s_{\ph},a_{\ph},\pi)-\dmP_{\dM'}(\cdot|s_{\ph},a_{\ph},\pi')\|_1].
    \end{align*}
    Therefore,
    \begin{align*}
        \Big|\EE_{\tpi,\dM(\pi)}[\sum_{h=1}^H f_h(s_h,a_h)] - \EE_{\tpi,\dM'(\pi')}[\sum_{h=1}^Hf_h(s_h,a_h)]\Big| \leq& \sum_{h=1}^H \|\mu^\tpi_{\dM(\pi),h} - \mu^\tpi_{\dM'(\pi'),h}\|_1 \\
        \leq & H\cdot \EE_{\tpi,\dM(\pi)}[\sum_{h=1}^H\|\dmP_{\dM}(\cdot|s_{h},a_{h},\pi)-\dmP_{\dM'}(\cdot|s_{h},a_{h},\pi')\|_1].
    \end{align*}
    The proof for (i) can be directly obtained by replacing $\dmP_{\dM,h}(\cdot|\cdot,\cdot,\pi)$ and $\dmP_{\dM',h}(\cdot|\cdot,\cdot,\pi')$ with $\mP_{M,h}(\cdot|\cdot,\cdot,\mu^\pi_{M,h})$ and $\mP_{M',h}(\cdot|\cdot,\cdot,\mu^\pi_{M',h})$.

\end{proof}

\begin{restatable}{lemma}{LemChangeOfMeasure}\label{lem:change_of_measure}
    Given three arbitrary models $M,\tM,\bM$, and two arbitrary policies $\pi,\tpi$, we have:
    \begin{align*}
        &\EE_{\tpi,\tM(\pi)}[\sum_{h=1}^H\|\mP_{\tM,h}(\cdot|s_h,a_h,\mu^\pi_{\tM,h}) - \mP_{\bM,h}(\cdot|s_h,a_h,\mu^\pi_{\bM,h})\|_1] \\
        \leq & \EE_{\tpi, M(\pi)}[\sum_{h=1}^H \|\mP_{M,h}(\cdot|s_h,a_h,\mu^\pi_{M,h}) - \mP_{\bM,h}(\cdot|s_h,a_h,\mu^\pi_{\bM,h})\|_1] \\
        & + (H+1) \cdot \EE_{\tpi,M(\pi)}[\sum_{h=1}^H\|\mP_{M,h}(\cdot|s_h,a_h,\mu^\pi_{M,h}) - \mP_{\tM,h}(\cdot|s_h,a_h,\mu^\pi_{\tM,h})\|_1].\numberthis\label{eq:md_1}
    \end{align*}
\end{restatable}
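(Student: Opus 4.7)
The plan is a two-step reduction: first perform a \emph{change of measure} from $\EE_{\tpi,\tM(\pi)}$ to $\EE_{\tpi,M(\pi)}$ on the left-hand side, and then apply a \emph{pointwise triangle inequality} $\|\mP_{\tM,h}(\cdot|\cdot,\cdot,\mu^\pi_{\tM,h})-\mP_{\bM,h}(\cdot|\cdot,\cdot,\mu^\pi_{\bM,h})\|_1 \leq \|\mP_{\tM,h}(\cdot|\cdot,\cdot,\mu^\pi_{\tM,h})-\mP_{M,h}(\cdot|\cdot,\cdot,\mu^\pi_{M,h})\|_1 + \|\mP_{M,h}(\cdot|\cdot,\cdot,\mu^\pi_{M,h})-\mP_{\bM,h}(\cdot|\cdot,\cdot,\mu^\pi_{\bM,h})\|_1$ inside the converted expectation. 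The first step will contribute an $H$-multiplier on $\EE_{\tpi,M(\pi)}[\sum_h \|\mP_{M,h}(\cdot|\cdot,\cdot,\mu^\pi_{M,h}) - \mP_{\tM,h}(\cdot|\cdot,\cdot,\mu^\pi_{\tM,h})\|_1]$, and the second step will add one more copy of the same quantity (coefficient $1$); combining these produces the advertised coefficient $(H+1)$.

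For the change-of-measure step, fix $\pi$ so that the densities $\mu^\pi_{\tM,h}$ and $\mu^\pi_{\bM,h}$ are deterministic, and define $\phi_h(s_h,a_h) := \|\mP_{\tM,h}(\cdot|s_h,a_h,\mu^\pi_{\tM,h})-\mP_{\bM,h}(\cdot|s_h,a_h,\mu^\pi_{\bM,h})\|_1 \in [0,2]$. Writing the expectations out explicitly, their difference equals $\sum_h \sum_{s_h}(\mu^\tpi_{\tM(\pi),h}(s_h)-\mu^\tpi_{M(\pi),h}(s_h))\bar\phi_h(s_h)$ with $\bar\phi_h(s_h):=\sum_{a_h}\tpi_h(a_h|s_h)\phi_h(s_h,a_h)\in[0,2]$. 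Because $\mu^\tpi_{\tM(\pi),h}$ and $\mu^\tpi_{M(\pi),h}$ are both probability measures, the sharpened identity $|\sum_s(\mu-\mu')(s)g(s)|\leq \tfrac{1}{2}(\max g-\min g)\,\|\mu-\mu'\|_1 \leq \|\mu-\mu'\|_1$ for $g\in[0,2]$ (a factor-of-two improvement over the naive $\|g\|_\infty\|\mu-\mu'\|_1$ bound, since adding a constant to $g$ does not change the sum against $\mu-\mu'$) yields $|\sum_h\langle \mu^\tpi_{\tM(\pi),h}-\mu^\tpi_{M(\pi),h},\bar\phi_h\rangle|\leq \sum_h\|\mu^\tpi_{\tM(\pi),h}-\mu^\tpi_{M(\pi),h}\|_1$. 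The density-difference argument underlying Lem.~\ref{lem:model_diff_conversion}(ii), applied to the PAM versions of $M$ and $\tM$ under the common reference policy $\pi$, further bounds this sum by $H\cdot \EE_{\tpi,M(\pi)}[\sum_h \|\mP_{M,h}(\cdot|\cdot,\cdot,\mu^\pi_{M,h})-\mP_{\tM,h}(\cdot|\cdot,\cdot,\mu^\pi_{\tM,h})\|_1]$, delivering the $H$-multiplier.

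After the change of measure, applying the pointwise triangle inequality inside $\EE_{\tpi,M(\pi)}[\sum_h\phi_h]$ splits it as $\EE_{\tpi,M(\pi)}[\sum_h \|\mP_{M,h}(\cdot|\cdot,\cdot,\mu^\pi_{M,h})-\mP_{\bM,h}(\cdot|\cdot,\cdot,\mu^\pi_{\bM,h})\|_1] + \EE_{\tpi,M(\pi)}[\sum_h \|\mP_{M,h}(\cdot|\cdot,\cdot,\mu^\pi_{M,h})-\mP_{\tM,h}(\cdot|\cdot,\cdot,\mu^\pi_{\tM,h})\|_1]$, which, combined with the $H$-penalty from the previous step, yields exactly the RHS of~\eqref{eq:md_1}. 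The main subtlety is keeping the constant tight: invoking Lem.~\ref{lem:model_diff_conversion}(ii) verbatim with $\phi_h/2\in[0,1]$ and rescaling would incur a factor of two in the change-of-measure step and produce the weaker coefficient $(2H+1)$, so the sharpened total-variation bound in the previous paragraph is where the real content of the lemma resides; everything else is a clean separation between change of measure and the triangle inequality.
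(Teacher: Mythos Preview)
Your proposal is correct and follows essentially the same two-step approach as the paper: first invoke Lem.~\ref{lem:model_diff_conversion} to change the measure from $\EE_{\tpi,\tM(\pi)}$ to $\EE_{\tpi,M(\pi)}$, then apply the pointwise triangle inequality. In fact you are more careful than the paper on one point: the paper applies Lem.~\ref{lem:model_diff_conversion} verbatim with $f_h(s_h,a_h)=\|\mP_{\tM,h}(\cdot|s_h,a_h,\mu^\pi_{\tM,h})-\mP_{\bM,h}(\cdot|s_h,a_h,\mu^\pi_{\bM,h})\|_1$, but that $f_h$ lies in $[0,2]$, not $[0,1]$ as Lem.~\ref{lem:model_diff_conversion} assumes; your centering argument $|\langle \mu-\mu',g\rangle|\leq\tfrac{1}{2}(\max g-\min g)\|\mu-\mu'\|_1$ is exactly what is needed to keep the coefficient at $H$ rather than $2H$, and hence recover $(H+1)$ instead of $(2H+1)$.
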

\begin{proof}
    By applying Lem.~\ref{lem:model_diff_conversion} with $f_h(s_h,a_h) = \|\mP_{\tM,h}(\cdot|s_h,a_h,\mu^\pi_{\tM,h}) - \mP_{\bM,h}(\cdot|s_h,a_h,\mu^\pi_{\bM,h})\|_1$, we have:
    \begin{align*}
        &\EE_{\tpi,\tM(\pi)}[\sum_{h=1}^H\|\mP_{\tM,h}(\cdot|s_h,a_h,\mu^\pi_{\tM,h}) - \mP_{\bM,h}(\cdot|s_h,a_h,\mu^\pi_{\bM,h})\|_1]\\
        \leq & \EE_{\tpi,M(\pi)}[\sum_{h=1}^H\|\mP_{\tM,h}(\cdot|s_h,a_h,\mu^\pi_{\tM,h}) - \mP_{\bM,h}(\cdot|s_h,a_h,\mu^\pi_{\bM,h})\|_1] \\
        & + H\cdot \EE_{\tpi,M(\pi)}[\sum_{h=1}^H\|\mP_{M,h}(\cdot|s_h,a_h,\mu^\pi_{M,h}) - \mP_{\tM,h}(\cdot|s_h,a_h,\mu^\pi_{\tM,h})\|_1]\\
        \leq & \EE_{\tpi,M(\pi)}[\sum_{h=1}^H\|\mP_{M,h}(\cdot|s_h,a_h,\mu^\pi_{M,h}) - \mP_{\bM,h}(\cdot|s_h,a_h,\mu^\pi_{\bM,h})\|_1] \\
        & + (H+1) \EE_{\tpi,M(\pi)}[\sum_{h=1}^H\|\mP_{M,h}(\cdot|s_h,a_h,\mu^\pi_{M,h}) - \mP_{\tM,h}(\cdot|s_h,a_h,\mu^\pi_{\tM,h})\|_1]
        . \numberthis\label{eq:decompose_1}
    \end{align*}
\end{proof}
\newpage
\section{Details of Experiments}\label{appx:experiments}

\subsection{Algorithm Design}
In the following, we provide the missing algorithm details for Sec.~\ref{sec:experiments}.

\begin{algorithm}[h]
    \textbf{Input}: Model Class $\cM$; Accuracy level $\epsilon_0,\teps,\beps$; Confidence level $\delta$; Batch size $T$\\
    $\cM^1 \gets \cM$; $\delta_0 \gets \frac{\delta}{\log_2|\cM| + 1}$\\
    $\forall M \in \cM^1$, compute (one of) its NE policy $\pi^{\NE}_M \gets \texttt{NE\_Compute}(M)$.\\
    \For{$k=1,2,...,$}{
        \If{$\exists M^k \in \cM^k$, s.t. $\max_{\tM\in\cM^k} |\cB^{\epsilon_0}_{\pi^\NE_{M^k}}(\tM,\cM^k)| \leq \frac{|\cM^k|}{2}$}{
            $\cM^{k+1} \gets \texttt{ModelElim\_Exp}(\pi^{\NE}_{M^k},\cM^k,\teps,\delta_0, T)$. 
        }
        \Else{
            $M^k \gets \argmax_{M\in\cM^k} |\cB^{\epsilon_0}_{\pi^{\NE}_M}(M,\cM^k)|$,\\
            $\pi^\NE_\Bridge \gets \pi^\NE_{M^k}$, \label{line:simple_argmax}\\
            $\cM^{k+1} \gets \texttt{ModelElim\_Exp}(\pi^\NE_\Bridge, \cM^k,\teps,\delta_0, T)$.\\
            \lIf{$M^k \in \cM^{k+1}$}{
                \Return $\pi^\NE_{M^k}$
            }
        }
    }
    \caption{A Heuristic Oracle-Efficient NE Finding}\label{alg:empirical_version}
\end{algorithm}

Here \texttt{ModelElim\_Exp} is the same algorithm as Alg.~\ref{alg:elimination_DCP_formal} except that we replac Line~\ref{line:max_margin_formal} with:
\begin{align*}
    \tpi^t & \gets \arg\max_{\tpi\in\Pi^\NE}\max_{M,M' \in \bcM^t}\EE_{\tpi,M(\pi)}[\sum_{h=1}^H \|\mP_{M,h}(\cdot|\cdot,\cdot,\mu^\pi_{M,h}) \\
    &\qquad\qquad\qquad\qquad\qquad\qquad - \mP_{M',h}(\cdot|\cdot,\cdot,\mu^\pi_{M',h})\|_1],
\end{align*}
In another word, we only consider policies from $\Pi^\NE := \{\pi^\NE_M\}_{M\in\bcM}$, including the NE policies of models in $\bcM$.

\subsection{Experiment Setup}\label{appx:exp_set_up}

\paragraph{Environments}
We consider the linear style MFG, such that 
$$
\mP(s_{h+1}|s_h,a_h,\mu_h) = \frac{|\phi(s_h,a_h)\trans G(\mu_h)\psi(s_{h+1})|}{\sum_{s_{h+1}}|\phi(s_h,a_h)\trans G(\mu_h)\psi(s_{h+1})|},
$$
where $\phi\in\mR^{d_\phi}$ and $G(\cdot)\in\mR^{d_\phi\times d_\psi}$ are known but $\psi\in\mR^{d_\psi}$ are unknown.
Note that our environment is different from linear model in Prop.~\ref{prop:Linear_MF_MDP}, where features are self-normalized.
We choose $H=3,S=100,A=50$ and $d_\phi=d_\psi=5$, where the number of states and actions is much larger than the feature dimension.
We consider a model set with $|\cM|=200$.

To construct the environment, for each $h$, we first generate a random matrix $\Phi_h \in \mR^{SA\times d_\phi}$ using as feature $\phi(s_h,a_h)$, and generate another random matrix $U_h \in \mR^{S\times d_\phi d_\psi}$, and define the function $G_h(\mu_h)$ by
\begin{align*}
    \forall \mu_h\in\Delta(\cS_h),\quad G_h(\mu_h) := (\mu_h\trans U_h).\text{reshape}(d_\phi, d_\psi) \in \mR^{d_\phi, d_\psi}.
\end{align*}
After that, we generate 200 random matrices $\{\Psi_h^i\}_{i\in[200]}$ with $\Psi_h^i \in \mR^{d_\psi\times S}$ as the next feature function.
Then, the model class is specified by $\cM := \{(\Phi_h, U_h, \Psi_h^i)\}$.
In order to make the model elimination process more challenging, $\{\Psi_h^i\}_{i=2,...,200}$ is generated by randomly perturbing from $\Psi_h^1$, i.e.:
\begin{align*}
    \Psi_h^i = (1-\beta)\tilde{\Psi}_h^i + \beta \Psi_h^1,
\end{align*}
where $\tilde{\Psi}_h^i$ is a random matrix independent w.r.t. $\Psi_h^1$ and $\beta \sim \text{Uniform}(0,0.1)$.
In this way, the difference between models in $\cM$ will be small and harder to distinguish.

\paragraph{Training Procedure}
We construct 5 model classes $\cM^1,...,\cM^5$ with different $\Phi,U$ to increase the randomness in experiments.
For each model class $\cM^i$, we repeat 5 trials, where in each trial, we first randomly select one model from $\cM^i$ as the true model, and run Alg.~\ref{alg:empirical_version} for model elimination.

We set $\epsilon$=1e-3, i.e. we want to find a 1e-3-approximate NE.
Besides, we set batch size $T=50$, $\delta = 0.001$.
For the \texttt{NE\_Oracle} in Alg.~\ref{alg:empirical_version}, we implement it by repeatedly update 
\begin{equation}
    \pi_{i+1} \gets (1-\alpha)\pi_i + \alpha \text{BestReponse}(\pi_i; M). \label{eq:update_rules}
\end{equation}
where $\alpha = 0.02$, and $\text{BestReponse}(\pi_i; M)$ return the policy maximizing the NE gap of $\pi_i$ in $M$.
We stop the update process as long as $\cE^\NE_M(\pi_{i}) \leq$ 5e-4.

\paragraph{Experiments Results}
We provide our experiment results in Fig.~\ref{fig:experiments}.
On the LHS, we report the number of uneliminated models verses the number of trajectories consumed, and as we can see, our algorithm can eliminate unqualified models very quickly.
The total consumed trajectories is much less than the number of states actions $SA = 100 * 50 = 5000$.

On the RHS\footnote{In the RHS sub-plot of Fig.~\ref{fig:experiments}, we set the normalized NE Gap to 0 as long as it is lower than 1e-3}, we report the normalized worst case NE Gap w.r.t. the remaining models.
At each iteration $t$, we compute the NE gap for every uneliminated model's NE policy, and pick out the largest one denoted as $\text{Gap}_{t}$.
The normalized gap is defined to be $\frac{\text{Gap}_{t}}{\text{Gap}_{0}}$, where the normalization term $\text{Gap}_{0}$ is the maximal NE gap at the beginning of the algorithm, i.e. the worst NE gap without starting the algorithm.
As we can see, our algorithm can gradually eliminate inaccurate models and return the (approximate) NE.

\begin{figure*}[h]
    \centering
    \includegraphics[width=0.7\textwidth]{./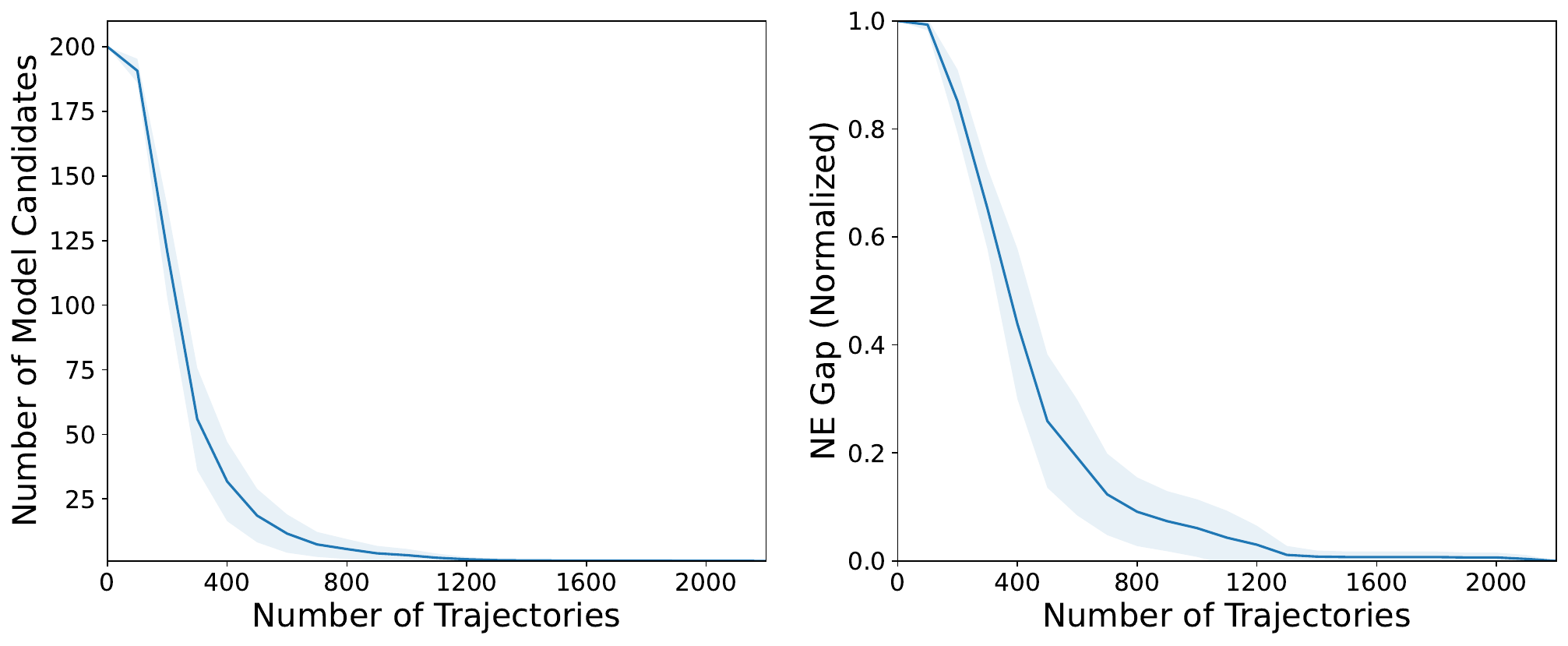}
    \caption{Experiment results in linear style MFG. We report the number of remaining models and the normalized maximal NE Gap by the NE policies of remaining models during the model elimination process. Error bars correspond to 95\% confidence intervals.}\label{fig:experiments}
\end{figure*} 

\end{document}